\pgfplotsset{width=10cm,compat=1.9}
\newcommand{\R}{\mathbb{R}}
\renewcommand{\phi}{\varphi}
\newtheorem{lemma}{Lemma}
\newtheorem{theorem}{Theorem}
\newtheorem{assumption}{Assumption}
\newtheorem{proposition}{Proposition}
\newtheorem{corollary}{Corollary}
\DeclareMathOperator*{\argmin}{argmin}
\DeclareMathOperator*{\argmax}{argmax}
\let\oldaddcontentsline\addcontentsline
\newcommand{\stoptocentries}{\renewcommand{\addcontentsline}[3]{}}
\newcommand{\starttocentries}{\let\addcontentsline\oldaddcontentsline}
\title{Dual Training of Energy-Based Models \\ with Overparametrized Shallow Neural Networks}
\author[a]{Carles Domingo-Enrich}
\author[b]{Alberto Bietti}
\author[b,c]{Marylou Gabri\'e}
\author[a,b]{Joan Bruna}
\author[a]{Eric Vanden-Eijnden}
\affil[a]{Courant Institute of Mathematical Sciences, New York University}
\affil[b]{Center for Data Science, New York University}
\affil[c]{Center for Computational Mathematics, Flatiron Institute}
\begin{document}

\maketitle


\begin{abstract}%
Energy-based models (EBMs) are generative models that are usually
trained via maximum likelihood estimation. This approach becomes
challenging in generic situations where the trained energy is
non-convex, due to the need to sample the Gibbs distribution
associated with this energy. Using general Fenchel duality results,
we derive variational principles dual to maximum likelihood EBMs
with shallow overparametrized neural network energies, both in the
  \emph{feature-learning} and \emph{lazy} linearised regimes. In the feature-learning regime, this dual formulation justifies using a two time-scale 
  gradient ascent-descent (GDA) training algorithm in which
  one updates concurrently the particles in the sample space and the
  neurons in the parameter space of the energy. We also consider a
  variant of this algorithm in which the particles are sometimes
  restarted at random samples drawn from the data set, and show that
  performing these restarts at every iteration step corresponds to
  score matching training.
These results are  illustrated in simple numerical experiments, which indicates that GDA performs best when features and particles are updated using similar time scales.
\end{abstract}

\stoptocentries

\section{Introduction}
Energy-based models (EBMs) are explicit generative models which consider Gibbs measures defined through an \emph{energy function}~$f$, with a probability density proportional to~$\exp(- \beta f(x))$, where $\beta$ is the inverse temperature.
Such models originate in statistical physics~\citep{gibbs2010elementary,ruelle1969statistical}, and have become a fundamental modeling tool in statistics and machine learning~\citep{wainwright2008graphical,ranzato2007efficient,lecun2006tutorial,du2019implicit,song2021train}.
Given data samples from a target distribution, the learning algorithms for EBMs attempt to estimate an energy function~$f$ to model the samples density. The resulting learned model can then be used to obtain new samples, typically through Markov Chain Monte Carlo (MCMC) techniques. 

The standard method to train EBMs is maximum likelihood estimation, i.e. the learned energy is the one maximizing the likelihood of the target samples, within a certain function class.
One generic approach for this is to use gradient descent, where gradients may be approximated using MCMC samples from the trained model. However, this is computationally difficult for highly non-convex trained energies, due to  `metastability', ie the presence of large basins in the energy landscape that trap trajectories for potentially exponential time.  
This has motivated a myriad of alternative losses to learn EBM energies, such as the popular score matching; see \citep{song2021train} for a review. All in all, such weaker losses result in a loss of statistical power, which motivates exploring computationally efficient methods for EBM maximum-likelihood estimation. 

EBMs also have structural connections with maximum entropy (maxent) models, which have been studied for decades through Fenchel duality. \cite{dai2019exponential} was the first work to leverage similar duality arguments for maximum likelihood EBM training. However, their analysis is restricted to energies lying in RKHS balls (i.e. non-parametric linear models). Despite the appealing optimization properties of RKHS, these spaces of functions typically only contain very smooth functions when the dimension is large~\citep{berlinet2004reproducing}. A recent line of work---originating in supervised learning---has considered an alternative based on shallow neural networks \citep{bach2017breaking}, which  admit a linear representation in terms of a measure over its parameters and are able to adapt to hidden low-dimensional structures in the data. 
The statistical benefits of the obtained $\mathcal{F}_1$ or \emph{Barron} spaces have  recently been studied in the context of shallow EBMs by \cite{domingoenrich2021energybased}, who show that they may outperform the RKHS models. 

\section{Problem setup and main results}
  \label{sec:setup}
Consider a measurable set $\mathcal{X} \subseteq \R^{d_1}$ with a
fixed base probability measure
$\tau_{\mathcal{X}} \in \mathcal{P}(\mathcal{X})$. If $\mathcal{F}$ is
a class of functions (or energies) mapping $\mathcal{X}$ to $\R$, for
any $f \in \mathcal{F}$ we can define the probability measure
$\nu_{\beta f}$ as a Gibbs measure with density:
\begin{equation}
  \label{eq:nu_f_def0}
    \frac{d\nu_{\beta f}}{d\tau_{\mathcal{X}}}(x) := Z_{\beta f}^{-1}e^{-\beta f(x)}
    \text{ with } Z_{\beta f} := \int_{\mathcal{X}}
    e^{-\beta f(y)}
    d\tau_{\mathcal{X}}(y)~,
\end{equation}
where $d\nu_{\beta f}/d\tau_{\mathcal{X}}$ is the Radon-Nikodym
derivative of $\nu_{\beta f}$ and $Z_{\beta f}$ is the partition
function and the parameter $\beta > 0$ is the inverse temperature.
Gibbs measures are the cornerstone of statistical physics since the
seminal works of Boltzmann and Gibbs. Beyond their widespread use
across computational sciences, they have also found their application
in machine learning, by the name of \emph{energy-based models} (EBMs),
where the energy function is parametrized using a neural
network.

Given samples $\{x_i\}_{i=1}^n$ from a target measure $\nu_p$,
training an EBM consists in selecting the best $\nu_{\beta f}$ with
energy $f\in \mathcal{F}$ according to a given criterion. The maximum
likelihood estimator (MLE) is defined as the maximizer of the
cross-entropy between $\nu_{\beta f} $ and the empirical measure
$\nu_n = \frac{1}{n} \sum_{i=1}^n \delta_{x_i}$,
$H(\nu_{\beta f},\nu_n) = \int_{\mathcal{X}} \log
(d\nu_{\beta f}/d\tau_{\mathcal{X}}) d\nu_n$. Using the expression
for~$\nu_{\beta f}$ in~\eqref{eq:nu_f_def0}, the MLE $\hat f$ is given by
\begin{align} 
  \begin{split} \label{eq:hat_mu0}
    \hat{f} = \argmax_{f \in \mathcal{F}} \Big\{-\int_{\mathcal{X}} f(x) d\nu_n(x)
    - \beta^{-1} \log Z_{\beta f}\Big\} = \argmax_{f \in \mathcal{F}} \Big\{-\frac{1}{n} \sum_{i=1}^n f(x_i)
    - \beta^{-1} \log Z_{\beta f}\Big\}.
\end{split}
\end{align}
The estimated distribution is then simply given by
$d\nu_{\beta \hat{f}} = Z^{-1}_{\beta \hat f} e^{-\beta \hat f}
d\tau_{\mathcal{X}}$.  Since
$D_{\text{KL}}(\nu || \nu') = H(\nu, \nu) - H(\nu',\nu)$, where
$D_{\text{KL}}(\nu || \nu') = \int_{\mathcal{X}} \log (d\nu/d\nu')
d\nu$ is the Kullback-Leibler (KL) divergence, observe that maximizing
the cross-entropy would be equivalent to minimizing the KL divergence if
the latter were finite.

Maximizing the objective in~\eqref{eq:hat_mu0} is challenging because it
contains the free energy functional $\log Z_{\beta f}$, which is unknown
to us. One way to go around this difficulty is to realize that the
gradient of $\log Z_{\beta f}$ over the parameters used
to parametrize $\beta f$  can be expressed as an expectation over the
the probability measure $\nu_{\beta f}$. For example, a simple
calculation shows that
\begin{equation}
  \label{eq:FEbeta}
   \partial_\beta \log Z_{\beta f}  = -\int_{\mathcal{X}} f(x)
   d\nu_{\beta f} (x).
 \end{equation}
 This offers the possibility to maximize the objective
 in~\eqref{eq:hat_mu0} by stochastic gradient ascent (SGA) by
 estimating the gradient of~\eqref{eq:hat_mu0} at every SGA step via
 sampling of $\nu_{\beta f}$, which can be done, for example, via
 Metropolis-Hastings Monte-Carlo. From a functional standpoint, this amounts to replacing the maximization problem
 in~\eqref{eq:hat_mu0} by the max-min
 \begin{equation}
   \label{eq:maxmin1}
   \max_{f \in \mathcal{F}} \min_{\nu \in \mathcal{P}(\mathcal{X})}
   \int_{\mathcal{X}} f
   (d\nu-d\nu_n) + \beta^{-1} D_{\text{KL}}(\nu||\tau_{\mathcal{X}})
 \end{equation}
 Indeed the minimization over $\nu \in \mathcal{P}(\mathcal{X}) $ can
 be carried explicitly and brings us back to~(\ref{eq:hat_mu0}).

Reformulating the problem as the max-min in~(\ref{eq:maxmin1})  also indicates why proceeding this way may not be optimal from computational standpoint. Indeed, sampling $\nu_{\beta f}$ to estimate the gradient of the objective~\eqref{eq:hat_mu0} at every step of training by SGA is typically tedious and costly, and it would be better to amortize this computation along the training. This suggests  to perform the minimization and the maximization in~\eqref{eq:maxmin1} concurrently  rather than in sequence. In practice this can be done using stochastic gradient descent-ascent (SGDA)  to simultaneously train the energy $f$ and sample  its associated Gibbs measure $\nu_{\beta f}$, using timescales for both that can be adjusted for efficiency. 
A necessary condition for the convergence of concurrent SGDA is that the min and the max in~\eqref{eq:maxmin1} commute, i.e.  the optimal value and optimal solutions of \eqref{eq:maxmin1} are equal to the ones of the problem
  \begin{equation}
   \label{eq:minmax1}
   \min_{\nu \in \mathcal{P}(\mathcal{X})} \max_{f \in \mathcal{F}} 
   \int_{\mathcal{X}} f
   (d\nu-d\nu_n) + \beta^{-1} D_{\text{KL}}(\nu||\tau_{\mathcal{X}}).
 \end{equation}
 A main theoretical contribution of the present paper is to use infinite-dimensional Fenchel duality results to show the
equivalence between \eqref{eq:maxmin1} and \eqref{eq:minmax1} for
energies $f$ that belong to metric ball of Barron space $\mathcal{F}_1$, which is a Banach space containing infinitely-wide neural networks. While the duality between~\eqref{eq:maxmin1} and \eqref{eq:minmax1} is only a necessary condition for the convergence of SGDA, we observe that it does work in practice by performing experiments with shallow neural network energies and investigating which relative timescales in SGDA lead to faster convergence.
 
 We also show
 that a simple modification of this SGDA algorithm interpolates between
 standard MLE training on~(\ref{eq:maxmin1}) and training using score
 matching (SM), which is another objective function used to train EBMs
 that has gained popularity in recent years. The SM metric or relative Fisher information between two absolutely continuous probability measures $\nu, \nu'$ is defined as $\text{SM}(\nu,\nu') = \int_{\mathcal{X}} |\nabla \log \frac{d\nu}{d\tau_{\mathcal{X}}}(x) - \nabla \log \frac{d\nu'}{d\tau_{\mathcal{X}}}(x) |^2 \ d\nu(x)$. The key insight from \cite{hyvarinen05estimation} is that via integration by parts this quantity may be rewritten without involving the log-density of $\nu$, which leads to the following loss function:
\begin{align}
\begin{split} \label{eq:loss_L0}
\min_{f\in \mathcal{F}} \frac{1}{n} \sum_{i=1}^n \beta^{-1} \Delta
f(x_i) + \frac{1}{2} |\nabla f(x_i)|^2 = \min_{f\in \mathcal{F}}
\int_{\mathcal{X}}\left(\beta^{-1} \Delta f(x) + \frac{1}{2} |\nabla
  f(x)|^2 \right) d\nu_n(x)
\end{split}
\end{align}
Score matching is computationally more tractable than maximum likelihood because it avoids estimating the partition function $Z_{\beta f}$ or its gradient altogether. However, it has the known drawback that it may fail to distinguish distributions in some instances---this is because the SM metric is weaker than the KL divergence. 
\textbf{Related work.}
Our work is based on general Fenchel duality results (\autoref{sec:general_duality}) that may be useful in applications beyond the main focus of this paper (see \autoref{sec:two_layer_nn_sampling}). These theorems are a generalization of results stated in the compact case in~\cite{domingoenrich2021energybased} in their Appendix D. Similar duality results have been studied extensively in the area of maximum entropy (maxent) models (reviewed in Ch. 12 of \cite{mohri2012foundations}). The first maxent duality principle was due to \cite{jaynes1957information}. Maxent models have been applied since the 1990s in natural language processing and in species habitat modeling among others, and studied theoretically especially since the 2000s \citep{altun2006unifying, dudik2007maximum}. 


Recently \cite{dai2019kernel} leveraged duality arguments in the context of maximum likelihood EBMs, although in a form different from ours. Their duality result works in the more restrictive setting of ``lazy'' energies lying in RKHS balls and probability measures with $L^2$ densities, and they derive it directly from a general theorem that works for reflexive Banach spaces~\citep[Ch. 6, Thm. 2.1]{ekeland1999convex}. Our Fenchel duality results, which work for Borel probability measures and feature-learning ($\mathcal{F}_1$) energies, are  more general because we must rely on measure spaces, which are non-reflexive Banach spaces. Their algorithm is also different: they do not evolve generated samples, but rather use a transport parametrization of the energy.
\cite{dai2019exponential} expand the work~\citep{dai2019kernel} combining it with Hamiltonian Monte Carlo.

A precursor of modern machine learning EBMs were restricted Boltzmann machines (RBMs),
first trained via contrastive divergence or CD \citep{hinton2002training} - which estimates the gradient of the log-likelihood via approximate MCMC samples of the trained model. It later led to maximum likelihood training of EBMs~\citep[see, e.g.,][among others]{xie2016atheory, xie2017synthesizing, du2019implicit}. A popular variant of CD is persistent contrastive divergence or PCD \citep{tieleman2008training, tieleman2009using}, in which the MCMC samples are evolved and reused over gradient computations to be progressively equilibrated. Training EBMs by updating the energy parameters and the samples simulteanously like we do resembles PCD.

A vast array of EBM losses alternative to maximum likelihood have been developed recently \citep{song2021train} with the goal of avoiding the MCMC procedure, e.g. score matching \citep{hyvarinen05estimation} and related methods such as denoising score matching \citep{pascal2011aconnection}, and score-based generative modeling~\citep{song2019generative, song2020improved, ho2020denoising, song2021scorebased}. Our work should also be contrasted with the literature on convergence for minimax problems: \cite{heusel2017gans, lin2020ongradient} among others argue for two-timescale GDA and SGDA, while our experiments show the benefits of simultaneous training, suggesting that further work is needed for clarification.

Finally, \autoref{sec:implicit_EBM} has links with maximum mean discrepancy (MMD) flows. MMDs are probability metrics that were first introduced by~\citet{gretton2007kernel, gretton2012akernel} for kernel two-sample tests, and that have been successful as discriminating metrics in generative modeling \citep{li2015generative, dziugaite2015training, li2017mmd}. 
\citet{arbel2019maximum} study theoretically the convergence of unregularized MMD gradient flow (our equation \eqref{eq:wasserstein_EBM_2} with $\tilde{\beta}^{-1} = 0$). In their experiments, they observe that noisy updates ($\tilde{\beta}^{-1} > 0$) are needed for good generalization. Our work shows that their algorithm is exactly training maximum likelihood EBMs energies in an RKHS ball of radius that depends on the noise level. 


\section{Background} \label{sec:background}

In this section, we provide preliminary background on the Barron space $\mathcal{F}_1$, and on the specific form of EBM losses for this type of energies.  

\textbf{Notation.}
If $V$ is a normed vector space, $\mathcal{B}_V(\beta)$ denotes the closed ball of $V$ of radius $\beta$, and $\mathcal{B}_V := \mathcal{B}_V(1)$. If $K$ is a subset of the Euclidean space, $\mathcal{P}(K)$ is the set of Borel probability measures, $\mathcal{M}(K)$ is the space of Radon (i.e. signed and finite) measures, and $\mathcal{M}^{+}(K)$ is the set of non-negative Radon measures. If $\gamma \in \mathcal{M}(K)$, then ${\|\gamma\|}_{\text{TV}} = \int_K d|\gamma|$ is the total variation (TV) norm of $\gamma$, which turns $\mathcal{M}(K)$ into a Banach space. Unless otherwise specified, $\sigma : \R \to \R$ is a generic non-linear activation function. The ReLU activation is denoted by $(z)_{+} = \max \{z, 0\}$. $\tau$ denotes a fixed base probability measure; a subindex specifies the space it is defined over. $\mathbb{S}^d \subseteq \R^{d+1}$ is the $d$-dimensional hypersphere; $\log$ is the natural logarithm; $\lambda$ is the Lebesgue measure. Given $\nu,\nu'\in \mathcal{P}(K)$, $D_{\text{KL}}(\nu\| \nu') = \int_{K} \log \frac{d\nu}{d\nu'} \ d\nu$ is the KL divergence and $H(\nu, \nu') = -\int_K \log(\frac{d\nu'}{d\tau}) d\nu$ is the cross-entropy. 


\textbf{The Barron space $\mathcal{F}_1$.} 
Let $\mathcal{X} \subseteq \R^{d_1}$,  $\Theta \subseteq \R^{d_2}$, $\phi : \mathcal{X} \times \Theta \to \R$, and $\tau_{\Theta}$ be a fixed base probability measure over $\Theta$.
$\mathcal{F}_1$ is defined as the Banach space of functions $f : \mathcal{X} \rightarrow \R$ such that, for some  Radon measure $\gamma \in \mathcal{M}(\Theta)$, for all $x \in \mathcal{X}$ we have $f(x) = \int_{\Theta} \phi(x, \theta) \ d\gamma(\theta)$. We define 
the norm of $\mathcal{F}_1$ as
    $\|f\|_{\mathcal{F}_1} = \inf \left\{ {\|\gamma\|}_{\text{TV}} \ | f(\cdot) = \int_{\Theta} \phi(\cdot, \theta) \ d\gamma(\theta) \right\}.$
This construction was introduced by \cite{bach2017breaking}, who first used the notation $\mathcal{F}_1$ and focused in particular on the case $\mathcal{X} \subseteq \R^d$, $\Theta = \mathbb{S}^{d}$ and $\phi(x, \theta) = \text{ReLu}^{k}(\langle (x,1), \theta \rangle)$ for some $k \in \mathbb{Z}_{+}$. This space is also known by the name of Barron space \citep{e2019apriori,e2020banach} in reference to the classic work \cite{barron1993universal}.

We denote by $\mathcal{F}_1$-EBMs the energy-based models for which the energy class $\mathcal{F}$ is the unit ball $\mathcal{B}_{\mathcal{F}_1}(1)$ of $\mathcal{F}_1$. Notice that the class $\{ \beta f | f \in \mathcal{F} \}$ is equal to the ball $\mathcal{B}_{\mathcal{F}_1}(\beta)$. Such models may be regarded as abstractions of more complex deep EBMs, in that they incorporate feature learning, and they were first studied by \cite{domingoenrich2021energybased}, which provide statistical guarantees. They are to be contrasted with $\mathcal{F}_2$-EBMs, for which $\mathcal{F}$ is the unit ball $\mathcal{B}_{\mathcal{F}_2}(1)$. $\mathcal{F}_2$-EBMs, which we study in \autoref{sec:implicit_EBM}, have fixed features and showed worse statistical performance in experiments \citep{domingoenrich2021energybased}.

\textbf{Maximum likelihood for $\mathcal{F}_1$-EBMs.} We rewrite the maximum likelihood problem \eqref{eq:hat_mu0} for the case in which $\mathcal{F} = \mathcal{B}_{\mathcal{F}_1}(1)$. Since an arbitrary element $f$ of $\mathcal{F}_1$ can be expressed as $f(x) = \int_{\Theta} \phi(x, \theta) \ d\gamma(\theta)$, with $\|f\|_{\mathcal{F}_1}$ equal to the infimum of $\|\gamma\|_{\text{TV}}$ for all such $\gamma$, the maximum likelihood energy is
$f_\text{MLE} = \int_\Omega \varphi(\cdot,\theta) d\gamma_\text{MLE}(\theta)$,
where
\begin{align} 
\begin{split} \label{eq:EBM_dual_f1}
    \gamma_\text{MLE} = \argmin_{\substack{\gamma \in \mathcal{M}(\Theta) \\ \|\gamma\|_{\text{TV}} \leq 1}} \bigg\{\frac{1}{n} \sum_{i=1}^n \int_{\Theta} \phi(x_i, \theta) \ d\gamma(\theta) + \frac{1}{\beta} \log \bigg( \int_{\mathcal{X}} \exp \left(- \beta \int_{\Theta} \phi(x, \theta) \ d\gamma(\theta) \right) d\tau_{\mathcal{X}}(x) \bigg) \bigg\}.
\end{split}
\end{align}

\textbf{Score matching for $\mathcal{F}_1$-EBMs.} 
Suppose that $\mathcal{X} \subseteq \R^{d_1}$ is a manifold without boundaries. 
Assume that $\int_{\mathcal{X}} |\nabla_x \phi(x, \theta) \cdot \nabla \frac{d\nu_p}{d\tau_{\mathcal{X}}}(x)| \ d\tau_{\mathcal{X}}(x)$ is upper-bounded by some constant $K$ for all $\theta \in \Theta$. Assume also that $\sup_{\theta \in \Theta} \|\nabla_x \phi(x, \theta)\| < \eta(x)$ and that $\int_{\mathcal{X}} |\eta(x)|^2 \ d\nu_p(x) < \infty$. \autoref{prop:SM_expression} in \autoref{sec:proofs_SM} shows that the optimal energy under the score matching loss with energy class $\mathcal{B}_{\mathcal{F}_1}(1)$ is $f_{\rm SM} = \int_\Omega \varphi(\cdot,\theta) 
d\gamma_{\rm SM} (\theta)$ where
\begin{align} 
\begin{split} \label{eq:SM_objective_2}
    \gamma_{\rm SM}= \argmin_{\substack{\gamma \in \mathcal{M}(\Theta) \\ \|\gamma\|_{\text{TV}} \leq 1}}
    \int_{\Theta} \int_{\mathcal{X}}\bigg(\frac{\nabla_x \phi(x, \theta) }{2} \cdot \nabla_x \int_{\Theta} \phi(x, \theta') \ d\gamma(\theta') - \beta^{-1} \Delta_x \phi(x, \theta) \bigg) d\nu_n(x) d\gamma(\theta).
\end{split}
\end{align}

\section{Duality for $\mathcal{F}_1$-EBMs} \label{sec:implicit_mle_f1}


As a corollary of the duality result from \autoref{sec:kl_regularized_Linfty},
we derive an alternative objective for $\mathcal{F}_1$-EBMs trained via maximum likelihood, the original objective being \eqref{eq:EBM_dual_f1} and we develop an algorithm to solve this alternative problem. To this end, we make:
\vspace{-4pt}
\begin{assumption} \label{ass:phi_new}
Let $\phi : \mathcal{X} \times \Theta \to \R$ be a continuous function such that either $\mathcal{X}$ is compact or (i) for any fixed $\theta \in \Theta$, $\phi(x, \theta) \leq \xi(x)$ for some strictly positive $\xi : \mathcal{X} \to \R$, and (ii) 
$\xi(x) + \log(\xi(x)) = o\left(-\log\left(\frac{d\tau_{\mathcal{X}}}{d\lambda}(x) \right) - (d_1+\epsilon) \log \|x\|_2 \right) \text{ as } \|x\|_2 \to \infty$ for some $\epsilon > 0$.
\end{assumption}
\vspace{-4pt}
In particular, this assumption holds for ReLU network energies when setting $\mathcal{X} = \R^{d_1}$, $\Theta = \R^{d_1 + 1}$, $\phi(x,\theta) = \sigma(\langle (x,1), \theta \rangle)/\|\theta\|$ and $\tau_{\mathcal{X}}$ Gaussian (and in many other settings).
\begin{theorem}
\label{cor:EBM_f1}
Under Assumption \ref{ass:phi_new}, the maximum likelihood problem \eqref{eq:EBM_dual_f1} is the Fenchel dual of
\begin{align}
    \begin{split} \label{eq:EBM_primal_3}
    \min_{\nu \in \mathcal{P}(\mathcal{X})} \max_{\substack{\gamma \in \mathcal{M}(\Theta), \\ \|\gamma\|_{\text{TV}} \leq 1}} \bigg\{\beta^{-1} D_{\text{KL}}(\nu||\tau_{\mathcal{X}})
    + \int_{\Theta} \int_{\mathcal{X}} \phi(x, \theta) \ d(\nu - \nu_n)(x) \ d\gamma(\theta) \bigg\}.
\end{split}
\end{align}
Moreover, the solution $\nu^{\star}$ of \eqref{eq:EBM_primal_2} is precisely the Gibbs measure for the optimal $\gamma^{\star}$ in \eqref{eq:EBM_dual_f1}, that is, $\frac{d\nu^{\star}}{d\tau_{\mathcal{X}}}(x) = \frac{1}{Z_{\beta}}\exp \left( - \beta \int_{\Theta} \phi(x, \theta) \ d\gamma^{\star}(x) \right)$.
\end{theorem}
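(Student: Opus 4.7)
The plan is to recognize \eqref{eq:EBM_primal_3} as a concrete instance of the general Fenchel duality result established in \autoref{sec:kl_regularized_Linfty}, and then verify that performing the inner optimization explicitly reduces it to \eqref{eq:EBM_dual_f1}. The natural pairing is to view the objective as a bilinear functional $\int_\Theta \int_{\mathcal{X}} \phi(x,\theta) \, d(\nu - \nu_n)(x)\,d\gamma(\theta)$, which is linear in $\gamma \in \mathcal{M}(\Theta)$ (with $\|\gamma\|_{\text{TV}} \leq 1$, a weak-$*$ compact convex set by Banach--Alaoglu) and linear in $\nu \in \mathcal{P}(\mathcal{X})$, plus the convex, lower semicontinuous regularizer $\beta^{-1} D_{\text{KL}}(\nu\|\tau_{\mathcal{X}})$. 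This is exactly the structure handled by the abstract duality theorem, so invoking it yields equality of the min-max \eqref{eq:EBM_primal_3} and the corresponding max-min, together with existence of a saddle point $(\nu^\star, \gamma^\star)$.

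Next, I would compute the max-min explicitly. Fixing $\gamma$ and writing $f_\gamma(x) := \int_\Theta \phi(x,\theta)\,d\gamma(\theta)$, the inner minimization $\min_{\nu \in \mathcal{P}(\mathcal{X})} \beta^{-1} D_{\text{KL}}(\nu\|\tau_{\mathcal{X}}) + \int_{\mathcal{X}} f_\gamma \, d\nu$ is the classical Gibbs variational principle: its unique minimizer is $d\nu^\star_\gamma/d\tau_{\mathcal{X}} = Z_{\beta f_\gamma}^{-1} e^{-\beta f_\gamma}$ with optimal value $-\beta^{-1}\log Z_{\beta f_\gamma}$. Substituting this back into the outer maximization over $\gamma$, pulling $-\int f_\gamma \, d\nu_n$ out of the constant term, and negating, yields exactly \eqref{eq:EBM_dual_f1}. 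The identification of $\nu^\star$ as the Gibbs measure associated to $\gamma^\star$ then follows because the saddle point $(\nu^\star,\gamma^\star)$ must realize the inner minimizer at $\gamma = \gamma^\star$.

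The main obstacle is verifying the hypotheses of the general duality theorem in the non-compact case, which is precisely the role of \autoref{ass:phi_new}. One must show that for every $\gamma$ with $\|\gamma\|_{\text{TV}} \leq 1$, the partition function $Z_{\beta f_\gamma} = \int_{\mathcal{X}} e^{-\beta f_\gamma}\,d\tau_{\mathcal{X}}$ is finite, and that $\gamma \mapsto \log Z_{\beta f_\gamma}$ is convex and sufficiently regular (continuous under appropriate topology) on the weak-$*$ compact TV-unit ball. Here the bound $\phi(x,\theta) \leq \xi(x)$ gives $|f_\gamma(x)| \leq \xi(x)$ uniformly in $\gamma$, and the asymptotic growth condition (ii) on $\xi$ ensures $\int e^{\beta \xi}\,d\tau_{\mathcal{X}} < \infty$, which simultaneously yields integrability and the desired uniform control. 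When $\mathcal{X}$ is compact, continuity of $\phi$ alone suffices and this step is immediate.

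Finally, I would record that because the abstract theorem guarantees a saddle point and the inner minimizer for any $\gamma$ is explicit, the saddle-point condition forces $\nu^\star = \nu^\star_{\gamma^\star}$, i.e.\ $d\nu^\star/d\tau_{\mathcal{X}} = Z_{\beta f_{\gamma^\star}}^{-1} \exp(-\beta \int_\Theta \phi(\cdot,\theta)\,d\gamma^\star(\theta))$, which is the asserted identification. No separate first-order/KKT analysis beyond the Gibbs variational principle is required.
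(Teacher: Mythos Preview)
Your proposal is correct and follows essentially the same route as the paper: both invoke the general $L^\infty$ duality theorem of \autoref{sec:kl_regularized_Linfty} (namely \autoref{thm:duality_general_f1}) with the substitutions $\mathcal{Y}=\mathcal{X}$, $\mathcal{Z}=\Theta$, $g(\theta)=\frac{1}{n}\sum_i \phi(x_i,\theta)$, and verify its hypotheses via \autoref{ass:phi_new}. The paper's write-up is slightly more economical because it treats \autoref{thm:duality_general_f1} as a black box (so the Gibbs variational computation and the integrability of $Z_{\beta f_\gamma}$ you sketch are already absorbed into that theorem's proof), and then simply rewrites the $L^\infty$ penalty $\|\int_{\mathcal{X}}\phi(x,\cdot)\,d(\nu-\nu_n)\|_{L^\infty}$ as $\sup_{\|\gamma\|_{\text{TV}}\le 1}\int\!\!\int \phi\,d(\nu-\nu_n)\,d\gamma$ to obtain the min--max form \eqref{eq:EBM_primal_3}.
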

\vspace{-4pt}
As mentioned in \autoref{sec:setup}, \autoref{cor:EBM_f1} shows that the min with respect to the probability measure $\nu$ and the max with respect to the parameter measure $\mu$ can be exchanged; the optimal values and the minimax points of both problems coincide. This puts in more solid footing the tuning of timescales that we propose in the next section and analyze experimentally in \autoref{sec:experiments}.
In \autoref{cor:EBM_f1}, if we replace the $\mathcal{F}_1$ ball by the unit ball of the related space $\mathcal{F}_2$, an analogous duality result links the maximum likelihood problem with the entropy regularized MMD flow from \cite{arbel2019maximum} (see \autoref{sec:implicit_EBM}).

\section{Algorithm for $\mathcal{F}_1$-EBMs}
In this section we introduce measure dynamics to solve the minimax problems \eqref{eq:EBM_dual_f1}-\eqref{eq:EBM_primal_3}.
We consider the triple $(\gamma^+,\gamma^-,\nu)$ where the nonegative measures $\gamma^\pm$ are defined through the Hahn decomposition of $\gamma = \gamma_+-\gamma_-$.  
Then we introduce  coupled gradient flows for this triple, in which $\gamma^+_t$ and $\gamma^-_t$ evolve via a  Wasserstein-Fisher-Rao gradient flow \citep{chizat2018unbalanced} and  $\nu_t$ evolves via a Wasserstein gradient flow \citep{santambrogio2017euclidean}:
\begin{align}
\begin{split} \label{eq:coupled_dynamics_f1}
    \partial_{t} \gamma^\sigma_t &= -\alpha \sigma \nabla_\theta \cdot \left( \gamma^\sigma_t  \nabla_{\theta} F_t(\theta) \right) + \alpha \gamma^\sigma_t \left( \sigma F_t(\theta) - K_t \right), \quad \text{where} \ \ \sigma = \pm 1, \ \  \gamma_t^\sigma = \gamma_t^\pm  \\ \partial_{t} \nu_t &= \nabla_x \cdot \bigg( \nu_t \left(\nabla_x f_t(x)- \beta^{-1} \nabla \log \frac{d\tau_{\mathcal{X}}}{d\lambda} \right) \bigg) + \beta^{-1} \Delta_x \nu_t,
\end{split}
\end{align}
where $\alpha$ is a tunable parameter and we defined
\begin{align} 
\begin{split} \label{eq:def_F_f_mu}
F_t(\theta) &= \int_{\mathcal{X}}  \phi(x, \theta) \ d(\nu_t \!-\! \nu_n)(x), \quad
f_t(x) = \int_{\Theta} {\phi}(x, \theta) \ (d\gamma_t^+-d\gamma^-_t)(\theta), \\ K_t &= \mathds{1}_{ \|\gamma^+_t\|_{\text{TV}}+\|\gamma^-_t\|_{\text{TV}} \geq 1} \int_\Theta  F_t(\theta) (d\gamma_t^+-d\gamma^-_t)(\theta).
\end{split}
\end{align}
The initialization of \eqref{eq:coupled_dynamics_f1} is $\nu_0 = \nu_n$ and $\gamma_0^\pm=0$ (such that the initial energy is null). The term $K_t$ keeps the total variation of $\gamma_t$ below one.
The parameter $\alpha$ acts as a relative timescale. Notice that different values of $\alpha$ lead to different behaviors of the dynamics. Setting $\alpha  \ll 1$ corresponds heuristically to solving the primal formulation of maximum likelihood with persistent MCMC samples (equation \eqref{eq:EBM_dual_f1}), as the measures $\gamma^\pm_t$ on neurons evolve slower than the measure $\nu_t$ on particles. In contrast if $\alpha\gg1$, $\gamma^\pm_t$ evolves faster than $\nu_t$ and if the optimization is well behaved, at all times $\gamma_t=\gamma^+_t-\gamma^-_t$ remains close to minimizing the inner maximization problem of \eqref{eq:EBM_primal_3} with $\gamma = \gamma_t$. Thus, $\alpha\gg1$ is heuristically solving \eqref{eq:EBM_primal_3}. The experiments in the next section suggest that $\alpha \approx 1$ yields the fastest convergence computationally. 

\autoref{prop:particle_dynamics_f1} below states that the solution $(\mu_t, \nu_t)$ may be approximated using coupled particle systems (see proof in \autoref{sec:proofs_implicit_f1}) and is the basis for Alg.~\ref{alg:implicit_ebm_f1}. The link between particle systems and measure PDEs is through a classical technique known as propagation of chaos \citep{sznitmantopics1991} and it has been used previously for similar coupled systems in the machine learning literature \citep{domingoenrich2020amean}, as well as to analyze the convergence of gradient descent for infinite-width neural networks \citep{rotskoff2018neural, mei2018mean, chizat2018global}. 
\begin{proposition}
\label{prop:particle_dynamics_f1}
Let $\{\theta_0^{(j)}\}_{j=1}^{m}$ be initial features sampled uniformly over $\Theta$, let $\{\sigma_j\}_{j=1}^{m}$ be uniform samples over $\{\pm 1\}$ and let $\{w_0^{(j)} = 1\}_{j=1}^{m}$ be the initial weight values, which are set to 1. Let $\{X_0^{(i)}\}_{i=1}^{N}$ be the initial ``generated'' samples, which are chosen i.i.d. uniformly from the target sample set $\{x_i\}_{i=1}^{n}$. Consider the system of ODEs/SDEs:
\begin{align}
\begin{split} \label{eq:coupled_particles_MLE}
    \frac{d\theta_t^{(j)}}{dt} &= \alpha \sigma_j \nabla \tilde F_t(\theta_t^{(j)}), 
    \quad \frac{dw_t^{(j)}}{dt} = \alpha w_t^{(j)} (\sigma_j \tilde F_t(\theta_t^{(j)}) - \tilde{K}_t)
    \\
    dX_t^{(i)} &= \left( - \nabla \tilde f_t(X_t^{(i)}) + \beta^{-1} \nabla \log \frac{d\tau_{\mathcal{X}}}{d\lambda}(X_t^{(i)}) \right) \ dt + \sqrt{2 \beta^{-1}} \ dW_{t}^{(i)}
\end{split}    
\end{align}
where 
\begin{align}
\begin{split} \label{eq:def_tilde_F_f_K}
\tilde F_t(\theta) &= \frac{1}{N} \sum_{i=1}^{N} \phi(X_t^{(i)}, \theta) - \frac{1}{n} \sum_{i=1}^{n} \phi(x_i, \theta), \quad \tilde f_t(x) = \frac{1}{m} \sum_{j=1}^m \sigma_j w^{(j)}_t \phi(x,\theta^{(j)}_t), \\ \tilde{K_t} &= \mathds{1}_{\sum_{j=1}^m w^{(j)}_t \geq m} \  \frac1m \sum_{j=1}^m \sigma_j w^{(j)}_t \tilde F_t(\theta_t^{(j)}).
\end{split}
\end{align}
are the empirical counterparts of the functions in \eqref{eq:def_F_f_mu}.
Then the system~\eqref{eq:coupled_particles_MLE} approximates the measure dynamics. 
Namely, as $m, N \rightarrow \infty$:
\begin{itemize}[leftmargin=0.3cm,itemsep=0pt,topsep=0.1pt]
\item the empirical measure 
$\hat{\gamma}_t = \frac{1}{m} \sum_{j=1}^m \sigma_j w_t^{(j)} \delta_{\theta_t^{(j)}}$
converges weakly to the solution $\gamma_t = \gamma_t^{+} - \gamma_t^{-}$ of \eqref{eq:coupled_dynamics_f1} with uniform initialization for any finite time interval $[0,T]$, and
\item the empirical measure $\hat{\nu}_t = \frac{1}{N} \sum_{i=1}^N \delta_{X_t^{(i)}}$ converges weakly to the solution $\nu_t$ of \eqref{eq:coupled_dynamics_f1} for any finite time interval $[0,T]$.
\end{itemize}
\end{proposition}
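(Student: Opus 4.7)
The plan is to apply a coupling-based propagation-of-chaos argument in the spirit of Sznitman, adapted to the two-population interacting system in \eqref{eq:coupled_particles_MLE}. The novelty relative to the standard single-population results used in \cite{mei2018mean, chizat2018global, rotskoff2018neural} is that the feature particles $(\theta^{(j)}, w^{(j)})$ and the sample particles $X^{(i)}$ are fully coupled through $\tilde{F}_t$ and $\tilde{f}_t$, so their empirical measures must be shown to converge jointly to the pair $(\gamma_t, \nu_t)$ solving \eqref{eq:coupled_dynamics_f1}.

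First I would establish existence and uniqueness of the PDE system \eqref{eq:coupled_dynamics_f1} on a finite interval $[0,T]$ under the standing regularity of $\phi$, $\nabla_x\phi$, $\nabla_\theta\phi$. A Picard iteration in $C([0,T]; \mathcal{M}^{+}(\Theta)^2 \times \mathcal{P}(\mathcal{X}))$ equipped with a bounded-Lipschitz metric closes once one checks that the map $(\gamma^\pm, \nu) \mapsto (F_t, f_t, K_t)$ is Lipschitz; the TV-clamp $K_t$ is handled by viewing it as the Lagrange multiplier associated with the TV constraint, which makes it continuous in $(\gamma^+,\gamma^-)$ whenever $\|\gamma^+\|_{\mathrm{TV}}+\|\gamma^-\|_{\mathrm{TV}}\ge 1$. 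Next I introduce the auxiliary McKean--Vlasov system $(\bar\theta^{(j)}, \bar w^{(j)}, \bar X^{(i)})$ driven by the deterministic fields $F_t, f_t, K_t$ from \eqref{eq:def_F_f_mu} evaluated on the true PDE solutions:
\begin{align*}
\frac{d\bar\theta_t^{(j)}}{dt} &= \alpha \sigma_j \nabla F_t(\bar\theta_t^{(j)}),
\qquad
\frac{d\bar w_t^{(j)}}{dt} = \alpha \bar w_t^{(j)}\bigl(\sigma_j F_t(\bar\theta_t^{(j)}) - K_t\bigr), \\
d\bar X_t^{(i)} &= \Bigl(-\nabla f_t(\bar X_t^{(i)}) + \beta^{-1} \nabla \log \tfrac{d\tau_{\mathcal{X}}}{d\lambda}(\bar X_t^{(i)})\Bigr)\, dt + \sqrt{2\beta^{-1}}\, dW_t^{(i)},
\end{align*}
coupled to the true particle system by identical initial data, signs $\sigma_j$, and Brownian motions $W^{(i)}$. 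By construction the auxiliary particles are i.i.d., $\mathrm{Law}(\bar X_t^{(i)}) = \nu_t$, and the signed weighted empirical measure of $(\bar\theta_t^{(j)}, \bar w_t^{(j)})$ has expectation $\gamma_t$; classical LLN/empirical-process bounds then control the gap from the auxiliary empirical measures to the PDE solution at rate $O(m^{-1/2}) + O(N^{-1/2})$ in bounded-Lipschitz norm.

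The coupling step is to bound $e_t := \max_j \E{|\theta_t^{(j)} - \bar\theta_t^{(j)}|^2 + |w_t^{(j)} - \bar w_t^{(j)}|^2} + \max_i \E{|X_t^{(i)} - \bar X_t^{(i)}|^2}$. Subtracting the two systems produces terms $\nabla\tilde F_t - \nabla F_t$ and $\nabla\tilde f_t - \nabla f_t$, which each split into a Lipschitz contribution absorbed into $e_t$ (via the Lipschitz regularity of $\nabla_\theta\phi$ and $\nabla_x\phi$) and a fluctuation contribution handled by the auxiliary-to-PDE bound of the previous paragraph; analogous decompositions apply to $\tilde K_t - K_t$. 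Grönwall's inequality then yields $\sup_{t \in [0,T]} e_t \to 0$, which implies weak convergence of $\hat\gamma_t$ and $\hat\nu_t$ to $\gamma_t$ and $\nu_t$ respectively, uniformly on $[0,T]$.

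The main obstacle will be controlling the exponential weights $w_t^{(j)}$ uniformly in $m$, because $\log w_t^{(j)}$ satisfies a linear ODE driven by the global multiplier $K_t$ and the data-dependent quantity $F_t(\theta_t^{(j)})$. One needs a priori moment bounds $\sup_{t\le T} \max_j \E{|w_t^{(j)}|^p} < \infty$ for every $p\ge 1$ before any Lipschitz comparison closes. I would obtain this by first showing at the PDE level that $\|\gamma_t^\pm\|_{\mathrm{TV}}$ remains bounded (this is exactly what the TV-clamp $K_t$ is designed to enforce), and then transferring the bound to the particle side by a bootstrap: grow $e_t$ only on the event where the weights stay in a compact set, then close the estimate and remove the truncation. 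A related technicality is the potential discontinuity of the indicator in $\tilde K_t$; one resolves it by showing that, outside a measure-zero set of times, $\mathrm{sign}(\sum_j w_t^{(j)} - m)$ is stable under small perturbations of the state, so the indicator acts as a locally constant multiplier in the Grönwall step.
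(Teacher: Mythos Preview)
Your proposal is correct and follows the same propagation-of-chaos strategy the paper invokes. The paper's own proof is far terser: it only verifies that the empirical measures $\hat\gamma_t^\sigma$ satisfy the weak form of the first equation in \eqref{eq:coupled_dynamics_f1} (with the empirical $\tilde F_t,\tilde K_t$ in place of $F_t,K_t$), observes that the forward Kolmogorov equation of the $X^{(i)}$ SDE is the Fokker--Planck part of \eqref{eq:coupled_dynamics_f1}, and then defers the entire convergence argument to \cite{sznitmantopics1991} and to \cite{domingoenrich2020amean} for the two-population coupled case---so your Sznitman-style coupling with an auxiliary McKean--Vlasov system, Gr\"onwall closure, and the technical caveats on weight moments and the indicator in $\tilde K_t$ are precisely a fleshed-out version of what the paper leaves to those references.
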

Importantly, the system of ODEs/SDEs in~\eqref{eq:coupled_particles_MLE} may be solved via forward Euler steps on $\{\theta_j\}_{j=1}^m$ and $\{w_j\}_{j=1}^m$ (or rather, $\{\log w_j\}_{j=1}^m$), and Euler-Maruyama updates on $\{X_0^{(i)}\}_{i=1}^{N}$. Such a discretization yields Alg.~\autoref{alg:implicit_ebm_f1}. We reemphasize that when $\alpha \ll 1$, Alg.~\autoref{alg:implicit_ebm_f1} is simply the classical maximum likehilood algorithm with persistent particles (up to the minor detail that gradient descent is applied to $\log w^{(j)}$ instead of $w^{(j)}$).

\begin{algorithm}
\caption{Dual $\mathcal{F}_1$-EBM training} 
\begin{algorithmic}
\STATE \textbf{Input:} $n$ samples $\{x_i\}_{i=1}^n$ of the target distribution, stepsize $s$, stepsize ratio $\alpha$.
\STATE Initialize features $(\theta_0^{(j)})_{j=1}^m$ uniformly over $\Theta$, weights $(w_0^{(j)})_{j=1}^m$  in $[0,1)$, signs $(\sigma_j)_{j=1}^m$  over $\{\pm 1\}$. Initialize generated samples $\{X_0^{(i)}\}_{i=1}^N$ unif. i.i.d. from $\{x_i\}_{i=1}^n$.
\FOR {$t=0,\dots,T-1$}
    \FOR{$i=1,\dots,N$}
        \STATE Sample $\zeta_{t}^{(i)}$ from the $d_1$-variate standard Gaussian.
        \STATE Perform Euler-Maruyama update:
        $X_{t+1}^{(i)} = X_{t}^{(i)} - s (\nabla \tilde f_t(X_t^{(i)}) + \beta^{-1} \nabla \log \frac{d\tau_{\mathcal{X}}}{d\lambda}(X_t^{(i)}))
        + \sqrt{2 \beta^{-1} s}\, \zeta_{t}^{(i)}$, where $\tilde f_t$ is defined in \eqref{eq:def_tilde_F_f_K}.
    \ENDFOR
    \FOR{$j=1,\dots,m$}
        \STATE Update
        $\theta_{t+1}^{(j)} = \theta_{t}^{(j)} + s \alpha \sigma_j \nabla \tilde F_t(\theta_t^{(j)})$, where $\tilde F_t$ is defined in \eqref{eq:def_tilde_F_f_K}.
        \STATE Update $\tilde{w}_{t+1}^{(j)} = w_{t+1}^{(j)} \exp( s \alpha \sigma_j \tilde F_t(\theta_t^{(j)}) )$. 
        \STATE Normalize, when it is needed: $w_{t+1}^{(j)} = \tilde{w}_{t+1}^{(j)}/\max\left(\frac{1}{m}\sum_{j'=1}^{m} \tilde{w}_{t+1}^{(j')},1\right)$.
    \ENDFOR
\ENDFOR
\STATE {\bfseries Output:} samples $\{X_T^{(i)}\}_{i=1}^N$, energy $f_T(x) := \frac{\beta}{m} \sum_{j=1}^{m} \sigma_j w_j \phi(x, \theta_j)$.
\end{algorithmic}
\label{alg:implicit_ebm_f1}
\end{algorithm}

\section{Links between maximum likelihood and score matching $\mathcal{F}_1$-EBMs} \label{sec:links_MLE_SM}
In this section we uncover how the score matching loss fits seamlessly as a variant of Alg.~\ref{alg:implicit_ebm_f1}, in the form of particle restarts. Interestingly, we can modify the PDE \eqref{eq:coupled_dynamics_f1} in a way that allows us to make a connection with score matching. To this end, let us introduce the following coupled measure PDE:
\begin{align}
\begin{split} \label{eq:coupled_dynamics_main}
    \partial_{t} \gamma^\sigma_t &= -\alpha \sigma \nabla_\theta \cdot \left( \gamma^\sigma_t  \nabla_{\theta} F_t(\theta) \right) + \alpha \gamma^\sigma_t \left( \sigma F_t(\theta) - K_t \right), 
    \\ \partial_{t} \nu_t &= \nabla_x \cdot \left( \nu_t \left(\nabla_x f_t(x)- \beta^{-1} \nabla \log \frac{d\tau_{\mathcal{X}}}{d\lambda} \right) \right) + \beta^{-1} \Delta_x \nu_t - \alpha \left( \nu_t - \nu_{n} \right).
\end{split}
\end{align}
Remark that the only difference between this equation and the PDE \eqref{eq:coupled_dynamics_f1} for dual maximum likelihood training is the term $-\alpha (\nu_t - \nu_{n})$, which draws $\nu_t$ closer to the empirical target measure $\nu_n$. We have:
\begin{proposition} \label{prop:limitSM}
In the limit $\alpha \to\infty$, the equations for $\gamma^\sigma_t$ in \eqref{eq:coupled_dynamics_main} reduce to
\begin{align} \label{eq:WFR_SM}
    \partial_{t} \gamma^\sigma_t = \sigma \nabla_\theta \cdot \left( \gamma^\sigma_t \nabla_{\theta} V(\gamma_t)(\theta) \right) - \gamma^\sigma_t \left(\sigma V(\gamma_t)(\theta) - \bar V(\gamma_t) \right), 
\end{align}
where $\gamma_t=\gamma^+_t-\gamma^-_t$, $\bar V(\gamma) = \int_\Theta  V(\gamma) \, d\gamma$, and $V(\gamma)(\theta)$ is the Frechet derivative of the score matching loss $L : \mathcal{M}(\Theta) \rightarrow \R$ defined in \eqref{eq:SM_objective_2}. 
\end{proposition}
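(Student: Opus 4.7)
The plan is to perform a singular-perturbation (slow--fast) analysis of the coupled PDE \eqref{eq:coupled_dynamics_main}. As $\alpha\to\infty$, the restart term $-\alpha(\nu_t-\nu_n)$ dominates the $\nu$-equation, so $\nu_t$ is slaved to the empirical measure $\nu_n$ up to an $O(1/\alpha)$ correction. When this correction is rescaled by $\alpha$ inside the forcing $F_t$ that drives the $\gamma$-equation, it produces a finite non-trivial limit which is exactly (minus) the Fréchet derivative of the score-matching loss $L$.

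First I would postulate the ansatz $\nu_t=\nu_n+\alpha^{-1}\eta_t+o(\alpha^{-1})$ and substitute it into the $\nu$-equation of \eqref{eq:coupled_dynamics_main}. Since $\alpha F_t$ is $O(1)$ by this scaling, the $\gamma$-equation gives $\partial_t\gamma_t=O(1)$ and hence $f_t,\partial_t\nu_t$ are $O(\alpha^{-1})$; matching the $O(1)$ terms in the $\nu$-equation yields
\[
\eta_t=\mathcal{L}_t\nu_n,\qquad \mathcal{L}_t\nu:=\nabla_x\!\cdot\!\bigl(\nu\bigl(\nabla_x f_t-\beta^{-1}\nabla\log\tfrac{d\tau_\mathcal{X}}{d\lambda}\bigr)\bigr)+\beta^{-1}\Delta_x\nu,
\]
understood at the level of distributions (since $\nu_n$ has no smooth density). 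Substituting into $F_t(\theta)=\int\phi(x,\theta)\,d(\nu_t-\nu_n)(x)$ then gives the pointwise limit
\[
\lim_{\alpha\to\infty}\alpha F_t(\theta)=\int_\mathcal{X}\phi(x,\theta)\,d(\mathcal{L}_t\nu_n)(x).
\]

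Next I would integrate by parts to transfer the derivatives from $\nu_n$ onto the test function $\phi(\cdot,\theta)$, producing an expression linear in $\nabla_x\phi\cdot\nabla_x f_t$ and $\Delta_x\phi$ paired against $\nu_n$. On the other hand, a direct computation of $\delta L/\delta\gamma(\theta)$ for the functional $L$ in \eqref{eq:SM_objective_2}, in which the symmetry $\phi(x,\theta)\leftrightarrow\phi(x,\theta')$ doubles the quadratic contribution and cancels the prefactor $1/2$, yields the same expression up to a sign. This identifies $\lim_{\alpha\to\infty}\alpha F_t(\theta)=-V(\gamma_t)(\theta)$, and consequently $\alpha K_t=\mathds{1}_{\|\gamma_t^+\|_{\text{TV}}+\|\gamma_t^-\|_{\text{TV}}\ge 1}\!\int\alpha F_t\,d\gamma_t\to -\bar V(\gamma_t)$. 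Inserting $\alpha F_t\rightsquigarrow-V(\gamma_t)$ and $\alpha K_t\rightsquigarrow-\bar V(\gamma_t)$ into the $\gamma$-equation of \eqref{eq:coupled_dynamics_main} flips both the transport and the birth--death signs and produces exactly \eqref{eq:WFR_SM}, the WFR descent flow on $L$.

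The main technical obstacle is making this singular-perturbation argument rigorous. One needs uniform-in-$\alpha$ bounds on $\gamma_t$ and on the rescaled residual $\alpha(\nu_t-\nu_n)$ on a fixed time interval $[0,T]$, together with enough regularity of $\phi$ and $\tau_\mathcal{X}$ for the distributional integration by parts against the empirical measure $\nu_n$ to be justified. A clean route is to mollify $\nu_n$ by convolution with a kernel of width $\varepsilon$, run the slow--fast asymptotics at fixed $\varepsilon$ using standard a priori estimates for reaction-transport equations, and then pass $\varepsilon\to 0$; alternatively, as is common at this level of the paper, the statement can be read as a formal asymptotic in the sense of distributions, with the IBP interpreted via test functions.
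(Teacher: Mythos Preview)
Your proposal is correct and follows essentially the same formal singular-perturbation mechanism as the paper: identify that $\nu_t\to\nu_n$ with an $O(\alpha^{-1})$ correction given by the transport--diffusion operator acting on $\nu_n$, pair this correction against $\phi(\cdot,\theta)$ with an integration by parts, and recognize the result as $-V(\gamma_t)$. The only notable difference is the device used to extract the correction: you postulate the matched-asymptotics ansatz $\nu_t=\nu_n+\alpha^{-1}\eta_t+o(\alpha^{-1})$ and read off $\eta_t$ by balancing orders, whereas the paper applies Duhamel's principle to the $\nu$-equation (treating $-\alpha(\nu_t-\nu_n)$ as the linear part) to obtain the exact integral representation
\[
\nu_t = \nu_0 e^{-\alpha t} + \nu_{n} (1-e^{-\alpha t}) + \int_0^t e^{-\alpha (t-s)} \Bigl( \nabla_x \cdot \bigl( \nu_s \nabla_x f_s \bigr) + \beta^{-1} \Delta \nu_s \Bigr) \, ds,
\]
from which the limit $\alpha(\nu_t-\nu_n)\to \nabla_x\cdot(\nu_n\nabla_x f_t)+\beta^{-1}\Delta\nu_n$ follows by inspection. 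The Duhamel route has the mild advantage of being an identity before passing to the limit (so it also displays the initial-layer transient $e^{-\alpha t}$), while your ansatz is the standard physicist's shortcut to the same conclusion; both arguments are formal and neither supplies the uniform-in-$\alpha$ bounds you correctly flag as the missing rigor.
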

That is, in the large $\alpha$ limit, equation \eqref{eq:coupled_dynamics_main} is equivalent to the Wasserstein-Fisher-Rao gradient flow of a loss $L$ which, remarkably,
is the score matching loss for $\mathcal{F}_1$-EBMs. This means that adding the term $- \alpha \left( \nu_t - \nu_{n} \right)$ to the dual maximum likelihood measure dynamics and letting $\alpha\to\infty$ we recover the score matching dynamics. This additional term can be  easily  implemented at particle level by replacing each training sample $X_t^{(i)}$ by some random target sample in $\{x_i\}_{i=1}^n$ with probability $p_R=1-e^{-\alpha t} = \alpha t + o(t)$ for every time interval of length $t$ (proof in~\autoref{sec:proofs_SM}). Similar birth-death processes were used in \citep{rotskoff2019global} in the context of neural network regression. Hence, the score matching scheme corresponds to modifying Alg.~\ref{alg:implicit_ebm_f1} by restarting each sample $X^{(i)}_t$ as a uniformly chosen sample in $\{x_i\}_{i=1}^n$ with probability $p_R = s \alpha$, right before the Euler Maruyama update. The restart probability acts as a knob that allows us to interpolate between score matching and maximum likelihood. In the experimental simulations we use a parameter $\alpha'$ different from $\alpha$ for the reinjection term, to study a wider range of behaviors. 

In summary, score matching differs from dual maximum likelihood in that the trained measure is being ``pulled'' towards the target measure at all times via particle restarting. Such constant pulling should be useful to alleviate sampling problems due to metastability issues which may arise with dual maximum likelihood. However, dual maximum likelihood  has the upside of providing samples of the learned EBM as a byproduct of training, which score matching does not. 
It is also interesting to contrast our approach to score matching with the works \cite{sutherland2018efficient, arbel2018kernel}, which using different techniques propose algorithms to train EBMs with RKHS energies via score matching; the connection that we identify between maximum likelihood and score matching is novel.
Finally, notice that a particle discretization of the flow \eqref{eq:WFR_SM} yields an alternative straightforward algorithm to train $\mathcal{F}_1$-EBMs via score matching (see \autoref{subsec:direct_optim}), which can be linked directly to Alg.~\ref{alg:implicit_ebm_f1} with particle restarts.

\section{Experiments} \label{sec:experiments}

We perform two sets of experiments. Our first set of experiments involves simulating the measure PDEs in dimension 1 to understand which timescale $\alpha$ provides better convergence properties, and whether particle restarts help. Since the dimension is low, we solve the PDEs exactly by gridding the space and do not need to resort to particle dynamics. In our second set of experiments 
we apply Alg.~\ref{alg:implicit_ebm_f1} to high-dimensional spheres. 

\subsection{PDE simulations}
\textbf{Setup.} In the notation of \autoref{sec:background}, we set both the base space $\mathcal{X}$ and the parameter space $\Theta$ to be the 1-dimensional torus $[0,1]$ (i.e. with periodic boundary conditions). 
We set $\phi : \mathcal{X} \times \Theta \to \R$ as a Gaussian with fixed variance $\delta^2$: $\phi(x,\theta) = \sum_{k\in \mathbb{Z}} \exp(2i\pi k(x-\theta)-\frac{1}{2} \delta^2k^2)$. 
We define the target measure $\nu_p$ with energy $E_p(x) = \int_{\Theta} \phi(x,\theta) g(\theta) \, d\theta$, where $g(\theta) = - \log(p\exp(a_1\cos(2\pi (\theta-\bar{\theta}_1)) - a_1) + q\exp(a_2\cos(2\pi (\theta-\bar{\theta}_2) - a_2))$ with $q=1-p$.
The measure $\nu_p$ and the energy $E_p$ are shown in \autoref{fig:1d_energy_density}. $\nu_p$ is bimodal: $p, q=1-p > 0$ control the relative size of the modes and $a_1>0$, $a_2>0$ control their width. Note that $E_p$ belongs to the space $\mathcal{F}_1$, which means that the target energy may be recovered exactly. We work with the population loss so that there is no statistical error. 
Consistently, we use no regularization term in the equation for $\gamma_t^\sigma$ in \ref{eq:coupled_dynamics_main} ($K_t=0$), and we only consider $\sigma=-1$ since the target $g_p$ is negative. For simplicity we also neglect the transport term in the equation for $\gamma_t^\sigma$ in \ref{eq:coupled_dynamics_main} and we set $\tau_{\mathcal{X}} = \lambda$ in the equation for $\nu_t$. These PDE are solved using a pseudo-spectral code with an exponential integrator in time.

\begin{figure*}
    \begin{minipage}[c]{10cm}
    \centering
    \includegraphics[width=0.9\textwidth]{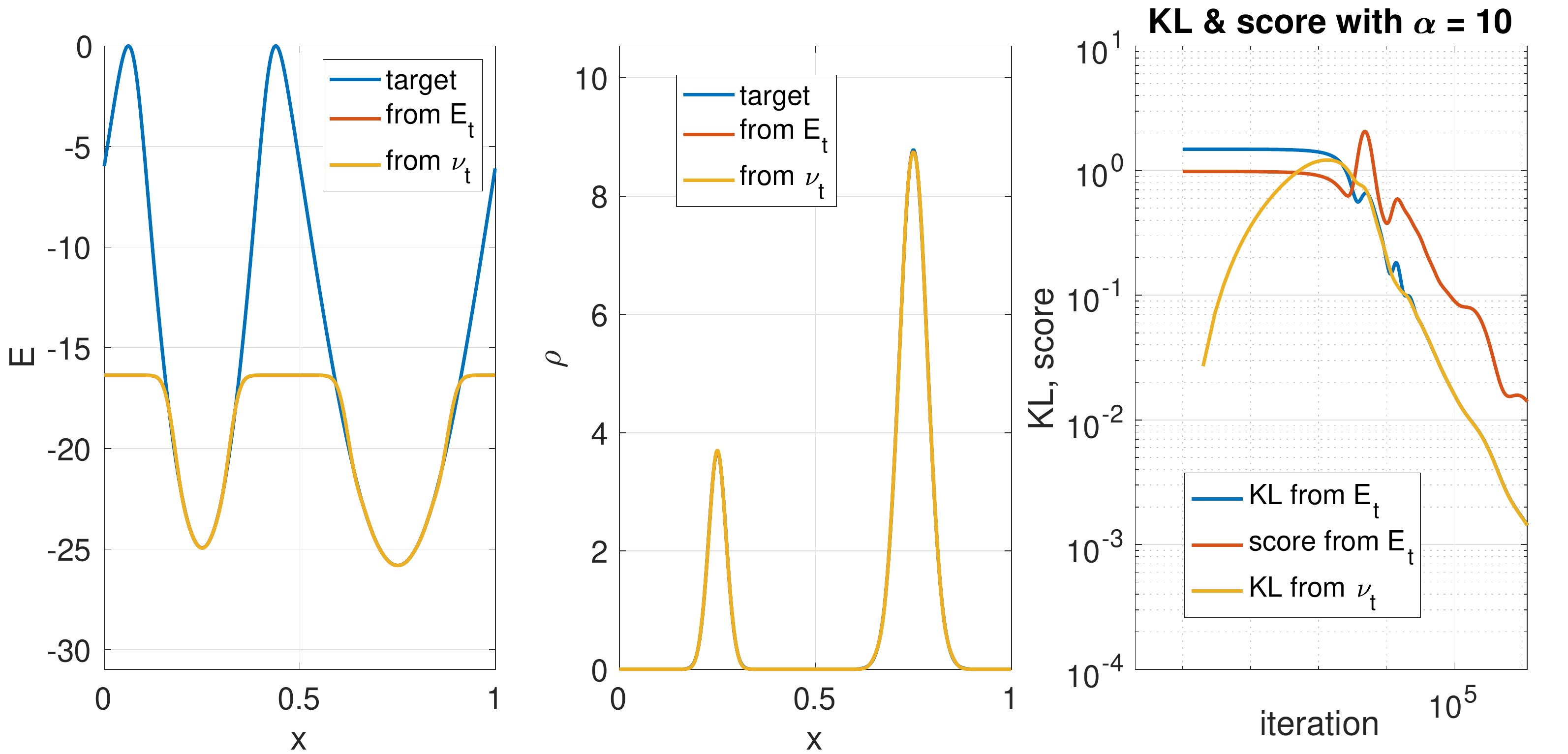}
    \end{minipage}
    \begin{minipage}[c]{5.5cm}
    \caption{(Left) Energy $E_p$ of the target distribution in blue, learned energy in red and minus log-density of $\nu_t$ in yellow. (Center) Target density in blue, density computed from learned energy in red, $\nu_t$ in yellow. (Right) Evolution of the KL computed with respect to the learned EBM in blue, of the KL with respect to $\nu_t$ in yellow, and of the SM metric in red.}
    \label{fig:1d_energy_density}
    \end{minipage}
\end{figure*}

\begin{figure*}
    \centering
    \includegraphics[width=0.495\textwidth]{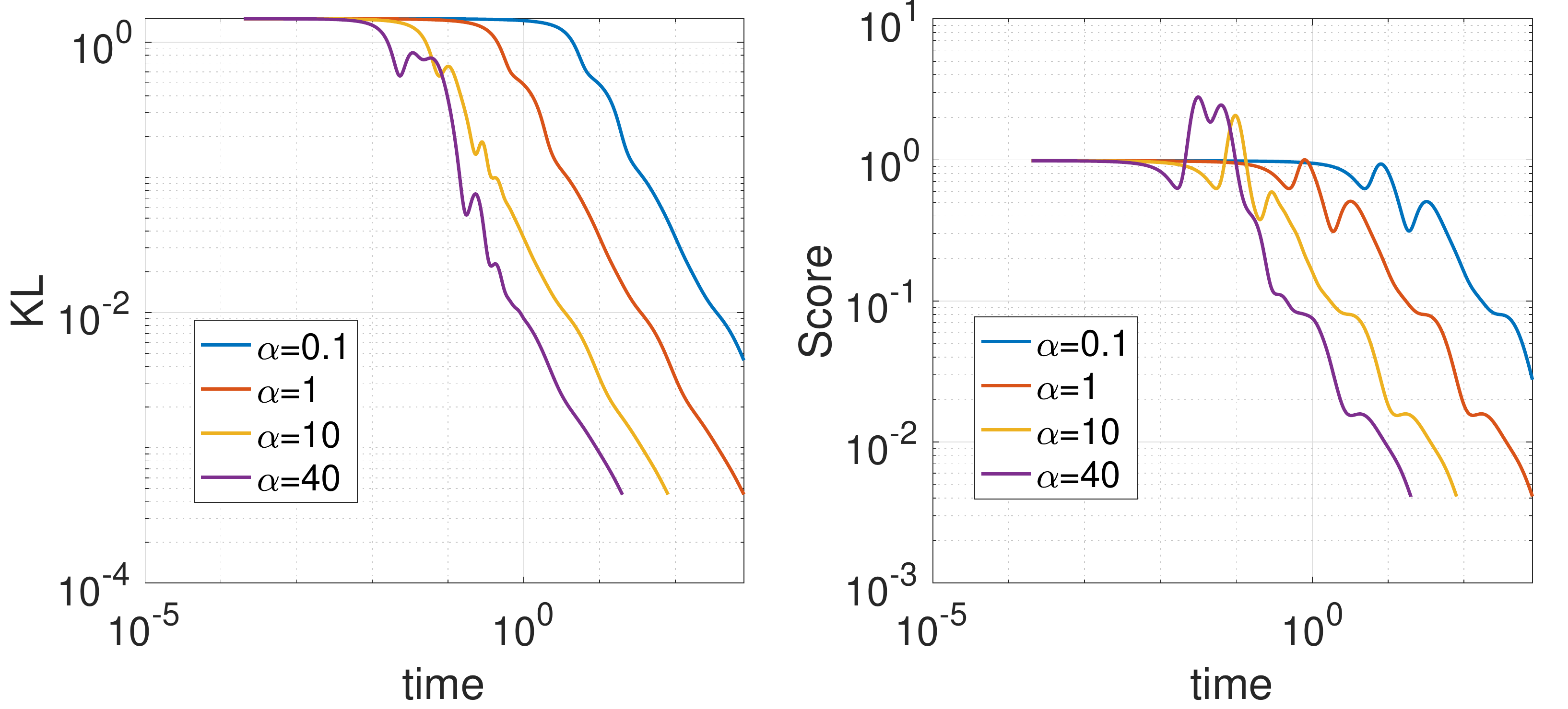} 
    \includegraphics[width=0.495\textwidth]{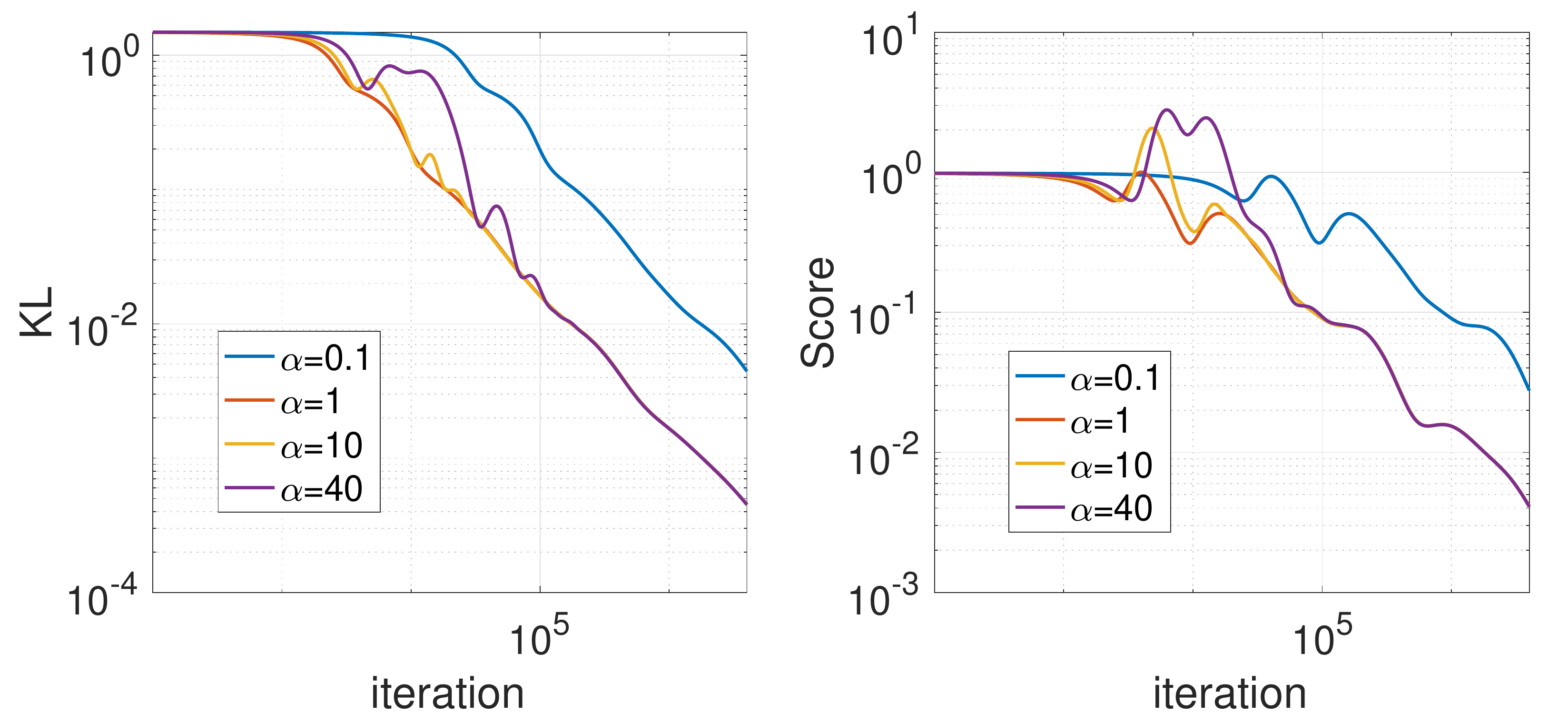}
    \caption{(Left figures) Evolution of the KL divergence and SM metric between the target and learned distributions in the physical (real) time, for different values of $\alpha$. (Right figures) Same curves with time rescaled by $\max\{\alpha,1\}$.
    }
    \label{fig:1d_evolution}
\end{figure*}

\begin{figure*}
    \begin{minipage}[c]{10cm}
    \centering
    \includegraphics[width=0.9\textwidth]{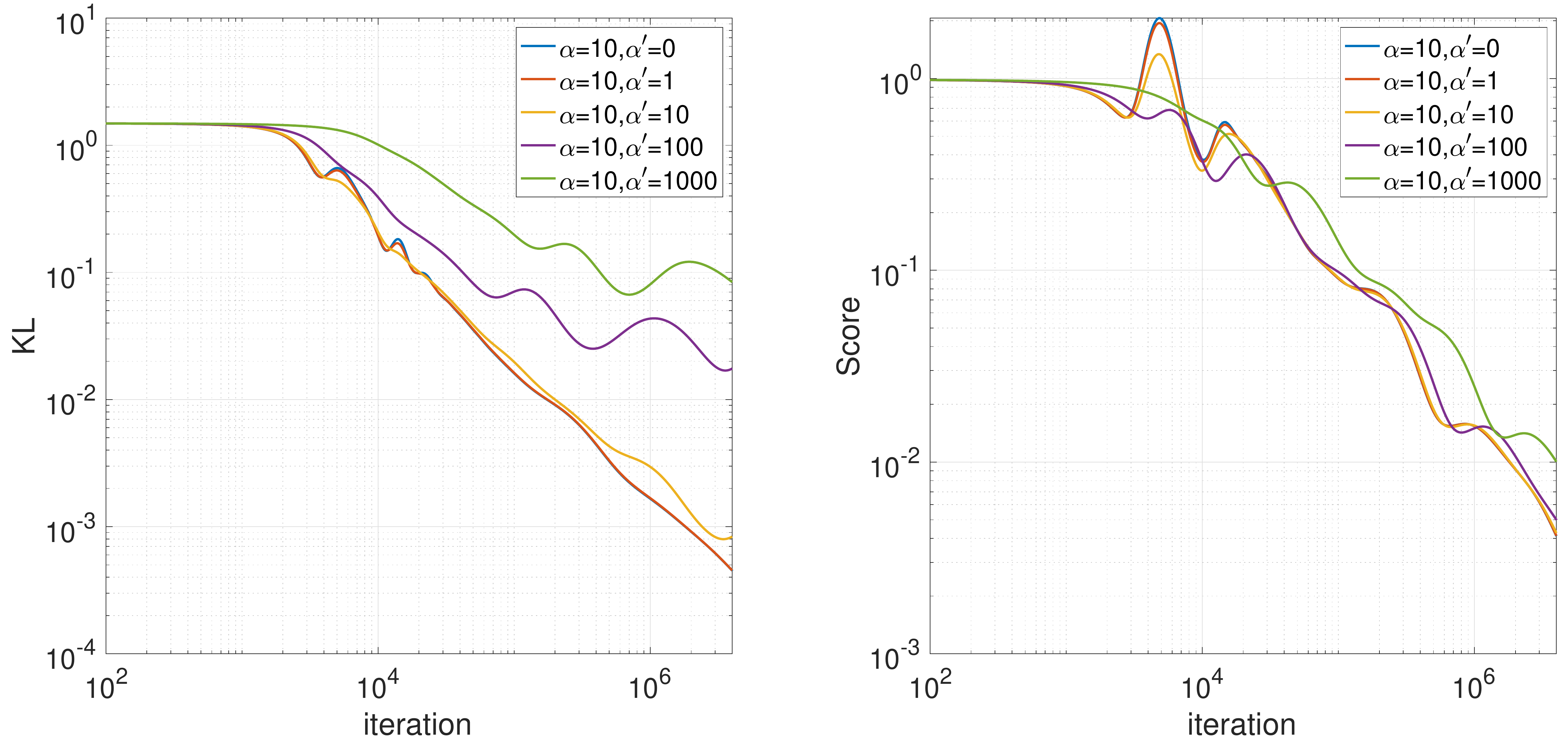}
    \end{minipage}
    \begin{minipage}[c]{5.5cm}
    \caption{Evolution of the KL and the SM for the measure PDE with different intensities of particle reinjection. For all the curves, $\alpha = 10$ and $\alpha'$ is the parameter in the reinjection term $-\alpha'(\nu_t - \nu_n)$. Times are rescaled by $\max\{\alpha,1\}$.}
    \label{fig:1d_restarts}
    \end{minipage}
\end{figure*}

\begin{figure*}
    \centering
    \includegraphics[width=0.7\textwidth]{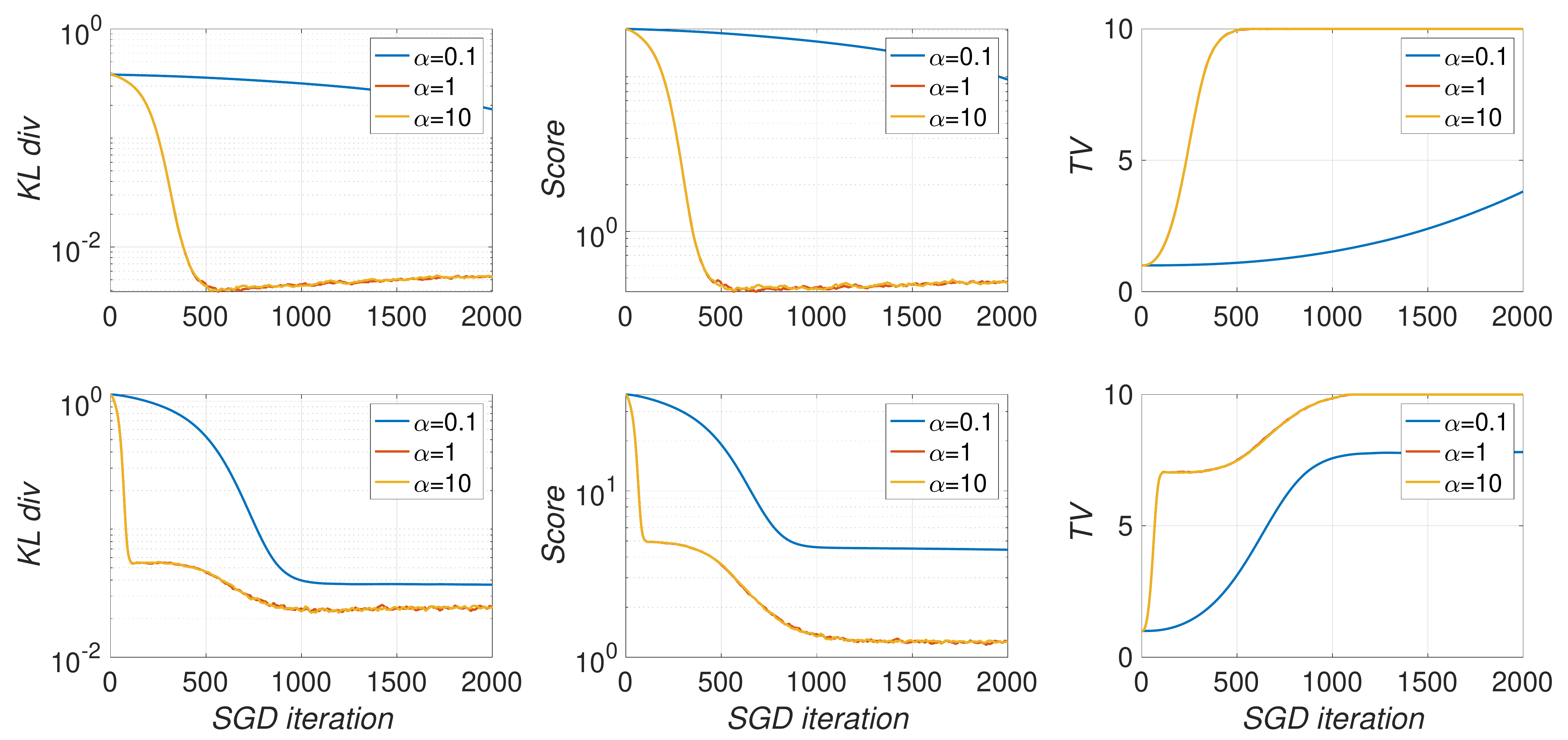}
    \caption{(Top) The evolution of the KL divergence, the score matching metric and the TV norm of the trained measure (i.e., the $\mathcal{F}_1$ norm) during training for Algorithm~\ref{alg:implicit_ebm_f1} with $\mathcal{X}=\mathbb{S}^{14}$, $m = 64$, $p_R=0$, $s = 0.02$, $n = 10^5$, $N = 2 \cdot 10^5$ and $\alpha = 0.1,1,10$, showing a speedup by a factor about 10-20 of the latter over the former. The angle between the two teacher neurons is 1.37 rad (monomodal distribution). (Bottom) Same experiments with an angle of 2.87 rad (bimodal distribution). 
    }
    \label{fig:primal_vs_dual}
\end{figure*}

\textbf{Results.} \autoref{fig:1d_evolution} and \autoref{fig:1d_restarts} show the evolution of the KL divergence and SM metric between the target measure $\nu_p$ and learned measure over training. Two time parametrizations are shown: the small plots are in the time of the PDEs \eqref{eq:coupled_dynamics_f1}-\eqref{eq:coupled_dynamics_main}, while in the large plots the time of each curve is rescaled by multiplying it by $\max\{\alpha, 1\}$. The reason behind the rescaling is that computationally the timestep needs to be proportional to $\min\{\alpha, 1\}$ to avoid numerical instabilities at the faster timescale. Thus, for algorithmic purposes the appropriate comparison between convergence speeds for different values of $\alpha$ is through the curves with rescaled time. Two main observations arise from \autoref{fig:1d_evolution} and \autoref{fig:1d_restarts}:
\vspace{-5pt}
\begin{itemize}
    \item \textbf{The best choice is $\alpha \approx 1$}: Looking at \autoref{fig:1d_evolution}, when $\alpha \ll 1$ the rescaled time curves for both KL and SM decrease slower. When $\alpha \gg 1$, the rescaled time curves decrease roughly at the same rate regardless of the specific value of $\alpha$, but larger oscillations appear the larger $\alpha$ is taken; the dynamics is more unstable.
    \vspace{-2pt}
    \item \textbf{Particle restarts hurt performance (in KL):} \autoref{fig:1d_restarts} shows that when the term $-\alpha'(\nu_t - \nu_n)$ is included in the dynamics for $\nu_t$, the convergence in KL is slower the larger $\alpha'$ is: the best choice is $\alpha' = 0$. Recall that $\alpha=0$ corresponds to maximum likelihood, while $\alpha, \alpha' \gg 1$ is equivalent to score matching.
    Noticeably, the SM curves decrease at roughly the same rate for all values of $\alpha$. This phenomenon is explained because the SM metric generally is weaker than the KL divergence.
\end{itemize}

\subsection{High-dimensional experiments for Alg.~\ref{alg:implicit_ebm_f1}}
\textbf{Setup.} To illustrate Alg.~\ref{alg:implicit_ebm_f1} in a higher-dimensional setting, we perform numerical experiments  on simple synthetic datasets generated by teacher models with energy $f^*(x) = \frac{1}{J} \sum_{j=1}^{J} w^*_j \sigma(\langle \theta^*_j, x \rangle)$,
with~$\theta^*_j \in \mathbb{S}^d$ for all~$j$.
The training is performed using Alg.~\ref{alg:implicit_ebm_f1} with the added detail that both the features $\theta^{(j)}_t$ and the particles $X^{(i)}_t$ are constrained to remain on the sphere by adding a projection step in the update of their positions. The code, figures, and videos on the dynamics can be found in the supplementary material. In the main text we consider two planted teacher neurons ($J = 2$) with negative output weights $w_1^* = w_2^* = -10$ in dimension~$d = 14$ and $m=64$ neurons for the student model, but we include additional experiments and videos in \autoref{sec:additional_experiments} and supplementary material.
We study setups with two different choices of angles between the teacher neurons, which showcase different behaviors:
\vspace{-6pt}
\begin{itemize}[leftmargin=3mm]
\item Teacher neurons $\theta^*_1, \theta^*_2$ forming an angle of 2.87 rad ($\approx$ 164 degrees),
and output weights $w_1^* = w_2^* = -10$. The teacher neurons are almost in opposite directions, and the resulting target distribution is bimodal, as the energy has two local minimizers around $\theta^*_1$ and $\theta^*_2$ (see \autoref{fig:projections_14d}).
\vspace{-3pt}
\item Teacher neurons $\theta^*_1, \theta^*_2$ forming an angle of 1.37 rad ($\approx$ 78 degrees). The teacher neurons are almost orthogonal and the resulting target distribution is monomodal; indeed, when the angle is less than~$\pi/2$, the target energy has a unique minimizer at the geodesic average between $\theta^*_1$ and $\theta^*_2$ (see \autoref{fig:projections_14d} in \autoref{sec:additional_experiments}).
\end{itemize} 
\vspace{-5pt}
\textbf{Monitoring convergence.} To monitor convergence we use a testing set of $n_*$ data points $\{x^*_i\}_{i=1}^{n_*}$ sampled from the teacher distribution: we estimate the KL divergence from the student to the teacher via $\log (\frac{1}{n^*} \sum_{i=1}^{n^*} \exp(-\beta f_t(x^*_i)+\beta f^*(x^*_i)))+\frac{1}{n^*} \sum_{i=1}^{n^*}(f_t(x^*_i)+\beta f^*(x^*_i))$
where $f_t(x) = \frac1m\sum_{j=1}^m w_t^{(j)} \sigma(\langle \theta^{(j)}_t, x \rangle)$. Similarly, for the score matching objective we use the estimate $\frac{1}{n^*} \sum_{i=1}^{n^*}\left|\nabla_x f_t(x^*_i)-\nabla_x f^*(x^*_i)\right|^2$. 

\textbf{Results.} We defer the empirical study of tuning the restart probability to \autoref{sec:additional_experiments}, and in this section focus on 
the convergence properties of Alg.~\ref{alg:implicit_ebm_f1} in the regimes $\alpha \ll 1, \alpha \gg 1$, $\alpha = 1$.
To obtain a principled comparison of the three settings where numerical errors do not blow up, we set $s$ to be the stepsize for the fastest process (particle evolution for the primal, neuron evolution for the dual), and $\min(\alpha,1) s$ the stepsize for the slow process. The results are shown in 
\autoref{fig:primal_vs_dual} for the two angle configurations between teacher neurons. We observe that for $\alpha = 1$ and $\alpha \gg 1$, the KL and SM metrics decrease much faster than for $\alpha \ll 1$. Interestingly, the decrease of the performance metrics seems to stall as soon as the hard $\mathcal{F}_1$-norm threshold is reached. 

\section{Discussion and outlook} \label{sec:discussion}
In this work we leverage a Fenchel duality result to recast the maximum likelihood loss for $\mathcal{F}_1$-EBMs into a min-max problem on probability measures over the sample space. 
This duality result paves the way for learning EBMs by training the energy parameters and the samples on simultaneous timescales via stochastic GDA.
We perform PDE simulations for a low dimensional example which suggests that similar timescales have the fastest convergence, and perform higher dimensional experiments as well.


It would be interesting to 
test Alg.~\ref{alg:implicit_ebm_f1} using deeper neural architectures and see if simultaneous timescales is the best choice as well: while the duality analysis is more complicated in this case, the scheme itself can be straightforwardly generalized to deep networks.

\section*{Acknowledgements}

CDE acknowledges partial support by the ``la Caixa'' Foundation (ID 100010434), under the agreement LCF/BQ/AA18/11680094.
EVE acknowledges partial support from the National Science Foundation (NSF) Materials Research Science and Engineering Center Program grant DMR-1420073, NSF DMS- 1522767, and the Vannevar Bush Faculty Fellowship. JB acknowledges partial support from the Alfred P. Sloan Foundation, NSF RI-1816753, NSF CAREER CIF 1845360, NSF CHS-1901091 and Samsung Electronics.

\bibliography{biblio}

\newpage

\appendix

\tableofcontents

\starttocentries

\section{Preliminaries on Fenchel duality and maxent models}
\label{sec:fenchel}


The basic theoretical tool of the present paper is Fenchel duality, whose main applications in machine learning are maximum entropy or \emph{maxent} models. We provide a brief description of such models to put in context the results in \autoref{sec:general_duality}, which are related. 

\paragraph{Fenchel duality.} If $X$ is a Banach space and $X^{*}$ is its dual space, the convex or Fenchel conjugate of $f : X \to \R$ is the function $f^{*} : X^{*} \to \R$ defined as $f^{*}(x^{*}) = \sup_{x \in X} \langle x^{*}, x \rangle - f(x)$. The Fenchel strong duality theorem (see \autoref{thm:fenchel}) states that under certain conditions, if $X,Y$ are Banach spaces, $f : X \to \R \cup \{+\infty\}, \ g : Y \to \R \cup \{+\infty\}$ are convex functions, and $A : X \to Y$ is a bounded linear map, then 
\begin{align} \label{eq:Fenchel_strong}
    \inf_{x\in X}\{f(x)+g(Ax)\} = \sup_{y^{*}\in Y^{*}}\{-f^{*}(A^{*}y^{*})-g^{*}(-y^{*})\}.
\end{align}

\paragraph{Entropy and log-partition as convex conjugates.} For simplicity, let $\mathcal{Y}$  be a finite set and let $\tau_{\mathcal{Y}} \in \mathcal{P}(\mathcal{Y}) \subseteq \R^{|\mathcal{Y}|}$ be a base distribution on $\mathcal{Y}$. Crucially, the KL divergence or relative entropy $D_{\text{KL}}(\nu \| \tau_{\mathcal{Y}})$ is a convex function of $\nu$ and its convex conjugate is the log-partition function $v \in \R^{|\mathcal{Y}|} \mapsto \log ( \sum_{y \in \mathcal{Y}} \tau_{\mathcal{Y}}(y) \exp(v(y)) )$.
The functional equivalent of this convex conjugate pair is key both for maximum entropy models, introduced below, as well as for the results of \autoref{sec:general_duality}.

\paragraph{Maximum entropy (maxent) models.} 

Let $\Phi : \mathcal{Y} \to \R^q$ be a feature mapping, and $\nu_n = \frac{1}{n} \sum_{i=1}^n \delta_{y_i} \in \mathcal{P}(\mathcal{Y})$ an empirical measure. One is interested in a statistical model that is `maximally non committal', i.e. as close as possible to the base measure in KL divergence, given that its feature moments are not too far from those of $\nu_n$.
This rationale leads to the $l^{\infty}$ maxent problem (Ch. 12, \cite{mohri2012foundations}), which is
\begin{align}
\begin{split} \label{eq:maxent_primal}
    \min_{\nu \in \mathcal{P}(\mathcal{Y})} \ D_{\text{KL}}(\nu \| \tau_{\mathcal{Y}}) \quad
    \text{such that} \quad \|\mathbb{E}_{\nu}[\Phi(y)]-\mathbb{E}_{\nu_n}[\Phi(y)]\|_{\infty} \leq \lambda.
\end{split}    
\end{align}
Let $\nu_{w} \in \mathcal{P}(\mathcal{Y})$ be the distribution with density $\frac{d\nu_{w}}{d\tau_{\mathcal{Y}}} \propto \exp(-\langle w, \Phi(y) \rangle)$.
One can apply Fenchel strong duality (equation \eqref{eq:Fenchel_strong}) on the problem \eqref{eq:maxent_primal}, by taking the KL divergence as the function $f$ and the indicator function of the constraint set as $g \circ A$. Using that the log-partition is the convex conjugate, the dual of \eqref{eq:maxent_primal} is
\begin{align} \label{eq:maxent_dual}
    \max_{w \in \R^q} -\frac{1}{n} \sum_{i=1}^n \langle w, \Phi(y_i) \rangle - \log \left(\sum_{y \in \mathcal{Y}} \exp (-\langle w, \Phi(y) \rangle) \right) - \lambda \|w\|_{1} = \frac{1}{n} \sum_{i=1}^n \log\left( \frac{d\nu_{w}}{d\tau_{\mathcal{Y}}}(y_i) \right) - \lambda \|w\|_{1},
\end{align}
Strong duality holds and $\nu_{w^{\star}}$ is a solution of \eqref{eq:maxent_primal} when $w^{\star}$ is a solution of \eqref{eq:maxent_dual}.
That is, solving an entropy maximization problem with an $\ell^{\infty}$ constraint on some generalized moments is equivalent to solving a maximum likelihood for the exponential family problem under $\ell^1$ regularization. If in \eqref{eq:maxent_dual} we replace the norm in the constraint by the $\ell^2$ norm, the corresponding dual problem involves $\ell^2$ norm of $w$ instead. The $\ell^{\infty}$-$\ell^1$ maxent problems \eqref{eq:maxent_primal}-\eqref{eq:maxent_dual} are to be compared with problems \eqref{eq:general_primal_problem_f1}-\eqref{eq:general_dual_problem_f1} in the next section, while the $\ell^2$ maxent problems should be contrasted with problems \eqref{eq:general_primal_problem}-\eqref{eq:general_dual_problem}.

\section{General duality results} \label{sec:general_duality}

In this section we state Fenchel duality results between KL regularized regression problems over probability measures with problems that are formally equivalent maximum likelihood estimation. On the one hand, in \autoref{thm:duality_general} the metric used for regression is the $L^2$ distance (not squared, unlike in least squares regression) and the corresponding dual problem is over a properly defined $L^2$ space. \autoref{thm:duality_general} is the basis for the dual formulation of $\mathcal{F}_2$-EBMs (\autoref{sec:implicit_EBM}) and also for a formulation of neural network regression via sampling (\autoref{sec:two_layer_nn_sampling}). These two topics are deferred to the appendices.
On the other hand, in \autoref{thm:duality_general_f1} the regression metric is the $L^\infty$ distance, and the corresponding dual problem is over a space of Radon measures. 
\autoref{thm:duality_general_f1} is the theoretical foundation for the dual formulation of $\mathcal{F}_1$-EBMs in \autoref{sec:implicit_mle_f1}. The proofs are in \autoref{sec:proofs_duality}.

Let $\mathcal{Y} \subseteq \R^{d_1}$ and let $\tau_{\mathcal{Y}} \in \mathcal{P}(\mathcal{Y})$ be a fixed base probability measure over $\mathcal{Y}$ with full support. Let $\mathcal{Z} \subseteq \R^{d_2}$ and let $\tau_{\mathcal{Z}} \in \mathcal{P}(\mathcal{Z})$ be a fixed base probability measure over $\mathcal{Z}$ with full support. 
Denote $L^2(\mathcal{Z}) = \{ f : \mathcal{Z} \to \R \ | \ \int_{\mathcal{Z}} f(z)^2 
\ d\tau_{\mathcal{Z}}(z) < +\infty \}$. Let $g \in L^2(\mathcal{Z})$ be a fixed function. 
\begin{assumption} \label{ass:phi}
Let $\phi : \mathcal{Y} \times \mathcal{Z} \to \R$ be a continuous function such that either $\mathcal{Y}$ is compact or (i) for any fixed $z \in \mathcal{Z}$, $\phi(y, z) = O(\xi_1(y))$ for some strictly positive $\xi_1 : \mathcal{Y} \to \R$, and (ii) 
$\xi_1(y) + \log(\xi_1(y)) = o\left(-\log\left(\frac{d\tau_{\mathcal{Y}}}{d\lambda}(y) \right) - (d_1+\epsilon) \log \|y\|_2 \right) \text{ as } \|y\|_2 \to +\infty$ for some $\epsilon > 0$,
and
(iii) the function $\xi_2(y) := (\int_{\mathcal{Z}} \phi(y, z)^2 
\ d\tau_{\mathcal{Z}}(z))^{1/2}$ fulfills $\sup_{y \in \mathcal{Y}} |\xi_2(y)|/\xi_1(y) < \infty$.
\end{assumption}

\autoref{ass:phi} imposes that either $\mathcal{Y}$ is compact, or the map $\phi$ has a well-behaved growth in a certain sense, not very stringently. Note that (ii) is merely to ensure that $\xi_1(y)$ has finite expectation under the base measure $\tau_{\mathcal{Y}}$.   

\subsection{KL-regularized $L^2$ regression} Consider the two problems 
\begin{align}
    \begin{split} \label{eq:general_primal_problem}
    \min_{\nu \in \mathcal{P}(\mathcal{Y})} &\beta^{-1} D_{\text{KL}}(\nu||\tau_{\mathcal{Y}}) + \left( \int_{\mathcal{Z}} \left( \int_{\mathcal{Y}} \phi(y, z) \ d\nu(y) - g(z) \right)^2 
    \ d\tau_{\mathcal{Z}}(z) \right)^{1/2},
\end{split}
\end{align}
and
\begin{align}
\begin{split} \label{eq:general_dual_problem}
    &\max_{\substack{h \in L^2(\mathcal{Z}) \\ \|h\|_{L^2} \leq 1}} - \int_{\mathcal{Z}} g(z) h(z)\ d\tau_{\mathcal{Z}}(z) - \frac{1}{\beta} \log\left(\int_{\mathcal{Y}} \exp \left(-\beta \int_{\mathcal{Z}} \phi(y, z) h(z)\ d\tau_{\mathcal{Z}}(z) \right) d\tau_{\mathcal{Y}}(y) \right).
\end{split}
\end{align}
\begin{theorem}
\label{thm:duality_general}
The problems \eqref{eq:general_primal_problem} and \eqref{eq:general_dual_problem} are convex. Suppose that \autoref{ass:phi} holds. Then problem \eqref{eq:general_dual_problem} is the Fenchel dual of problem \eqref{eq:general_primal_problem}, and strong duality holds. Moreover, the solution $\nu^{\star}$ of \eqref{eq:general_primal_problem} is unique and its density satisfies
\begin{align}
    \frac{d\nu^{\star}}{d\tau_{\mathcal{Y}}}(y) = \frac{1}{Z_{\beta}}\exp \left( - \beta \int_{\mathcal{Z}} \phi(y, z) \ h^{\star}(z) d\tau_{\mathcal{Z}}(z) \right),
\end{align}
where $h^{\star}$ is a solution of \eqref{eq:general_dual_problem} and $Z_{\beta}$ is a normalization constant.
\end{theorem}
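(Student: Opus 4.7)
The plan is to recognise \eqref{eq:general_primal_problem} as an instance of Fenchel--Rockafellar duality. I would set $X = \mathcal{M}(\mathcal{Y})$ paired with a space of continuous test functions of growth $O(\xi_2)$, $Y = L^2(\mathcal{Z}, \tau_{\mathcal{Z}})$ (self-dual), and $A : X \to Y$ the bounded linear map $(A\nu)(z) := \int_{\mathcal{Y}} \phi(y,z) \, d\nu(y)$; boundedness of $A$ on probability measures follows from Jensen and \autoref{ass:phi}(iii). Put $f(\nu) := \beta^{-1} D_{\text{KL}}(\nu || \tau_{\mathcal{Y}})$ on $\mathcal{P}(\mathcal{Y})$ and $+\infty$ otherwise, and $G(u) := \|u - g\|_{L^2}$; both are convex, proper, lower semicontinuous, so the primal \eqref{eq:general_primal_problem} is convex, while the dual \eqref{eq:general_dual_problem} is concave because the log-partition is convex in $h$ and the remaining term is linear.

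Next I would compute the two Fenchel conjugates. The translated-norm conjugate is standard: $G^{*}(h) = \langle h, g \rangle_{L^2}$ on the closed unit ball of $L^2(\mathcal{Z})$ and $+\infty$ elsewhere. For $f^{*}$ I would invoke the Donsker--Varadhan variational identity $f^{*}(\psi) = \beta^{-1} \log \int_{\mathcal{Y}} e^{\beta \psi(y)} \, d\tau_{\mathcal{Y}}(y)$, extended beyond bounded $\psi$. The relevant test functions are $\psi = A^{*} h$ with $h \in L^2(\mathcal{Z})$, where $A^{*} h(y) = \int_{\mathcal{Z}} \phi(y,z) h(z) \, d\tau_{\mathcal{Z}}(z)$ satisfies $|A^{*} h(y)| \leq \|h\|_{L^2} \xi_2(y) \leq C\|h\|_{L^2} \xi_1(y)$ by Cauchy--Schwarz and \autoref{ass:phi}(iii); \autoref{ass:phi}(i)--(ii) then force $\int e^{\beta |A^{*} h|} \, d\tau_{\mathcal{Y}} < \infty$, so the identity is applicable.

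With the conjugates in hand I would apply a version of Fenchel--Rockafellar valid for non-reflexive $X$, requiring only that $G$ be continuous at a point of $A(\mathrm{dom}\, f)$: this qualification is verified at $\nu_0 = \tau_{\mathcal{Y}}$, since $f(\tau_{\mathcal{Y}}) = 0$, $A\tau_{\mathcal{Y}} \in L^2(\mathcal{Z})$, and $G$ is continuous everywhere on $L^2$. Strong duality then gives $\inf_{\nu} \{f(\nu) + G(A\nu)\} = \sup_{h} \{-f^{*}(A^{*} h) - G^{*}(-h)\}$, and substituting the explicit forms (after the cosmetic change $h \to -h$) reproduces \eqref{eq:general_dual_problem}. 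Uniqueness of $\nu^{\star}$ follows from the strict convexity of $D_{\text{KL}}(\cdot || \tau_{\mathcal{Y}})$; the Gibbs representation $d\nu^{\star}/d\tau_{\mathcal{Y}} \propto \exp(-\beta A^{*} h^{\star})$ is read off from the subdifferential inclusion $\nu^{\star} \in \partial f^{*}(-A^{*} h^{\star})$ together with the explicit derivative $\partial f^{*}(\psi) = \{Z_{\beta}^{-1} e^{\beta \psi} \tau_{\mathcal{Y}}\}$.

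The main obstacle is precisely that $\mathcal{M}(\mathcal{Y})$ is non-reflexive and the test functions $A^{*} h$ are generically unbounded, so the off-the-shelf Fenchel--Rockafellar and Donsker--Varadhan statements, phrased for reflexive spaces or bounded test functions respectively, do not apply directly. \autoref{ass:phi} is calibrated to bridge this gap: (iii) ensures $A : \mathcal{M}(\mathcal{Y}) \to L^2(\mathcal{Z})$ is well-defined and bounded, while (i)--(ii) provide the exponential integrability needed to identify $f^{*}$ with the log-moment generating functional on an enlarged space of test functions of growth $O(\xi_2)$, thereby rehabilitating Donsker--Varadhan and, through it, strong duality. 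In the compact case these growth conditions are vacuous and one can work directly with bounded continuous test functions.
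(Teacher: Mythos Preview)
Your outline matches the paper's proof closely: same decomposition $F(\nu)=\beta^{-1}D_{\text{KL}}(\nu\|\tau_{\mathcal{Y}})$, $G(u)=\|u-g\|_{L^2}$, $A\nu=\int\phi(\cdot,z)\,d\nu$, the same conjugates (Donsker--Varadhan for $F^*$, indicator of the unit $L^2$ ball plus $\langle g,\cdot\rangle$ for $G^*$), the same qualification condition checked at $\tau_{\mathcal{Y}}$, and the same mechanism (weak-duality chain / subdifferential) to read off the Gibbs form of $\nu^\star$. The one place where the paper is more explicit than your sketch is the choice of $X$: rather than ``$\mathcal{M}(\mathcal{Y})$ paired with test functions of growth $O(\xi_2)$'', the paper constructs the weighted Banach space $\mathcal{M}_{\xi_1}(\mathcal{Y}):=\{\nu:\int_{\mathcal{Y}}\xi_1\,d|\nu|<\infty\}$ with norm $\|\nu\|_{\text{TV},\xi_1}=\int\xi_1\,d|\nu|$, and shows its dual contains $C_{b,\xi_1}(\mathcal{Y})$---this is the rigorous realisation of your intended pairing, and it is exactly what makes $A$ bounded on all of $X$ (not merely on $\mathcal{P}(\mathcal{Y})$), places $A^*h$ in $X^*$, and allows one to verify that the candidate Gibbs measure actually lies in $X$.
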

The proof of this result is in \autoref{sec:proofs_duality}. Note that in the problem \eqref{eq:general_dual_problem} we are implicitly optimizing over an RKHS ball, which makes \autoref{thm:duality_general} close to the results from \cite{dai2019kernel}.

A relevant problem that is very similar to \eqref{eq:general_primal_problem} is:
\begin{align}
\begin{split} \label{eq:general_primal_problem2}
    \min_{\nu \in \mathcal{P}(\mathcal{Y})} &\tilde{\beta}^{-1} D_{KL}(\nu||\tau_{\mathcal{Y}}) + \int_{\mathcal{Z}} \left( \int_{\mathcal{Y}} \phi(y, z) \ d\nu(y) - g(z) \right)^2 
    \ d\tau_{\mathcal{Z}}(z).
\end{split}
\end{align}
The following result links this problem with problem \eqref{eq:general_primal_problem}.
\begin{proposition} \label{lem:primal}
Problems \eqref{eq:general_primal_problem} and \eqref{eq:general_primal_problem2} are equivalent in the following sense: if $\nu_1^{\star}$ is a solution of \eqref{eq:general_primal_problem} for $\beta$, then it is also a solution of \eqref{eq:general_primal_problem2} for 
\begin{align}
\tilde{\beta} = \beta \left( 4 \int_{\mathcal{Z}} \left( \int_{\mathcal{Y}} \phi(y, z) \ d\nu_1^{\star}(y) - g(z) \right)^2 d\tau_{\mathcal{Z}}(z) \right)^{-1/2}
\end{align}
provided that the left-most factor is non-zero.
Conversely, if $\nu_2^{\star}$ is a solution of \eqref{eq:general_primal_problem2} for $\tilde{\beta}$, then it is also a solution of \eqref{eq:general_primal_problem} for 
\begin{align}
\beta = \tilde{\beta} \left( 4 \int_{\mathcal{Z}} \left( \int_{\mathcal{Y}} \phi(y, z) \ d\nu_2^{\star}(y) - g(z) \right)^2 d\tau_{\mathcal{Z}}(z) \right)^{1/2}.
\end{align}
\end{proposition}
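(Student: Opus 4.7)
Set $F(\nu) := \int_{\mathcal{Z}}\bigl(\int_{\mathcal{Y}}\phi(y,z)\,d\nu(y)-g(z)\bigr)^2 d\tau_{\mathcal{Z}}(z)$ and $T\nu := \int\phi(\cdot, z)d\nu(y)$, so that $F^{1/2}$ is the $L^2(\tau_{\mathcal{Z}})$-norm of the affine map $\nu\mapsto T\nu-g$; hence both $F$ and $F^{1/2}$ are convex in $\nu$, and combined with convexity of the KL divergence this makes (11) and (13) convex problems in which first-order (subgradient) conditions are sufficient. The plan is to exhibit an explicit correspondence via the variational identity $\sqrt{a} = \inf_{t>0}\bigl(t/2 + a/(2t)\bigr)$, attained at $t=\sqrt{a}$ for $a>0$. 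Setting $\Psi(\nu,t;\beta) := \beta^{-1} D_{\text{KL}}(\nu\|\tau_{\mathcal{Y}}) + t/2 + F(\nu)/(2t)$, problem (11) is then equivalent to $\inf_{\nu\in\mathcal{P}(\mathcal{Y}),\,t>0}\Psi(\nu,t;\beta)$, and for fixed $t$ the inner minimization in $\nu$ coincides, up to a positive multiplicative factor and a constant, with problem (13) at $\tilde{\beta} = \beta/(2t)$.

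For the forward direction I would argue directly via AM-GM. Let $c_1 := F(\nu_1^{\star})^{1/2}>0$. Optimality of $\nu_1^{\star}$ for (11) gives, for every $\nu\in\mathcal{P}(\mathcal{Y})$, the inequality $\beta^{-1}\bigl(D_{\text{KL}}(\nu_1^{\star}\|\tau_{\mathcal{Y}}) - D_{\text{KL}}(\nu\|\tau_{\mathcal{Y}})\bigr)\leq F(\nu)^{1/2}-c_1$. Multiplying both sides by $2c_1>0$ and applying $2c_1 F(\nu)^{1/2}\leq c_1^2+F(\nu)$ yields $(2c_1/\beta)\bigl(D_{\text{KL}}(\nu_1^{\star}\|\tau_{\mathcal{Y}})-D_{\text{KL}}(\nu\|\tau_{\mathcal{Y}})\bigr)\leq F(\nu)-F(\nu_1^{\star})$, which is precisely the optimality condition for (13) at $\nu_1^{\star}$ with $\tilde\beta = \beta/(2c_1) = \beta\bigl(4F(\nu_1^{\star})\bigr)^{-1/2}$, matching the stated formula.

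For the converse the analogous direct argument fails because the AM-GM inequality flips the wrong way when $F(\nu)\neq F(\nu_2^{\star})$, so I would instead leverage joint convexity of $\Psi$ on $\mathcal{P}(\mathcal{Y})\times(0,\infty)$: the term $(\nu,t)\mapsto F(\nu)/t$ is jointly convex (direct Hessian computation for $u^2/t$ composed with the affine map $\nu\mapsto T\nu-g$, i.e.\ a perspective construction), and the remaining summands are obviously jointly convex. Setting $\beta := 2\tilde\beta c_2$ with $c_2 := F(\nu_2^{\star})^{1/2}>0$, I would verify that $(\nu_2^{\star}, c_2)$ is a partial minimizer of $\Psi(\cdot,\cdot;\beta)$ in each argument: in $t$ because $t\mapsto t/2+c_2^2/(2t)$ is minimized at $t=c_2$; and in $\nu$ because $\Psi(\cdot, c_2;\beta)$ equals $(2c_2)^{-1}$ times the objective of (13) at $\tilde\beta=\beta/(2c_2)$ plus a constant, which $\nu_2^{\star}$ minimizes by hypothesis. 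A standard convex-analysis lemma --- partial minimality in each variable plus joint convexity implies joint minimality --- then yields $\Phi_1(\nu_2^{\star};\beta) = \Psi(\nu_2^{\star},c_2;\beta) = \inf_{\nu,t}\Psi(\nu,t;\beta) = \inf_{\nu}\Phi_1(\nu;\beta)$, where $\Phi_1$ is the objective of (11).

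The main obstacle is the converse: rigorously invoking the partial-minimality lemma on the product $\mathcal{P}(\mathcal{Y})\times(0,\infty)$ requires handling the unit-mass constraint via a Lagrange multiplier and working with directional derivatives or subdifferentials in a non-reflexive measure space, but this is routine once joint convexity of $\Psi$ is established. Verifying that the perspective term is jointly convex and the variational identity for $\sqrt{\cdot}$ are both elementary.
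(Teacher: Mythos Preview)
Your proof is correct but takes a genuinely different route from the paper.

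The paper argues symmetrically in both directions via the Euler--Lagrange conditions. Writing $c(\nu)=F(\nu)^{1/2}$, the first-order condition for \eqref{eq:general_primal_problem} at $\nu_1^{\star}$ reads
\[
\frac{d\nu_1^{\star}}{d\tau_{\mathcal{Y}}}(y)=\frac{1}{Z}\exp\!\Bigl(-\frac{\beta}{c(\nu_1^{\star})}\int_{\mathcal{Z}}\bigl(T\nu_1^{\star}(z)-g(z)\bigr)\phi(y,z)\,d\tau_{\mathcal{Z}}(z)\Bigr),
\]
while the first-order condition for \eqref{eq:general_primal_problem2} at $\nu_2^{\star}$ reads
\[
\frac{d\nu_2^{\star}}{d\tau_{\mathcal{Y}}}(y)=\frac{1}{Z}\exp\!\Bigl(-2\tilde\beta\int_{\mathcal{Z}}\bigl(T\nu_2^{\star}(z)-g(z)\bigr)\phi(y,z)\,d\tau_{\mathcal{Z}}(z)\Bigr).
\]
These are the same fixed-point equation once $2\tilde\beta=\beta/c(\nu^{\star})$, and since each problem is strictly convex (via the KL term) the Euler--Lagrange condition uniquely determines the minimizer. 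This dispatches both directions in one stroke with essentially no auxiliary machinery.

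Your forward direction via the AM--GM bound $2c_1\sqrt{F(\nu)}\le c_1^2+F(\nu)$ is a nice alternative: it compares objective values directly and never differentiates, so it does not appeal to uniqueness of minimizers. Your converse via the perspective lift $\Psi(\nu,t)$ and joint convexity is also valid, but note that the ``partial minimality $+$ joint convexity $\Rightarrow$ global minimality'' step is \emph{not} a general fact for nonsmooth convex functions (e.g.\ $(x,y)\mapsto\max(x,y)$ at the origin); it works here precisely because $\Psi$ is Gateaux differentiable at $(\nu_2^{\star},c_2)$ --- both $D_{\mathrm{KL}}$ and $F(\nu)/t$ are smooth there --- so the joint directional derivative splits as the sum of the partial ones. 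You allude to this in your final paragraph, but it is the crux, not a routine detail. The paper's Euler--Lagrange comparison sidesteps this entirely and is shorter; your argument has the conceptual payoff of explaining \emph{why} the square-root and squared problems are linked (they share the same family of relaxations in $t$).
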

And the next lemma provides additional insights into how the problems \eqref{eq:general_primal_problem} and \eqref{eq:general_primal_problem2} differ in the planted case.
\begin{proposition} \label{prop:planted}
    Suppose $g : \mathcal{Z} \to \R$ is of the form $g(z) = \int_{\mathcal{Y}} \phi(y, z) \ d\nu_p(y)$ for some $\nu_p \in \mathcal{P}(\mathcal{Z})$, and assume that the (negated) log-density $E(y) = -\log(\frac{d\nu_p}{d\tau_{\mathcal{Y}}}(y))$ belongs to the RKHS ball $B_{\mathcal{F}_2}(\beta_0)$.
    
    (a) On the one hand, when $\beta \geq \beta_0$ the solution $\nu^{\star}_1$ of \eqref{eq:general_primal_problem} is equal to $\nu_p$. That is, there is recovery of the planted target measure and consequently $\int_{\mathcal{Z}} \left( \int_{\mathcal{Y}} \phi(y, z)^2 \ d\nu_1^{\star}(y) - g(z) \right)^2 
    \ d\tau_{\mathcal{Z}}(z) = 0$.
    
    (b) On the other hand, for all choices of $\tilde{\beta}$ finite if $\nu^{\star}_2$ is the solution of \eqref{eq:general_primal_problem2}, the unregularized regression loss at $\nu^{\star}_2$ is \textit{not} zero: $\int_{\mathcal{Z}} \left( \int_{\mathcal{Y}} \phi(y, z)^2 \ d\nu_2^{\star}(y) - g(z) \right)^2 
    \ d\tau_{\mathcal{Z}}(z) > 0$. Hence, $\nu^{\star}_2 \neq \nu_p$ and there is no recovery.
\end{proposition}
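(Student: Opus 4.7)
My plan is to use the strong Fenchel duality in \autoref{thm:duality_general} for part (a), and a first-order optimality argument combined with the cross-entropy identity $\int \log(d\mu/d\mu')\,d(\mu-\mu') = D_{\text{KL}}(\mu\|\mu') + D_{\text{KL}}(\mu'\|\mu)$ for part (b). Throughout I implicitly assume $E$ is non-constant, equivalently $\nu_p \neq \tau_{\mathcal{Y}}$, since otherwise both statements become trivial.

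For part (a), since $E \in \mathcal{B}_{\mathcal{F}_2}(\beta_0)$, I would fix $h_0 \in L^2(\tau_{\mathcal{Z}})$ with $\|h_0\|_{L^2} \leq \beta_0$ realizing $E(y) = \int_{\mathcal{Z}} \phi(y,z)\, h_0(z)\, d\tau_{\mathcal{Z}}(z)$, and take $h^\star := h_0/\beta$ as the candidate dual optimum of \eqref{eq:general_dual_problem}. Feasibility $\|h^\star\|_{L^2} \leq 1$ follows from $\beta \geq \beta_0$. I would then check that the primal value at $\nu_p$ equals the dual value at $h^\star$: at $\nu_p$ the quadratic residual vanishes by the planted form of $g$, and $d\nu_p/d\tau_{\mathcal{Y}} = Z_p^{-1} e^{-E}$ reduces the primal to $\beta^{-1}(-\int E\, d\nu_p - \log Z_p)$; dually, Fubini together with $g = \int \phi(\cdot,z)\, d\nu_p$ gives that the linear term equals $\beta^{-1}\!\int E\, d\nu_p$ and the log-partition piece equals $\beta^{-1} \log Z_p$. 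Strong duality from \autoref{thm:duality_general} then forces $\nu_p$ to be a primal optimum, and the uniqueness clause yields $\nu^\star_1 = \nu_p$; the vanishing of the regression loss at $\nu^\star_1$ is immediate.

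For part (b), I would work out the first-order optimality condition for the convex functional $J(\nu) = \tilde\beta^{-1} D_{\text{KL}}(\nu \| \tau_{\mathcal{Y}}) + \int_{\mathcal{Z}} r_\nu(z)^2\, d\tau_{\mathcal{Z}}(z)$ with $r_\nu(z) := \int \phi(y,z)\, d\nu(y) - g(z)$. Testing against mean-zero density perturbations yields the pointwise identity $\log(d\nu_2^\star/d\tau_{\mathcal{Y}})(y) = -2\tilde\beta \int_{\mathcal{Z}} \phi(y,z)\, r_{\nu_2^\star}(z)\, d\tau_{\mathcal{Z}}(z) + c$ for some normalization constant $c$. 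Suppose toward contradiction that the regression loss at $\nu_2^\star$ is zero, i.e.\ $r_{\nu_2^\star} = 0$ in $L^2(\tau_{\mathcal{Z}})$. The right-hand side then reduces to the constant $c$, so the density $d\nu_2^\star/d\tau_{\mathcal{Y}}$ is constant, and unit-mass normalization forces $\nu_2^\star = \tau_{\mathcal{Y}}$. Consequently $r_{\tau_{\mathcal{Y}}} = \int \phi(y,\cdot)(d\tau_{\mathcal{Y}} - d\nu_p)(y) = 0$ in $L^2(\tau_{\mathcal{Z}})$; pairing with $h_0$ and applying Fubini gives $\int E\, d(\tau_{\mathcal{Y}} - \nu_p) = 0$. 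On the other hand, substituting $E = -\log(d\nu_p/d\tau_{\mathcal{Y}}) - \log Z_p$ directly computes $\int E\, d(\tau_{\mathcal{Y}} - \nu_p) = D_{\text{KL}}(\nu_p\|\tau_{\mathcal{Y}}) + D_{\text{KL}}(\tau_{\mathcal{Y}}\|\nu_p) > 0$ whenever $\nu_p \neq \tau_{\mathcal{Y}}$, producing the contradiction.

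The principal technical obstacle is the rigorous justification of the density-level first-order condition in (b): one needs $\nu_2^\star \ll \tau_{\mathcal{Y}}$ (from finite KL) and enough positivity of the density to admit interior variations, and the Fréchet differentiation of the quadratic term must commute with the integrals; for the non-compact case, the growth controls built into \autoref{ass:phi} are what make these exchanges legitimate. A secondary but essential caveat is the implicit non-degeneracy assumption $\nu_p \neq \tau_{\mathcal{Y}}$, without which the final strict inequality is unavailable and (b) collapses because $\tau_{\mathcal{Y}}$ solves both problems trivially.
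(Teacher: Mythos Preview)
Your proof is correct, and for part (a) it follows the same duality route as the paper: you verify directly that the primal value at $\nu_p$ matches the dual value at $h_0/\beta$, while the paper recasts the dual \eqref{eq:general_dual_problem} as $\min_{f\in\mathcal{B}_{\mathcal{F}_2}(\beta)} D_{\text{KL}}(\nu_p\|\nu_f)$ and observes that $E$ lies in the feasible ball. These are two phrasings of the same computation.

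For part (b) both arguments begin from the Euler--Lagrange equation \eqref{eq:euler_problem2} and reduce the hypothesis ``regression loss $=0$'' to the conclusion that $\log(d\nu_2^\star/d\tau_{\mathcal{Y}})$ is constant, i.e.\ $\nu_2^\star=\tau_{\mathcal{Y}}$. The paper stops here, asserting without justification that the log-density cannot be identically constant. Your argument goes further and actually closes this case: from $r_{\tau_{\mathcal{Y}}}=0$ you pair against $h_0$ to obtain $\int E\,d(\tau_{\mathcal{Y}}-\nu_p)=0$, and then the identity $\int E\,d(\tau_{\mathcal{Y}}-\nu_p)=D_{\text{KL}}(\tau_{\mathcal{Y}}\|\nu_p)+D_{\text{KL}}(\nu_p\|\tau_{\mathcal{Y}})$ forces $\nu_p=\tau_{\mathcal{Y}}$. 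So your route is genuinely more complete on (b), and your explicit caveat that $\nu_p\neq\tau_{\mathcal{Y}}$ is needed (else the statement is vacuously false) is a point the paper leaves implicit. One cosmetic remark: since the statement defines $E=-\log(d\nu_p/d\tau_{\mathcal{Y}})$ directly, the normalization constant $Z_p$ you introduce is identically $1$; it is harmless but unnecessary.
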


\subsection{KL-regularized $L^{\infty}$ regression} \label{sec:kl_regularized_Linfty}
Consider the two problems
\begin{align}
    \begin{split} \label{eq:general_primal_problem_f1}
    \min_{\nu \in \mathcal{P}(\mathcal{Y})} &\beta^{-1} D_{\text{KL}}(\nu||\tau_{\mathcal{Y}}) + \left\| \int_{\mathcal{Y}} \phi(y, \cdot) \ d\nu(y) - g(\cdot) \right\|_{L^{\infty}},
\end{split}
\end{align}
and
\begin{align}
\begin{split} \label{eq:general_dual_problem_f1}
    &\max_{\substack{\gamma \in \mathcal{M}(\mathcal{Z}) \\ \|\gamma\|_{\text{TV}} \leq 1}} - \int_{\mathcal{Z}} g(z) \ d\gamma(z) - \frac{1}{\beta} \log\left(\int_{\mathcal{Y}} \exp \left(-\beta \int_{\mathcal{Z}} \phi(y, z) \ d\gamma(z) \right) d\tau_{\mathcal{Y}}(y) \right).
\end{split}
\end{align}
\begin{theorem}
\label{thm:duality_general_f1}
The problems \eqref{eq:general_primal_problem_f1} and \eqref{eq:general_dual_problem_f1} are convex. Suppose that \autoref{ass:phi} holds and also that (i) there exists $K > 0$ such that $\sup_{z \in \mathcal{Z}} \sup_{y \in \mathcal{Y}} \phi(y,z)/\xi_1(y) < K$, and (ii) $g(\cdot) \in C_b(\mathcal{Z})$. Then problem \eqref{eq:general_dual_problem_f1} is the Fenchel dual of  problem \eqref{eq:general_primal_problem_f1}, and strong duality holds. Moreover, the solution $\nu^{\star}$ of \eqref{eq:general_primal_problem_f1} is unique and its density satisfies
\begin{align}
    \frac{d\nu^{\star}}{d\tau_{\mathcal{Y}}}(y) = \frac{1}{Z_{\beta}}\exp \left( - \beta \int_{\mathcal{Z}} \phi(y, z) \ d\gamma^{\star}(z) \right),
\end{align}
where $\gamma^{\star}$ is a solution of \eqref{eq:general_dual_problem_f1} and $Z_{\beta}$ is a normalization constant.
\end{theorem}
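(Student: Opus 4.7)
My plan is to apply the Fenchel--Rockafellar strong duality identity \eqref{eq:Fenchel_strong} in an appropriate non-reflexive Banach pairing. Take $X = \mathcal{M}(\mathcal{Y})$ equipped with total variation, identified via Riesz as the dual of $C_0(\mathcal{Y})$, and take $Y = C_b(\mathcal{Z})$ equipped with $\|\cdot\|_\infty$. The uniform bound (i) that $\sup_{y,z}\phi(y,z)/\xi_1(y) < K$ makes the operator $(A\nu)(z) := \int_{\mathcal{Y}} \phi(y,z)\, d\nu(y)$ a well-defined bounded linear map $X \to Y$, and the assumption $g \in C_b(\mathcal{Z})$ puts the target in $Y$. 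Define $f(\nu) = \beta^{-1} D_{\text{KL}}(\nu\|\tau_{\mathcal{Y}}) + \iota_{\mathcal{P}(\mathcal{Y})}(\nu)$ and $g_0(h) = \|h-g\|_\infty$; both are proper, convex, and lower semicontinuous in the relevant topologies.

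Next I would compute the two Fenchel conjugates explicitly and verify that \eqref{eq:Fenchel_strong} delivers precisely \eqref{eq:general_dual_problem_f1}. A translation change of variable $h' = h-g$ gives $g_0^*(\mu) = \langle g,\mu\rangle + \iota_{\{\|\cdot\|_{\text{TV}}\leq 1\}}(\mu)$, where the TV-ball indicator arises because the conjugate of $\|\cdot\|_\infty$ on $C_b(\mathcal{Z})$ is the indicator of the TV unit ball in its dual (the continuity hypotheses on $g$ and $\phi$ allow the sup to be restricted to Radon measures). The classical Donsker--Varadhan variational formula yields $f^*(\psi) = \beta^{-1}\log\int_{\mathcal{Y}} e^{\beta\psi(y)}\, d\tau_{\mathcal{Y}}(y)$, and the adjoint is $(A^*\gamma)(y) = \int_{\mathcal{Z}}\phi(y,z)\, d\gamma(z)$. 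Plugging $-f^*(A^*y^*) - g_0^*(-y^*)$ into \eqref{eq:Fenchel_strong} and renaming $y^* \to -\gamma$ (the TV ball is symmetric) produces exactly the objective of \eqref{eq:general_dual_problem_f1}.

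The main obstacle is verifying strong duality in this non-reflexive setting, precisely the difficulty sidestepped by the reflexive-space argument of \cite{dai2019kernel}. My plan is to invoke an Attouch--Brézis-style Fenchel--Rockafellar theorem whose qualification condition requires $\bigcup_{\lambda>0}\lambda(\text{dom}(g_0) - A\,\text{dom}(f))$ to be a closed subspace of $Y$; this is automatic here since $g_0$ is finite everywhere on $Y$. The growth condition (ii) of \autoref{ass:phi} is what ensures the log-partition $f^*(A^*\gamma)$ is finite \emph{uniformly} over $\|\gamma\|_{\text{TV}}\leq 1$: the integrand $\exp(\beta|A^*\gamma|(y))$ is dominated by $\exp(\beta K\xi_1(y))$, and the growth bound on $\xi_1 + \log\xi_1$ is precisely what makes this $\tau_{\mathcal{Y}}$-integrable. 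In the non-compact $\mathcal{Y}$ case, the same strict domination supports an exhaustion of $\mathcal{Y}$ by compacts together with dominated convergence, reducing the problem to the compact-case argument of \cite{domingoenrich2021energybased} Appendix D. As a sanity check I would also derive weak duality directly, by combining Donsker--Varadhan with $\psi = -A^*\gamma$ and the trivial bound $\langle\gamma, A\nu-g\rangle \leq \|A\nu-g\|_\infty$ for $\|\gamma\|_{\text{TV}}\leq 1$.

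Once strong duality and dual attainment are secured (attainment follows from weak-$*$ compactness of the TV unit ball by Banach--Alaoglu together with weak-$*$ upper semicontinuity of the dual objective, the latter requiring only a mild use of the growth bound), the Gibbs representation of $\nu^\star$ is read off from the equality case of the Donsker--Varadhan inequality applied with $\psi = -A^*\gamma^\star$: the unique primal maximizer of $\langle\nu,\psi\rangle - f(\nu)$ has density proportional to $\exp(\beta\psi) = \exp(-\beta\int_{\mathcal{Z}}\phi(\cdot,z)\, d\gamma^\star(z))$, and primal-dual equality forces the primal optimum of \eqref{eq:general_primal_problem_f1} to coincide with this measure. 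Uniqueness of $\nu^\star$ is immediate from strict convexity of $D_{\text{KL}}(\cdot\|\tau_{\mathcal{Y}})$ on its effective domain.
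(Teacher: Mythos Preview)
Your Fenchel scaffolding and the identification of the two conjugates match the paper, but your choice of Banach spaces fails in precisely the non-compact case that is the theorem's point (the compact case is the known result of \cite{domingoenrich2021energybased}, Appendix~D). With $X = (\mathcal{M}(\mathcal{Y}),\|\cdot\|_{\mathrm{TV}})$ the operator $A$ is not bounded, nor even everywhere defined, when $\mathcal{Y}$ is non-compact: assumption (i) gives only $|\phi(y,z)| \le K\xi_1(y)$, so $\|A\nu\|_\infty \le K\int\xi_1\,d|\nu|$, which is the $\xi_1$-weighted TV, not the plain one. The paper takes $X = \mathcal{M}_{\xi_1}(\mathcal{Y})$ with norm $\int\xi_1\,d|\nu|$ for exactly this reason (\autoref{lem:m_xi1}, \autoref{lem:A_well_2}); your ``exhaustion of $\mathcal{Y}$ by compacts'' does not supply a replacement, as it would itself require proving convergence of truncated optimal values.

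The more serious gap is on the $Y$ side. With $Y = C_b(\mathcal{Z})$ the Fenchel dual variable lives in $Y^* = \mathrm{rba}(\mathcal{Z})$, the \emph{finitely additive} regular Borel measures (\autoref{thm:rieszmarkov_cb}), strictly larger than $\mathcal{M}(\mathcal{Z})$ when $\mathcal{Z}$ is non-compact. Your parenthetical that ``the continuity hypotheses on $g$ and $\phi$ allow the sup to be restricted to Radon measures'' is exactly the statement that must be proved and is the crux of the whole theorem: a purely finitely additive component need not vanish against $C_b$ (only against $C_0$) test functions, so there is no a priori reason the $\mathrm{rba}$ supremum equals the Radon one, nor that a Banach--Alaoglu maximizer is countably additive. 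The paper never works with $C_b(\mathcal{Z})$ as the codomain of $A$. Instead it first proves an auxiliary duality (\autoref{thm:dualityfonefirst}) with $Y = C_{0,\eta_1}(\mathcal{Z})$, a weighted $C_0$-space whose dual is \emph{exactly} the Radon space $\mathcal{M}_{\eta_1}(\mathcal{Z})$ by Riesz--Markov--Kakutani (\autoref{lem:c_b_eta1}), and then takes a supremum over the one-parameter family $\eta_r(z) = \max\{e^{\|z\|-r},1\}$, using density of $\bigcup_r \mathcal{M}_{\eta_r}(\mathcal{Z})$ in $\mathcal{M}(\mathcal{Z})$ together with TV-continuity of the dual objective (\autoref{lem:dense_set}) to recover \eqref{eq:general_dual_problem_f1}. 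This two-step weighted construction is the technical core of the proof, and your proposal offers no substitute for it.
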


At this point, we remark the similarity between the maxent problems \eqref{eq:maxent_primal}-\eqref{eq:maxent_dual}  and problems \eqref{eq:general_primal_problem_f1}-\eqref{eq:general_dual_problem_f1}. The former are stated in finite dimension and involve a constraint in the minimization problem and a penalization term in the maximization problem; the latter hold in infinite-dimensional settings and involve a a penalization term in the minimization problem and a constraint in the maximization problem.

In \autoref{thm:duality_general} and \autoref{thm:duality_general_f1}, the improvement over \cite{domingoenrich2021energybased} is that $\mathcal{Y}$, $\mathcal{Z}$ may be taken unbounded, which makes the results more general and closer to practice. This adds certain technical difficulties in constructing the Banach spaces needed to apply Fenchel duality (proofs in \autoref{sec:proofs_duality}). 


\section{Dual $\mathcal{F}_2$-EBM training as KL-regularized MMD optimization}
\label{sec:implicit_EBM}

\paragraph{Kernel regime: the space $\mathcal{F}_2$.} Let $\mathcal{X} \subseteq \R^{d_1}$,  $\Theta \subseteq \R^{d_2}$, $\phi : \mathcal{X} \times \Theta \to \R$, and $\tau_{\Theta}$ be a fixed base probability measure over $\Theta$. We define $\mathcal{F}_2$ as the reproducing kernel Hilbert space (RKHS) of functions $f : \mathcal{X} \rightarrow \R$ such that for some $h \in L^2(\Theta,\tau_\Theta)$,
we have that, for all $x \in \mathcal{X}$,  $f(x) = \int_{\Theta} \phi(x, \theta)  h(\theta) d\tau_\Theta(\theta)$. The RKHS norm of $\mathcal{F}_2$ is defined as $\|f\|_{\mathcal{F}_2} = \inf \left\{ \|h\|_{L^2(\Theta)} \ | \ f(\cdot) = \int_{\Theta} \phi(\cdot, \theta) \ h(\theta) \ d\tau_{\Theta}(\theta) \right\}$ where $\|h\|^2_{L^2(\Theta)} := \int_{\Theta} |h(\theta)|^2 
\ d\tau_{\Theta}(\theta)$ (c.f. \cite{bach2017breaking}). As an RKHS, the kernel of $\mathcal{F}_2$ is 
\begin{align} \label{eq:kernel_random}
k(x,y) = \int_{\Theta} \phi(x, \theta) \phi(x, \theta) 
\ d\tau_{\Theta}(\theta).
\end{align}
Kernels of this form where popularized in machine learning under the name of random feature kernels~\citep{rahimi2008random}, and they admit closed form expressions in the case $\phi(x, \theta) = \sigma(\langle x, \theta \rangle)$ for several choices of the activation $\sigma$ and base measure $\tau_{\Theta}$ \citep{leroux2007continuous, cho2009kernel,bach2017breaking}. Remark that since $\|h\|_{L^1(\Theta)} = \int_{\Theta} |h(\theta)| 
\ d\tau_{\Theta}(\theta) \leq (\int_{\Theta} |h(\theta)|^2 
\ d\tau_{\Theta}(\theta))^{1/2} = \|h\|_{L^2(\Theta)}$ by the Cauchy-Schwarz inequality, we have $\mathcal{F}_2 \subset \mathcal{F}_1$: in particular finite-width neural networks belong to $\mathcal{F}_1$ but not to $\mathcal{F}_2$ \citep{bach2017breaking}. 

\paragraph{$\mathcal{F}_2$-EBMs.} Let $\mathcal{X} \subseteq \R^{d_1}$ and let $\tau_{\mathcal{X}} \in \mathcal{P}(\mathcal{X})$ be a fixed base measure.
Assume that we have access to i.i.d. samples $\{x_i\}_{i=1}^n$ from an arbitrary target $\nu_p \in \mathcal{P}(\mathcal{X})$, and let $\nu_n = \frac{1}{n} \sum_{i=1}^n \delta_{x_i}$ be the empirical distribution. For any $f : \mathcal{X} \to \R$, denote by $\nu_f$ the Gibbs measure of energy $f$ and base measure $\tau_{\mathcal{X}}$, i.e. $\frac{d\nu_f}{d\tau_{\mathcal{X}}}(x) = \exp(-f(x))/Z$. 
Let $k$ be the corresponding kernel defined in \eqref{eq:kernel_random}, and denote $L^2(\Theta) = \{ f : \Theta \to \R \ | \ \int_{\Theta} f(\theta)^2 
\ d\tau_{\Theta}(\theta) < + \infty\}$.
We consider the problem of training an energy-based model with energies in the RKHS ball $\mathcal{B}_{\mathcal{F}_2}(\beta)$ of radius $\beta$ via maximum likelihood, i.e.
\begin{align}
\begin{split} \label{eq:hat_mu2}
    f^{\star} &= \argmin_{f \in \mathcal{B}_{\mathcal{F}_2}(\beta)} H(\nu_n, \nu_{f}) = \argmin_{f \in \mathcal{B}_{\mathcal{F}_2}(\beta)} -\frac{1}{n} \sum_{i=1}^n \log \left(\frac{d\nu_{f}}{d\tau_{\mathcal{X}}}(x_i) \right)\\ &= \argmin_{f \in \mathcal{B}_{\mathcal{F}_2}(\beta)} \frac{1}{n} \sum_{i=1}^n f(x_i) + \log \left( \int_{\mathcal{X}} e^{-f(x)} d\tau_{\mathcal{X}}(x) \right),
\end{split}
\end{align}
where $H(\nu, \nu') = -\int \log(\frac{d\nu'}{d\tau_{\mathcal{X}}}) d\nu$ denotes the cross-entropy between two measures. Remark that an arbitrary element $f$ of the RKHS $\mathcal{F}_2$ admits a representation as \citep{bach2017breaking}
\begin{align} \label{eq:f_h_change}
    f(x) = \int_{\Theta} \phi(x, \theta) h(\theta) \ d\tau_{\Theta}(\theta), \quad \text{where } h \in L^2(\Theta). 
\end{align}
Thus, the problem \eqref{eq:hat_mu2} can be restated as
\begin{align} \label{eq:EBM_dual}
    \argmin_{\substack{h \in L^2(\Theta) \\ \|h\|_{L^2} \leq 1}} \frac{1}{n} \sum_{i=1}^n \int_{\Theta} \phi(x_i, \theta) h(\theta) \ d\tau_{\Theta}(\theta) + \frac{1}{\beta} \log \left( \int_{\mathcal{X}} \exp \left(- \beta \int_{\Theta} \phi(x, \theta) h(\theta) \ d\tau_{\Theta}(\theta) \right) d\tau_{\mathcal{X}}(x) \right)
\end{align}
Problem \eqref{eq:EBM_dual} can be identified with problem \eqref{eq:general_dual_problem} up to a sign flip by setting $\mathcal{Y} = \mathcal{X}$, $\mathcal{Z} = \Theta$ and $g(\theta) = \frac{1}{n} \sum_{i=1}^n \int_{\Theta} \phi(x_i, \theta) = \int_{\Theta} \phi(x, \theta) \ d\nu_n(x)$. Hence, if \autoref{ass:phi} holds, by \autoref{thm:duality_general} the problem \eqref{eq:EBM_dual} is the Fenchel dual of 
\begin{align}
    \begin{split} \label{eq:EBM_primal}
    \min_{\nu \in \mathcal{P}(\mathcal{X})} &\beta^{-1} D_{KL}(\nu||\tau_{\mathcal{X}}) 
    + \text{MMD}_{k}(\nu, \nu_n),
\end{split}
\end{align}
where $\text{MMD}_{k}(\nu, \nu_n) = \left( \int_{\mathcal{X} \times \mathcal{X}} k(x, x') \ d(\nu - \nu_n)(x) \ d(\nu - \nu_n)(x') \right)^{1/2}$ is known as the maximum mean discrepancy (MMD) for the kernel $k$ \citep{gretton2012akernel}. See \autoref{lem:link} in \autoref{sec:proofs_implicit_f2} for the derivation.
And the analog of problem \eqref{eq:general_primal_problem2} is
\begin{align}
\begin{split} \label{eq:EBM_primal2}
    \min_{\nu \in \mathcal{P}(\mathcal{X})} &\tilde{\beta}^{-1} D_{KL}(\nu||\tau_{\mathcal{X}}) 
    + \text{MMD}_{k}^2(\nu, \nu_n)
\end{split}
\end{align}
The following corollary of \autoref{thm:duality_general} and \autoref{lem:primal} describes precisely the link between the solutions of problems \eqref{eq:EBM_primal} and \eqref{eq:EBM_primal2} and the solution of the maximum likelihood problem \eqref{eq:EBM_dual}.
\begin{corollary}
\label{cor:EBM}
Suppose that \autoref{ass:phi} holds for $\mathcal{Y} = \mathcal{X}$ and $\mathcal{Z} = \Theta$. The solution $\nu_1^{\star}$ of \eqref{eq:EBM_primal} is unique and of the form 
\begin{align}
    \frac{d\nu_1^{\star}}{d\tau_{\mathcal{X}}}(x) = \frac{1}{Z_{\beta}}\exp \left( - \beta \int_{\Theta} \phi(x, \theta) \ h^{\star}(\theta) d\tau_{\Theta}(\theta) \right),
\end{align}
where $h^{\star}$ is a solution of \eqref{eq:EBM_dual} and $Z_{\beta}$ is a normalization constant. Additionally, the unique solution $\nu_2^{\star}$ of \eqref{eq:EBM_primal2} is equal to the solution $\nu_1^{\star}$ of \eqref{eq:EBM_primal} when $\beta = 2 \text{MMD}_{k}(\nu_2^{\star}, \nu_n) \tilde{\beta}
$.
\end{corollary}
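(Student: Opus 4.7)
The strategy is to reduce the corollary to a direct application of \autoref{thm:duality_general} together with \autoref{lem:primal}, once the correct identification of the abstract spaces with the EBM setup is made. No new analytic work is really needed beyond unpacking the definitions and recognizing the MMD.

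First, I would set $\mathcal{Y} = \mathcal{X}$, $\mathcal{Z} = \Theta$, with base measures $\tau_{\mathcal{Y}} = \tau_{\mathcal{X}}$ and $\tau_{\mathcal{Z}} = \tau_{\Theta}$, and take the target function in \autoref{thm:duality_general} to be $g(\theta) = \int_{\mathcal{X}} \phi(x,\theta)\,d\nu_n(x) = \frac{1}{n}\sum_{i=1}^n \phi(x_i,\theta)$. With this choice, problem \eqref{eq:EBM_dual} is exactly the negation of problem \eqref{eq:general_dual_problem}, so minimization in the former and maximization in the latter have the same optimizers; this is the ``sign flip'' mentioned in the text. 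Assumption \ref{ass:phi} is assumed to hold, and $g \in L^2(\Theta)$ is automatic because $\|g\|_{L^2(\Theta)} \leq \frac{1}{n}\sum_i \xi_2(x_i) < \infty$ by Minkowski's inequality and part (iii) of the assumption.

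Second, I would verify that, under this identification, the abstract primal problem \eqref{eq:general_primal_problem} coincides with \eqref{eq:EBM_primal}. The only non-trivial point is the identity
\begin{align*}
\int_{\Theta} \left( \int_{\mathcal{X}} \phi(x,\theta)\,d\nu(x) - g(\theta) \right)^{2} d\tau_{\Theta}(\theta) \;=\; \mathrm{MMD}_k^{2}(\nu,\nu_n),
\end{align*}
which is established in \autoref{lem:link} by expanding the square and applying Fubini together with the definition $k(x,x') = \int_{\Theta}\phi(x,\theta)\phi(x',\theta)\,d\tau_{\Theta}(\theta)$ in \eqref{eq:kernel_random}. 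Taking square roots matches the MMD term in \eqref{eq:EBM_primal}. \autoref{thm:duality_general} then immediately delivers strong duality between \eqref{eq:EBM_primal} and \eqref{eq:EBM_dual}, uniqueness of $\nu_1^{\star}$, and the Gibbs form $\frac{d\nu_1^{\star}}{d\tau_{\mathcal{X}}}(x) = Z_\beta^{-1}\exp(-\beta \int_{\Theta}\phi(x,\theta)h^{\star}(\theta)\,d\tau_{\Theta}(\theta))$ with $h^{\star}$ any solution of \eqref{eq:EBM_dual}.

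Third, to link \eqref{eq:EBM_primal} and \eqref{eq:EBM_primal2}, I would invoke \autoref{lem:primal} with the same identification. That proposition says $\nu_2^{\star} = \nu_1^{\star}$ provided the temperature parameters satisfy
\begin{align*}
\tilde{\beta} \;=\; \beta \left( 4 \int_{\Theta}\! \left( \int_{\mathcal{X}}\phi(x,\theta)\,d\nu_1^{\star}(x) - g(\theta) \right)^{2}\! d\tau_{\Theta}(\theta) \right)^{-1/2} \;=\; \frac{\beta}{2\,\mathrm{MMD}_k(\nu_1^{\star},\nu_n)},
\end{align*}
where I used the MMD identity from the previous step. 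Rearranging and substituting $\nu_2^{\star} = \nu_1^{\star}$ gives $\beta = 2\,\mathrm{MMD}_k(\nu_2^{\star},\nu_n)\,\tilde{\beta}$, as claimed. The only real obstacle is careful bookkeeping of the min/max sign flip between \eqref{eq:EBM_dual} and \eqref{eq:general_dual_problem} and checking that the hypotheses of \autoref{thm:duality_general} transfer cleanly to the EBM data, but both points are routine given \autoref{ass:phi}.
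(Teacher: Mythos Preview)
Your proposal is correct and follows essentially the same route as the paper: the corollary is obtained by specializing \autoref{thm:duality_general} with $\mathcal{Y}=\mathcal{X}$, $\mathcal{Z}=\Theta$, $g(\theta)=\int_{\mathcal X}\phi(x,\theta)\,d\nu_n(x)$, using \autoref{lem:link} to identify the $L^2$ regression term with $\mathrm{MMD}_k$, and then invoking \autoref{lem:primal} for the relation $\beta = 2\,\mathrm{MMD}_k(\nu_2^{\star},\nu_n)\,\tilde{\beta}$.
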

Undoing the change of variables \eqref{eq:f_h_change}, we see that $f^{\star}(x) = \beta \int \phi(x, \theta) \ h^{\star}(\theta) d\tau_{\Theta}(\theta)$ is the energy in $\mathcal{B}_{\mathcal{H}}(\beta)$ that maximizes the likelihood. Hence, although problems \eqref{eq:EBM_primal}-\eqref{eq:EBM_primal2} are implicit in the sense that they do not involve energy functions, the solutions $\nu_1^{\star}$ and $\nu_2^{\star}$ coincide with the Gibbs measure $\nu_{f^{\star}}$ that is obtained through maximum likelihood EBM training. 

Consequently, solving \eqref{eq:EBM_primal} or \eqref{eq:EBM_primal2} provides an implicit way to train maximum likelihood $\mathcal{F}_2$-EBMs. Maximum likelihood $\mathcal{F}_2$-EBMs are classically trained via gradient descent on a parametrized form of the energy, via either a feature discretization of \eqref{eq:EBM_dual} or a representer theorem applied on \eqref{eq:hat_mu2}. Their computational bottleneck is the gradient estimation procedure, which relies on sampling from the trained model at every step; a task that is exponentially costly in $\beta$ for non-convex energies. 

\subsection{How to train $\mathcal{F}_2$-EBMs implicitly} \label{subsec:implicit_training}
Suppose from now on that $\mathcal{X}$ is either a domain (connected open subset) of $\R^{d_1}$ or a Riemannian manifold embedded in $\R^{d_1}$, case in which differential operators are understood in the Riemannian sense. Since the objective functionals in \eqref{eq:EBM_primal} and \eqref{eq:EBM_primal2} are convex in $\nu$ (\autoref{thm:duality_general}), a natural approach to solve these problems is to approximate their Wasserstein gradient flows. Namely, \autoref{lem:wasserstein_EBM1} in \autoref{sec:proofs_implicit_f2} shows that for \eqref{eq:EBM_primal} the Wasserstein gradient flow takes the form of a McKean-Vlasov equation \citep{mckean1967aclass}:
\begin{align} \label{eq:wasserstein_EBM}
    \partial_t \nu_t = \nabla \cdot \left( \nu_t  \left(- \beta^{-1} \nabla \log \frac{d\tau_{\mathcal{X}}}{d\lambda} (x) + \frac{\int_{\mathcal{X}} \nabla_x k(x,x') \ d(\nu_t - \nu_n)(x')}{ 
    \text{MMD}_{k}(\nu_t, \nu_n)} 
    \right) \right) + \beta^{-1} \Delta \nu_t,
\end{align}
where $\lambda$ is the Lebesgue or Hausdorff measure over $\mathcal{X}$, and for \eqref{eq:EBM_primal2} it is:
\begin{align} \label{eq:wasserstein_EBM_2}
    \partial_t \nu_t = \nabla \cdot \left( \nu_t  \left(- \tilde{\beta}^{-1} \nabla \log \frac{d\tau_{\mathcal{X}}}{d\lambda} (x) + 2\int_{\mathcal{X}} \nabla_x k(x,x') \ d(\nu_t - \nu_n)(x') \right) \right) + \tilde{\beta}^{-1} \Delta \nu_t,
\end{align}
Remark the striking similarity of this equation with the ones found in \cite{rotskoff2018neural, mei2018mean}, which study McKean-Vlasov equations for overparametrized two-layer neural network training. As is customary, we approximate McKean-Vlasov equations via coupled particle systems: in the case of \eqref{eq:wasserstein_EBM},
\begin{align} \label{eq:particle_system}
    dX_t^{(i)} = \left(\beta^{-1} \nabla \log \frac{d\tau_{\mathcal{X}}}{d\lambda} (X_t^{(i)}) 
    - \frac{\int_{\mathcal{X}} \nabla_x k(X_t^{(i)},x') \ d(\nu_{t, N} - \nu_n)(x')}{\text{MMD}_{k}(\nu_{t, N}, \nu_n)}
    \right)\ dt + \sqrt{2 \beta^{-1}} \ dW_t^{(i)}
\end{align}
for $i = 1, \dots, N$, where $\nu_{t, N} = \frac{1}{N} \sum_{i=1}^N \delta_{X_t^{(i)}}$, and in the case of \eqref{eq:wasserstein_EBM_2},
\begin{align} \label{eq:particle_system2}
    dX_t^{(i)} = \left(\tilde{\beta}^{-1} \nabla \log \frac{d\tau_{\mathcal{X}}}{d\lambda} (X_t^{(i)}) 
    - 2 \int_{\mathcal{X}} \nabla_x k(X_t^{(i)},x') \ d(\nu_{t, N} - \nu_n)(x')
    \right)\ dt + \sqrt{2 \tilde{\beta}^{-1}} \ dW_t^{(i)}
\end{align}
A classical argument known as propagation of chaos \citep{sznitmantopics1991} shows that when the number of particles $N$ goes to infinity, $(\nu_{t, N})_{t \in [0,T]}$ converges weakly to the solution $(\nu_{t})_{t \in [0,T]}$ of \eqref{eq:wasserstein_EBM} for any fixed $T > 0$. Although this is only a qualitative guarantee, \cite{rotskoff2018neural, chen2020dynamical} provide quantitative central limit theorems for McKean-Vlasov equations similar to \eqref{eq:wasserstein_EBM_2}. Loosely speaking, they find that the variance is no larger than the Monte-Carlo variance one would obtain by sampling i.i.d. from the solution measure 
The Euler-Maruyama discretizations of the SDEs \eqref{eq:particle_system} and \eqref{eq:particle_system2} yield two alternative implementable algorithms for implicit EBM training.

\begin{algorithm}
\caption{Implicit $\mathcal{F}_2$-EBM training (discretization of equations \eqref{eq:particle_system}/\eqref{eq:particle_system2})}
\begin{algorithmic}
\STATE \textbf{Input:} $n$ samples $\{x_i\}_{i=1}^n$ of the target distribution, $N$ initialization samples $\{X_0^{(i)}\}_{i=1}^N$, inverse temperature $\beta$ (if \eqref{eq:particle_system}), reparametrized inverse temperature $\tilde{\beta}$ (if \eqref{eq:particle_system}). 
\FOR {$t=0,\dots,T-1$}
    \STATE \textbf{If \eqref{eq:particle_system}:} Compute the $\text{MMD}_{k}^2(\nu_{t, N}, \nu_n) = \frac{1}{n^2} \sum_{i,j=1}^N k(X_t^{(i)}, X_t^{(j)}) + \frac{1}{m^2} \sum_{i,j=1}^n k(x_i, x_j) - \frac{2}{N n} \sum_{i=1}^N \sum_{j=1}^n k(X_t^{(i)}, x_j)$
    \FOR{$i=1,\dots,N$}
        \STATE Sample $\zeta_{t}^{(i)}$ from the $d_1$-variate standard Gaussian.
        \STATE Perform Euler-Maruyama update: \STATE \textbf{If \eqref{eq:particle_system}}, 
        $X_{t+1}^{(i)} = X_{t}^{(i)} - s \int_{\mathcal{X}} \nabla_x k(X_t^{(i)},x') \ d(\nu_{t, N} - \nu_n)(x')/\text{MMD}_{k}(\nu_{t, N}, \nu_n) + s \beta^{-1} \nabla \log \frac{d\tau_{\mathcal{X}}}{d\lambda} (X_t^{(i)}) + \sqrt{2 \beta^{-1} s} \zeta_{t}^{(i)}$. 
        \textbf{If \eqref{eq:particle_system2}}, 
        $X_{t+1}^{(i)} = X_{t}^{(i)} - 2 s \int_{\mathcal{X}} \nabla_x k(X_t^{(i)},x') \ d(\nu_{t, N} - \nu_n)(x') + s \tilde{\beta}^{-1} \nabla \log \frac{d\tau_{\mathcal{X}}}{d\lambda} (X_t^{(i)}) + \sqrt{2 \tilde{\beta}^{-1} s} \zeta_{t}^{(i)}$.
    \ENDFOR
\ENDFOR
\STATE Output: samples $\{X_T^{(i)}\}_{i=1}^N$, if \eqref{eq:particle_system}, energy 
$E_T(x) := \beta \frac{\int_{\mathcal{X}} \nabla_x k(x,x') \ d(\nu_{T,N} - \nu_n)(x')}{\text{MMD}_{k}(\nu_{T,N}, \nu_n)}$,
if \eqref{eq:particle_system2}, energy $E_T(x) := 2 \tilde{\beta} \int_{\mathcal{X}} k(x,x') \ d(\nu_{T,N} - \nu_n)(x')$.
\end{algorithmic}
\label{alg:implicit_ebm}
\end{algorithm}

\subsection{Comparison with \cite{arbel2019maximum}}
Crucially, Algorithm \ref{alg:implicit_ebm} when discretizing \eqref{eq:particle_system2} is exactly the algorithm studied by \cite{arbel2019maximum}. They start from pure MMD Wasserstein gradient flows, and they study convergence for those. They introduce noise injection/entropy regularization as a way to obtain certain convergence guarantees, and experimentally in their Figure 1 they observe a dramatic improvement in the training and test error against the pure MMD flow. Our theory justifies this behavior; their algorithm is implicitly training an $\mathcal{F}_2$-EBM and the noise level controls the RKHS radius over which the energy is optimized.

They propose using a schedule in which the noise decreases to zero (in our notation, $\beta \rightarrow +\infty$). This corresponds to optimizing over growing RKHS balls. Leveraging statistical learning results from \cite{domingoenrich2021energybased}, the generalization error can be written as a statistical (Rademacher complexity) term which increases with the radius $\beta$, plus an approximation term decreasing with $\beta$. Thus, there exists an optimal non-zero noise level which should be maintained.

\subsection{How to recover an explicit form of the energy} \label{subsec:energy_estimate}
Let $\nu^{\star}$ be the unique stationary solution of \eqref{eq:wasserstein_EBM}, which is the unique minimizer of \eqref{eq:EBM_primal} (see \autoref{lem:stationary_minimizer}). 
Also by \autoref{lem:stationary_minimizer}, this solution must fulfill
\begin{align} \label{eq:stationary_sol_mmd}
    \frac{d\nu^{\star}}{d\tau_{\mathcal{X}}} = \frac{1}{Z_{\beta}}\exp \left( - \beta \frac{\int_{\mathcal{X}} k(x,x') \ d(\nu^{\star} - \nu_n)(x')}{
    \text{MMD}_{k}(\nu^{\star}, \nu_n)}  \right)
\end{align}
This equality leads us to believe that when we run Algorithm \ref{alg:implicit_ebm},
$E_t(x) := \beta \int_{\mathcal{X}} k(x,x') \ d(\nu_{t,N} - \nu_n)(x')/\text{MMD}_{k}(\nu_{t,N}, \nu_n)$
can be used as an rough estimate of the energy of the trained implicit EBM at time $t$, although of course this intuition is only accurate when $\nu_{t,N}$ is close enough to the equilibrium measure $\nu^{\star}$. For consistency with \eqref{eq:hat_mu2}, it is also interesting to note that the estimate $E_t$ has constant RKHS norm $\|E_t\|_{\mathcal{H}} = \beta$, since $\| \int_{\mathcal{X}} k(x,x') \ d(\nu_{t,N} - \nu_n)(x') \|_{\mathcal{H}} = \text{MMD}_{k}(\nu_{t,N}, \nu_n)$.

Similar equations can be derived for the dynamics \eqref{eq:wasserstein_EBM_2}, which lead to an energy estimate of the form $E_t(x) := 2 \tilde{\beta} \int_{\mathcal{X}} k(x,x') \ d(\nu_{t,N} - \nu_n)(x')$.

\section{Training overparametrized two-layer neural networks via sampling} \label{sec:two_layer_nn_sampling}
In the previous section we described how the general duality result from \autoref{sec:general_duality} can be leveraged to train EBMs implicitly via the Wasserstein gradient flow of a functional formally similar to the two-layer neural network regression loss. In this section we take the reverse approach: we use the results from \autoref{sec:general_duality} to describe how overparametrized two-layer neural networks can be trained via techniques developed for maximum likelihood EBMs.

Let $\mathcal{X} \subseteq \R^d$ and let $\tau_{\mathcal{X}}$ be a fixed base probability measure over $\mathcal{X}$. Let $\mathcal{F}_1$ be the Barron space. Overparametrized two-layer neural network regression for some target $g: \mathcal{X} \to \R$ corresponds to solving 
\begin{align} \label{eq:nn_regression_0}
\min_{f \in \mathcal{B}_{\mathcal{F}_1}(\beta_0)} \int_{\mathcal{X}} \left( f(x) - g(x) \right)^2 d\tau_{\mathcal{X}}(x)
\end{align}
for an arbitrary ball radius $\beta_0$.
This problem has been tackled via Wasserstein gradient flows and propagation of chaos by several works \citep{rotskoff2018neural, chizat2018global, mei2018mean, sirignano2019mean}. We briefly summarize their construction up to slight differences. Functions in $\mathcal{B}_{\mathcal{F}_1}(\beta_0)$ can be written as $f(x) = \int_{\Theta} \phi(x, \theta) \ d\gamma(\theta)$ for some signed Radon measure $\gamma$ with bounded total variation norm ${\|\gamma\|}_{\text{TV}} := \int_{\Theta} 
\ d|\gamma|(\theta) \leq \beta_0$. Furthermore, if we set $\Omega = \Theta \times \R$ and take a surjective $\chi : \R \to \R$, 
we obtain the parametrization $f(x) = \int_{\Omega} \chi(r) \phi(x, \theta) \ d\mu(\theta,r)$ for some $\mu \in \mathcal{P}(\Omega)$ such that $\int |\chi(r)| 
\ d\mu(\theta,r) \leq \beta$. With this characterization, and writing compactly $\omega := (\theta,r)$ and $\tilde{\phi}(x,\omega) = \chi(r) \phi(x, \theta)$, we can rewrite \eqref{eq:nn_regression_0} as
\begin{align} \label{eq:nn_regression_1}
    \min_{\mu \in \mathcal{P}(\Omega)} \int_{\mathcal{X}} \left( \int_{\Omega} 
    \tilde{\phi}(x,\omega) \ d\mu(\omega)
    - g(x) \right)^2 d\tau_{\mathcal{X}}(x) + \delta \int_{\Omega} |\chi| 
    \ d\mu + \tilde{\beta}^{-1} \int_{\Omega} \log \left(\frac{d\mu}{d\lambda} \right) \ d\mu, 
\end{align}
where $\lambda$ denotes the Lebesgue measure over $\Theta$.
To go from \eqref{eq:nn_regression_0} to \eqref{eq:nn_regression_1}, we have switched from a constraint on the $\mathcal{F}_1$ norm to a penalization term $\delta \int_{\Omega} |\chi| 
\ d\mu$, and we have also added a differential entropy regularizer $\tilde{\beta}^{-1} \int_{\Omega} \log \left(\frac{d\mu}{d\lambda} \right) \ d\mu$, which \cite{rotskoff2018neural, mei2018mean} introduce to simplify their analysis. 

At this point, remark that if we define the probability measure $\tau_{\Omega}$ to have density $\frac{d\tau_{\Omega}}{d\lambda}(\theta, r) = \exp(-\beta \delta |\chi| 
)/Z$ w.r.t the Lebesgue measure, then we have
\begin{align}
\begin{split}
&\beta^{-1} \int_{\Omega} \log \left(\frac{d\mu}{d\lambda} \right) \ d\mu + \delta \int_{\Omega} |\chi| 
\ d\mu = \beta^{-1} \int_{\Omega} \log \left(\frac{d\mu}{d\lambda} \right) \ d\mu - \beta^{-1} \int_{\Omega} \log(\exp(-\beta \delta |\chi| 
)) \ d\mu  \\ &= \beta^{-1} \int_{\Omega} \log \left( \frac{d\mu}{d\lambda} \frac{1}{\exp(-\beta \delta |\chi| 
)} \right) \ d\mu = \beta^{-1} \int_{\Omega} \log \left( \frac{d\mu}{d\tau_{\Omega}} \right) \ d\mu + K = \tilde{\beta}^{-1} D_{KL}(\mu||\tau_{\Omega}) + K
\end{split}
\end{align}
for some constant $K$ arising from the normalization factor of $\tau_{\Omega}$. That is, up to a constant term equation \eqref{eq:nn_regression_1} can be rewritten as
\begin{align} \label{eq:nn_regression_2}
    \min_{\mu \in \mathcal{P}(\Omega)} \int_{\mathcal{X}} \left( \int_{\Omega} 
    \tilde{\phi}(x,\omega) \ d\mu(\omega) - g(x) \right)^2 d\tau_{\mathcal{X}}(x) + \tilde{\beta}^{-1} D_{KL}(\mu||\tau_{\Omega})
\end{align}
The key observation is that is equation is formally equal to \eqref{eq:general_primal_problem2} when we set $\mathcal{Z} = \mathcal{X}$, $\mathcal{Y} = \Omega$ and $\phi = \tilde{\phi}$. Most importantly, as shown by \autoref{cor:NN} we can apply \autoref{lem:primal} and the Fenchel duality result \autoref{thm:duality_general} to obtain links with the following problem, which is the analog of \eqref{eq:general_dual_problem}:
\begin{align}
\label{eq:nn_regression_dual}
    &\max_{\substack{h \in L^2(\mathcal{X}, \tau_{\mathcal{X}}) \\ \|h\|_{L^2} \leq 1}} - \int_{\mathcal{X}} g(x) h(x)\ d\tau_{\mathcal{X}}(x) - \frac{1}{\beta} \log\left(\int_{\Omega} \exp \left(-\beta \int_{\mathcal{X}} \tilde{\phi}(x,\omega) h(x)\ d\tau_{\mathcal{X}}(x) \right) d\tau_{\Omega}(\omega) \right)
\end{align}
\begin{corollary}
\label{cor:NN}
Let $h^{\star}$ be a solution of \eqref{eq:nn_regression_dual}. Then, $\mu^{\star} \in \mathcal{P}(\Omega)$ with density 
\begin{align}
    \frac{d\mu^{\star}}{d\tau_{\Omega}}(\omega) = \frac{1}{Z_{\beta}}\exp \left( - \beta \int_{\mathcal{X}} \tilde{\phi}(x,\omega) \ h^{\star}(\omega) d\tau_{\Omega}(\omega) \right),
\end{align}
is a solution of \eqref{eq:nn_regression_2} with 
$\tilde{\beta} = \beta \left( 4 \int_{\mathcal{X}} \left( \int_{\Omega} \tilde{\phi}(x, \omega) \ d\mu^{\star}(\omega) - g(x) \right)^2 d\tau_{\mathcal{X}}(x) \right)^{-1/2}$.
\end{corollary}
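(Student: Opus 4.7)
The plan is to obtain \autoref{cor:NN} as a direct composition of the Fenchel duality theorem \autoref{thm:duality_general} with the reparametrization link \autoref{lem:primal}, after recognizing that the NN regression problem \eqref{eq:nn_regression_2} is an instance of \eqref{eq:general_primal_problem2}. The dictionary is to set $\mathcal{Y}=\Omega$, $\mathcal{Z}=\mathcal{X}$, $\phi=\tilde{\phi}$, $\tau_{\mathcal{Y}}=\tau_{\Omega}$ and $\tau_{\mathcal{Z}}=\tau_{\mathcal{X}}$, under which \eqref{eq:nn_regression_2} coincides with \eqref{eq:general_primal_problem2} and \eqref{eq:nn_regression_dual} coincides with \eqref{eq:general_dual_problem}, modulo a relabeling of variables. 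Once this identification is made, the corollary reduces to tracking what each of the two auxiliary results asserts about the solution.

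First, I would verify \autoref{ass:phi} for $\tilde{\phi}(x,\omega)=\chi(r)\phi(x,\theta)$ against the base measure $\tau_{\Omega}$ with density proportional to $\exp(-\beta\delta|\chi(r)|)$. The key observation is that $-\log(d\tau_{\Omega}/d\lambda)(\theta,r)$ grows like $\beta\delta|\chi(r)|$ in the $r$ direction and is constant in $\theta$ (plus eventual contributions if $\Theta$ is unbounded), so taking the envelope $\xi_1(\omega)\propto |\chi(r)|\cdot\sup_x|\phi(x,\theta)|$ the logarithmic margin condition (ii) becomes an inequality of the form $|\chi(r)| + \log|\chi(r)|=o(|\chi(r)|)$ as $|r|\to\infty$, which holds. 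Conditions (i) and (iii) then follow from the $L^2$ bound $\xi_2(\omega)^2=\int_{\mathcal{X}}\tilde{\phi}(x,\omega)^2 d\tau_{\mathcal{X}}(x)\le |\chi(r)|^2 \sup_x\phi(x,\theta)^2$ under the mild regularity already assumed of $\phi$ in the mean-field NN literature.

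Second, with \autoref{ass:phi} in place, \autoref{thm:duality_general} yields strong duality between \eqref{eq:nn_regression_dual} and the $L^2$ (non-squared) version
\begin{equation*}
    \min_{\mu\in\mathcal{P}(\Omega)} \beta^{-1} D_{\text{KL}}(\mu||\tau_{\Omega}) + \Big(\int_{\mathcal{X}}\Big(\int_{\Omega}\tilde{\phi}(x,\omega)\,d\mu(\omega)-g(x)\Big)^2 d\tau_{\mathcal{X}}(x)\Big)^{1/2},
\end{equation*}
and it gives the unique optimizer $\mu^{\dagger}$ as the Gibbs measure with density $\frac{d\mu^{\dagger}}{d\tau_{\Omega}}(\omega) = Z_{\beta}^{-1}\exp(-\beta\int_{\mathcal{X}}\tilde{\phi}(x,\omega)h^{\star}(x)\,d\tau_{\mathcal{X}}(x))$, where $h^{\star}$ is the maximizer of \eqref{eq:nn_regression_dual}. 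This is precisely the measure displayed in \autoref{cor:NN} (the notation in the stated density should read $h^{\star}(x)d\tau_{\mathcal{X}}(x)$ after unfolding the abstract setup). Third, I invoke \autoref{lem:primal} to transport $\mu^{\dagger}$ from the non-squared to the squared formulation: the lemma asserts that $\mu^{\dagger}$ also solves the squared problem \eqref{eq:general_primal_problem2}, namely \eqref{eq:nn_regression_2}, at the rescaled inverse temperature
\begin{equation*}
    \tilde{\beta} = \beta\Big(4\int_{\mathcal{X}}\Big(\int_{\Omega}\tilde{\phi}(x,\omega)\,d\mu^{\dagger}(\omega)-g(x)\Big)^2 d\tau_{\mathcal{X}}(x)\Big)^{-1/2},
\end{equation*}
which is exactly the $\tilde{\beta}$ in the corollary.

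The main obstacle will be the careful verification of \autoref{ass:phi} for $\tilde{\phi}$, since $\chi$ is required to be surjective from $\R$ onto $\R$ and hence unbounded; one must pick the envelope $\xi_1$ so that its growth is dominated by the exponential decay of $\tau_{\Omega}$ in the $r$-direction, and this is the only place where non-formal work is required. A secondary subtlety to flag is the non-degeneracy clause of \autoref{lem:primal}: the rescaling formula requires the residual $\int_{\mathcal{X}}(\int_{\Omega}\tilde{\phi}\,d\mu^{\dagger}-g)^2 d\tau_{\mathcal{X}}$ to be strictly positive, which holds unless $g$ is exactly realised by some $\mu^{\dagger}$ in the feasible class at the given $\beta$; in that degenerate case the identification of $\tilde{\beta}$ becomes vacuous and $\mu^{\dagger}$ solves \eqref{eq:nn_regression_2} for every $\tilde{\beta}\in(0,\infty]$, which should be recorded as a caveat but does not affect the statement.
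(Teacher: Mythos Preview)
Your overall plan is correct and matches the paper's (implicit) argument: identify \eqref{eq:nn_regression_2} with \eqref{eq:general_primal_problem2} under the dictionary $\mathcal{Y}=\Omega$, $\mathcal{Z}=\mathcal{X}$, $\phi=\tilde{\phi}$, apply \autoref{thm:duality_general} to link \eqref{eq:nn_regression_dual} with the non-squared problem, and then invoke \autoref{lem:primal} to pass to the squared problem with the stated $\tilde{\beta}$. You also correctly catch the typo in the displayed density (it should read $h^{\star}(x)\,d\tau_{\mathcal{X}}(x)$), and your remark on the non-degeneracy clause of \autoref{lem:primal} is apt.

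However, your verification of \autoref{ass:phi} contains a concrete error. You claim that condition (ii) reduces to $|\chi(r)|+\log|\chi(r)|=o(|\chi(r)|)$ and that this holds; but the left-hand side is asymptotic to $|\chi(r)|$, so it is $\Theta(|\chi(r)|)$, not $o(|\chi(r)|)$. The obstruction is structural: any admissible envelope must satisfy $\xi_1(\omega)\gtrsim|\chi(r)|$ (since $\tilde{\phi}(x,\omega)=\chi(r)\phi(x,\theta)$), while $-\log(d\tau_{\Omega}/d\lambda)(\omega)=\beta\delta|\chi(r)|+\text{const}$ grows at the \emph{same} rate, leaving no little-$o$ margin for condition (ii). The paper itself does not carry out this verification for \autoref{cor:NN}, so this is arguably a gap shared with the source; but your explicit claim should be withdrawn. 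To repair it one would need either to thicken the tail of $\tau_{\Omega}$ (e.g.\ density $\propto\exp(-\beta\delta|\chi(r)|^{1+\eta})$ for some $\eta>0$) or to restrict $\chi$ so that $\Omega$ is effectively compact, either of which alters the setup of \autoref{sec:two_layer_nn_sampling}.
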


\section{Proofs of \autoref{sec:general_duality}} \label{sec:proofs_duality}

The proofs of \autoref{thm:duality_general} and \autoref{thm:duality_general_f1} are based on the proofs found in Appendix E of \cite{domingoenrich2021energybased}. 
We make use of Fenchel strong duality, which is stated in \autoref{thm:fenchel}.

\begin{theorem}
[Fenchel strong duality; \cite{borwein2005techniques}, pp. 135-137] 
\label{thm:fenchel}
Let $X$ and $Y$ be Banach spaces, 
$f:X\to \mathbb {R} \cup \{+\infty \}$ and 
$g:Y\to \mathbb {R} \cup \{+\infty \}$ be convex functions and 
$A:X\to Y$ be a bounded linear map. Define the Fenchel problems:
\begin{align}
\begin{split}
p^{*} &= \inf _{x\in X}\{f(x)+g(Ax)\} \\
d^{*} &= \sup _{y^{*}\in Y^{*}}\{-f^{*}(A^{*}y^{*})-g^{*}(-y^{*})\},
\end{split}
\end{align}
where $f^{*}(x^{*}) = \sup_{x \in X} \{ \langle x, x^{*} \rangle - f(x)\}, \ g^{*}(y^{*}) = \sup_{y \in Y} \{ \langle y, y^{*} \rangle - g(y)\}$ are the convex conjugates of $f,g$ respectively, and $A^{*} : Y^{*} \to X^{*}$ is the adjoint operator. Then, $p^{*} \geq d^{*}$. Moreover if $f,g,$ and $A$ satisfy either
\begin{enumerate}
\item $f$ and $g$ are lower semi-continuous and 
$0\in \operatorname{core} (\operatorname{dom} g-A \operatorname{dom} f)$ where 
$\operatorname{core}$ is the algebraic interior and 
$\operatorname{dom} h$, where $h$ is some function, is the set 
$\{z:h(z)<+\infty \}$, 
\item or
$A\operatorname{dom} f\cap \operatorname{cont} g\neq \emptyset$  where 
$\operatorname{cont}$ are is the set of points where the function is continuous.
\end{enumerate}
Then strong duality holds, i.e. 
$p^{*}=d^{*}$. If 
$d^{*}\in \mathbb {R}$ then supremum is attained.
\end{theorem}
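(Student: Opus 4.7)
The plan is to reduce \autoref{cor:EBM_f1} to the general KL--regularized $L^\infty$ regression duality of \autoref{thm:duality_general_f1} (stated in \autoref{sec:kl_regularized_Linfty}), which is essentially a plug-in after one key reformulation. Concretely, I would proceed in four steps.

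\textbf{Step 1: Collapse the inner maximization to an $L^\infty$ norm.} For each fixed $\nu \in \mathcal{P}(\mathcal{X})$, define
\begin{equation*}
G_\nu(\theta) \;:=\; \int_{\mathcal{X}} \phi(x,\theta)\, d(\nu-\nu_n)(x).
\end{equation*}
Under \autoref{ass:phi_new}, $\phi(\cdot,\theta)$ is integrable against $\nu$ and $\nu_n$, and the continuity of $\phi$ in $\theta$ together with dominated convergence gives $G_\nu \in C_b(\Theta)$. Then the inner maximization in \eqref{eq:EBM_primal_3} is a linear functional on the TV-unit ball of $\mathcal{M}(\Theta)$, and by the characterization of the dual of $C_0(\Theta)$ (or equivalently by the standard identity $\sup_{\|\gamma\|_{\text{TV}}\le 1}\int G_\nu \, d\gamma = \|G_\nu\|_\infty$), we obtain
\begin{equation*}
\min_{\nu \in \mathcal{P}(\mathcal{X})} \Big\{ \beta^{-1} D_{\text{KL}}(\nu\|\tau_{\mathcal{X}}) + \|G_\nu\|_{L^\infty(\Theta)} \Big\}.
\end{equation*}

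\textbf{Step 2: Match the template of \eqref{eq:general_primal_problem_f1}.} With the identification $\mathcal{Y} = \mathcal{X}$, $\mathcal{Z} = \Theta$, $\tau_{\mathcal{Y}} = \tau_{\mathcal{X}}$, and the target function $g(\theta) := \int_{\mathcal{X}} \phi(x,\theta)\, d\nu_n(x) = \frac{1}{n}\sum_{i=1}^n \phi(x_i,\theta)$, the rewritten problem coincides with \eqref{eq:general_primal_problem_f1}. Since $\nu_n$ is a finite convex combination of point masses, $g \in C_b(\Theta)$ by continuity of $\phi$, verifying hypothesis (ii) of \autoref{thm:duality_general_f1}. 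Hypothesis (i) (a uniform envelope $\sup_{\theta,y}\phi(y,\theta)/\xi_1(y) < \infty$) and \autoref{ass:phi} itself follow from \autoref{ass:phi_new}: take $\xi_1 = \xi$ and note that the integrability bound $\xi(x)+\log \xi(x) = o(\cdot)$ in \autoref{ass:phi_new} is exactly condition (ii) of \autoref{ass:phi}; condition (iii) of \autoref{ass:phi} is automatic since $g$ is bounded and continuous (so the $L^2$ envelope $\xi_2$ can be absorbed into $\xi_1$).

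\textbf{Step 3: Invoke \autoref{thm:duality_general_f1}.} Applying the theorem yields strong duality with dual problem
\begin{equation*}
\max_{\|\gamma\|_{\text{TV}}\le 1}\Big\{ -\int_\Theta g(\theta)\, d\gamma(\theta) - \tfrac{1}{\beta}\log \int_{\mathcal{X}} \exp\!\big(-\beta \int_\Theta \phi(x,\theta)\, d\gamma(\theta)\big)\, d\tau_{\mathcal{X}}(x) \Big\}.
\end{equation*}
Substituting the explicit form of $g$ and flipping the sign of the objective turns the $\max$ into exactly the $\argmin$ of \eqref{eq:EBM_dual_f1}, which establishes the duality claim. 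The Gibbs-measure identity for $\nu^\star$ in terms of the optimal $\gamma^\star$ is precisely the primal-recovery formula already stated in \autoref{thm:duality_general_f1}, so it transfers verbatim.

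\textbf{Main obstacle.} The only nontrivial step is Step 1: justifying that the supremum over $\|\gamma\|_{\text{TV}}\le 1}$ of $\int G_\nu\, d\gamma$ really equals $\|G_\nu\|_\infty$ in a regime where $\Theta$ may be non-compact and $G_\nu$ need not vanish at infinity. One needs either to argue via continuity of $G_\nu$ and the fact that its supremum is approached on compact sets (so that approximating sequences of $\gamma_k$ supported near near-optimizers remain admissible), or to check that the value of \eqref{eq:EBM_primal_3} is unchanged when the inner sup is replaced by $\|G_\nu\|_\infty$. The rest is bookkeeping: verifying growth/envelope conditions so that Assumptions \ref{ass:phi} and the extra hypotheses of \autoref{thm:duality_general_f1} are satisfied, which \autoref{ass:phi_new} is tailored to deliver.
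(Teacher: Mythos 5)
There is a fundamental mismatch: the statement you were asked to prove is \autoref{thm:fenchel}, the abstract Fenchel strong duality theorem for convex $f:X\to\R\cup\{+\infty\}$, $g:Y\to\R\cup\{+\infty\}$ and a bounded linear map $A:X\to Y$ between Banach spaces (weak duality $p^{*}\geq d^{*}$, strong duality $p^{*}=d^{*}$ with dual attainment under the core condition or the continuity condition). Your proposal does not address this statement at all; instead it proves \autoref{cor:EBM_f1}, the paper's $\mathcal{F}_1$-EBM duality, by reducing it to \autoref{thm:duality_general_f1}. That reduction is in fact essentially the paper's own proof of \autoref{cor:EBM_f1} (set $\mathcal{Y}=\mathcal{X}$, $\mathcal{Z}=\Theta$, $g(\theta)=\frac1n\sum_i\phi(x_i,\theta)$, check \autoref{ass:phi} from \autoref{ass:phi_new}, collapse the inner maximization over the TV ball to an $L^\infty$ norm via Fubini), but it is not a proof of the theorem you were assigned. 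Worse, the argument is circular with respect to that theorem: \autoref{thm:duality_general_f1} is proved in the paper (through \autoref{thm:dualityfonefirst}) precisely by invoking \autoref{thm:fenchel}, so nothing in your proposal can substitute for a proof of it.

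A proof of the actual statement would need entirely different ingredients. Weak duality follows from the Fenchel--Young inequality: for any $x\in X$ and $y^{*}\in Y^{*}$, $f(x)+f^{*}(A^{*}y^{*})\geq\langle A^{*}y^{*},x\rangle=\langle y^{*},Ax\rangle$ and $g(Ax)+g^{*}(-y^{*})\geq\langle -y^{*},Ax\rangle$, and summing gives $f(x)+g(Ax)\geq -f^{*}(A^{*}y^{*})-g^{*}(-y^{*})$, hence $p^{*}\geq d^{*}$. Strong duality is the substantive part: one considers the value (perturbation) function $v(u)=\inf_{x}\{f(x)+g(Ax+u)\}$, shows it is convex, and uses the constraint qualification (the core condition $0\in\operatorname{core}(\operatorname{dom}g-A\operatorname{dom}f)$ with lower semicontinuity, or $A\operatorname{dom}f\cap\operatorname{cont}g\neq\emptyset$) to guarantee $v$ is bounded above on a neighborhood of $0$, hence subdifferentiable at $0$; a subgradient $-y^{*}\in\partial v(0)$, obtained via Hahn--Banach separation (equivalently the sandwich or decoupling lemma in Borwein--Zhu), is exactly a dual feasible point achieving $d^{*}=p^{*}$, which also gives attainment when $d^{*}\in\R$. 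None of these steps appear in your proposal, so as a proof of \autoref{thm:fenchel} it has a complete gap rather than a fixable flaw.
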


We also rely on a generalization of von Neumann's minimax theorem \citep{neumann1928theorie}. For our purposes, the theorem stated below by \cite{kneser1952surun} suffices, but a further generalization by \cite{sion1958} to quasi-convex and quasi-concave functions is more widely known in the literature. Note however that the compactness assumption on one of the sets cannot be relaxed.

\begin{theorem}[\cite{kneser1952surun}]
\label{thm:kneser}
Let $A$ be a non-empty compact convex subset of a locally convex topological vector space space $E$ and $B$ a non-empty convex subset of a locally convex topological vector space space $F$. Let the function $f : X \times Y \rightarrow \R$ be such that:
\begin{itemize}
    \item For each $y \in B$, the function $x \mapsto f(x,y)$ is upper semicontinuous and concave,
    \item For each $x \in A$, the function $y \mapsto f(x,y)$ is convex.
\end{itemize}
Then we have
\begin{align}
    \sup_{x \in A} \inf_{y \in B} f(x,y) = \inf_{y \in B} \max_{x \in A} f(x,y).
\end{align}
\end{theorem}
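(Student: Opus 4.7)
Weak duality $\sup_{x \in A} \inf_{y \in B} f(x,y) \leq \inf_{y \in B} \max_{x \in A} f(x,y)$ is immediate (the $\max$ is attained because $x \mapsto f(x,y)$ is USC on the compact $A$). The non-trivial direction I would establish by a compactness plus finite intersection property (FIP) argument, reducing the FIP to a finite-dimensional minimax handled by Hahn--Banach separation in $\R^n$.

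\textbf{FIP setup.} Write $v_\star := \sup_{x \in A} \inf_{y \in B} f(x,y)$ and $v^\star := \inf_{y \in B} \max_{x \in A} f(x,y)$, assume for contradiction $v^\star > v_\star$, and fix any $c$ with $v_\star < c < v^\star$. For each $y \in B$ set
\begin{equation*}
A_y \;:=\; \{x \in A : f(x,y) \geq c\}.
\end{equation*}
USC of $f(\cdot, y)$ makes $A_y$ closed in $A$ (hence compact), and concavity makes it convex. If $\{A_y\}_{y \in B}$ has the FIP, compactness of $A$ produces $x^\star \in \bigcap_{y \in B} A_y$, so $\inf_y f(x^\star, y) \geq c > v_\star$, contradicting the definition of $v_\star$. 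It therefore suffices to prove the FIP.

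\textbf{FIP via a finite minimax.} Fix $y_1, \dots, y_n \in B$ and suppose $\bigcap_i A_{y_i} = \emptyset$, i.e.\ $\min_i f(x, y_i) < c$ for every $x \in A$; since $x \mapsto \min_i f(x,y_i)$ is USC on the compact $A$, this gives $\max_{x \in A} \min_i f(x, y_i) < c$. I would then invoke the finite minimax identity
\begin{equation*}
\max_{x \in A} \min_{i} f(x, y_i) \;=\; \min_{\lambda \in \Delta_n} \max_{x \in A} \sum_{i=1}^n \lambda_i f(x, y_i), \qquad \Delta_n := \Big\{\lambda \in \R_{\geq 0}^n : \textstyle\sum_i \lambda_i = 1\Big\},
\end{equation*}
to produce $\lambda^\star \in \Delta_n$ with $\max_{x \in A} \sum_i \lambda^\star_i f(x, y_i) \leq c$. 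By convexity of $B$, $\bar y := \sum_i \lambda^\star_i y_i \in B$, and by convexity of $y \mapsto f(x,y)$, $f(x, \bar y) \leq \sum_i \lambda^\star_i f(x, y_i)$ for each $x$, so $\max_{x \in A} f(x, \bar y) \leq c$. Therefore $v^\star \leq \max_x f(x, \bar y) \leq c$, contradicting $c < v^\star$. This forces $\bigcap_i A_{y_i} \neq \emptyset$, completing the proof modulo the finite minimax identity.

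\textbf{The finite minimax, and the main obstacle.} The identity above is the classical finite minimax theorem, which I would derive from Hahn--Banach separation in $\R^n$ applied to the convex, downward-closed set $S := \{g(x) - r : x \in A,\ r \in \R_{\geq 0}^n\}$ with $g(x) := (f(x, y_1), \dots, f(x, y_n))$ (convexity of $S$ uses concavity of each $f(\cdot, y_i)$). The hypothesis $\max_x \min_i f(x, y_i) < c$ makes $S$ disjoint from the open orthant $O := \{s \in \R^n : s_i > c \text{ for all } i\}$; separation produces a non-zero $\lambda$ with $\sup_S \lambda \cdot s \leq \inf_O \lambda \cdot s$, whose coordinates must be $\geq 0$ (else $\inf_O \lambda \cdot s = -\infty$ by sending some $s_i \to +\infty$), and after normalization we obtain $\lambda^\star \in \Delta_n$ with $\sup_{s \in S} \lambda^\star \cdot s \leq c$, i.e.\ $\sum_i \lambda^\star_i f(x, y_i) \leq c$ for all $x \in A$. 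The main obstacle in the whole argument is the careful bookkeeping with upper semicontinuity in place of continuity---closedness of $A_y$, attainment of $\max_x \min_i f(x, y_i)$, and validity of the separation---but none presents difficulty beyond routine compactness arguments.
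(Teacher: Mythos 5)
Your proof is correct, but note that there is nothing in the paper to compare it against: \autoref{thm:kneser} is imported verbatim from \cite{kneser1952surun} and used as a black box in the duality lemmas, with no proof supplied. Your argument is the standard level-set route in the style of Fan/Sion rather than Kneser's original short elementary argument: weak duality; then, assuming a gap and fixing $v_\star<c<v^\star$, the superlevel sets $A_y=\{x\in A: f(x,y)\ge c\}$ are closed (upper semicontinuity) and convex (concavity) in the compact convex $A$, so it suffices to verify the finite intersection property; an empty finite intersection is excluded by separating the convex, downward-closed set $S=\{g(x)-r\}$ from the open orthant $O$ in $\R^n$, which yields weights $\lambda^\star\in\Delta_n$ with $\sum_i\lambda^\star_i f(x,y_i)\le c$ for all $x$, and convexity of $f(x,\cdot)$ together with convexity of $B$ gives $\max_{x\in A}f(x,\bar y)\le c<v^\star$, a contradiction. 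The delicate points are all handled correctly: convexity of $S$ indeed follows from concavity of each $f(\cdot,y_i)$ and convexity of $A$; nonnegativity of the separating functional and $\inf_{o\in O}\lambda^\star\cdot o=c$ are argued as they should be; upper semicontinuity is used exactly where needed (closedness of $A_y$, attainment of $\max_x\min_i f(x,y_i)$ and of the max in the statement); and, consistently with the hypotheses, no topology or semicontinuity in the $y$ variable is ever invoked. One cosmetic remark: the ``finite minimax identity'' you announce is not actually needed in full--your separation paragraph proves precisely the one inequality used--so the write-up is self-contained as it stands; compared with Kneser's original proof, your FIP-plus-separation route is somewhat less elementary but isolates cleanly the only places where compactness, concavity, and upper semicontinuity enter.
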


We also make use of the Riesz-Markov-Kakutani theorem, which we reproduce in \autoref{thm:rieszmarkov}. 
\begin{theorem}[Riesz-Markov-Kakutani representation theorem]
\label{thm:rieszmarkov}
Let $X$ be a locally compact Hausdorff space and let $C_0(X)$ be the space of continuous functions from $X$ to $\R$ vanishing at infinity, i.e. such that $f(x) \rightarrow 0$ when $\|x\|_2 \rightarrow \infty$. For any continuous linear functional $\psi$ on $C_0(X)$, there is a unique (countably additive) finite signed regular Borel measure $\mu$ on $X$ such that
\begin{align}
    \forall f\in C_{0}(X):\quad \psi (f)=\int _{X}f(x)\,d\mu (x).
\end{align}
The norm of $\psi$ as a linear functional is the total variation of $\mu$, that is $\|\psi\| = \|\mu\|_{\text{TV}} = |\mu|(X) = \mu_{+}(X) + \mu_{-}(X)$, where the decomposition $\mu = \mu_{+} - \mu_{-}$ into positive measures is given by the Hahn decomposition theorem.
Finally, $\psi$ is positive if and only if the measure $\mu$ is non-negative.
\end{theorem}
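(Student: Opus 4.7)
The theorem is the classical Riesz--Markov--Kakutani representation theorem, which the paper is invoking rather than developing from scratch, so the plan is to sketch the standard measure-theoretic construction rather than to invent something new. The proof splits naturally into (i) reduction to the positive case, (ii) construction of the representing measure for a positive functional, and (iii) verification of uniqueness, the norm identity, and the positivity equivalence.

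\textbf{Step 1: Jordan-type decomposition of the functional.} First I would write an arbitrary continuous linear functional $\psi$ on $C_0(X)$ as a difference of two positive linear functionals. Following the classical approach, for $f \in C_0(X)^+$ (nonnegative part) define
\begin{equation*}
\psi_+(f) = \sup\{ \psi(g) : g \in C_0(X),\ 0 \le g \le f \},
\end{equation*}
check that $\psi_+$ is additive and positively homogeneous on $C_0(X)^+$, and then extend it by linearity to all of $C_0(X)$ using $f = f^+ - f^-$. Set $\psi_- = \psi_+ - \psi$; continuity of $\psi$ forces $\psi_+$ and $\psi_-$ to be bounded, hence continuous, positive linear functionals with $\|\psi_+\| + \|\psi_-\| = \|\psi\|$. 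This reduces existence to proving the theorem for positive functionals.

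\textbf{Step 2: Constructing the measure for positive $\psi$.} For a positive continuous linear functional $\psi$ I would define an outer measure on open sets $U \subseteq X$ by
\begin{equation*}
\mu^\star(U) = \sup\{ \psi(f) : f \in C_c(X),\ 0 \le f \le 1,\ \mathrm{supp}(f) \subset U \},
\end{equation*}
and then set $\mu^\star(E) = \inf\{\mu^\star(U) : E \subseteq U,\ U \text{ open}\}$ for arbitrary $E \subseteq X$. Invoking Urysohn's lemma (valid because $X$ is locally compact Hausdorff), one shows $\mu^\star$ is countably subadditive, and that compact sets are $\mu^\star$-measurable. Carath\'eodory's extension criterion then yields a Borel measure $\mu$, which is automatically inner regular on open sets and outer regular by construction, giving regularity. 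Using Urysohn's lemma once more to approximate $\mathbf{1}_K$ from above by $C_c$ functions and bounded convergence, one verifies $\psi(f) = \int_X f\, d\mu$ first for $f \in C_c(X)^+$, then by linearity for $f \in C_c(X)$, and finally for $f \in C_0(X)$ by uniform approximation (since $C_c(X)$ is dense in $C_0(X)$ in the sup norm).

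\textbf{Step 3: Uniqueness, norm, positivity.} For uniqueness, if $\mu_1, \mu_2$ both represent the same positive functional, regularity plus Urysohn's lemma force $\mu_1(K) = \mu_2(K)$ on every compact $K$, and outer regularity propagates equality to all Borel sets. Reassembling the general signed case via $\mu = \mu_+ - \mu_-$ from Step~1, uniqueness of the Jordan decomposition of finite signed measures gives uniqueness of $\mu$. For the norm identity, the estimate
\begin{equation*}
|\psi(f)| = \Bigl| \int_X f\, d\mu \Bigr| \le \|f\|_\infty\, |\mu|(X)
\end{equation*}
shows $\|\psi\| \le \|\mu\|_{\mathrm{TV}}$; the reverse uses the Hahn decomposition $X = X_+ \sqcup X_-$ of $\mu$, approximates $\mathbf{1}_{X_+} - \mathbf{1}_{X_-}$ by $C_0$ functions of sup norm $1$ via regularity, and takes the supremum. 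The positivity equivalence is immediate from the construction and from $\int f\, d\mu \ge 0$ for $f \ge 0$ when $\mu \ge 0$.

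\textbf{Expected obstacle.} The technically delicate part is Step~2: verifying that the outer-measure construction actually produces a Borel measure and that compact sets are Carath\'eodory-measurable. This requires careful use of the local compactness of $X$ and Urysohn partitions of unity to separate disjoint compact sets, and it is where most textbook presentations of Riesz--Markov invest the bulk of their effort. The handling of non-$\sigma$-compact $X$ adds subtleties in passing from inner regularity on open sets to full regularity, which in the fully general locally compact Hausdorff setting is where some care is needed to avoid pathological Borel sets.
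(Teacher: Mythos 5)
The paper does not prove this statement at all: Theorem \ref{thm:rieszmarkov} is quoted as a classical background result (alongside Theorem \ref{thm:rieszmarkov_cb} and Banach--Alaoglu) and is only ever \emph{used}, in Lemmas \ref{lem:m_xi1}, \ref{lem:c_b_eta1} and \ref{lem:unit_ball_compact}, so there is no in-paper argument to compare against. Your sketch is the standard textbook (Rudin-style) construction and is essentially correct: reduction to positive functionals via the canonical decomposition $\psi = \psi_+ - \psi_-$, the outer-measure construction on open sets with Urysohn's lemma and Carath\'eodory measurability, density of $C_c(X)$ in $C_0(X)$ to pass from $C_c$ to $C_0$, and the norm identity via the Hahn decomposition with regular approximation of $\mathbf{1}_{X_+}-\mathbf{1}_{X_-}$. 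One step is phrased loosely: uniqueness in the signed case does not follow from ``uniqueness of the Jordan decomposition'' applied to the $\mu_+,\mu_-$ produced in Step 1, since a priori a second representing measure need not decompose compatibly with $\psi_+,\psi_-$. The clean argument is to take the difference of two representing regular signed measures, note it annihilates all of $C_0(X)$, and then use regularity (Hahn decomposition plus Urysohn approximation of the indicators of the positive and negative sets, exactly the device you already use for the norm identity) to conclude the difference has zero total variation. With that adjustment, and granting the technical work you flag in Step 2 for non-$\sigma$-compact $X$, the proof is the standard one and is sound.
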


By definition, the space of finite signed Radon measures $\mathcal{M}(X)$ is the same as the space of finite signed regular Borel measures (Radon measures are Borel measures that are finite on compact sets, which is holding directly because we restrict to finite measures). In other words, \autoref{thm:rieszmarkov} states that we have an isometry between the topological dual $C_{0}^{*}(X)$ and $\mathcal{M}(X)$. The following theorem is an analogous result for the dual of the Banach space $C_{b}(X)$ of bounded continuous functions.

\begin{theorem}
[Riesz representation theorem for $C_{b}^{*}(X)$, \cite{dunford1958linear}]
\label{thm:rieszmarkov_cb}
Let $X$ be a normal topological space. Let $\text{rba}(X)$ be the space of finitely additive finite signed regular Borel measures $\mu$ on $X$. It holds that
\begin{align}
    C_b^{*}(X) = \text{rba}(X).
\end{align}
\end{theorem}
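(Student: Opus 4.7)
The plan is to establish an isometric isomorphism $\Phi\colon \mathrm{rba}(X) \to C_b^*(X)$ via the natural pairing $\Phi(\mu)(f) = \int_X f\, d\mu$, proving boundedness with the correct norm in one direction and surjectivity via a compactification argument in the other. First, for any $\mu \in \mathrm{rba}(X)$ with Jordan decomposition $\mu = \mu_+ - \mu_-$, the integral $\int_X f\, d\mu$ is well-defined for $f \in C_b(X)$ because $f$ is $\mu$-integrable (bounded, measurable) with respect to each of the finite positive parts. Linearity in $f$ is immediate and the estimate $|\Phi(\mu)(f)| \leq \|f\|_\infty |\mu|(X)$ shows $\|\Phi(\mu)\| \leq \|\mu\|_{\mathrm{TV}}$; by choosing $f$ close to $+1$ on a set of near-total $\mu_+$-mass and $-1$ on a set of near-total $\mu_-$-mass (using regularity to approximate by continuous bounded functions via Urysohn's lemma, which applies since $X$ is normal), one matches $\|\Phi(\mu)\| \geq \|\mu\|_{\mathrm{TV}}$.

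For surjectivity and injectivity, the standard route is to pass to the Stone--Cech compactification $\beta X$, so I would invoke the well-known isometric algebra isomorphism $C_b(X) \cong C(\beta X)$ obtained by extending bounded continuous functions to the compactification. Then $\beta X$ is compact Hausdorff, so Theorem~\ref{thm:rieszmarkov} (the classical Riesz--Markov--Kakutani representation applied to $C_0(\beta X) = C(\beta X)$) yields an isometry $C_b^*(X) \cong C(\beta X)^* \cong \mathcal{M}(\beta X)$, the space of countably additive finite signed regular Borel measures on $\beta X$.

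The remaining step is to translate a measure $\tilde\mu \in \mathcal{M}(\beta X)$ back into an element of $\mathrm{rba}(X)$. The map is defined by $\mu(E) := \tilde\mu(\overline{E}^{\beta X})$ for a Borel set $E \subseteq X$, or equivalently by restricting the Baire/Borel algebra of $\beta X$ along the inclusion $X \hookrightarrow \beta X$. The resulting set function is finitely additive on Borel subsets of $X$ (countable additivity typically fails because disjoint Borel sets in $X$ may have overlapping closures in $\beta X$), has total variation bounded by that of $\tilde\mu$, and inherits regularity from the regularity of $\tilde\mu$ together with normality of $X$. One must verify the compatibility $\int_X f\, d\mu = \int_{\beta X} \tilde f\, d\tilde\mu$ for all $f \in C_b(X)$, where $\tilde f$ is the Stone--Cech extension; this follows by monotone-class/approximation arguments once it is checked on simple functions coming from closed sets. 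Together these identifications show $\Phi$ is a surjective linear isometry.

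The main obstacle is precisely the last translation step: verifying that pulling back a countably additive regular Borel measure on $\beta X$ along the embedding produces a well-defined finitely additive regular Borel measure on $X$ with matching total variation, and that this operation is a two-sided inverse to $\Phi$. The subtlety is that a Borel set in $X$ is generally not Borel in $\beta X$, so one must be careful to use the correct $\sigma$-algebra (or work with the Baire $\sigma$-algebra generated by $C_b(X)$) and to verify that regularity in the sense of $\mathrm{rba}(X)$---approximation of Borel sets by closed sets from within and open sets from without---is a consequence of the regularity on $\beta X$; normality of $X$ is exactly what is needed to guarantee sufficiently many separating continuous functions for this approximation to go through.
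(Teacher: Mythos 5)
The paper itself does not prove this statement; it is quoted as a classical result from \citet{dunford1958linear} (Thm.\ IV.6.2, due to Alexandroff), so your proposal can only be judged on its own terms. Your first paragraph (well-definedness of the pairing, $\|\Phi(\mu)\|=\|\mu\|_{\text{TV}}$ via Jordan decomposition, regularity and Urysohn) is fine. The genuine gap is the transfer step from $\mathcal{M}(\beta X)$ back to $\text{rba}(X)$, which is where the whole content of the theorem lives. The set function $\mu(E):=\tilde\mu\bigl(\overline{E}^{\beta X}\bigr)$ is \emph{not} finitely additive, not even for positive $\tilde\mu$: take $X=\mathbb{R}$, $E_1=\mathbb{Q}$, $E_2=\mathbb{R}\setminus\mathbb{Q}$; both are dense in $\mathbb{R}$, hence $\overline{E_1}^{\beta X}=\overline{E_2}^{\beta X}=\beta\mathbb{R}$, so $\mu(E_1)+\mu(E_2)=2\tilde\mu(\beta\mathbb{R})\neq\tilde\mu(\beta\mathbb{R})=\mu(E_1\cup E_2)$. (Your parenthetical concedes only that \emph{countable} additivity fails, but finite additivity already fails.) The alternative you mention, ``restricting the Borel algebra along $X\hookrightarrow\beta X$,'' is not well defined either: $X$ need not be a Borel subset of $\beta X$, and two Borel sets of $\beta X$ with the same trace on $X$ can carry different $\tilde\mu$-mass, so there is no canonical pull-back of measures at the level of sets.

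What is actually needed — whether or not one first passes to $\beta X$ — is a Riesz-type construction of the measure on $X$ directly from the functional $L\in C_b^{*}(X)$: decompose $L$ into positive parts, define for closed $F\subseteq X$
\begin{align}
\mu(F)=\inf\left\{L(f)\ :\ f\in C_b(X),\ \chi_F\leq f\leq 1\right\},
\end{align}
use normality (Urysohn) to prove finite additivity on closed sets and regularity, and then extend to all Borel sets by outer regularity, checking finally that $\int f\,d\mu=L(f)$. This is essentially the Alexandroff/Dunford--Schwartz argument (they extend $L$ to the space of all bounded functions by Hahn--Banach, represent it by a finitely additive set function, and then regularize using normality), and once you do it the detour through the Stone--\v{C}ech compactification buys you nothing. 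So the compactification reduces nothing: the ``translation step'' you flag as the main obstacle is not a technical verification but the theorem itself, and as sketched it does not go through.
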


Finally, we recall the Banach-Alaoglu theorem from functional analysis, which we will use to show compactness and apply \autoref{thm:kneser}.
\begin{theorem}
\label{thm:banachalaoglu}
For any topological vector space $X$ with continuous dual space $X^{*}$, the closed unit ball of $X^{*}$ in the dual norm (i.e. $\mathcal{B}_{X^{*}} = \{ x^{*} \in X^{*} | \sup_{x \in X} \langle x^{*}, x \rangle \leq 1 \rangle \}$) is compact in the weak-* topology, which the weakest topology on $X^{*}$ making all maps $\langle x,\cdot \rangle : X^{*} \to \R$ continuous, as $x$ ranges over $X$. In particular, for Hilbert spaces $H$ we have that $\mathcal{B}_{H}$ is compact in the weak-* topology, which coincides with the weak topology in this case.
\end{theorem}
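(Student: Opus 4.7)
The plan is to prove the Banach--Alaoglu theorem via the classical Tychonoff embedding argument: realize $\mathcal{B}_{X^{*}}$ as a closed subset of a product of compact intervals (one for each point of $X$), and then invoke Tychonoff's theorem. Concretely, for each $x \in X$, set $D_x = [-\|x\|, \|x\|] \subset \R$, and consider the product space $P = \prod_{x \in X} D_x$ equipped with the product topology. By Tychonoff's theorem (whose proof uses the axiom of choice), $P$ is compact. Define the evaluation map $\Phi : \mathcal{B}_{X^{*}} \to P$ by $\Phi(x^{*}) = (x^{*}(x))_{x \in X}$. For $x^{*} \in \mathcal{B}_{X^{*}}$ we have $|x^{*}(x)| \le \|x^{*}\|\,\|x\| \le \|x\|$, so $\Phi$ is well-defined, and it is clearly injective.

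The next step is to verify that $\Phi$ is a homeomorphism between $\mathcal{B}_{X^{*}}$ (with the relative weak-$*$ topology) and $\Phi(\mathcal{B}_{X^{*}}) \subset P$ (with the relative product topology). This is immediate from the definitions of both topologies: both are characterized by the requirement that the evaluation maps $x^{*} \mapsto x^{*}(x)$ be continuous for every $x \in X$, so a net $x_\alpha^{*}$ converges weak-$*$ to $x^{*}$ if and only if $\Phi(x_\alpha^{*}) \to \Phi(x^{*})$ coordinatewise in $P$. Thus the argument reduces to showing that $\Phi(\mathcal{B}_{X^{*}})$ is a closed subset of $P$.

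For closedness, the image of $\Phi$ is precisely the set of tuples $(\lambda_x)_{x \in X} \in P$ satisfying the linearity constraints $\lambda_{\alpha x + \beta y} - \alpha \lambda_x - \beta \lambda_y = 0$ for all scalars $\alpha, \beta$ and all $x, y \in X$. Each such constraint cuts out a closed subset of $P$ because the coordinate projections $\pi_x : P \to \R$ are continuous by definition of the product topology, and hence the map $(\lambda_z)_z \mapsto \lambda_{\alpha x + \beta y} - \alpha \lambda_x - \beta \lambda_y$ is continuous from $P$ to $\R$. Intersecting over all such constraints yields a closed set, which is exactly $\Phi(\mathcal{B}_{X^{*}})$ (the norm bound $|x^{*}(x)| \le \|x\|$ is already enforced by $P$ itself). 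As a closed subset of the compact space $P$, this set is compact, and pulling back through the homeomorphism $\Phi$ gives compactness of $\mathcal{B}_{X^{*}}$ in the weak-$*$ topology.

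The main subtlety is the reliance on Tychonoff's theorem for an uncountable product, which genuinely requires the axiom of choice; the remaining work is bookkeeping about topologies. For the Hilbert space corollary, I would invoke the Riesz representation theorem to obtain a conjugate-linear isometric isomorphism $H \cong H^{*}$ identifying weak and weak-$*$ topologies; applying the main statement to $H^{*}$ then transfers weak-$*$ compactness of $\mathcal{B}_{H^{*}}$ back to weak compactness of $\mathcal{B}_H$, which coincides with the weak-$*$ topology under this identification. A minor technical point worth noting is that the statement as written implicitly assumes $X$ is a normed space so that the dual norm and the unit ball are defined; for general locally convex topological vector spaces one would replace $\mathcal{B}_{X^{*}}$ by the polar of a neighborhood of the origin, but the embedding-into-a-product-of-compacts strategy is identical.
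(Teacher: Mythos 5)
Your proof is correct: it is the canonical Tychonoff-embedding argument for Banach--Alaoglu, and the deduction of the Hilbert-space case via the Riesz isomorphism is also fine. Note that the paper does not prove this statement at all --- it is recalled as a classical result from functional analysis to be invoked later --- so there is no in-paper argument to compare against; your write-up supplies the standard proof, and you rightly flag that the statement as phrased implicitly assumes $X$ is normed (otherwise one works with polars of neighborhoods of the origin), which is indeed the setting in which the paper uses it.
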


\textbf{\textit{Proof of \autoref{thm:duality_general}.}}
We use \autoref{thm:fenchel}. 

On the one hand, we set $X = \mathcal{M}_{\xi_1}(\mathcal{Y})$, which we define to be the space of Radon measures over $\mathcal{Y}$ such that the weighted total variation
\begin{align}
\|\nu\|_{\text{TV}, \xi_1} := \int_{\mathcal{Y}} \xi_1(y) \ d|\nu|(y)
\end{align}
is finite, where $\xi_1 : \mathcal{Y} \to \R$ is the strictly positive function given by \autoref{ass:phi}(i). By \autoref{lem:m_xi1}, $\mathcal{M}_{\xi_1}(\mathcal{Y})$ is a Banach space with norm $\|\cdot \|_{\text{TV}, \xi_1}$ and its continuous dual contains the set $C_{b, \xi_1}(\mathcal{Y})$ of continuous functions $f$ such that $f/\xi_1$ is bounded.

On the other hand, we set $Y = L^2(\mathcal{Z}) = \{ f : \mathcal{Z} \to \R \ | \ \int_{\mathcal{Z}} f(z)^2 
\ d\tau_{\mathcal{Z}}(z) < +\infty \}$, the Hilbert space of square-integrable functions on $\mathcal{Z}$ under the base measure $\tau_{\mathcal{Z}}$, 
which is of course self-dual.

Define $F: \mathcal{M}_{\xi_1}(\mathcal{Y}) \to \R \cup \{+\infty \}$ as
\begin{align} \label{eq:F_def}
    F(\nu) = 
    \begin{cases}
        \beta^{-1} D_{KL}(\nu||\tau_{\mathcal{Y}}) &\text{if } \nu \in \mathcal{P}(\mathcal{Y}), \\
        +\infty &\text{otherwise}
    \end{cases}.
\end{align}
\autoref{lem:F_conj} states that $F$ is a convex functional and that its convex conjugate $F^{*}: \mathcal{M}_{\xi_1}^{*}(\mathcal{Y}) \to \R \cup \{+\infty \}$ restricted to $C_{b, \xi_1}(\mathcal{Y})$ satisfies 
\begin{align} \label{eq:F_conj}
    F^{*}(q) = \beta^{-1} \log \left( \int_{\mathcal{Y}} \exp(\beta q(y')) \ d\tau_{\mathcal{Y}}(y') \right).
\end{align}
Define $G : L^2(\mathcal{Z}) \to \R \cup \{+\infty \}$ as
\begin{align} \label{eq:G_def}
    G(\psi) = \left(\int_\mathcal{Z} \left( \psi(z) - g(z) \right)^2 
    \ d\tau_{\mathcal{Z}}(z) \right)^{1/2}, 
\end{align}
\autoref{lem:G_conj} states that $G$ is a convex functional and that its convex conjugate $G^{*} : L^2(\mathcal{Z}) \to \R \cup \{+\infty \}$ is of the form
\begin{align} \label{eq:G_conj}
    G^{*}(\psi) = \begin{cases}
        \int_{K} g(z) \psi(z) \ d\tau_{\mathcal{Z}}(z) &\text{if } \|\psi\|_{L^2(\mathcal{Z})} \leq 1,\\
        +\infty &\text{otherwise}
    \end{cases}.
\end{align}
Define $A : \mathcal{M}_{\xi_1}(\mathcal{Y}) \rightarrow L^2(\mathcal{Z})$ as 
\begin{align}
(A \nu)(z) = \int_{\mathcal{Y}} \phi(y,z) \ d\nu(y).
\end{align}
The linear operator $A$ is well defined and continuous by \autoref{lem:A_well}. \autoref{lem:A_well} also states that $A^{*} : L^2(\mathcal{Z}) \rightarrow \mathcal{M}_{\xi_1}^{*}(\mathcal{Y})$ is of the form
\begin{align} \label{eq:A_star}
    (A^{*} h)(y) = \int_{\mathcal{Z}} \phi(y,z) h(z) \ d\tau_{\mathcal{Z}}(z)
\end{align}

Hence, we have that 
$\min_{\nu \in \mathcal{M}_{\xi_1}(\mathcal{Y})} \beta^{-1} D_{KL}(\nu||\tau_{\mathcal{Y}}) + ( \int_{\mathcal{Z}} ( \int_{\mathcal{Y}} \phi(y, z) \ d\nu(y) - g(z) )^2 \ d\tau_{\mathcal{Z}}(z) )^{1/2}$
can be written as
\begin{align} \label{eq:primal2}
    p^{*} = \inf_{\nu \in \mathcal{M}_{\xi_1}(\mathcal{Y})} \{F(\nu)+G(A\nu) \}.
\end{align}
And problem \eqref{eq:general_dual_problem} can be written as
\begin{align} \label{eq:dual2}
    d^{*} = \sup_{\substack{h \in L^2(\mathcal{Z}), \\ \|h\|_{L^2} \leq 1}} \{-F^{*}(-A^{*} h)-G^{*}(h)\}.
\end{align}
To apply \autoref{thm:fenchel}, it only remains to show that condition 2 holds. That is, we have to check that $A \ \text{dom} \ F \cap \text{cont} \ G \neq \emptyset$. Consider $\psi = A \nu$ for some $\nu \in \mathcal{P}(\mathcal{Y}) \cap \mathcal{M}_{\xi_1}(\mathcal{Y})$ absolutely continuous w.r.t. $\tau_{\mathcal{Y}}$. Then $\psi \in A \ \text{dom} \ F$. Moreover, since $G$ is a continuous functional, we have that $\text{cont} \ G = L^2(\mathcal{Z})$. Thus, $\psi$ also belongs to $\text{cont} \ G$ and we conclude that $A \ \text{dom} \ F \cap \text{cont} \ G \neq \emptyset$.

By \autoref{thm:fenchel}, $p^{*} = d^{*}$, and since $p^{*}$ is finite, we have that the supremum in \eqref{eq:dual2} is attained; let $h^{\star}$ be one maximizer. We show that $p^{*} = \inf_{\nu \in \mathcal{M}_{\xi_1}(\mathcal{Y})} \{F(\nu)+G(A\nu) \} = \inf_{\nu \in \mathcal{M}_{\xi_1}(\mathcal{Y}) \cap \mathcal{P}(\mathcal{Y})} \{F(\nu)+G(A\nu) \}$ admits a minimizer by the direct method of the calculus of variations. First, notice that $F$ and $G \circ A$ are lower semicontinuous in the topology of weak convergence:
\begin{itemize}
    \item $F$ by the lower semicontinuity of the KL divergence \citep{posner_random},
    \item and $G \circ A$ because can be written as a supremum of continuous functions as shown in \eqref{eq:G_as_sup}, and thus its sublevel sets are closed because they are the intersection of closed sublevel sets. Closed sublevel sets is equivalent to lower semicontinuity.
\end{itemize}  
Second, $\mathcal{P}(\mathcal{Y}) \cap \mathcal{M}_{\xi_1}(\mathcal{Y})$ is compact, because $\mathcal{P}(\mathcal{Y})$ is compact and $\mathcal{M}_{\xi_1}(\mathcal{Y})$ is closed as it is a Banach space. Hence, the direct method of the calculus of variations applies. Let $\nu^{\star}$ be one minimizer of $p^{*}$. 

It remains to show that 
\begin{align} \label{eq:gamma_euler}
\frac{d\nu^{\star}}{d\tau_{\mathcal{Y}}}(y) = \frac{1}{Z_{\beta}}\exp \left( - \beta \int_{\mathcal{Z}} \phi(y, z) \ h^{\star}(z) d\tau_{\mathcal{Z}}(z) \right).
\end{align}
We make use of the argument to prove Fenchel weak duality, which is:
\begin{align}
\begin{split} \label{eq:fenchel_weak}
    &\sup_{\substack{h \in L^2(\mathcal{Z}), \\ \|h\|_{L^2} \leq 1}} \{-F^{*}(-A^{*} h)-G^{*}(h)\} = -F^{*}(-A^{*} h^{\star})-G^{*}(h^{\star}) \\&= 
    - \sup_{\nu \in \mathcal{M}_{\xi_1}(\mathcal{Y})} \left\{\langle - A^{*}h^{\star}, \nu \rangle - F(\nu) \right\} - \sup_{\psi \in L^2(\mathcal{Z})} \left\{\langle h^{\star}, \psi \rangle - G(\psi) \right\}
    \\&\leq - \sup_{\nu \in \mathcal{M}_{\xi_1}(\mathcal{Y})} \left\{\langle - A^{*}h^{\star},\nu \rangle - F(\nu) + \langle h^{\star}, A \nu \rangle - G(A \nu) \right\} \\&= - \sup_{\nu \in \mathcal{M}_{\xi_1}(\mathcal{Y})} \left\{ - F(\nu) - G(A \nu) \right\} = \inf_{\nu \in \mathcal{M}_{\xi_1}(\mathcal{Y})} \{F(\nu)+G(A\nu)\} \\ &= F(\nu^{\star})+G(A\nu^{\star})
\end{split}
\end{align}
Thus, for strong duality to hold we must have that 
\begin{align} \label{eq:strong_duality_cond}
\nu^{\star} = \argmin_{\nu \in \mathcal{M}_{\xi_1}(\mathcal{Y})} \left\{\langle - A^{*}h^{\star}, \nu \rangle - F(\nu) \right\}
\end{align}
By \autoref{lem:gamma_star}(i), this implies that equation \eqref{eq:gamma_euler} holds, and by \autoref{lem:gamma_star}(ii) we have that $\nu^{\star} = \argmin_{\nu \in \mathcal{P}(\mathcal{Y})} \{F(\nu)+G(A\nu) \}$.
\qed

\begin{lemma} \label{lem:m_xi1}
Let $\mathcal{M}_{\xi_1}(\mathcal{Y})$ be the vector space of Radon measures over $\mathcal{Y}$ such that the weighted total variation $\|\nu\|_{\text{TV}, \xi_1} := \int_{\mathcal{Y}} \xi_1(y) \ d|\nu|(y)$ 
is finite, where $\xi_1 : \mathcal{Y} \to \R$ is the strictly positive function given by \autoref{ass:phi}(i). $\mathcal{M}_{\xi_1}(\mathcal{Y})$ is a Banach space with norm $\|\cdot\|_{\text{TV}, \xi_1}$. 

Let $C_{b, \xi_1}(\mathcal{Y})$ be the set of continuous functions $f$ such that $f/\xi_1 \in C_b(\mathcal{Y})$, i.e. is a bounded continuous function. The continuous dual $\mathcal{M}_{\xi_1}^{*}(\mathcal{Y})$ contains the set $C_{b, \xi_1}(\mathcal{Y})$.   
\end{lemma}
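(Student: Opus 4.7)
The plan is to split into the two claims and handle each with a standard measure-theoretic reduction to the classical total variation setting.

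\textbf{Norm properties and Banach structure.} First I would verify that $\|\cdot\|_{\text{TV},\xi_1}$ is a norm on $\mathcal{M}_{\xi_1}(\mathcal{Y})$: absolute homogeneity is immediate, and the triangle inequality follows from $|\nu_1+\nu_2|\le |\nu_1|+|\nu_2|$ applied inside the integral against the nonnegative weight $\xi_1$. Definiteness uses the strict positivity of $\xi_1$: if $\int \xi_1 \, d|\nu|=0$ then $|\nu|(\{\xi_1>1/n\})=0$ for every $n$, and taking the union gives $|\nu|(\mathcal{Y})=0$.

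For completeness, the natural move is to exhibit an isometric isomorphism $\Phi: \mathcal{M}_{\xi_1}(\mathcal{Y}) \to \mathcal{M}(\mathcal{Y})$ with the usual total variation space (which is known to be Banach). Define $\Phi(\nu)=\xi_1\nu$, meaning the Radon measure with Radon-Nikodym derivative $\xi_1$ with respect to $\nu$ (more precisely $d(\Phi(\nu)) = \xi_1\, d\nu$ as a signed measure, handled via the Hahn decomposition $\nu=\nu^+-\nu^-$ and setting $\Phi(\nu)=\xi_1\nu^+-\xi_1\nu^-$). Because $\xi_1>0$ everywhere, one has $|\Phi(\nu)|=\xi_1|\nu|$, so $\|\Phi(\nu)\|_{\text{TV}} = \int\xi_1\,d|\nu|=\|\nu\|_{\text{TV},\xi_1}$. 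The inverse $\mu\mapsto \xi_1^{-1}\mu$ is well defined (again by strict positivity) and sends $\mathcal{M}(\mathcal{Y})$ back into $\mathcal{M}_{\xi_1}(\mathcal{Y})$. Hence $\Phi$ is a linear isometric bijection, and completeness of $\mathcal{M}_{\xi_1}(\mathcal{Y})$ follows from completeness of $(\mathcal{M}(\mathcal{Y}),\|\cdot\|_{\text{TV}})$.

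\textbf{Dual inclusion.} For $f\in C_{b,\xi_1}(\mathcal{Y})$, define $L_f:\mathcal{M}_{\xi_1}(\mathcal{Y})\to\mathbb{R}$ by $L_f(\nu):=\int_{\mathcal{Y}} f\,d\nu$. To see that the integral is well defined and that $L_f$ is bounded, estimate
\begin{equation*}
|L_f(\nu)| \;\le\; \int_{\mathcal{Y}} |f|\, d|\nu| \;=\; \int_{\mathcal{Y}} \frac{|f(y)|}{\xi_1(y)}\,\xi_1(y)\, d|\nu|(y) \;\le\; \Bigl\|\tfrac{f}{\xi_1}\Bigr\|_\infty \,\|\nu\|_{\text{TV},\xi_1}.
\end{equation*}
Linearity is obvious, so $L_f\in \mathcal{M}_{\xi_1}^{*}(\mathcal{Y})$ with $\|L_f\|\le \|f/\xi_1\|_\infty<\infty$. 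The map $f\mapsto L_f$ is injective (one recovers $f$ up to $|\nu|$-null sets by testing against Dirac masses $\delta_y$ with suitable normalization, which lie in $\mathcal{M}_{\xi_1}(\mathcal{Y})$ since $\xi_1(y)<\infty$). This embeds $C_{b,\xi_1}(\mathcal{Y})$ into the continuous dual, proving the inclusion.

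\textbf{Main obstacle.} The only subtle point is handling the sign in the definition of $\Phi$ (so that $\Phi(\nu)$ is genuinely a Radon measure, not just a set function) and verifying $|\Phi(\nu)|=\xi_1|\nu|$; this is straightforward once one invokes the Hahn-Jordan decomposition and uses that multiplication by a strictly positive measurable function preserves the mutual singularity of $\nu^+$ and $\nu^-$. Everything else reduces to routine bookkeeping.
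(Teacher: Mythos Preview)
Your proposal is correct and takes essentially the same approach as the paper: both construct the isometric isomorphism $\nu \mapsto \xi_1\nu$ onto $(\mathcal{M}(\mathcal{Y}),\|\cdot\|_{\text{TV}})$ to transfer completeness, and both prove the dual inclusion via the identical bound $|\int f\,d\nu|\le \|f/\xi_1\|_\infty\,\|\nu\|_{\text{TV},\xi_1}$. You add some extra detail the paper omits (explicit verification of the norm axioms, injectivity of $f\mapsto L_f$, and the Hahn-decomposition check that $|\xi_1\nu|=\xi_1|\nu|$), but the core argument is the same.
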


\begin{proof}
If we define the linear map $\tilde{\xi_1} : \mathcal{M}_{\xi_1}(\mathcal{Y}) \to \mathcal{M}(\mathcal{Y})$ as $\nu \mapsto \tilde{\nu}$, where $\tilde{\nu}$ is absolutely continuous w.r.t $\nu$ and has density $\frac{d\tilde{\nu}}{d\nu}(y) = \xi_1(y)$, we have that $ \|\nu\|_{\text{TV}, \xi_1} = \|\tilde{\xi_1}(\nu)\|_{\text{TV}}$. Notice that $\tilde{\xi_1}$ is surjective, because for all $\tilde{\nu} \in \mathcal{M}(\mathcal{Y})$, the measure $\nu$ with density $\frac{d\nu}{d\tilde{\nu}}(y) = \xi_1(y)^{-1}$ is a Radon measure (possibly not signed, because we cannot guarantee that $\nu_{+}$ nor $\nu_{-}$ is finite) such that $\tilde{\xi_1} \nu = \tilde{\nu}$. $\tilde{\xi_1}$ is a surjective isometry between $\mathcal{M}_{\xi_1}(\mathcal{Y})$ and $\mathcal{M}(\mathcal{Y})$, which shows that $\mathcal{M}_{\xi_1}(\mathcal{Y})$ is a Banach space. 

Let $\mathcal{M}^{*}(\mathcal{Y})$ be the dual space of $\mathcal{M}(\mathcal{Y})$. By the Riesz-Markov-Kakutani representation theorem (\autoref{thm:rieszmarkov}) and the fact that the double dual space contains the primal space, $\mathcal{M}^{*}(\mathcal{Y})$ immediately contains the set of continuous functions $C_0(\mathcal{Y})$ on $\mathcal{Y}$ vanishing at infinity. Furthermore, $\mathcal{M}^{*}(\mathcal{Y})$ contains the larger set of bounded continuous functions $C_b(\mathcal{Y})$, because if $f \in C_b(\mathcal{Y})$, for any $\tilde{\nu} \in \mathcal{M}(\mathcal{Y})$, 
\begin{align}
\langle f, \tilde{\nu} \rangle = \int_{\mathcal{Y}} f(y) \ d\tilde{\nu}(y) \leq \sup_{y \in \mathcal{Y}} f(y) \|\tilde{\nu}\|_{\text{TV}}. 
\end{align}
In an analogous way, $\mathcal{M}^{*}_{b, \xi_1}(\mathcal{Y})$ contains the set $C_{b, \xi_1}(\mathcal{Y})$, because if $f \in C_{b, \xi_1}(\mathcal{Y})$, for any $\nu \in \mathcal{M}_{b, \xi_1}(\mathcal{Y})$,
\begin{align}
\langle f, \nu \rangle = \int_{\mathcal{Y}} \frac{f(y)}{\xi_1(y)} \xi_1(y) \ d\nu(y) = \int_{\mathcal{Y}} \frac{f(y)}{\xi_1(y)} \ d\tilde{\nu}(y) \leq \|\tilde{\nu}\|_{\text{TV}} \sup_{y \in \mathcal{Y}} \frac{f(y)}{\xi_1(y)} = \|\nu\|_{\text{TV}, \xi_1} \sup_{y \in \mathcal{Y}} \frac{f(y)}{\xi_1(y)}
\end{align}
\end{proof}

\begin{lemma} \label{lem:F_conj}
$F: \mathcal{M}_{\xi_1}(\mathcal{Y}) \to \R \cup \{+\infty \}$ defined in equation \eqref{eq:F_def} is convex. The restriction of its convex conjugate $F^{*} : \mathcal{M}_{\xi_1}^{*}(\mathcal{Y}) \to \R$ to the set $C_{b, \xi_1}(\mathcal{Y}) := \{ f \in C(\mathcal{Y}) \ | \ f(\cdot)/\xi_1(\cdot) \in C_b(\mathcal{Y}) \} \subseteq \mathcal{M}_{\xi_1}^{*}(\mathcal{Y})$ is given by equation \eqref{eq:F_conj}.
\end{lemma}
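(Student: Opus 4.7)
The plan is to handle convexity and conjugation separately, with the bulk of the work concentrated on identifying $F^*$ via the Donsker–Varadhan variational principle. For convexity, I would note that the map $\nu\mapsto D_{\mathrm{KL}}(\nu\|\tau_{\mathcal{Y}})$ is (jointly, hence marginally) convex on the set of probability measures, and the domain $\{F<+\infty\}=\mathcal{P}(\mathcal{Y})\cap\mathcal{M}_{\xi_1}(\mathcal{Y})$ is convex as the intersection of two convex subsets of $\mathcal{M}_{\xi_1}(\mathcal{Y})$; extending $F$ to $+\infty$ outside this set preserves convexity on the whole Banach space.

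For the conjugate, by definition and the observation that $F=+\infty$ off $\mathcal{P}(\mathcal{Y})\cap\mathcal{M}_{\xi_1}(\mathcal{Y})$,
\[
F^*(q)=\sup_{\nu\in\mathcal{P}(\mathcal{Y})\cap\mathcal{M}_{\xi_1}(\mathcal{Y})}\Big\{\int_{\mathcal{Y}} q\,d\nu-\beta^{-1}D_{\mathrm{KL}}(\nu\|\tau_{\mathcal{Y}})\Big\}.
\]
The core step is the Donsker–Varadhan identity: for a measurable $h$ with $\int e^{h}\,d\tau_{\mathcal{Y}}<\infty$,
\[
\log\int_{\mathcal{Y}} e^{h}\,d\tau_{\mathcal{Y}}=\sup_{\nu\in\mathcal{P}(\mathcal{Y})}\Big\{\int h\,d\nu-D_{\mathrm{KL}}(\nu\|\tau_{\mathcal{Y}})\Big\},
\]
with the unique maximizer $d\nu^\star/d\tau_{\mathcal{Y}}=e^{h}/Z$, $Z=\int e^{h}d\tau_{\mathcal{Y}}$. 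Applying this with $h=\beta q$ and dividing by $\beta$ yields exactly the claimed formula for $F^*(q)$.

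What remains is to justify that the identity applies at $h=\beta q$, and that restricting to $\mathcal{P}(\mathcal{Y})\cap\mathcal{M}_{\xi_1}(\mathcal{Y})$ rather than all of $\mathcal{P}(\mathcal{Y})$ changes nothing. For $q\in C_{b,\xi_1}(\mathcal{Y})$ there exists $C>0$ with $|q(y)|\le C\xi_1(y)$, so $e^{\beta q(y)}\le e^{\beta C\xi_1(y)}$; Assumption~\ref{ass:phi}(ii) says $\xi_1(y)+\log\xi_1(y)=o\!\left(-\log\frac{d\tau_{\mathcal{Y}}}{d\lambda}(y)-(d_1+\epsilon)\log\|y\|\right)$, which, when multiplied by $\beta C$ (still $o(\cdot)$) and exponentiated, shows that $e^{\beta C\xi_1(y)}\frac{d\tau_{\mathcal{Y}}}{d\lambda}(y)=O(\|y\|^{-(d_1+\epsilon)})$ as $\|y\|\to\infty$. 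Hence $\int e^{\beta q}\,d\tau_{\mathcal{Y}}<\infty$ (trivially so in the compact case, taking $\xi_1\equiv 1$). The same estimate, with an extra factor of $\xi_1$, gives $\int \xi_1\,e^{\beta q}\,d\tau_{\mathcal{Y}}<\infty$, which shows that the Gibbs measure $d\nu^\star/d\tau_{\mathcal{Y}}=e^{\beta q}/Z$ lies in $\mathcal{M}_{\xi_1}(\mathcal{Y})$; thus the supremum in Donsker–Varadhan, already attained at $\nu^\star$, is achieved inside $\mathcal{P}(\mathcal{Y})\cap\mathcal{M}_{\xi_1}(\mathcal{Y})$, and the two suprema coincide.

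The main obstacle is the fact that $q$ need not be bounded, so the textbook Donsker–Varadhan statement does not apply verbatim. I would handle this by truncating: set $q_n=(q\wedge n)\vee(-n)$, apply the bounded version of Donsker–Varadhan to $\beta q_n$, and pass to the limit using dominated convergence on the right-hand side (dominated by $e^{\beta C\xi_1}$, integrable under $\tau_{\mathcal{Y}}$ by the computation above) and the fact that for any probability $\nu$ with finite KL, $\int q\,d\nu$ is well-defined and $\int q_n\,d\nu\to\int q\,d\nu$. The integrability provided by Assumption~\ref{ass:phi}(ii) is precisely what allows this limit argument to go through; without it, neither the log-partition nor the Gibbs maximizer would be well-defined in this weighted setting.
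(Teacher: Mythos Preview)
Your proposal is correct and follows essentially the same approach as the paper: both arguments rest on the Donsker--Varadhan/Gibbs variational identity, then verify via Assumption~\ref{ass:phi}(ii) that the Gibbs measure $d\tilde q/d\tau_{\mathcal{Y}}\propto e^{\beta q}$ is normalizable and belongs to $\mathcal{M}_{\xi_1}(\mathcal{Y})$, so that the supremum over $\mathcal{P}(\mathcal{Y})\cap\mathcal{M}_{\xi_1}(\mathcal{Y})$ coincides with the full one. The only minor difference is that the paper derives the identity directly by the algebraic rewrite $\int q\,d\nu-\beta^{-1}D_{\mathrm{KL}}(\nu\|\tau_{\mathcal{Y}})=-\beta^{-1}D_{\mathrm{KL}}(\nu\|\tilde q)+\beta^{-1}\log Z$, which already handles unbounded $q\in C_{b,\xi_1}(\mathcal{Y})$ and makes your truncation step unnecessary (though it is not wrong).
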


\begin{proof}
It is well known that the KL divergence is convex. We compute the (restriction of the) convex conjugate via a classical argument (c.f. Lemma B.37 of \cite{mohri2012foundations}): for any $q : \mathcal{Z} \to \R$ belonging to $C_{b, \xi_1}(\mathcal{Y})$, define $\tilde{q} \in \mathcal{P}(\mathcal{Y})$ with density $\frac{d\tilde{q}}{d\tau_{\mathcal{Y}}}(y) = \exp(\beta q(y))/\int_{\mathcal{Y}} \exp(\beta q(y')) d\tau_{\mathcal{Y}}(y')$. Then,
\begin{align}
\begin{split} \label{eq:F_tilde_derivation}
F^{*}(q) &= \sup_{\nu \in \mathcal{M}_{\xi_1}(\mathcal{Y}) \cap \mathcal{P}(\mathcal{Y})} \int_{\mathcal{Y}} q(y)\ d\nu(y) - \beta^{-1} D_{KL}(\nu||\tau_{\mathcal{Y}}) \\ &= \sup_{\nu \in \mathcal{M}_{\xi_1}(\mathcal{Y}) \cap \mathcal{P}(\mathcal{Y})} \int_{\mathcal{Y}} \log(\exp(q(y))) \ d\nu(y) - \beta^{-1} \int_{\mathcal{Y}} \log\left(\frac{d\nu}{d\tau_{\mathcal{Y}}}(y) \right) \ d\nu(y) \\ &= \sup_{\nu \in \mathcal{M}_{\xi_1}(\mathcal{Y}) \cap \mathcal{P}(\mathcal{Y})} \beta^{-1} \int_{\mathcal{Y}} \log\left(\frac{d\tau_{\mathcal{Y}}}{d\nu}(y) \exp(\beta q(y)) \right) \ d\nu(y) \\ &= \sup_{\nu \in \mathcal{M}_{\xi_1}(\mathcal{Y}) \cap \mathcal{P}(\mathcal{Y})} \beta^{-1} \int_{\mathcal{Y}} \log\left(\frac{d\tilde{q}}{d\nu}(y) \right) \ d\nu(y) \\ &= \sup_{\nu \in \mathcal{M}_{\xi_1}(\mathcal{Y}) \cap \mathcal{P}(\mathcal{Y})} - \beta^{-1} D_{KL}(\nu||\tilde{q}) + \beta^{-1} \log \left( \int_{\mathcal{Y}} \exp(\beta q(y')) \ d\tau_{\mathcal{Y}}(y') \right) \\ &= \beta^{-1} \log \left( \int_{\mathcal{Y}} \exp(\beta q(y')) \ d\tau_{\mathcal{Y}}(y') \right)
\end{split}
\end{align}
It remains to justify the last equality, which follows from checking that $\tilde{q} \in \mathcal{M}_{\xi_1}(\mathcal{Y}) \cap \mathcal{P}(\mathcal{Y})$. For this, we need to see that $\|\tilde{q}\|_{\text{TV}, \xi_1}$ is finite making use of \autoref{ass:phi}(ii):
\begin{align}
\begin{split} \label{eq:tv_norm_bound}
    &\xi_1(y) + \log(\xi_1(y)) = o\left(-\log\left(\frac{d\tau_{\mathcal{Y}}}{d\lambda}(y) \right) - (d+\epsilon) \log \|y\|_2 \right) \text{ as } \|y\|_2 \to +\infty \\
    &\implies \lim_{\|y\|_2 \to +\infty} \log(\xi_1(y)) + q(y) - \log Z_q + \log\left(\frac{d\tau_{\mathcal{Y}}}{d\lambda}(y) \right) + (d+\epsilon) \log \|y\|_2 = - \infty
    \\ &\implies \exp \left( \log(\xi_1(y)) + q(y) - \log Z_q + \log\left(\frac{d\tau_{\mathcal{Y}}}{d\lambda}(y) \right) + (d+\epsilon) \log \|y\|_2 \right) = o(1)
    \\ &\implies \|\tilde{q}\|_{\text{TV}, \xi_1} = \int_{\mathcal{Y}} \xi_1(y) \ d|\tilde{q}|(y) = \int_{\mathcal{Y}} \exp \left( \log(\xi_1(y)) + q(y) - \log Z_q + \log\left(\frac{d\tau_{\mathcal{Y}}}{d\lambda}(y) \right) \right) d\lambda(y) \\ &= \int_{\R_{+}} \int_{\mathbb{S}^{d-1}} \mathds{1}_{r \theta \in \mathcal{Y}} \exp \left( \log(\xi_1(r\theta)) + q(r\theta) - \log Z_q + \log\left(\frac{d\tau_{\mathcal{Y}}}{d\lambda}(r\theta) \right) + (d+\epsilon) \log r \right) K_1 r^{-1-\epsilon} \ dr d\lambda(\theta) \\ &\leq \int_{\R_{+}} \int_{\mathbb{S}^{d-1}} K_2 K_1 r^{-1-\epsilon} \ dr d\lambda(\theta) = \frac{K_2 K_1 \text{vol}(\mathbb{S}^{d-1})}{\epsilon}.  
\end{split}
\end{align}
In this equation, $\lambda$ denotes the Lebesgue measure over $\mathcal{Y}$. In the last inequality, we use that $\exp \bigg( \log(\xi_1(y)) + q(y) - \log Z_q + \log\left(\frac{d\tau_{\mathcal{Y}}}{d\lambda}(y) \bigg) + (d+\epsilon) \log \|y\|_2 \right) = o(1)$ to show the existence of some constant bound $K_2$ of this expression over all $y \in \mathcal{Y}$.
\end{proof}

\begin{lemma} \label{lem:G_conj}
$G : L^2(\mathcal{Z}) \to \R \cup \{+\infty \}$ given by equation \eqref{eq:G_def} is convex. Its convex conjugate $G^{*} : L^2(\mathcal{Z}) \to \R \cup \{+\infty\}$ is given by equation \eqref{eq:G_conj}.
\end{lemma}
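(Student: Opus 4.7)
The plan is to recognize $G$ as a translated norm and then exploit $L^2$ self-duality to compute its Fenchel conjugate. Observe that $G(\psi) = \|\psi - g\|_{L^2(\mathcal{Z})}$, so $G$ is the composition of the $L^2$ norm with the affine shift $\psi \mapsto \psi - g$. Since the $L^2$ norm is convex (triangle inequality plus homogeneity) and composition with an affine map preserves convexity, $G$ is convex on $L^2(\mathcal{Z})$. It is also lower semicontinuous, being continuous for the $L^2$ topology.

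Next I would compute $G^{*}$ by the change of variables $\phi := \psi - g$. Writing $L^2(\mathcal{Z})^{*} = L^2(\mathcal{Z})$ under the pairing $\langle h, \psi\rangle = \int_\mathcal{Z} h(z)\psi(z)\, d\tau_\mathcal{Z}(z)$, one has
\begin{align}
G^{*}(h) = \sup_{\psi \in L^2(\mathcal{Z})} \bigl\{\langle h, \psi\rangle - \|\psi - g\|_{L^2}\bigr\} = \langle h, g\rangle + \sup_{\phi \in L^2(\mathcal{Z})} \bigl\{\langle h, \phi\rangle - \|\phi\|_{L^2}\bigr\}.
\end{align}
The remaining supremum is precisely the Fenchel conjugate of the norm $\|\cdot\|_{L^2}$ on $L^2(\mathcal{Z})$. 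I would invoke the standard fact that for any norm $\|\cdot\|$ on a Banach space, its conjugate is the indicator of the closed unit ball of the dual norm; since $L^2(\mathcal{Z})$ is self-dual with the same norm, this yields
\begin{align}
\sup_{\phi \in L^2(\mathcal{Z})} \bigl\{\langle h, \phi\rangle - \|\phi\|_{L^2}\bigr\} = \begin{cases} 0 & \text{if } \|h\|_{L^2} \leq 1, \\ +\infty & \text{otherwise}. \end{cases}
\end{align}
If a self-contained verification is preferred, the $\leq 0$ direction on $\{\|h\|_{L^2}\leq 1\}$ is immediate from Cauchy--Schwarz (take $\phi = t h$ and check both signs of $t$), while on $\{\|h\|_{L^2} > 1\}$ the choice $\phi = t h$ gives $t(\|h\|_{L^2}^2 - \|h\|_{L^2}) \to +\infty$ as $t \to +\infty$.

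Combining the two displays gives exactly the claimed piecewise formula
\begin{align}
G^{*}(h) = \begin{cases} \int_\mathcal{Z} g(z) h(z)\, d\tau_\mathcal{Z}(z) & \text{if } \|h\|_{L^2(\mathcal{Z})} \leq 1, \\ +\infty & \text{otherwise}. \end{cases}
\end{align}
There is no real obstacle here: the only point that requires a small amount of care is the self-duality of $L^2(\mathcal{Z})$, which justifies that the dual norm appearing in the conjugate of a norm is again $\|\cdot\|_{L^2}$; everything else is a routine change of variables and an application of Cauchy--Schwarz.
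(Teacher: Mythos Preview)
Your proof is correct. You recognize $G$ as the $L^2$ norm composed with a translation, invoke the standard fact that the Fenchel conjugate of a norm is the indicator of the dual unit ball, and use the translation formula $(\|\cdot - g\|)^* = \langle \cdot, g\rangle + (\|\cdot\|)^*$. The self-contained verification via Cauchy--Schwarz and the ray $\phi = th$ is clean.

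The paper takes a different, heavier route: it first rewrites $G(\psi) = \sup_{\|h\|_{L^2}\le 1} \langle \psi - g, h\rangle$, so that $G^*(\psi)$ becomes a $\sup_\chi \inf_{\hat\psi}$ of an affine-affine saddle function. It then invokes Kneser's minimax theorem to swap the order, using Banach--Alaoglu to get weak-$*$ compactness of the $L^2$ unit ball, and reads off the result. Your argument is strictly more elementary here and avoids the minimax machinery entirely. The advantage of the paper's template is reusability: exactly the same sup--inf--swap pattern is what drives the proof of the analogous lemma in the $L^\infty$/Radon measure setting (Lemma~\ref{lem:G2_conj}), where there is no Hilbert self-duality and your ``conjugate of a norm'' shortcut would require identifying the dual of $C_{0,\eta_1}$ anyway. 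So your approach is the right one for this lemma in isolation, while the paper's approach is chosen for uniformity across the $L^2$ and $L^\infty$ cases.
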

\begin{proof}
We can easily check that $G$ is convex by writing \begin{align} \label{eq:G_as_sup}
    \left(\int_\mathcal{Z} \left( \psi(z) - g(z) \right)^2 
    \ d\tau_{\mathcal{Z}}(z) \right)^{1/2} = \sup_{\substack{h \in L^2(\mathcal{Z}), \\ \|h\|_{L^2} \leq 1}} \int_\mathcal{Z} \left( \psi(z) - g(z) \right) h(z) 
    \ d\tau_{\mathcal{Z}}(x),
\end{align}
(in more compact notation $\| \chi - g \|_{L^2(\mathcal{Z})} = \sup_{h \in L^2(\mathcal{Z}), \ \|h\|_{L^2} \leq 1} \langle \chi - g, h \rangle$),
and recalling that a supremum of convex functions is convex. 

By definition, for any $\psi \in L^2(\mathcal{Z},\tau_{\mathcal{Z}})$, we have that $G^{*}(\psi)$ is equal to
\begin{align}
\begin{split} \label{eq:G_star}
    &\sup_{\chi \in L^2(\mathcal{Z})} \left\{
    \langle \chi, \psi \rangle_{L^2(\mathcal{Z})} - G(\chi)
    \right\} = \sup_{\chi \in L^2(\mathcal{Z})} \left\{
    \langle \chi, \psi \rangle_{L^2(\mathcal{Z})} - \| \chi - g \|_{L^2(\mathcal{Z})}
    \right\} 
    \\ &= \sup_{\chi \in L^2(\mathcal{Z})} \bigg\{
    \langle \chi, \psi \rangle_{L^2(\mathcal{Z})} - \sup_{\substack{\hat{\psi} \in L^2(\mathcal{Z}), \\ \|\hat{\psi}\|_{L^2} \leq 1}} \langle \chi - g, \hat{\psi} \rangle_{L^2(\mathcal{Z})} 
    \bigg\}
    \\ &= \sup_{\chi \in L^2(\mathcal{Z})} \inf_{\substack{\hat{\psi} \in L^2(\mathcal{Z}), \\ \|\hat{\psi}\|_{L^2} \leq 1}} \left\{
    \langle \chi, \psi - \hat{\psi} \rangle_{L^2(\mathcal{Z})} + \langle g, \hat{\psi} \rangle_{L^2(\mathcal{Z})} 
    \right\}.
\end{split}
\end{align}
At this point, we want to apply \autoref{thm:kneser}. For that, we set $A = \mathcal{B}_{L^2(\mathcal{Z})} \subseteq E = L^2(\mathcal{Z})$ and $B = F = L^2(\mathcal{Z})$. We can endow $B$ with the strong (or norm) topology, but $A$ requires a weaker topology that makes it compact. We endow $A$ with the weak-* topology, which by the Banach-Alaoglu theorem (\autoref{thm:banachalaoglu}) for Hilbert spaces makes it compact.
We have that $(\hat{\psi}, \chi) \mapsto H(\hat{\psi}, \chi) = -\langle \chi, \psi - \hat{\psi} \rangle_{L^2(\mathcal{Z})} - \langle g, \hat{\psi} \rangle_{L^2(\mathcal{Z})}$ is concave in $\hat{\psi} \in A$ and convex in $\chi \in B$, because it is affine in both variables. $H$ is continuous in $\chi$ (via Cauchy-Schwarz) and it is continuous in $\hat{\psi}$ in the weak-* (or weak) topology, because it is precisely the weakest one that makes maps of the form $\hat{\psi} \mapsto \langle \hat{\psi}, g - \chi \rangle_{L^2(\mathcal{Z})}$ continuous. Thus, we obtain that $\sup_{\hat{\psi} \in A} \inf_{\chi \in B} H(\chi, \hat{\psi}) = \inf_{\chi \in B} \sup_{\hat{\psi} \in A} H(\chi, \hat{\psi})$. Alternatively, if we flip the signs, we get that the right-hand side of \eqref{eq:G_star} is equal to: 
\begin{align}
\begin{split}
    &\inf_{\substack{\hat{\psi} \in L^2(\mathcal{Z}), \\ \|\hat{\psi}\|_{L^2} \leq 1}} \sup_{\chi \in L^2(\mathcal{Z})} \left\{
    \langle \chi, \psi - \hat{\psi} \rangle_{L^2(\mathcal{Z})} + \langle g, \hat{\psi} 
    \rangle_{L^2(\mathcal{Z})} 
    \right\}
    \\ &= 
    \begin{cases}
        \int_{K} g(z) \psi(z) \ d\tau_{\mathcal{Z}}(z) &\text{if } \|\psi\|_{L^2} \leq 1,\\
        +\infty &\text{otherwise}
    \end{cases}
\end{split}
\end{align}
The equality holds because unless $\hat{\psi} = \psi$, the value of the supremum is $+\infty$.
\end{proof}

\begin{lemma} \label{lem:A_well}
The linear operator $A : \mathcal{M}_{\xi_1}(\mathcal{Y}) \rightarrow L^2(\mathcal{Z})$ defined as 
$(A \nu)(z) = \int_{\mathcal{Y}} \phi(y,z) \ d\nu(y)$ is well defined and continuous. Its operator norm is upper bounded by $\sup_{y \in \mathcal{Y}} |\xi_2(y)|/\xi_1(y)$, where $\xi_1$ and $\xi_2$ are defined in \autoref{ass:phi}. 
Moreover, the adjoint operator $A^{*} : L^2(\mathcal{Z}) \rightarrow \mathcal{M}_{\xi_1}^{*}(\mathcal{Y})$ is defined as $(A^{*} h)(y) = \int_{\mathcal{Z}} \phi(y,z) h(z) \ d\tau_{\mathcal{Z}}(z)$.
\end{lemma}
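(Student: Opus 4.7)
The goal is three-fold: verify $A\nu \in L^2(\mathcal{Z})$ for $\nu \in \mathcal{M}_{\xi_1}(\mathcal{Y})$, obtain the operator-norm bound $\|A\| \le \sup_y |\xi_2(y)|/\xi_1(y)$, and identify the adjoint. All three will follow from a single Cauchy--Schwarz computation with the weight $\xi_1$, plus Fubini.

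\textbf{Step 1: Boundedness.} Starting from $(A\nu)(z) = \int_{\mathcal{Y}} \phi(y,z)\,d\nu(y)$, I bound by the total variation and split $|\phi(y,z)| = (|\phi(y,z)|/\xi_1(y)^{1/2})\cdot \xi_1(y)^{1/2}$. A weighted Cauchy--Schwarz inequality then yields
\begin{align}
\Bigl(\int_{\mathcal{Y}} |\phi(y,z)|\,d|\nu|(y)\Bigr)^{2}
\le \Bigl(\int_{\mathcal{Y}} \frac{\phi(y,z)^{2}}{\xi_1(y)}\,d|\nu|(y)\Bigr)\,\|\nu\|_{\mathrm{TV},\xi_1}.
\end{align}
Integrating in $z$ against $\tau_{\mathcal{Z}}$, Tonelli's theorem (everything is nonnegative) gives
\begin{align}
\|A\nu\|_{L^{2}(\mathcal{Z})}^{2}
\le \|\nu\|_{\mathrm{TV},\xi_1} \int_{\mathcal{Y}} \frac{\xi_2(y)^{2}}{\xi_1(y)}\,d|\nu|(y).
\end{align}
Writing $\xi_2(y)^{2}/\xi_1(y) = (\xi_2(y)/\xi_1(y))^{2}\,\xi_1(y)$ and using the hypothesis $C:=\sup_{y}|\xi_2(y)|/\xi_1(y) < \infty$ from \autoref{ass:phi}(iii) bounds the right-hand side by $C^{2}\|\nu\|_{\mathrm{TV},\xi_1}^{2}$. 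This establishes both that $A\nu \in L^{2}(\mathcal{Z})$ and that $\|A\|_{\mathrm{op}} \le C$, hence continuity.

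\textbf{Step 2: Identifying the adjoint.} For $h \in L^{2}(\mathcal{Z})$ and $\nu \in \mathcal{M}_{\xi_1}(\mathcal{Y})$ I compute $\langle A\nu, h\rangle_{L^{2}(\mathcal{Z})}$ and swap the order of integration. Fubini's theorem applies because $\int_{\mathcal{Y}}\int_{\mathcal{Z}} |\phi(y,z) h(z)|\,d\tau_{\mathcal{Z}}(z)\,d|\nu|(y) \le \int_{\mathcal{Y}} \xi_2(y) \|h\|_{L^{2}}\,d|\nu|(y) \le C\|h\|_{L^{2}}\|\nu\|_{\mathrm{TV},\xi_1}$ by Cauchy--Schwarz in $z$ and the same $\xi_2 \le C \xi_1$ bound. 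The exchange yields
\begin{align}
\langle A\nu, h \rangle_{L^{2}(\mathcal{Z})}
= \int_{\mathcal{Y}} \Bigl(\int_{\mathcal{Z}} \phi(y,z)\,h(z)\,d\tau_{\mathcal{Z}}(z)\Bigr) d\nu(y),
\end{align}
so setting $(A^{*}h)(y) := \int_{\mathcal{Z}} \phi(y,z) h(z)\,d\tau_{\mathcal{Z}}(z)$ gives the stated formula, provided $A^{*}h \in \mathcal{M}_{\xi_1}^{*}(\mathcal{Y})$.

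\textbf{Step 3: $A^{*}h$ lies in the dual.} From \autoref{lem:m_xi1} it suffices to show $A^{*}h \in C_{b,\xi_1}(\mathcal{Y})$. Cauchy--Schwarz in $z$ yields $|(A^{*}h)(y)| \le \xi_2(y)\|h\|_{L^{2}}$, so $(A^{*}h)(y)/\xi_1(y) \le C\|h\|_{L^{2}}$ is bounded. Continuity of $y \mapsto (A^{*}h)(y)$ follows from the continuity of $\phi$ together with dominated convergence, using the integrable envelope $|\phi(y_n,z)h(z)| \le (\sup_{n}\xi_2(y_n))\cdot |h(z)|$ which one can ensure along any convergent sequence $y_n \to y$ since $\xi_2$ is locally bounded (a consequence of $\phi$ being continuous on the compact $\{y_n\}\cup\{y\}\times\mathcal{Z}$, applied through the definition of $\xi_2$).

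\textbf{Main obstacle.} The only delicate point is the continuity of $A^{*}h$ in $y$, which requires a uniform integrable envelope for dominated convergence; this is standard but needs the local boundedness of $\xi_2$, which in turn relies on continuity of $\phi$ (given by \autoref{ass:phi}). Every other step is a direct weighted Cauchy--Schwarz and Fubini computation.
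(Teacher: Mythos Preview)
Your approach is essentially the same as the paper's: both use a weighted Cauchy--Schwarz with $\xi_1$ to bound $\|A\nu\|_{L^2}$ by $(\sup_y \xi_2/\xi_1)\,\|\nu\|_{\mathrm{TV},\xi_1}$, then Fubini (justified by the same estimate) to identify $A^*$, and finally the bound $|(A^*h)(y)|\le \xi_2(y)\|h\|_{L^2}$ to place $A^*h$ in $C_{b,\xi_1}(\mathcal{Y})$. The paper presents this via the isometry $\tilde\nu=\xi_1\cdot\nu$, you work directly with the weighted norm, but the computations are identical.

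One slip: in Step~3 your dominated-convergence envelope $|\phi(y_n,z)h(z)|\le (\sup_n \xi_2(y_n))\,|h(z)|$ is false, since $\xi_2(y)$ is the $L^2(\mathcal{Z})$ norm of $\phi(y,\cdot)$, not a pointwise bound; and $\{y_n\}\cup\{y\}\times\mathcal{Z}$ need not be compact when $\mathcal{Z}$ is unbounded. The paper also does not verify continuity of $A^*h$ explicitly (it only checks boundedness of $(A^*h)/\xi_1$), so this is not a divergence from the paper but rather a point where both treatments are informal. The boundedness you correctly establish already gives $A^*h\in\mathcal{M}_{\xi_1}^*(\mathcal{Y})$, so the lemma's content is secured.
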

\begin{proof}
Remark that $\int_{\mathcal{Y}} \phi(y, \cdot) \ d\nu(y)$ does belong to $L^2(\mathcal{Z})$ because
\begin{align}
\begin{split}
    &\int_{\mathcal{Z}} \left(\int_{\mathcal{Y}} \phi(y, z) \ d\nu(y)\right)^2\ d\tau_{\mathcal{Z}}(z) = \int_{\mathcal{Z}} \left(\int_{\mathcal{Y}} \frac{\phi(y, z)}{\xi_1(y)} \ d\tilde{\nu}(y)\right)^2\ d\tau_{\mathcal{Z}}(z) \\ &= \int_{\mathcal{Z}} \left(\int_{\mathcal{Y}} \frac{\phi(y, z)}{\xi_1(y)} \|\tilde{\nu}\|_{\text{TV}} \ d\frac{\tilde{\nu}}{\|\tilde{\nu}\|_{\text{TV}}}(y)\right)^2\ d\tau_{\mathcal{Z}}(z) \leq \|\tilde{\nu}\|_{\text{TV}} \int_{\mathcal{Z}} \int_{\mathcal{Y}} \left(\frac{\phi(y, z)}{\xi_1(y)} \right)^2 \ d|\tilde{\nu}|(y) \ d\tau_{\mathcal{Z}}(z) \\ &= \|\tilde{\nu}\|_{\text{TV}} \int_{\mathcal{Y}} \frac{1}{\xi_1(y)^2} \int_{\mathcal{Z}} \phi(y, z)^2 \ d\tau_{\mathcal{Z}}(z) \ d|\tilde{\nu}|(y) = \|\tilde{\nu}\|_{\text{TV}} \int_{\mathcal{Y}} \frac{\xi_2(y)^2}{\xi_1(y)^2} \ d|\tilde{\nu}|(y) \\ &\leq \|\tilde{\nu}\|_{\text{TV}}^2 \left(\sup_{y \in \mathcal{Y}} \frac{|\xi_2(y)|}{\xi_1(y)} \right)^2. 
\end{split}
\end{align}
In the first equality we have used the change of variable $\tilde{\nu} = \tilde{\xi_1}(\nu)$. In the first inequality we have used the Cauchy-Schwarz inequality, and in the following equality we used Fubini's theorem, which holds because the integrand is positive. In the last equality we have used the definition of $\xi_2$ given by \autoref{ass:phi}(iii). Also by \autoref{ass:phi}(iii), the right-most expression is finite, implying that $A \nu \in L^2(\mathcal{Z})$. Furthermore, since $\|\tilde{\nu}\|_{\text{TV}} = \|\nu\|_{\text{TV}, \xi_1}$, we also conclude that $A$ is a continuous operator with norm bounded by $\sup_{y \in \mathcal{Y}} |\xi_2(y)|/\xi_1(y)$.

We have that $A^{*} : L^2(\mathcal{Z}) \rightarrow \mathcal{M}_{\xi_1}^{*}(\mathcal{Y})$ is defined as $(A^{*} h)(y) = \int_{\mathcal{Z}} \phi(y,z) h(z) \ d\tau_{\mathcal{Z}}(z)$, because
\begin{align}
\begin{split} \label{eq:adjoint_exp}
    \langle A \nu, h \rangle &= \int_{\mathcal{Z}} (A \nu)(z) h(z) \ d\tau_{\mathcal{Z}}(z) = \int_{\mathcal{Z}} \int_{\mathcal{Y}} \phi(y,z) \ d\nu(y) \ h(z) \ d\tau_{\mathcal{Z}}(z) \\ &= \int_{\mathcal{Y}} \int_{\mathcal{Z}} \phi(y,z) h(z) \ d\tau_{\mathcal{Z}}(z) \ d\nu(y).
\end{split}
\end{align}
In the last equality we have applied Fubini's theorem, which holds because by the Cauchy-Schwarz inequality,
\begin{align}
\begin{split} \label{eq:fubini_1}
    &\int_{\mathcal{Y}} \int_{\mathcal{Z}} |\phi(y,z) h(z)| \ d\tau_{\mathcal{Z}}(z) \ d\nu(y) \leq \|\tilde{\nu}\|_{\text{TV}} \int_{\mathcal{Y}} \int_{\mathcal{Z}} \frac{|\phi(y,z)|}{\xi_1(y)} |h(z)| \ d\tau_{\mathcal{Z}}(z) \ d\frac{|\tilde{\nu}|}{\|\tilde{\nu}\|_{\text{TV}}}(y) \\ 
    &\leq \left( \int_{\mathcal{Y}} \int_{\mathcal{Z}} \left( \frac{|\phi(y,z)|}{\xi_1(y)} \right)^2 \ d\tau_{\mathcal{Z}}(z) \ d|\tilde{\nu}|(y) \right)^{1/2} \left( \int_{\mathcal{Y}} \int_{\mathcal{Z}} |h(z)|^2 \ d\tau_{\mathcal{Z}}(z) \ d|\tilde{\nu}|(y) \right)^{1/2} 
    \\ &= \left( \int_{\mathcal{Y}} \left( \frac{\xi_2(y)}{\xi_1(y)} \right)^2 \ d|\tilde{\nu}|(y) \right)^{1/2} \|h\|_{L^2(\mathcal{Z}, \tau_{\mathcal{Z}})} \|\tilde{\nu}\|_{\text{TV}}^{1/2} \leq \|\tilde{\nu}\|_{\text{TV}} \|h\|_{L^2(\mathcal{Z}, \tau_{\mathcal{Z}})} \sup_{y \in \mathcal{Y}} \frac{|\xi_2(y)|}{\xi_1(y)} < +\infty
\end{split}
\end{align}
As a safety check, notice that when $h \in L^2(\mathcal{Z})$, we have that $\int_{\mathcal{Z}} \phi(\cdot,z) h(z) \ d\tau_{\mathcal{Z}}(z)$ indeed belongs to $\mathcal{M}_{\xi_1}^{*}(\mathcal{Y})$, because
\begin{align}
\int_{\mathcal{Z}} \left|\frac{\phi(y,z) h(z)}{\xi_1(y)} \right| \ d\tau_{\mathcal{Z}}(z) \leq \left(\int_{\mathcal{Z}} \left|\frac{\phi(y,z)}{\xi_1(y)} \right|^2 \ d\tau_{\mathcal{Z}}(z) \right)^{1/2} \|h\|_{L^2(\mathcal{Z}, \tau_{\mathcal{Z}})} \leq \sup_{y \in \mathcal{Y}} \frac{|\xi_2(y)|}{\xi_1(y)} \|h\|_{L^2(\mathcal{Z}, \tau_{\mathcal{Z}})}
\end{align}
is uniformly bounded over $y \in \mathcal{Y}$ and thus $\int_{\mathcal{Z}} \phi(\cdot,z) h(z) \ d\tau_{\mathcal{Z}}(z) \in C_{b, \xi_1}(\mathcal{Y}) \subseteq \mathcal{M}_{\xi_1}^{*}(\mathcal{Y})$.
\end{proof}

\begin{lemma} \label{lem:gamma_star}
(i) Let $F$ as defined in equation \eqref{eq:F_def}, $A^{*}$ as defined in \eqref{eq:A_star} and $h^{\star}$ as in \eqref{eq:fenchel_weak}. Then, the unique $\nu^{\star} = \argmax_{\nu \in \mathcal{M}_{\xi_1}(\mathcal{Y})} \left\{\langle - A^{*}h^{\star}, \nu \rangle - F(\nu) \right\}$ satisfies
\begin{align}
    \frac{d\nu^{\star}}{d\tau_{\mathcal{Y}}}(y) = \frac{1}{Z_{\nu^{\star}}}\exp \left( - \beta \int_{\mathcal{Z}} \phi(y, z) \ h^{\star}(z) d\tau_{\mathcal{Z}}(z) \right).
\end{align}
and we also have that $\nu^{\star} = \argmax_{\nu \in \mathcal{P}(\mathcal{Y})} \left\{\langle - A^{*}h^{\star}, \nu \rangle - F(\nu) \right\}$.

(ii) We also have that $\nu^{\star} = \argmin_{\nu \in \mathcal{P}(\mathcal{Y})} \{F(\nu)+G(A\nu) \}$.
\end{lemma}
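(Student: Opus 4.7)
For part (i), the starting observation is that $F(\nu) = +\infty$ whenever $\nu \notin \mathcal{P}(\mathcal{Y})$, so the supremum of $\nu \mapsto \langle -A^*h^\star, \nu\rangle - F(\nu)$ over $\mathcal{M}_{\xi_1}(\mathcal{Y})$ coincides with the same supremum over $\mathcal{M}_{\xi_1}(\mathcal{Y}) \cap \mathcal{P}(\mathcal{Y})$, which by definition is $F^*(-A^*h^\star)$. I would therefore reduce (i) to invoking the Gibbs-form computation from \autoref{lem:F_conj} with $q := -A^*h^\star$. To apply that lemma I must verify $q \in C_{b,\xi_1}(\mathcal{Y})$: continuity follows from dominated convergence, and the pointwise bound
\[
\frac{|q(y)|}{\xi_1(y)} \leq \frac{1}{\xi_1(y)}\Bigl(\int_{\mathcal{Z}} \phi(y,z)^2\, d\tau_{\mathcal{Z}}(z)\Bigr)^{1/2}\|h^\star\|_{L^2} = \frac{\xi_2(y)}{\xi_1(y)}\|h^\star\|_{L^2}
\]
is uniformly bounded by \autoref{ass:phi}(iii), exactly as in the adjoint estimate \eqref{eq:fubini_1} of \autoref{lem:A_well}. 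Once $q \in C_{b,\xi_1}$, the derivation \eqref{eq:F_tilde_derivation} produces the unique maximizer $\tilde q$, an element of $\mathcal{P}(\mathcal{Y})$ with density $\exp(\beta q)/Z$ with respect to $\tau_{\mathcal{Y}}$; the membership $\tilde q \in \mathcal{M}_{\xi_1}(\mathcal{Y})$ is precisely the growth estimate \eqref{eq:tv_norm_bound}. This identifies $\nu^\star$ with the claimed Gibbs measure, and shows that the argmax over $\mathcal{M}_{\xi_1}(\mathcal{Y})$ and over $\mathcal{P}(\mathcal{Y})$ coincide. Uniqueness of $\nu^\star$ follows from strict concavity of $-D_{\text{KL}}(\cdot\|\tau_{\mathcal{Y}})$ on $\mathcal{P}(\mathcal{Y})$.

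For part (ii), the argument is to saturate the weak duality chain \eqref{eq:fenchel_weak} at $\nu^\star$. By part (i) and the definition of $F^*$,
\[
-F^*(-A^* h^\star) = -\bigl(\langle -A^* h^\star, \nu^\star\rangle - F(\nu^\star)\bigr) = F(\nu^\star) + \langle A^* h^\star, \nu^\star\rangle,
\]
while the definition of $G^*$ together with the adjoint pairing $\langle h^\star, A\nu^\star\rangle = \langle A^*h^\star, \nu^\star\rangle$ gives
\[
-G^*(h^\star) = -\sup_{\psi}\{\langle h^\star,\psi\rangle - G(\psi)\} \leq G(A\nu^\star) - \langle A^* h^\star, \nu^\star\rangle.
\]
Adding these two lines yields $d^* \leq F(\nu^\star) + G(A\nu^\star)$. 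Strong duality, already established via \autoref{thm:fenchel} in the main proof of \autoref{thm:duality_general}, forces $d^* = p^* = \inf_\nu\{F(\nu)+G(A\nu)\}$, so the inequality must be an equality and $\nu^\star$ is a primal minimizer. Since $F \equiv +\infty$ off $\mathcal{P}(\mathcal{Y})$, the infimum over $\mathcal{M}_{\xi_1}(\mathcal{Y})$ equals the infimum over $\mathcal{P}(\mathcal{Y})$, so $\nu^\star \in \argmin_{\nu \in \mathcal{P}(\mathcal{Y})}\{F(\nu)+G(A\nu)\}$; uniqueness of this minimizer is immediate from the strict convexity of the relative entropy on $\mathcal{P}(\mathcal{Y})$ combined with the convexity of $G\circ A$.

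The main technical obstacle I foresee is the verification required in (i) that $-A^*h^\star$ actually belongs to the function class $C_{b,\xi_1}(\mathcal{Y})$ in which the Gibbs conjugate formula of \autoref{lem:F_conj} is valid. In the non-compact setting, $h^\star$ is only an $L^2$ function, and a priori $A^*h^\star$ is merely an element of $\mathcal{M}^*_{\xi_1}(\mathcal{Y})$; pinning down that it admits a continuous representative with the requisite $\xi_1$-relative growth, and moreover that the resulting Gibbs density has finite $\|\cdot\|_{\text{TV},\xi_1}$, is exactly what forces the full strength of \autoref{ass:phi} through the integrability chain \eqref{eq:tv_norm_bound}. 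Everything else is a bookkeeping exercise tracing Fenchel strong duality equality conditions back to the primal.
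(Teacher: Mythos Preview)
Your proof of part~(i) is correct and essentially identical to the paper's: both restrict to $\mathcal{P}(\mathcal{Y})$ by the definition of $F$, verify $-A^*h^\star \in C_{b,\xi_1}(\mathcal{Y})$ via the Cauchy--Schwarz bound $|A^*h^\star(y)| \leq \xi_2(y)\|h^\star\|_{L^2}$ and \autoref{ass:phi}(iii), then invoke the Gibbs-conjugate computation and the growth estimate~\eqref{eq:tv_norm_bound} to land $\nu^\star$ in $\mathcal{M}_{\xi_1}(\mathcal{Y})$. You package this as a direct appeal to \autoref{lem:F_conj}; the paper instead rewrites the same steps as an Euler--Lagrange argument for the relaxed problem over $\mathcal{P}(\mathcal{Y})$, but the content is the same.

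For part~(ii) your route genuinely differs from the paper's. You argue by saturating the Fenchel weak-duality chain: from part~(i) you have $-F^*(-A^*h^\star) = F(\nu^\star) + \langle A^*h^\star,\nu^\star\rangle$, and the Fenchel--Young inequality for $G$ gives $-G^*(h^\star) \leq G(A\nu^\star) - \langle A^*h^\star,\nu^\star\rangle$; summing and invoking $d^* = p^*$ forces $\nu^\star$ to be primal-optimal. This is the standard abstract equality-condition argument and is perfectly valid. The paper instead proceeds constructively: it writes down the Euler--Lagrange condition~\eqref{eq:nu_2_star_euler} for the primal over $\mathcal{P}(\mathcal{Y})$, then uses the second equality in the saturated chain to deduce that $A\nu^\star$ maximizes $\psi \mapsto \langle h^\star,\psi\rangle - \|\psi - g\|_{L^2}$, from which it extracts the explicit relation $h^\star = (A\nu^\star - g)/\|A\nu^\star - g\|_{L^2}$ (or $A\nu^\star = g$) and checks that this makes $\nu^\star$ satisfy~\eqref{eq:nu_2_star_euler}. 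Your approach is cleaner and requires no case analysis; the paper's buys you an explicit formula linking $h^\star$ to $\nu^\star$ beyond the Gibbs relation, which is used elsewhere (e.g.\ in identifying the stationary energy in \autoref{subsec:energy_estimate}). One small slip in your write-up: the sentence ``the infimum over $\mathcal{M}_{\xi_1}(\mathcal{Y})$ equals the infimum over $\mathcal{P}(\mathcal{Y})$'' should read ``over $\mathcal{M}_{\xi_1}(\mathcal{Y}) \cap \mathcal{P}(\mathcal{Y})$''; extending to all of $\mathcal{P}(\mathcal{Y})$ requires that $G\circ A$ be meaningful there, which the paper also treats loosely.
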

\begin{proof}
First, notice that 
\begin{align}
    \argmax_{\nu \in \mathcal{M}_{\xi_1}(\mathcal{Y})} \left\{\langle - A^{*}h^{\star}, \nu \rangle - F(\nu) \right\} = \argmax_{\nu \in \mathcal{M}_{\xi_1}(\mathcal{Y}) \cap \mathcal{P}(\mathcal{Y})} \left\{\langle - A^{*}h^{\star}, \nu \rangle - F(\nu) \right\}
\end{align}
because $F(\nu) = + \infty$ when $\nu \notin \mathcal{P}(\mathcal{Y})$.
Since the KL-divergence is strictly convex, this problem \eqref{eq:strong_duality_cond} has a unique solution $\nu^{\star}$. 

Now, define $\nu_1 \in \mathcal{P}(\mathcal{Y})$ with density  $\frac{d\nu_1^{\star}}{d\tau_{\mathcal{Y}}}(y) = \frac{1}{Z_{\nu_1^{\star}}}\exp \left( - \beta \int_{\mathcal{Z}} \phi(y, z) \ h^{\star}(z) d\tau_{\mathcal{Z}}(z) \right)$ (the following arguments show that indeed this measure is normalizable).

Consider the relaxation
\begin{align} \label{eq:strong_duality_cond2}
    \argmax_{\nu \in \mathcal{P}(\mathcal{Y})} \left\{\langle - A^{*}h^{\star}, \nu \rangle - F(\nu) \right\}.
\end{align}
This problem is strictly convex (because the KL divergence is) and it has at most one solution, which is the unique solution of an Euler-Lagrange equation. This Euler-Lagrange equation is satisfied by $\nu_1^{\star}$, hence $\nu_1^{\star} = \argmax_{\nu \in \mathcal{P}(\mathcal{Y})} \left\{\langle - A^{*}h^{\star}, \nu \rangle - F(\nu) \right\}.$

We will see that $\nu_1^{\star}$ belongs to $\mathcal{M}_{\xi_1}(\mathcal{Y})$, which implies that $\nu_1^{\star} = \nu^{\star}$.
Remark that problem \eqref{eq:strong_duality_cond2} has Euler-Lagrange condition $\frac{d\nu_1^{\star}}{d\tau_{\mathcal{Y}}}(y) = \frac{1}{Z_{\nu_1^{\star}}}\exp \left( - \beta \int_{\mathcal{Z}} \phi(y, z) \ h^{\star}(z) d\tau_{\mathcal{Z}}(z) \right)$. 
Next, notice that by the Cauchy-Schwarz inequality and the definition of $\xi_2$ in \autoref{ass:phi}(iii), 
\begin{align} \label{eq:cs_bound}
    \left| - \beta \int \phi(y, z) \ h^{\star}(z) d\tau_{\mathcal{Z}}(z) \right| \leq \beta \left( \int_{\mathcal{Z}} \phi(y, z)^2 d\tau_{\mathcal{Z}}(z) \right)^{1/2} \left( \int h^{\star}(z)^2 d\tau_{\mathcal{Z}}(z) \right)^{1/2} = \beta \xi_2(y) \|h^{\star}\|_{L^2}.
\end{align}
Thus, $- \beta \int \phi(\cdot, z) \ h^{\star}(z) d\tau_{\mathcal{Z}}(z) \in C_{b,\xi_1}(\mathcal{Y})$, and in analogy with \eqref{eq:tv_norm_bound},
\begin{align}
\begin{split}
    &\|\nu_1^{\star}\|_{\text{TV}, \xi_1} = \int_{\mathcal{Y}} \xi_1(y) \ d\nu_1^{\star}(y) \\ &= \int_{\mathcal{Y}} \exp \left( \log(\xi_1(y)) - \beta \int \phi(y, z) \ h^{\star}(z) d\tau_{\mathcal{Z}}(z) - \log Z_{\nu_1^{\star}} + \log\left(\frac{d\tau_{\mathcal{Y}}}{d\lambda}(y) \right) \right) d\lambda(y) \\ &\leq \int_{\mathcal{Y}} \exp \left( \log(\xi_1(y)) + \beta \xi_2(y) \|h^{\star}\|_{L^2} - \log Z_{\nu_1^{\star}} + \log\left(\frac{d\tau_{\mathcal{Y}}}{d\lambda}(y) \right) \right) d\lambda(y) 
    \\ &\leq \int_{\R_{+}} \int_{\mathbb{S}^{d-1}} K_2 K_1 r^{-1-\epsilon} \ dr d\lambda(\theta) = \frac{K_2 K_1 \text{vol}(\mathbb{S}^{d-1})}{\epsilon}.
\end{split}
\end{align}
In the second equality we used the Euler-Lagrange condition, in the first inequality we used equation \eqref{eq:cs_bound} and in the second inequality we skipped a step which proceeds as in \eqref{eq:tv_norm_bound}; the key point is that $\beta \xi_2(\cdot) \|h^{\star}\|_{L^2}$ is $O(\xi_1)$ by \autoref{ass:phi}(iii) and thus $\exp \bigg( \log(\xi_1(y)) + \beta \xi_2(\cdot) \|h^{\star}\|_{L^2} - \log Z_{\nu_1^{\star}} + \log\left(\frac{d\tau_{\mathcal{Y}}}{d\lambda}(y) \right) + (d+\epsilon) \log \|y\|_2 \bigg) = o(1)$.

(ii) Consider the problem
\begin{align}
\argmin_{\nu \in \mathcal{P}(\mathcal{Y})} \beta^{-1} D_{KL}(\nu||\tau_{\mathcal{Y}}) + \left( \int_{\mathcal{Z}} \left( \int_{\mathcal{Y}} \phi(y, z) \ d\nu(y) - g(z) \right)^2 \ d\tau_{\mathcal{Z}}(z) \right)^{1/2}. 
\end{align}
If it exists, the unique solution $\nu_2^{\star} \in \mathcal{P}(\mathcal{Y})$ of this problem is the unique solution of the following Euler-Lagrange condition:
\begin{align}
\begin{cases} \label{eq:nu_2_star_euler}
    \frac{d\nu_2^{\star}}{d\tau_{\mathcal{Y}}}(y) = \exp \left(- \frac{\beta \int_{\mathcal{Z}} \phi(y, z) ( \int_{\mathcal{Y}} \phi(y', z) \ d\nu_2^{\star}(y') - g(z) ) \ d\tau_{\mathcal{Z}}(z)}{( \int_{\mathcal{Z}} ( \int_{\mathcal{Y}} \phi(y', z) \ d\nu_2^{\star}(y') - g(z) )^2 \ d\tau_{\mathcal{Z}}(z) )^{1/2}} \right) &\text{ if } \int_{\mathcal{Y}} \phi(y', \cdot) \ d\nu_2^{\star}(y') \neq g(\cdot) \\
    g(\cdot) = \int_{\mathcal{Y}} \phi(y', \cdot) \ d\nu_2^{\star}(y')  &\text{ otherwise}.
\end{cases}
\end{align}
Going back to \eqref{eq:fenchel_weak}, we observe that for strong duality hold we must have 
\begin{align} \label{eq:A_nu_strong_duality}
A\nu^{\star} = \argmax_{\psi \in L^2(\mathcal{Z})} \left\{\langle h^{\star}, \psi \rangle - G(\psi) \right\} = \argmax_{\psi \in L^2(\mathcal{Z})} \left\{\langle h^{\star}, \psi \rangle - \|\psi - g\|_{L^2(\mathcal{Z})} \right\}
\end{align}
The Euler-Lagrange condition for $\argmax_{\psi \in L^2(\mathcal{Z})} \left\{\langle h^{\star}, \psi \rangle - \|\psi - g\|_{L^2(\mathcal{Z})} \right\}$ is:
\begin{align}
    h^{\star} - \frac{\psi - g}{\|\psi - g\|_{L^2(\mathcal{Z})}} = 0,
\end{align}
which in the case $\|h^{\star}\|_{L^2(\mathcal{Z})} \neq 1$ implies that $\psi = g$. Thus, for \eqref{eq:A_nu_strong_duality} to hold we must have either $h^{\star} = \frac{A\nu^{\star} - g}{\|A\nu^{\star} - g\|_{L^2(\mathcal{Z})}}$ or $A\nu = g$. In either of the two cases, using part (i) we see that $\nu^{\star}$ satisfies \eqref{eq:nu_2_star_euler}, which means that $\nu^{\star} = \nu_2^{\star}$.
\end{proof}

\begin{theorem}
\label{thm:dualityfonefirst}
Let $\eta_1 : \mathcal{Z} \to \R$ be a strictly positive function such that $\sup_{y \in \mathcal{Y}} \phi(y,z)/\xi_1(y) = o(\eta_1(z))$ and $g(z) = o(\eta_1(z))$ as $z \rightarrow \infty$. Let $\mathcal{M}_{\eta_1}(\mathcal{Z})$ be the space of (countably additive) signed Radon measures $\gamma$ over $\mathcal{Z}$ such that $\|\gamma\|_{\text{TV},\eta_1} := \int_{\mathcal{Z}} \eta_1(z) \ d|\gamma|(z)$ is finite. Consider the problem 
\begin{align}
\begin{split} \label{eq:general_primal_problem_f1_2}
    \min_{\nu \in \mathcal{P}(\mathcal{Y})} &\beta^{-1} D_{KL}(\nu||\tau_{\mathcal{Y}}) + \left\| \frac{1}{\eta_1(\cdot)} \left( \int_{\mathcal{Y}} \phi(y, \cdot) \ d\nu(y) - g(\cdot) \right) \right\|_{L^{\infty}}.
\end{split}
\end{align}
and the problem
\begin{align}
\begin{split} \label{eq:general_dual_problem_f1_2}
    &\max_{\substack{\gamma \in \mathcal{M}_{\eta_1}(\mathcal{Z}) \\ \|\gamma\|_{\text{TV},\eta_1} \leq 1}} - \int_{\mathcal{Z}} g(z) \ d\gamma(z) - \frac{1}{\beta} \log\left(\int_{\mathcal{Y}} \exp \left(-\beta \int_{\mathcal{Z}} \phi(y, z) \ d\gamma(z) \right) d\tau_{\mathcal{Y}}(y) \right)
\end{split}
\end{align}
The two problems \eqref{eq:general_primal_problem_f1_2} and \eqref{eq:general_dual_problem_f1_2} are convex. The problem \eqref{eq:general_dual_problem_f1_2} is the dual problem of \eqref{eq:general_primal_problem_f1_2}. Moreover, the solution $\nu^{\star}$ of \eqref{eq:general_primal_problem_f1_2} is unique and its density satisfies
\begin{align}
    \frac{d\nu^{\star}}{d\tau_{\mathcal{Y}}}(y) = \frac{1}{Z_{\beta}}\exp \left( - \beta \int \phi(y, z) \ d\gamma^{\star}(z) \right),
\end{align}
where $\gamma^{\star}$ is a solution of \eqref{eq:general_dual_problem_f1_2} and $Z_{\beta}$ is a normalization constant. 
\end{theorem}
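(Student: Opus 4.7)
The plan is to apply the Fenchel strong duality theorem (Theorem \ref{thm:fenchel}) in close analogy with the argument sketched for Theorem \ref{thm:duality_general}, but with both the measure space on $\mathcal{Y}$ and the function space on $\mathcal{Z}$ replaced by weighted versions adapted to the growth/decay imposed by $\xi_1$ and $\eta_1$. On the primal side, I take $X = \mathcal{M}_{\xi_1}(\mathcal{Y})$ as constructed in Lemma \ref{lem:m_xi1} and define $F: X \to \R \cup \{+\infty\}$ as in \eqref{eq:F_def}, so $F(\nu) = \beta^{-1} D_{\text{KL}}(\nu\|\tau_{\mathcal{Y}})$ when $\nu \in \mathcal{P}(\mathcal{Y})$ and $+\infty$ otherwise. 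On the dual side I take $Y = C_{b,\eta_1}(\mathcal{Z}) := \{\psi \in C(\mathcal{Z}) : \psi/\eta_1 \in C_b(\mathcal{Z})\}$, normed by $\|\psi\|_Y := \|\psi/\eta_1\|_{\infty}$; this is a Banach space since it is isometric to $C_b(\mathcal{Z})$ via $\psi \mapsto \psi/\eta_1$, and the hypothesis $g(z) = o(\eta_1(z))$ ensures $g \in Y$. Define $G(\psi) = \|(\psi - g)/\eta_1\|_\infty$, which is convex and continuous on $Y$, and let $A: X \to Y$ be given by $(A\nu)(z) = \int_{\mathcal{Y}} \phi(y,z)\,d\nu(y)$; the growth condition $\sup_y \phi(y,z)/\xi_1(y) = o(\eta_1(z))$ ensures by a weighted analogue of Lemma \ref{lem:A_well} that $A$ is well defined and continuous.

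Next I would compute the conjugates. Since $F$ is supported on probability measures, Lemma \ref{lem:F_conj} goes through verbatim and yields $F^*(q) = \beta^{-1}\log\int_{\mathcal{Y}} e^{\beta q(y)}\,d\tau_{\mathcal{Y}}(y)$ on the relevant subset of $X^*$. For $G$, the isometry $\psi \mapsto \psi/\eta_1$ reduces its conjugate to that of $\|\cdot\|_\infty$ translated by $g/\eta_1$; a weighted Riesz representation (applying Theorem \ref{thm:rieszmarkov} or \ref{thm:rieszmarkov_cb} after the isometry) identifies the countably additive elements of the dual unit ball with precisely $\{\gamma \in \mathcal{M}_{\eta_1}(\mathcal{Z}) : \|\gamma\|_{\text{TV},\eta_1} \leq 1\}$, using that $\int \psi\,d\gamma = \int (\psi/\eta_1)\,\eta_1\,d\gamma$ is controlled by $\|\psi/\eta_1\|_\infty \|\gamma\|_{\text{TV},\eta_1}$. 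The adjoint is $(A^* \gamma)(y) = \int_{\mathcal{Z}}\phi(y,z)\,d\gamma(z)$, and substituting into the Fenchel dual $\sup_\gamma\{-F^*(-A^*\gamma) - G^*(-\gamma)\}$ recovers exactly \eqref{eq:general_dual_problem_f1_2}. The qualification condition (2) of Theorem \ref{thm:fenchel} holds because $G$ is continuous on all of $Y$, so any absolutely continuous $\nu \in \mathcal{P}(\mathcal{Y}) \cap \mathcal{M}_{\xi_1}(\mathcal{Y})$ produces $\psi = A\nu \in A\,\mathrm{dom}\,F \cap \mathrm{cont}\,G$; strong duality then follows.

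Existence of a primal minimizer $\nu^\star$ would be shown by the direct method exactly as after \eqref{eq:dual2}: weak lower semicontinuity of the KL divergence and of $G \circ A$ (using the representation \eqref{eq:G_as_sup} as a supremum of continuous linear functionals), together with weak compactness of $\mathcal{P}(\mathcal{Y}) \cap \mathcal{M}_{\xi_1}(\mathcal{Y})$. The Gibbs form of $\nu^\star$ then falls out of the Fenchel equality: strong duality forces $\nu^\star \in \arg\max_{\nu}\{\langle -A^*\gamma^\star, \nu\rangle - F(\nu)\}$, and the Euler-Lagrange analysis of Lemma \ref{lem:gamma_star}(i), carried out with the weighted bound $\big|\int \phi(y,z)\,d\gamma^\star(z)\big| \lesssim \eta_1\text{-bound} \cdot \xi_1(y)$ in place of the Cauchy-Schwarz bound \eqref{eq:cs_bound}, gives the claimed exponential density. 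I expect the main technical obstacle to be precisely this identification step: the topological dual of $C_{b,\eta_1}(\mathcal{Z})$ strictly contains $\mathcal{M}_{\eta_1}(\mathcal{Z})$ because of finitely additive measures (cf. Theorem \ref{thm:rieszmarkov_cb}), so one must either restrict the supremum in $G^*$ to the convex weak-$*$ dense subset of countably additive $\gamma$ (using a Kneser-style min-max exchange as in Theorem \ref{thm:kneser} on the weighted TV ball, which is weak-$*$ compact by Banach-Alaoglu), or argue directly that any maximizer of the dual is tight and hence countably additive. All remaining verifications are weighted rewrites of Lemmas \ref{lem:F_conj}--\ref{lem:gamma_star}.
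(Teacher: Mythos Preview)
Your overall strategy---Fenchel duality on weighted spaces, with $X=\mathcal{M}_{\xi_1}(\mathcal{Y})$, $F$ the KL functional, $G$ the weighted sup-norm distance to $g$, and $A$ the integral operator---matches the paper's. The one substantive difference is your choice of $Y$. You take $Y=C_{b,\eta_1}(\mathcal{Z})$, isometric to $C_b(\mathcal{Z})$; the paper instead takes $Y=C_{0,\eta_1}(\mathcal{Z}):=\{\psi\in C(\mathcal{Z}):\psi/\eta_1\in C_0(\mathcal{Z})\}$, isometric to $C_0(\mathcal{Z})$. The little-$o$ hypotheses in the theorem statement are precisely what is needed for this smaller space: $\sup_y\phi(y,z)/\xi_1(y)=o(\eta_1(z))$ gives $A\nu\in C_{0,\eta_1}(\mathcal{Z})$ for every $\nu\in\mathcal{M}_{\xi_1}(\mathcal{Y})$ (Lemma~\ref{lem:A_well_2}), and $g(z)=o(\eta_1(z))$ gives $g\in C_{0,\eta_1}(\mathcal{Z})$. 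The payoff is that, by Riesz--Markov--Kakutani after the isometry, $(C_{0,\eta_1}(\mathcal{Z}))^*=\mathcal{M}_{\eta_1}(\mathcal{Z})$ exactly (Lemma~\ref{lem:c_b_eta1}), so the Fenchel dual variable already ranges over countably additive measures and the finitely-additive obstruction you correctly flag simply never appears. The computation of $G^*$ (Lemma~\ref{lem:G2_conj}) still uses a Kneser exchange, but only on the genuine $\mathcal{M}_{\eta_1}$ ball, which is weak-$*$ compact as the dual unit ball of $C_{0,\eta_1}$.

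Your route through $C_{b,\eta_1}$ is not wrong, but it forces you to confront $\text{rba}$-type duals and then argue the sup can be restricted to countably additive $\gamma$; the paper defers exactly that maneuver to the proof of Theorem~\ref{thm:duality_general_f1} (the unweighted $L^\infty$ result), where it is unavoidable, and keeps the present theorem clean by exploiting the $o(\eta_1)$ assumptions more fully. Everything else in your outline---the conjugate of $F$, the form of $A^*$, the qualification condition via continuity of $G$, the direct-method existence of $\nu^\star$, and the Euler--Lagrange identification of its Gibbs density---tracks the paper's argument.
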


\begin{proof}
We apply \autoref{thm:fenchel}.

As in the proof of \autoref{thm:duality_general}, we set $X = \mathcal{M}_{\xi_1}(\mathcal{Y})$, which is the Banach space of Radon measures over $\mathcal{Y}$ such that the weighted total variation $\|\nu\|_{\text{TV}, \xi_1} := \int_{\mathcal{Y}} \xi_1(y) \ d|\nu|(y)$ is finite, and whose continuous dual $\mathcal{M}_{\xi_1}^{*}(\mathcal{Y})$ contains the set $C_{b, \xi_1}(\mathcal{Y})$ of continuous functions $f$ such that $f/\xi_1 \in C_b(\mathcal{Y})$.

Unlike in the proof of \autoref{thm:duality_general}, we set $Y = C_{0, \eta_1}(\mathcal{Z})$, which we define to be the space of continuous functions $f : \mathcal{Z} \to \R$ such that $\lim_{\|z\| \rightarrow \infty} f(z)/\eta_1(z) = 0$. By \autoref{lem:c_b_eta1}, $C_{0, \eta_1}(\mathcal{Z})$ is a Banach space endowed with the norm $\|f\|_{C_{0, \eta_1}} = \sup_{z \in \mathcal{Z}} f(z)/\eta_1(z)$, and we have that the continuous dual space $C_{0, \eta_1}^{*}(\mathcal{Z})$ is equal to the set $\mathcal{M}_{\eta_1}(\mathcal{Z})$ of Radon measures $\gamma$ over $\mathcal{Z}$ such that $\|\gamma\|_{\text{TV},\eta_1} := \int_{\mathcal{Z}} \eta_1(z) \ d|\gamma|(z)$ is finite.

$F: \mathcal{M}_{\xi_1}(\mathcal{Y}) \to \R \cup \{+\infty \}$ and its convex conjugate $F^{*}: \mathcal{M}_{\xi_1}^{*}(\mathcal{Y}) \to \R \cup \{+\infty \}$ are as specified in equations \eqref{eq:F_def}-\eqref{eq:F_conj} in the proof of \autoref{thm:duality_general}.

Define $G : C_{0, \eta_1}(\mathcal{Z}) 
\to \R \cup \{+\infty \}$ as
\begin{align} \label{eq:G_def2}
    G(\psi) = \sup_{\substack{\gamma \in \mathcal{M}_{\eta_1}(\mathcal{Z}) \\ \|\gamma\|_{\text{TV},\eta_1} \leq 1
    }} \int_{\mathcal{Z}} (\psi(z) - g(z)) \ d\gamma(z), 
\end{align}
which by \autoref{lem:G2_conj} can also be written as
\begin{align}
	G(\psi) = \sup_{z \in \mathcal{Z}} \frac{|\psi(z) - g(z)|}{\eta_1(z)}.
\end{align}
Also by \autoref{lem:G2_conj}, the convex conjugate 
$G^{*} : \mathcal{M}_{\eta_1}(\mathcal{Z}) \to \R$
is of the form 
\begin{align}
G^{*}(\gamma) = 
\begin{cases}
\int_{\mathcal{Z}} g(z) \ d\gamma(z) &\text{if } \|\gamma\|_{\text{TV}, \eta_1} \leq 1 \\
+\infty &\text{otherwise}
\end{cases}
\end{align}
The linear operator $A : \mathcal{M}_{\xi_1}(\mathcal{Y}) \to C_{0, \eta_1}(\mathcal{Z})$ is defined as $(A \nu)(z) = \int_{\mathcal{Y}} \phi(y,z) \ d\nu(y)$. It is well defined and continuous by \autoref{lem:A_well_2}. \autoref{lem:A_well_2} also states that the adjoint operator $A^{*} : \mathcal{M}_{\eta_1}(\mathcal{Z}) \rightarrow \mathcal{M}_{\xi_1}^{*}(\mathcal{Y})$ is $(A^{*} \gamma)(y) = \int_{\mathcal{Z}} \phi(y,z) \ d\gamma(z)$.
Hence, we have that problem \eqref{eq:general_primal_problem_f1_2} can be written as
\begin{align} \label{eq:primal2_2}
    p^{*} = \inf_{\nu \in \mathcal{M}_{\xi_1}(\mathcal{Y})} \{F(\nu)+G(A\nu) \}.
\end{align}
And problem \eqref{eq:general_dual_problem_f1_2} can be written as
\begin{align} \label{eq:dual2_2}
    d^{*} = \sup_{\substack{\gamma \in \mathcal{M}_{\eta_1}(\mathcal{Z}) \\ \|\gamma\|_{\text{TV},\eta_1} \leq 1}} \{-F^{*}(-A^{*} \gamma)-G^{*}(\gamma)\}.
\end{align}
To apply \autoref{thm:fenchel}, it only remains to show that condition 2 holds. That is, we have to check that $A \ \text{dom} \ F \cap \text{cont} \ G \neq \emptyset$. Consider $\psi = A \nu$ for some $\nu \in \mathcal{P}(\mathcal{Y})$ absolutely continuous w.r.t. $\tau_{\mathcal{Y}}$. Then $\psi \in A \ \text{dom} \ F$. Moreover, since $G$ is a continuous functional, we have that $\text{cont} \ G = C_{0, \eta_1}(\mathcal{Z})$. Thus, $\psi$ also belongs to $\text{cont} \ G$ and we conclude that $A \ \text{dom} \ F \cap \text{cont} \ G \neq \emptyset$. 

By \autoref{thm:fenchel}, $p^{*} = d^{*}$, and since $p^{*}$ is finite, we have that the supremum in \eqref{eq:dual2_2} is attained; let $\gamma^{\star}$ be one maximizer. As in the proof of \autoref{thm:duality_general}, we prove the existence of a minimizer $\nu^{\star}$ of $p^{*}$ by the direct method of the calculus of variations, and the link between $\gamma^{\star}$ and $\nu^{\star}$ is analogous.
\end{proof}

\begin{lemma} \label{lem:c_b_eta1}
Let $C_{0, \eta_1}(\mathcal{Z})$ be the vector space of functions $f : \mathcal{Z} \to \R$ such that $\lim_{\|z\| \rightarrow \infty} f(z)/\eta_1(z) = 0$. $C_{0, \eta_1}(\mathcal{Z})$ is a Banach space with norm $\|f\|_{C_{0, \eta_1}} = \sup_{z \in \mathcal{Z}} f(z)/\eta_1(z)$. The continuous dual space $C_{0, \eta_1}^{*}(\mathcal{Z})$ is equal to the set $\mathcal{M}_{\eta_1}(\mathcal{Z})$ of Radon measures $\gamma$ over $\mathcal{Z}$ such that $\|\gamma\|_{\text{TV},\eta_1} := \int_{\mathcal{Z}} \eta_1(z) \ d|\gamma|(z)$ is finite.
\end{lemma}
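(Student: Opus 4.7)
The plan is to reduce everything to the classical case by a weighted isomorphism. Define the linear map $T : C_{0,\eta_1}(\mathcal{Z}) \to C_0(\mathcal{Z})$ by $(Tf)(z) = f(z)/\eta_1(z)$, where $C_0(\mathcal{Z})$ is the usual Banach space of continuous functions vanishing at infinity with the sup norm. Since $\eta_1$ is continuous and strictly positive, $Tf$ is continuous; the condition $\lim_{\|z\|\to\infty} f(z)/\eta_1(z)=0$ built into $C_{0,\eta_1}(\mathcal{Z})$ says exactly that $Tf \in C_0(\mathcal{Z})$. The inverse $T^{-1}g(z) = \eta_1(z) g(z)$ maps $C_0(\mathcal{Z})$ back to $C_{0,\eta_1}(\mathcal{Z})$, and one sees directly that $\|Tf\|_\infty = \sup_{z}|f(z)|/\eta_1(z) = \|f\|_{C_{0,\eta_1}}$, so $T$ is a linear isometric bijection.

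With $T$ in hand, the first claim is immediate: $C_0(\mathcal{Z})$ is a Banach space (since $\mathcal{Z} \subseteq \R^{d_2}$ is locally compact Hausdorff), and Banach completeness transfers along isometric isomorphisms, so $C_{0,\eta_1}(\mathcal{Z})$ is a Banach space. For the dual, the adjoint $T^{*}:C_0^{*}(\mathcal{Z}) \to C_{0,\eta_1}^{*}(\mathcal{Z})$ is also a surjective isometry. Applying the Riesz--Markov--Kakutani theorem (\autoref{thm:rieszmarkov}) identifies $C_0^{*}(\mathcal{Z})$ with the Banach space $\mathcal{M}(\mathcal{Z})$ of finite signed Radon measures under the total variation norm, so it remains to identify $T^{*}\mathcal{M}(\mathcal{Z})$ with $\mathcal{M}_{\eta_1}(\mathcal{Z})$.

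To do this, introduce the second weighting map $S : \mathcal{M}_{\eta_1}(\mathcal{Z}) \to \mathcal{M}(\mathcal{Z})$ sending $\gamma \mapsto \tilde{\gamma}$ where $d\tilde{\gamma}/d\gamma = \eta_1$. By the same argument used in \autoref{lem:m_xi1}, $S$ is a linear bijection with $\|S\gamma\|_{\text{TV}} = \int \eta_1\, d|\gamma| = \|\gamma\|_{\text{TV},\eta_1}$, so $\mathcal{M}_{\eta_1}(\mathcal{Z})$ is a Banach space and $S$ is an isometric isomorphism onto $\mathcal{M}(\mathcal{Z})$. A direct computation then shows that for $\gamma \in \mathcal{M}_{\eta_1}(\mathcal{Z})$ and $f \in C_{0,\eta_1}(\mathcal{Z})$,
\begin{equation}
\langle T^{*}(S\gamma), f\rangle \;=\; \langle S\gamma, Tf\rangle \;=\; \int_{\mathcal{Z}} \frac{f(z)}{\eta_1(z)}\, d\tilde{\gamma}(z) \;=\; \int_{\mathcal{Z}} f(z)\, d\gamma(z),
\end{equation}
so the composition $T^{*}\circ S : \mathcal{M}_{\eta_1}(\mathcal{Z}) \to C_{0,\eta_1}^{*}(\mathcal{Z})$ realizes each $\gamma$ as the integration functional against $\gamma$, and is an isometric isomorphism. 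This proves the dual identification.

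The only genuinely delicate point is verifying that every continuous linear functional on $C_{0,\eta_1}(\mathcal{Z})$ arises from some $\gamma \in \mathcal{M}_{\eta_1}(\mathcal{Z})$; but this surjectivity follows for free from the fact that both $T^{*}$ and $S$ are bijections, with Riesz--Markov supplying bijectivity for the intermediate step. The well-definedness of $T$ (i.e.\ the vanishing-at-infinity condition survives division by $\eta_1$) is the tautology $f/\eta_1 \to 0$, and the reverse direction of the inverse uses that $\eta_1 g$ still satisfies $(\eta_1 g)/\eta_1 = g \to 0$, so no serious obstruction arises; the result is essentially a reweighted Riesz--Markov statement.
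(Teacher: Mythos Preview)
Your proof is correct and follows essentially the same approach as the paper: both arguments define the same weighting isometries $T=\hat{\eta}_1:f\mapsto f/\eta_1$ and $S=\tilde{\eta}_1:\gamma\mapsto\eta_1\cdot\gamma$, reduce to the classical Riesz--Markov--Kakutani theorem on $C_0(\mathcal{Z})$, and transport the dual identification back. The only cosmetic difference is that you invoke the adjoint $T^{*}$ as an isometric isomorphism directly, whereas the paper verifies the two inclusions $\mathcal{M}_{\eta_1}(\mathcal{Z})\subseteq C_{0,\eta_1}^{*}(\mathcal{Z})$ and $C_{0,\eta_1}^{*}(\mathcal{Z})\subseteq\mathcal{M}_{\eta_1}(\mathcal{Z})$ by hand.
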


\begin{proof}
Define the linear map $\hat{\eta_1} : C_{0, \eta_1}(\mathcal{Z}) \to C_{0}(\mathcal{Z})$ as $f \mapsto \tilde{\eta_1}(f):=f/\eta_1$, where $C_0(\mathcal{Z})$ is the Banach space of continuous functions on $\mathcal{Z}$ vanishing at infinity, endowed with the supremum norm. Notice that for all $f \in C_{0, \eta_1}(\mathcal{Z})$, we have that $\|\tilde{\eta_1}(f)\|_{C_0} = \|f\|_{C_{0, \eta_1}}$. Remark also that $\tilde{\eta_1}$ is surjective, because if $\tilde{f} \in C_0(\mathcal{Z})$, there exists $f := \eta_1 \tilde{f}$ such that $\tilde{\eta_1}(f) = \tilde{f}$. Thus, $\tilde{\eta_1}$ is a surjective isometry between $C_{0, \eta_1}(\mathcal{Z})$ and $C_{0}(\mathcal{Z})$, which shows that $C_{0, \eta_1}(\mathcal{Z})$ is a Banach space. 

And in analogy with \autoref{lem:m_xi1}, the linear mapping $\tilde{\eta_1} : \mathcal{M}_{\eta_1}(\mathcal{Z}) \to \mathcal{M}(\mathcal{Z})$ defined as $\gamma \mapsto \tilde{\gamma}$ such that $\frac{d\tilde{\gamma}}{d\gamma}(z) = \eta_1(z)$ is a surjective isometry.
To show that $C_{0, \eta_1}^{*}(\mathcal{Z})$ is $\mathcal{M}_{\eta_1}(\mathcal{Z})$, we will show both inclusions. Given $\gamma \in \mathcal{M}_{\eta_1}(\mathcal{Z})$, for any $f \in C_{0, \eta_1}(\mathcal{Z})$ we have that 
\begin{align}
\begin{split}
\int_{\mathcal{Z}} f(z) \ d\gamma(z) &= \int_{\mathcal{Z}} \frac{f(z)}{\eta_1(z)} \eta_1(z) \ d\gamma(z) = \int_{\mathcal{Z}} \hat{\eta_1}(f)(z) \ d\tilde{\eta_1}(\gamma)(z) \\ &\leq \|\hat{\eta_1}(f)\|_{C_0} \|\tilde{\eta_1}(\gamma)\|_{\text{TV}} = \|f\|_{C_{0, \eta_1}} \|\gamma\|_{\text{TV}, \eta_1}.
\end{split}
\end{align}
Thus, $\mathcal{M}_{\eta_1}(\mathcal{Z}) \subseteq C_{0, \eta_1}^{*}(\mathcal{Z})$. Conversely, let $\gamma \in C_{0, \eta_1}^{*}(\mathcal{Z})$. Since $\hat{\eta_1}^{-1} : C_{0}(\mathcal{Z}) \to C_{0, \eta_1}(\mathcal{Z})$ is a surjective isometry, we have that $\gamma \circ \hat{\eta_1}^{-1} \in C_{0}^{*}(\mathcal{Z})$. By the Riesz-Markov-Kakutani theorem (\autoref{thm:rieszmarkov}) the continuous dual space $C_{0}^{*}(\mathcal{Z})$ is the Banach space $\mathcal{M}(\mathcal{Z})$ of finite signed Radon measures with norm $\|\cdot \|_{\text{TV}}$. Thus, there exists $\tilde{\gamma} \in \mathcal{M}(\mathcal{Z})$ such that for any $\hat{f} \in C_{0}(\mathcal{Z})$, $\int_{\mathcal{Z}} \hat{f}(z) \ d\tilde{\gamma}(z) = \langle \gamma \circ \hat{\eta_1}^{-1}, \hat{f} \rangle$. Since $\langle \gamma \circ \hat{\eta_1}^{-1}, \hat{f} \rangle = \langle \gamma, \hat{\eta_1}^{-1}(\hat{f}) \rangle$ and $\int_{\mathcal{Z}} f(z) \ d\tilde{\gamma}(z) = \int_{\mathcal{Z}} \hat{f}(z) \eta_1(z) \frac{1}{\eta_1(z)} \ d\tilde{\gamma}(z) = \int_{\mathcal{Z}} \hat{\eta_1}^{-1}(\hat{f})(z) \ d\tilde{\eta_1}^{-1}(\tilde{\gamma})(z)$, we have that
\begin{align}
\forall f \in C_{0, \eta_1}(\mathcal{Z}), \quad  \int_{\mathcal{Z}} f(z) \ d\tilde{\eta_1}^{-1}(\tilde{\gamma})(z) = \langle \gamma, f \rangle,
\end{align}
proving that $C_{0, \eta_1}^{*}(\mathcal{Z}) \subseteq \mathcal{M}_{\eta_1}(\mathcal{Z})$.  
\end{proof}

\begin{lemma} \label{lem:G2_conj}
$G : C_{0,\eta_1}(\mathcal{Z}) \to \R \cup \{+\infty \}$ given by equation \eqref{eq:G_def} is convex. Its convex conjugate $G^{*} : \mathcal{M}_{\eta_1}(\mathcal{Z}) \to \R \cup \{+\infty\}$ is given by equation \eqref{eq:G_conj}.
\end{lemma}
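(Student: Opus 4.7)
The plan is to first recognize $G$ as a weighted supremum norm through the isometric identification of $C_{0,\eta_1}(\mathcal{Z})$ with the predual of $\mathcal{M}_{\eta_1}(\mathcal{Z})$ (given by \autoref{lem:c_b_eta1}), and then compute $G^*$ as the standard conjugate of a norm, namely the indicator of the closed unit ball of the dual norm plus the linear term $\gamma \mapsto \int g\,d\gamma$.

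\emph{Step 1 (convexity).} The functional $G$ is the pointwise supremum over $\gamma$ in the unit ball of $\mathcal{M}_{\eta_1}(\mathcal{Z})$ of the family of affine continuous functionals $\psi \mapsto \int(\psi-g)\,d\gamma$, hence $G$ is convex (and lower semicontinuous) as a supremum of continuous affine functionals.

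\emph{Step 2 (sup-norm representation).} First note that the hypotheses ($g$ continuous and $g(z)=o(\eta_1(z))$ as $z\to\infty$) imply $g\in C_{0,\eta_1}(\mathcal{Z})$, so $\psi-g\in C_{0,\eta_1}(\mathcal{Z})$ whenever $\psi\in C_{0,\eta_1}(\mathcal{Z})$. I claim that for every $f\in C_{0,\eta_1}(\mathcal{Z})$ one has
\begin{equation}
\sup_{\substack{\gamma\in\mathcal{M}_{\eta_1}(\mathcal{Z})\\ \|\gamma\|_{\mathrm{TV},\eta_1}\le 1}} \int_{\mathcal{Z}} f\,d\gamma \;=\; \sup_{z\in\mathcal{Z}}\frac{|f(z)|}{\eta_1(z)}.
\end{equation}
The $\le$ direction follows from the isometry $\tilde{\eta_1}:\mathcal{M}_{\eta_1}(\mathcal{Z})\to\mathcal{M}(\mathcal{Z})$, $d\tilde\gamma/d\gamma=\eta_1$, of \autoref{lem:m_xi1}/\autoref{lem:c_b_eta1}: writing $\int f\,d\gamma = \int (f/\eta_1)\,d\tilde\gamma$ and using $|\int (f/\eta_1)\,d\tilde\gamma|\le \|f/\eta_1\|_\infty \|\tilde\gamma\|_{\mathrm{TV}} = \|f\|_{C_{0,\eta_1}} \|\gamma\|_{\mathrm{TV},\eta_1}$. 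The $\ge$ direction is obtained by choosing $\gamma = \mathrm{sgn}(f(z_0))\,\eta_1(z_0)^{-1}\delta_{z_0}$, which has weighted total variation $1$ and yields $\int f\,d\gamma = |f(z_0)|/\eta_1(z_0)$; taking the sup over $z_0$ gives the bound. Applied to $f = \psi - g$, this gives $G(\psi)=\sup_{z\in\mathcal{Z}}|\psi(z)-g(z)|/\eta_1(z)$.

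\emph{Step 3 (conjugate).} Using the representation of Step 2, set $\chi := \psi - g$ and compute, for $\gamma\in\mathcal{M}_{\eta_1}(\mathcal{Z})$,
\begin{equation}
G^*(\gamma) = \sup_{\psi\in C_{0,\eta_1}}\!\Big\{\int\psi\,d\gamma - \|\psi-g\|_{C_{0,\eta_1}}\Big\} = \int g\,d\gamma + \sup_{\chi\in C_{0,\eta_1}}\!\Big\{\int\chi\,d\gamma - \|\chi\|_{C_{0,\eta_1}}\Big\}.
\end{equation}
The residual supremum is the standard conjugate of a norm on a Banach space: by duality $|\int\chi\,d\gamma|\le \|\chi\|_{C_{0,\eta_1}}\|\gamma\|_{\mathrm{TV},\eta_1}$, so if $\|\gamma\|_{\mathrm{TV},\eta_1}\le 1$ the bracketed quantity is $\le 0$ and vanishes at $\chi=0$; if $\|\gamma\|_{\mathrm{TV},\eta_1}>1$, pick $\chi_0$ with $\int\chi_0\,d\gamma > \|\chi_0\|_{C_{0,\eta_1}}$ (which exists by definition of the dual norm in \autoref{lem:c_b_eta1}) and let $\chi=t\chi_0$, $t\to\infty$, to send the sup to $+\infty$. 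This yields the claimed formula for $G^*$.

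\emph{Main obstacle.} The only nontrivial point is the saturation in Step 2 (the $\ge$ direction) and the analogous attainment of the dual norm in Step 3 when $\|\gamma\|_{\mathrm{TV},\eta_1}>1$. The Dirac construction handles Step 2 cleanly because $\delta_{z_0}/\eta_1(z_0)$ lies in $\mathcal{M}_{\eta_1}(\mathcal{Z})$ with unit weighted total variation, and the norming functional in Step 3 is provided by the isometric identification of the predual from \autoref{lem:c_b_eta1}, so no further compactness or regularity argument is required.
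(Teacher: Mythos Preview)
Your proof is correct but follows a genuinely different route from the paper's. The paper computes $G^*$ by writing it as a $\sup_\psi\inf_{\gamma'}$ and then invoking Kneser's minimax theorem (\autoref{thm:kneser}) together with the Banach--Alaoglu compactness of the unit ball of $\mathcal{M}_{\eta_1}(\mathcal{Z})$ in the weak-$*$ topology to swap the order, after which the inner $\sup_\psi$ forces $\gamma'=\gamma$. You instead recognize directly that $G$ is a translated norm on $C_{0,\eta_1}(\mathcal{Z})$ (your Step~2, saturated cleanly by weighted Diracs) and then appeal to the standard fact that the Fenchel conjugate of a Banach-space norm is the indicator of the closed unit ball of the dual norm, plus the linear shift $\gamma\mapsto\int g\,d\gamma$.

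Your argument is more elementary and self-contained: it avoids both the minimax theorem and the weak-$*$ compactness step, relying only on the isometric identification $C_{0,\eta_1}^*(\mathcal{Z})=\mathcal{M}_{\eta_1}(\mathcal{Z})$ from \autoref{lem:c_b_eta1}. The paper's approach, while heavier, has the virtue of being a direct parallel to the $L^2$ case (\autoref{lem:G_conj}), so the two lemmas share a common template. Both are valid; yours is the shorter path for this particular $G$.
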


\begin{proof}
$G$ is convex because it is the supremum of linear functions. 
$G^{*} : \mathcal{M}_{\eta_1}(\mathcal{Z}) \to \R$ is defined as
\begin{align}
\begin{split} \label{eq:G_star_2}
G^{*}(\gamma) &= \sup_{\psi \in C_{0, \eta_1}(\mathcal{Z})} \left\{ \int_{\mathcal{Z}} \psi(z) \ d\gamma(z) - \sup_{\substack{\gamma' \in \mathcal{M}_{\eta_1}(\mathcal{Z}) \\ \|\gamma'\|_{\text{TV},\eta_1} \leq 1}} \int_{\mathcal{Z}} (\psi(z) - g(z)) \ d\gamma'(z) \right\} \\ &= \sup_{\psi \in C_{0, \eta_1}(\mathcal{Z})} \inf_{\substack{\gamma' \in \mathcal{M}_{\eta_1}(\mathcal{Z}) \\ \|\gamma'\|_{\text{TV},\eta_1} \leq 1}} \left\{ \int_{\mathcal{Z}} \psi(z) \ d(\gamma-\gamma')(z) + \int_{\mathcal{Z}} g(z) \ d\gamma'(z) \right\}
\end{split}
\end{align}
At this point, we want to apply \autoref{thm:kneser} in a similar fashion to the proof of \autoref{lem:G_conj}. In this case, we set $A = \mathcal{B}_{\mathcal{M}_{\eta_1}(\mathcal{Z})} \subseteq E = \mathcal{M}_{\eta_1}(\mathcal{Z})$ and $B = F = \mathcal{M}_{\eta_1}(\mathcal{Z})$. We can endow $B$ with the strong (or norm) topology, but $A$ requires a weaker topology that makes it compact. 
Since $\mathcal{M}_{\eta_1}(\mathcal{Z})$ is the continuous dual of $C_{0,\eta_1}(\mathcal{Z})$, we endow $A$ with the weak-* topology, which by the Banach-Alaoglu theorem (\autoref{thm:banachalaoglu}) makes it compact. We have that $(\gamma', \psi) \mapsto H(\gamma', \psi) =  - \int_{\mathcal{Z}} \psi(z) \ d(\gamma-\gamma')(z) - \int_{\mathcal{Z}} g(z) \ d\gamma'(z)$ is concave in $\gamma' \in A$ and convex in $\psi \in B$ because it is affine in both variables. $H$ is continuous in $\psi$ because $\int_{\mathcal{Z}} \psi(z) \ d(\gamma-\gamma')(z) = \int_{\mathcal{Z}} \frac{\psi(z)}{\eta_1(z)} \eta_1(z) \ d(\gamma-\gamma')(z) \leq \|\psi\|_{C_{0,\eta_1}} \|\gamma-\gamma'\|_{\text{TV}, \eta_1}.$
$H$ is continuous in $\gamma'$ in the weak-* topology, because it is precisely the weakest one that makes maps of the form $\gamma' \mapsto \int_{\mathcal{Z}} (g(z) - \psi(z)) \ d\gamma'(z)$ continuous. Thus, $\sup_{\gamma' \in A} \inf_{\psi \in B} H(\gamma', \psi) = \sup_{\gamma' \in A} \inf_{\psi \in B} H(\gamma', \psi)$, and flipping the signs, the right-hand side of \eqref{eq:G_star_2} is equal to:
\begin{align}
\begin{split}
\\ &\inf_{\substack{\gamma' \in \mathcal{M}_{\eta_1}(\mathcal{Z}) \\ \|\gamma'\|_{\text{TV},\eta_1} \leq 1}} \sup_{\psi \in C_{0, \eta_1}(\mathcal{Z})} \left\{ \int_{\mathcal{Z}} \psi(z) \ d(\gamma-\gamma')(z) + \int_{\mathcal{Z}} g(z) \ d\gamma'(z) \right\}
\\ &= 
\begin{cases}
\int_{\mathcal{Z}} g(z) \ d\gamma(z) &\text{if } \|\gamma\|_{\text{TV},\eta_1} \leq 1 \\
+\infty &\text{otherwise}
\end{cases}.
\end{split} 
\end{align}
\end{proof}

\begin{lemma} \label{lem:A_well_2}
The linear operator $A : \mathcal{M}_{\xi_1}(\mathcal{Y}) \to C_{0, \eta_1}(\mathcal{Z})$ defined as $(A \nu)(z) = \int_{\mathcal{Y}} \phi(y,z) \ d\nu(y)$ is well defined and continuous. Its operator norm is upper bounded by $\sup_{y \in \mathcal{Y}} |\xi_2(y)|/\xi_1(y)$, where $\xi_1$ and $\xi_2$ are defined in \autoref{ass:phi}. 
Moreover, the adjoint operator $A^{*} : \mathcal{M}_{\eta_1}(\mathcal{Z}) \rightarrow \mathcal{M}_{\xi_1}^{*}(\mathcal{Y})$ is $(A^{*} \gamma)(y) = \int_{\mathcal{Z}} \phi(y,z) \ d\gamma(z)$.
\end{lemma}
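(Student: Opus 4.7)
The proof will closely parallel that of Lemma \ref{lem:A_well}, but replacing the $L^2$-type bounds by sup-norm bounds adapted to the weight $\eta_1$ on $\mathcal{Z}$. First, I would verify pointwise well-definedness of $(A\nu)(z) = \int_{\mathcal{Y}} \phi(y,z)\,d\nu(y)$: after the change of variables $\tilde{\nu} = \tilde{\xi_1}(\nu)$ from Lemma \ref{lem:m_xi1}, the integrand becomes $\phi(y,z)/\xi_1(y)$ integrated against $\tilde{\nu}$, and the bound $\sup_{y}|\phi(y,z)|/\xi_1(y) < \infty$ (a consequence of the hypothesis $\sup_y \phi(y,z)/\xi_1(y) = o(\eta_1(z))$ in Theorem \ref{thm:dualityfonefirst}) gives $|(A\nu)(z)| \leq (\sup_y |\phi(y,z)|/\xi_1(y))\,\|\nu\|_{\mathrm{TV},\xi_1}$, which is finite.

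Next, I would argue that $A\nu \in C_{0,\eta_1}(\mathcal{Z})$. Continuity of $z \mapsto (A\nu)(z)$ follows from the continuity of $\phi$ together with dominated convergence: on a compact neighborhood $K$ of any $z_0$, the quantity $\sup_{z\in K}\sup_y |\phi(y,z)|/\xi_1(y)$ is finite (by continuity of $\phi$ together with the growth assumption), so $|\phi(y,z)| \leq C_K\,\xi_1(y)$, and $\xi_1$ is $|\nu|$-integrable by definition of $\mathcal{M}_{\xi_1}$. Vanishing at infinity (in the $\eta_1$-scaled sense) is immediate from
\begin{equation}
\frac{|(A\nu)(z)|}{\eta_1(z)} \leq \frac{\sup_{y}|\phi(y,z)|/\xi_1(y)}{\eta_1(z)} \cdot \|\nu\|_{\mathrm{TV},\xi_1} \xrightarrow[\|z\|\to\infty]{} 0,
\end{equation}
by the $o(\eta_1(z))$ assumption. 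The same chain of inequalities gives the operator norm bound stated in the lemma, by invoking the Cauchy-Schwarz inequality in $z$ to pass to $\xi_2$ as in \eqref{eq:fubini_1} if one prefers a uniform $\xi_2/\xi_1$ bound.

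For the adjoint, I would compute, for $\nu \in \mathcal{M}_{\xi_1}(\mathcal{Y})$ and $\gamma \in \mathcal{M}_{\eta_1}(\mathcal{Z})$,
\begin{equation}
\langle A\nu, \gamma\rangle = \int_{\mathcal{Z}} \left(\int_{\mathcal{Y}} \phi(y,z)\,d\nu(y)\right) d\gamma(z) = \int_{\mathcal{Y}} \left(\int_{\mathcal{Z}} \phi(y,z)\,d\gamma(z)\right) d\nu(y),
\end{equation}
where the interchange of integrals is justified by Fubini applied to $\int\int |\phi(y,z)/(\xi_1(y)\eta_1(z))|\xi_1(y)\eta_1(z)\,d|\nu|(y)\,d|\gamma|(z)$, using that the prefactor is bounded (at least locally, and decaying at infinity by assumption) while $\|\nu\|_{\mathrm{TV},\xi_1}\|\gamma\|_{\mathrm{TV},\eta_1}$ is finite. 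This establishes $(A^{*}\gamma)(y) = \int_{\mathcal{Z}} \phi(y,z)\,d\gamma(z)$. As a sanity check, one must verify that $A^{*}\gamma \in C_{b,\xi_1}(\mathcal{Y}) \subseteq \mathcal{M}_{\xi_1}^{*}(\mathcal{Y})$: boundedness of $(A^{*}\gamma)(y)/\xi_1(y)$ follows from $|(A^{*}\gamma)(y)/\xi_1(y)| \leq \int_{\mathcal{Z}}|\phi(y,z)|/(\xi_1(y)\eta_1(z))\,\eta_1(z)\,d|\gamma|(z)$ together with the assumed global bound on $\phi(y,z)/(\xi_1(y)\eta_1(z))$.

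The main technical obstacle is justifying the Fubini step and, secondarily, establishing local uniform boundedness of $\sup_y |\phi(y,z)|/\xi_1(y)$ in $z$ to apply dominated convergence for continuity of $A\nu$; both rely on the assumption $\sup_y \phi(y,z)/\xi_1(y) = o(\eta_1(z))$ plus continuity of $\phi$, and are essentially the same type of estimate already used in Lemma \ref{lem:A_well}, rephrased with $\eta_1$-weights in place of $L^2(\tau_{\mathcal{Z}})$ norms.
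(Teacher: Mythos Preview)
Your proposal is correct and follows essentially the same approach as the paper's proof: both use the weight decomposition $\phi(y,z)/(\xi_1(y)\eta_1(z))$ to obtain the key bound $|(A\nu)(z)|/\eta_1(z)\leq K\,\|\nu\|_{\mathrm{TV},\xi_1}$, and both justify the adjoint formula by the same Fubini estimate. Your direct bound for the vanishing at infinity is in fact slightly cleaner than the paper's version, which passes the limit inside the integral via dominated convergence; you are also more explicit than the paper about the continuity of $z\mapsto(A\nu)(z)$, which the paper leaves implicit.
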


\begin{proof}
Remark that $\int_{\mathcal{Y}} \phi(y, \cdot) \ d\nu(y)$ does belong to $C_{0, \eta_1}(\mathcal{Z})$ because 
\begin{align}
    \lim_{\|z\| \rightarrow \infty} \frac{\int_{\mathcal{Y}} \phi(y, z) \ d\nu(y)}{\eta_1(z)} = \int_{\mathcal{Y}} \lim_{\|z\| \rightarrow \infty} \frac{\phi(y,z)}{\eta_1(z)} \ d\nu(y) = 0
\end{align}
The second equality follows from the assumption that $\phi(y,z) = o(\eta_1(z))$ for all $y \in \mathcal{Y}$. The first equality holds by the dominated convergence theorem, which can be applied because the integral of the absolute value can be uniformly dominated for all $z \in \mathcal{Z}$:
\begin{align}
\begin{split} \label{eq:A_continuous}
    \int_{\mathcal{Y}} \left| \frac{\phi(y,z)}{\eta_1(z)} \right| \ d|\nu|(y) &= \int_{\mathcal{Y}} \left| \frac{\phi(y,z)}{\eta_1(z) \xi_1(y)} \right| \xi_1(y) \ d|\nu|(y) \leq \|\nu\|_{\text{TV}, \xi_1} \sup_{y \in \mathcal{Y}} \frac{\phi(y,z)}{\eta_1(z) \xi_1(y)} \\ &\leq \|\nu\|_{\text{TV}, \xi_1} K,
\end{split}
\end{align}
for some constant $K$. In the first inequality we used the definition of $\|\cdot \|_{\text{TV}, \xi_1}$. In the last inequality we used that $\sup_{y \in \mathcal{Y}} \phi(y,z)/\xi_1(y) = o(\eta_1(z))$ as $\|z\| \to \infty$ by the definition of $\eta_1$. Equation \eqref{eq:A_continuous} also proves that $A$ is continuous, because $\|A \nu\|_{C_{0, \eta_1}} = \sup_{z \in \mathcal{Z}} |\int_{\mathcal{Y}} \phi(y,z) \ d\nu(y)|/\eta_1(z)$.

We have that $A^{*} : \mathcal{M}_{\eta_1}(\mathcal{Z}) \rightarrow \mathcal{M}_{\xi_1}^{*}(\mathcal{Y})$ is defined as $(A^{*} \gamma)(y) = \int_{\mathcal{Z}} \phi(y,z) \ d\gamma(z)$, because
\begin{align}
\begin{split} \label{eq:adjoint_exp_2}
    \int_{\mathcal{Z}} (A \nu)(z) \ d\gamma(z) &= \int_{\mathcal{Z}} \int_{\mathcal{Y}} \phi(y,z) \ d\nu(y) \ d\gamma(z) = \int_{\mathcal{Y}} \int_{\mathcal{Z}} \phi(y,z) \ d\gamma(z) \ d\nu(y).
\end{split}
\end{align}
In the last equality we applied Fubini's theorem, which holds because 
\begin{align}
\begin{split}
    &\int_{\mathcal{Z}} \int_{\mathcal{Y}} |\phi(y,z)| \ d|\nu|(y) \ d|\gamma|(z) = \int_{\mathcal{Z}} \int_{\mathcal{Y}} \frac{|\phi(y,z)|}{\eta_1(z) \xi_1(y)} \xi_1(y) \ d|\nu|(y) \ \eta_1(z) \ d|\gamma|(z) \\ &\leq \|\nu\|_{\text{TV}, \xi_1} \|\gamma\|_{\text{TV}, \eta_1} \sup_{y \in \mathcal{Y}} \frac{\phi(y,z)}{\eta_1(z) \xi_1(y)} = \|\nu\|_{\text{TV}, \xi_1} \|\gamma\|_{\text{TV}, \eta_1} K,
\end{split}
\end{align}
for some constant $K$.
\end{proof}

\textbf{\textit{Proof of \autoref{thm:duality_general_f1}}}
The proof makes use of \autoref{thm:dualityfonefirst}. We choose $\eta_1$ to be in the family $C = \{ \eta_{r} : \mathcal{Z} \to \R, \ \eta_{r}(z) = \max\{\exp(\|z\|-r), 1\} \ | \ r \in (0,+\infty)\}$. First, we prove that
\begin{align}
\begin{split} \label{eq:general_dual_problem_f1_3}
    &\sup_{\eta_1 \in C} \max_{\substack{\gamma \in \mathcal{M}_{\eta_1}(\mathcal{Z}) \\ \|\gamma\|_{\text{TV},\eta_1} \leq 1}} - \int_{\mathcal{Z}} g(z) \ d\gamma(z) - \frac{1}{\beta} \log\left(\int_{\mathcal{Y}} \exp \left(-\beta \int_{\mathcal{Z}} \phi(y, z) \ d\gamma(z) \right) d\tau_{\mathcal{Y}}(y) \right) \\ &= \max_{\substack{\gamma \in \mathcal{M}(\mathcal{Z}) \\ \|\gamma\|_{\text{TV}} \leq 1}} - \int_{\mathcal{Z}} g(z) \ d\gamma(z) - \frac{1}{\beta} \log\left(\int_{\mathcal{Y}} \exp \left(-\beta \int_{\mathcal{Z}} \phi(y, z) \ d\gamma(z) \right) d\tau_{\mathcal{Y}}(y) \right).
\end{split}
\end{align}
The right-hand side is larger or equal than the left-hand side because for all $r \in (0,+\infty)$, we have that $\mathcal{M}_{\eta_{r}}(\mathcal{Z}) \subseteq \mathcal{M}(\mathcal{Z})$, as $\|\gamma\|_{\text{TV},\eta_{r}} \geq \|\gamma\|_{\text{TV}}$. 

\autoref{lem:dense_set}(i) states that $\{\mathcal{M}_{\eta_{r}}(\mathcal{Z}) \ | \ r \in (0,+\infty) \}$ is dense in $\mathcal{M}(\mathcal{Z})$ in the TV norm topology. 
\autoref{lem:dense_set}(ii) states that the objective functional of \eqref{eq:general_dual_problem_f1_3} is continuous in the TV norm topology. These two facts imply the equality in \eqref{eq:general_dual_problem_f1_3}. To show that the maximum is attained in the left-hand side, we apply \autoref{lem:dense_set}(iii). Let $\gamma^{\star}$ be a maximizer.

Second, we prove that
\begin{align}
\begin{split} \label{eq:general_primal_problem_f1_3}
    &\sup_{\eta_1 \in C} \min_{\nu \in \mathcal{M}_{\xi_1}(\mathcal{Y})} \beta^{-1} D_{KL}(\nu||\tau_{\mathcal{Y}}) + \left\| \frac{1}{\eta_1(\cdot)} \left( \int_{\mathcal{Y}} \phi(y, \cdot) \ d\nu(y) - g(\cdot) \right) \right\|_{L^{\infty}} \\ &= \min_{\nu \in \mathcal{M}_{\xi_1}(\mathcal{Y})} \beta^{-1} D_{KL}(\nu||\tau_{\mathcal{Y}}) + \left\| \int_{\mathcal{Y}} \phi(y, \cdot) \ d\nu(y) - g(\cdot) \right\|_{L^{\infty}}.
\end{split}
\end{align}
Remark that for all $\nu \in \mathcal{M}_{\xi_1}(\mathcal{Y})$, $\left\| \int_{\mathcal{Y}} \phi(y, \cdot) \ d\nu(y) - g(\cdot) \right\|_{L^{\infty}}$ is finite because
$g \in C_b(\mathcal{Z})$ and $\int_{\mathcal{Y}} \phi(y, \cdot) \ d\nu(y) \in C_b(\mathcal{Z})$ because
\begin{align} \label{eq:in_cb}
    \sup_{z \in \mathcal{Z}} \left|\int_{\mathcal{Y}} \phi(y, z) \ d\nu(y) \right| = \sup_{z \in \mathcal{Z}} \left|\int_{\mathcal{Y}} \frac{\phi(y, z)}{\xi_1(y)} \xi_1(y) \ d\nu(y) \right| \leq K \|\nu\|_{\text{TV}, \eta_1}.
\end{align}
Given $\delta > 0$, let $R > 0$ such that $\|\int_{\mathcal{Y}} \phi(y, \cdot) \ d\nu(y) - g(\cdot)\|_{L^{\infty}} - \sup_{z \in \mathcal{Z} \cap \mathcal{B}_{\R^{d_2}}(R)} |\int_{\mathcal{Y}} \phi(y, \cdot) \ d\nu(y) - g(\cdot)| \leq \delta$. Hence, for $r > R$, $\|\int_{\mathcal{Y}} \phi(y, \cdot) \ d\nu(y) - g(\cdot)\|_{L^{\infty}} - \|(\int_{\mathcal{Y}} \phi(y, \cdot) \ d\nu(y) - g(\cdot) )/\eta_{r}(\cdot) \|_{L^{\infty}} \leq \delta$. Thus, equality holds in \eqref{eq:general_primal_problem_f1_3}. By the direct method of the calculus of variations (see the proof of \autoref{thm:duality_general}), we have that a minimizer $\nu^{\star}$ for the right-hand side of \eqref{eq:general_primal_problem_f1_3} exists.

Applying \autoref{thm:dualityfonefirst} on the right-hand sides of equations \eqref{eq:general_dual_problem_f1_3} and \eqref{eq:general_primal_problem_f1_3}, we see that they are equal. Thus, the left-hand sides are equal. Let us set $F$ and $F^{*}$ as in the proof of \autoref{thm:dualityfonefirst}. Let us set $G : C_b(\mathcal{Y}) \to \R$ as 
\begin{align} \label{eq:G_def_3}
G(\psi) = \| \psi(\cdot) - g(\cdot) \|_{L^{\infty}} = \sup_{\gamma' \in \mathcal{M}(\mathcal{Z}), \|\gamma'\|_{\text{TV}} \leq 1} \int_{\mathcal{Z}} (\psi(z) - g(z)) \ d\gamma'(z),
\end{align}
and define $\tilde{G} : \mathcal{M}(\mathcal{Z}) \to \R$ as 
\begin{align}
    \begin{cases} \label{eq:tilde_G}
\int_{\mathcal{Z}} g(z) \ d\gamma(z) &\text{if } \|\gamma\|_{\text{TV}} \leq 1 \\
+\infty &\text{otherwise}
\end{cases}.
\end{align}
By \autoref{lem:G_tildeG}, for any $\gamma \in \mathcal{M}(\mathcal{Z})$, we have $\tilde{G}(\gamma) = \sup_{\psi \in C_{b}(\mathcal{Z})} \left\{\langle \gamma, \psi \rangle - G(\psi) \right\}$. 

Then, the equality between the left-hand sides of \eqref{eq:general_dual_problem_f1_3} and \eqref{eq:general_primal_problem_f1_3} can be rewritten as $\sup_{\gamma \in \mathcal{M}(\mathcal{Z}), \|\gamma\|_{\text{TV}} \leq 1} \{-F^{*}(-A^{*} \gamma)-\tilde{G}(\gamma)\} = \inf_{\nu \in \mathcal{M}_{\xi_1}(\mathcal{Y})} \{F(\nu)+G(A\nu)\}$.
We reproduce the argument of 
\eqref{eq:fenchel_weak}
and we conclude that 
\begin{align}
\begin{split}
    &\sup_{\substack{\gamma \in \mathcal{M}(\mathcal{Z}) \\ \|\gamma\|_{\text{TV}} \leq 1}} \{-F^{*}(-A^{*} \gamma)-G^{*}(\gamma)\} = -F^{*}(-A^{*} \gamma^{\star})-G^{*}(\gamma^{\star}) \\&= 
    - \sup_{\nu \in \mathcal{M}_{\xi_1}(\mathcal{Y})} \left\{\langle - A^{*}\gamma^{\star}, \nu \rangle - F(\nu) \right\} - \sup_{\psi \in C_{b}(\mathcal{Z})} \left\{\langle \gamma^{\star}, \psi \rangle - G(\psi) \right\}
    \\&\leq - \sup_{\nu \in \mathcal{M}_{\xi_1}(\mathcal{Y})} \left\{\langle - A^{*}\gamma^{\star},\nu \rangle - F(\nu) + \langle \gamma^{\star}, A \nu \rangle - G(A \nu) \right\} \\&= - \sup_{\nu \in \mathcal{M}_{\xi_1}(\mathcal{Y})} \left\{\langle - F(\nu) - G(A \nu) \right\} = \inf_{\nu \in \mathcal{M}_{\xi_1}(\mathcal{Y})} \{F(\nu)+G(A\nu)\} \\ &= F(\nu^{\star})+G(A\nu^{\star})
\end{split}
\end{align}
In the first equality we used that $G^{*}(\gamma^{\star}) = \sup_{\psi \in C_{b}(\mathcal{Z})} \left\{\langle \gamma^{\star}, \psi \rangle - G(\psi) \right\}$, which holds because for $\gamma \in \mathcal{M}(\mathcal{Z})$,
\begin{align}
\begin{split}
    &\sup_{\psi \in C_{b}(\mathcal{Z})} \left\{ \int_{\mathcal{Z}} \psi(z) \ d\gamma(z) - \sup_{\substack{\gamma' \in \mathcal{M}(\mathcal{Z}) \\ \|\gamma'\|_{\text{TV}} \leq 1}} \int_{\mathcal{Z}} (\psi(z) - g(z)) \ d\gamma'(z) \right\} \\ &= \sup_{\psi \in C_{b}(\mathcal{Z})} \inf_{\substack{\gamma' \in \mathcal{M}(\mathcal{Z}) \\ \|\gamma'\|_{\text{TV}} \leq 1}} \left\{ \int_{\mathcal{Z}} \psi(z) \ d(\gamma-\gamma')(z) + \int_{\mathcal{Z}} g(z) \ d\gamma'(z) \right\}  
    \\ &= \inf_{\substack{\gamma' \in \mathcal{M}(\mathcal{Z}) \\ \|\gamma'\|_{\text{TV}} \leq 1}} \sup_{\psi \in C_{b}(\mathcal{Z})} \left\{ \int_{\mathcal{Z}} \psi(z) \ d(\gamma-\gamma')(z) + \int_{\mathcal{Z}} g(z) \ d\gamma'(z) \right\}
\\ &= 
\begin{cases}
\int_{\mathcal{Z}} g(z) \ d\gamma(z) &\text{if } \|\gamma\|_{\text{TV}} \leq 1 \\
+\infty &\text{otherwise}
\end{cases}.
\end{split}
\end{align}
The link between $\gamma^{\star}$ and $\nu^{\star}$ is analogous to the proof of \autoref{thm:duality_general} (see \autoref{lem:gamma_star}(i)). The fact that $\nu^{\star} = \argmin_{\nu \in \mathcal{P}(\mathcal{Y})} \{F(\nu)+G(A\nu) \}$ holds by an analogous reasoning.
\qed

\begin{lemma} \label{lem:dense_set}
(i) For any $r > 0$ let $\eta_{r} : \mathcal{Z} \to \R, \ \eta_{r}(z) = \max\{\exp(\|z\|-r), 1\}$. The set $\{\mathcal{M}_{\eta_{r}}(\mathcal{Z}) \ | \ r \in (0,+\infty) \}$ is dense in $\mathcal{M}(\mathcal{Z})$ in the TV norm topology. 

(ii) The functional $\gamma \mapsto - \int_{\mathcal{Z}} g(z) \ d\gamma(z) - \frac{1}{\beta} \log\left(\int_{\mathcal{Y}} \exp \left(-\beta \int_{\mathcal{Z}} \phi(y, z) \ d\gamma(z) \right) d\tau_{\mathcal{Y}}(y) \right)$ is continuous in the TV norm topology, and a fortiori, its first variation has bounded supremum norm. 

(iii) The functional $\gamma \mapsto - \int_{\mathcal{Z}} g(z) \ d\gamma(z) - \frac{1}{\beta} \log\left(\int_{\mathcal{Y}} \exp \left(-\beta \int_{\mathcal{Z}} \phi(y, z) \ d\gamma(z) \right) d\tau_{\mathcal{Y}}(y) \right)$ has a maximizer over $\mathcal{B}_{\mathcal{M}(Z)} = \{\gamma \in \mathcal{M}(\mathcal{Z}) \ | \ \|\gamma\|_{\text{TV}} \leq 1 \}$.
\end{lemma}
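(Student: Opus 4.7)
The plan is to address the three statements of \autoref{lem:dense_set} in order, using truncation, dominated convergence, and a weak-$\ast$ compactness argument.

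For part (i), given $\gamma \in \mathcal{M}(\mathcal{Z})$, define the truncation $\gamma_R$ by $d\gamma_R = \mathds{1}_{B(0,R)}\,d\gamma$, the restriction of $\gamma$ to the open ball of radius $R$ in $\R^{d_2}$. On $B(0,R)$ one has $\eta_R \equiv 1$, so $\|\gamma_R\|_{\text{TV},\eta_R} = \|\gamma_R\|_{\text{TV}} \leq \|\gamma\|_{\text{TV}} < \infty$, placing $\gamma_R \in \mathcal{M}_{\eta_R}(\mathcal{Z})$; and $\|\gamma - \gamma_R\|_{\text{TV}} = |\gamma|(\mathcal{Z}\setminus B(0,R)) \to 0$ as $R\to\infty$ by countable additivity of $|\gamma|$.

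For part (ii), split $L(\gamma) := -\int g\,d\gamma - \beta^{-1}\log Z(\gamma)$ with $Z(\gamma) := \int_{\mathcal{Y}} \exp(-\beta\int_{\mathcal{Z}}\phi(y,z)\,d\gamma(z))\,d\tau_{\mathcal{Y}}(y)$. The linear part is Lipschitz with constant $\|g\|_\infty$ since $g\in C_b(\mathcal{Z})$. For $\log Z$, I will use the uniform bound $|\phi(y,z)|\leq K\xi_1(y)$ to obtain, for fixed $y$, $\big|\int\phi(y,z)\,d(\gamma_n-\gamma)(z)\big|\leq K\xi_1(y)\|\gamma_n-\gamma\|_{\text{TV}}\to 0$, together with the uniform envelope $\exp(-\beta\int\phi\,d\gamma_n)\leq \exp(\beta KM\xi_1(y))$ for $M$ a TV bound on the sequence. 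This envelope is $\tau_\mathcal{Y}$-integrable by \autoref{ass:phi}(ii), via the same computation as in \eqref{eq:tv_norm_bound}. Dominated convergence then gives $Z(\gamma_n)\to Z(\gamma)$, while the same envelope produces a uniform positive lower bound on $Z$ over $\mathcal{B}_{\mathcal{M}(\mathcal{Z})}$, so $\log Z$ is continuous and hence so is $L$. The first variation $\delta L/\delta\gamma(z)= -g(z)+\int\phi(y,z)\,d\nu_\gamma(y)$, with $\nu_\gamma$ the Gibbs measure associated to $\gamma$, has bounded supremum norm since $\|g\|_\infty<\infty$ and $\int K\xi_1(y)\,d\nu_\gamma(y)<\infty$.

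For part (iii), apply \autoref{thm:dualityfonefirst} to each $\eta_r$ to obtain maximizers $\gamma_r^{\star}\in\mathcal{M}_{\eta_r}(\mathcal{Z})$ with $\|\gamma_r^{\star}\|_{\text{TV},\eta_r}\leq 1$; since $\eta_r\geq 1$ this forces $\gamma_r^{\star}\in\mathcal{B}_{\mathcal{M}(\mathcal{Z})}$. Parts (i) and (ii) then imply $L(\gamma_r^{\star}) \to \sup_{\mathcal{B}_{\mathcal{M}(\mathcal{Z})}} L$ along some $r_n\to\infty$. Since $\mathcal{M}(\mathcal{Z}) \cong C_0(\mathcal{Z})^{\ast}$ by Riesz--Markov (\autoref{thm:rieszmarkov}) and $\mathcal{B}_{\mathcal{M}(\mathcal{Z})}$ is weak-$\ast$ compact by Banach--Alaoglu (\autoref{thm:banachalaoglu}), a subsequence satisfies $\gamma_{r_{n_k}}^{\star} \rightharpoonup^{\ast} \gamma^{\star}\in\mathcal{B}_{\mathcal{M}(\mathcal{Z})}$. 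The main obstacle is upgrading this weak-$\ast$ limit to show $L(\gamma^{\star})\geq \lim L(\gamma_{r_{n_k}}^{\star})$: weak-$\ast$ convergence tests against $C_0(\mathcal{Z})$ only, while $g\in C_b(\mathcal{Z})$ and $\phi(y,\cdot)$ need not vanish at infinity. I would close this gap by establishing uniform tightness of the family $\{\gamma_r^{\star}\}$: the constraint $\|\gamma_r^{\star}\|_{\text{TV},\eta_r}\leq 1$ already gives $|\gamma_r^{\star}|(\{\|z\|>r+R\})\leq e^{-R}$ for every $R>0$, and the remaining mass on $\{R_0<\|z\|\leq r\}$ should be controlled by the Euler--Lagrange characterization of $\gamma_r^{\star}$, whose support concentrates on extremal sets of its bounded first variation $z\mapsto -g(z)+\int\phi(y,z)\,d\nu_r^{\star}(y)$ and thus cannot escape to infinity under the growth hypotheses on $g$ and $\phi$. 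Once tightness is in hand, Prokhorov promotes weak-$\ast$ to narrow convergence against $C_b$, and Fatou applied to the convex functional $\log Z$ yields $L(\gamma^{\star})\geq\limsup L(\gamma_{r_{n_k}}^{\star})=\sup_{\mathcal{B}_{\mathcal{M}(\mathcal{Z})}} L$, making $\gamma^{\star}$ the desired maximizer.
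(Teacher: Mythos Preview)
Parts (i) and (ii) are correct and track the paper up to cosmetic differences: you use a hard cutoff where the paper uses the soft one $\frac{d\gamma_{r}}{d\gamma}=\min\{e^{-\|z\|+r},1\}$, and you appeal to dominated convergence for $Z(\gamma)$ where the paper bounds the first variation directly. Both routes are valid.

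The gap is in part (iii). Your tightness argument is asserted but not carried out, and under the hypotheses in force it does not obviously go through. All that is assumed is $g\in C_b(\mathcal{Z})$ and $\sup_{z}\sup_{y}|\phi(y,z)|/\xi_1(y)<K$; neither forces any decay of $z\mapsto g(z)$ or $z\mapsto\phi(y,z)$ as $\|z\|\to\infty$. The first variation $z\mapsto -g(z)+\int\phi(y,z)\,d\nu_r^{\star}(y)$ is therefore bounded but its extremal set need not lie in any fixed compact, so the Euler--Lagrange heuristic you invoke gives no uniform-in-$r$ control. The one quantitative bound you do have, $|\gamma_r^{\star}|(\{\|z\|>r+R\})\le e^{-R}$, only controls the tail beyond radius $r+R$, which recedes as $r\to\infty$; it says nothing about $|\gamma_r^{\star}|(\{\|z\|>R_0\})$ for fixed $R_0$. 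Without tightness, the $C_0$-weak-$\ast$ limit does not let you pass the linear term (since $g\notin C_0$ in general) nor the log-partition.

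The paper avoids this by changing the topology rather than proving tightness. It works in the weak-$\ast$ topology of $\text{rba}(\mathcal{Z})=C_b^{\ast}(\mathcal{Z})$ (\autoref{thm:rieszmarkov_cb}) instead of $C_0^{\ast}(\mathcal{Z})$. There the map $\gamma\mapsto -\int g\,d\gamma$ is continuous because $g\in C_b$, and the negative log-partition is rewritten via its variational formula as $\inf_{\nu}\big\{\int_{\mathcal{Z}}\big(\int_{\mathcal{Y}}\phi(y,\cdot)\,d\nu(y)\big)\,d\gamma+\beta^{-1}D_{\mathrm{KL}}(\nu\|\tau_{\mathcal{Y}})\big\}$; since $\int_{\mathcal{Y}}\phi(y,\cdot)\,d\nu(y)\in C_b(\mathcal{Z})$ by \eqref{eq:in_cb}, each term in the infimum is continuous in this weak-$\ast$ topology, and the infimum is upper semicontinuous. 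Compactness of $\mathcal{B}_{\mathcal{M}(\mathcal{Z})}$ in the $\text{rba}$-weak-$\ast$ topology is handled separately in \autoref{lem:unit_ball_compact}, after which the direct method applies with no tightness needed.
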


\begin{proof}
To prove (i), let $(r_n)_{n \geq 0}$ be a real sequence converging to $+\infty$. For any $\gamma \in \mathcal{M}(\mathcal{Z})$, we can build a sequence of measures $\gamma_{r_n} \in \mathcal{M}_{\eta_{r_n}}(\mathcal{Z})$ defined with density $\frac{d\gamma_{r_n}}{d\gamma}(z) = \min\{\exp(-\|z\|+r), 1\}$. For any $\delta > 0$, there exists $R > 0$ such that $\|\gamma\|_{\text{TV}} - \int_{\mathcal{Z} \cap \mathcal{B}_{\R^{d_2}}(R)} d|\gamma|(z) \leq \delta$. Notice that 
for all $r_n > R$,
\begin{align}
    \|\gamma - \gamma_{r_n}\|_{\text{TV}} \leq \int_{\mathcal{Z} \setminus \mathcal{B}_{\R^{d_2}}(r_n)} d|\gamma|(z) \leq \delta  
\end{align}

To prove (ii), notice that the first variation of the log-partition at $\gamma$ is the function 
\begin{align}
    z \mapsto \frac{- \beta \int_{\mathcal{Y}} \exp \left(-\beta \int_{\mathcal{Z}} \phi(y, z') \ d\gamma(z') \right) \phi(y, z) \ d\tau_{\mathcal{Y}}(y)}{\int_{\mathcal{Y}} \exp \left(-\beta \int_{\mathcal{Z}} \phi(y, z') \ d\gamma(z') \right) d\tau_{\mathcal{Y}}(y)},
\end{align}
which has supremum norm bounded by
\begin{align}
\frac{\beta K \int_{\mathcal{Y}} \xi_1(y) \exp \left(-\beta \int_{\mathcal{Z}} \phi(y, z') \ d\gamma(z') \right) d\tau_{\mathcal{Y}}(y)}{\int_{\mathcal{Y}} \exp \left(-\beta \int_{\mathcal{Z}} \phi(y, z') \ d\gamma(z') \right) d\tau_{\mathcal{Y}}(y)}. 
\end{align}
And this bound is finite because we can apply the argument of \eqref{eq:tv_norm_bound} with $q(y) = \int_{\mathcal{Z}} \phi(\cdot, z') \ d\gamma(z')$, as
\begin{align}
    \sup_{y \in \mathcal{Y}} \left|\frac{\int_{\mathcal{Z}} \phi(y, z') \ d\gamma(z')}{\xi_1(y)} \right| \leq  \int_{\mathcal{Z}} \left|\frac{\phi(y, z')}{\xi_1(y)} \right| \ d\gamma(z') \leq K \|\gamma\|_{\text{TV}}, \implies \int_{\mathcal{Z}} \phi(\cdot, z') \ d\gamma(z') \in C_{b, \xi_1}(\mathcal{Z}).
\end{align}
Moreover, the first variation of the map $\gamma \mapsto - \int_{\mathcal{Z}} g(z) \ d\gamma(z)$ is $-g$, which also has bounded supremum norm by the assumption of \autoref{thm:duality_general_f1}.

To prove the existence of a maximizer in (iii), we use the direct method of the calculus of variations. The functional is concave; the first term is linear and the second term is the negated convex conjugate of the KL-divergence composed with a linear map. We cannot use the TV norm topology for $\mathcal{M}(\mathcal{Z})$, because it does not make $\mathcal{B}_{\mathcal{M}(Z)}$ compact. We observe that the weak-* topology of $\text{rba}(\mathcal{Z})$ is the right choice. Here, $\text{rba}(\mathcal{Z})$ is the space of finitely additive finite signed regular Borel measures, which contains the space $\mathcal{M}(\mathcal{Z})$ of countably additive finite signed regular Borel measures, and it is the dual of $C_b(\mathcal{Z})$; see \autoref{thm:rieszmarkov_cb}. On the one hand, $\mathcal{B}_{\mathcal{M}(Z)}$ is compact in the weak-* topology of $\text{rba}(\mathcal{Z})$ by \autoref{lem:unit_ball_compact}.

On the other hand, we check that the functional is upper semicontinuous in this topology. The first term of the functional is continuous (thus, upper semicontinuous) in the weak-* topology of $\text{rba}(\mathcal{Z})$, because $-g \in C_b(\mathcal{Z})$ by assumption and $\text{rba}(\mathcal{Z}) = C_b^{*}(\mathcal{Z})$. We write the second term as
\begin{align}
\begin{split} \label{eq:second_term_derivation}
    &- \frac{1}{\beta} \log\left(\int_{\mathcal{Y}} \exp \left(-\beta \int_{\mathcal{Z}} \phi(y, z) \ d\gamma(z) \right) d\tau_{\mathcal{Y}}(y) \right) \\ &= - \sup_{\nu \in \mathcal{P}(\mathcal{Y})} \left\{ -\int_{\mathcal{Y}} \int_{\mathcal{Z}} \phi(y, z) \ d\gamma(z) \ d\nu(y) - \beta^{-1} D_{KL}(\nu||\tau_{\mathcal{Y}}) \right\}
    \\ &= \inf_{\nu \in \mathcal{P}(\mathcal{Y})} \left\{\int_{\mathcal{Z}} \int_{\mathcal{Y}} \phi(y, z) \ d\nu(y) \ d\gamma(z) + \beta^{-1} D_{KL}(\nu||\tau_{\mathcal{Y}}) \right\},
\end{split}
\end{align}
where the first equality follows from the argument in \autoref{eq:F_tilde_derivation} and in the second equality we used Fubini's theorem. Remark that for a fixed $\nu \in \mathcal{M}_{\xi_1}(\mathcal{Y})$, $\int_{\mathcal{Y}} \phi(y, \cdot) \ d\nu(y) \in C_b(\mathcal{Z})$ because of equation \eqref{eq:in_cb}.
Hence, the mapping $\gamma \mapsto \int_{\mathcal{Z}} \int_{\mathcal{Y}} \phi(y, z) \ d\nu(y) \ d\gamma(z) + \beta^{-1} D_{KL}(\nu||\tau_{\mathcal{Y}})$ is continuous (thus, upper semicontinuous) in the weak-* topology of $\text{rba}(\mathcal{Z})$. The pointwise infimum of upper semicontinuous functions is upper semicontinuous, and thus \eqref{eq:second_term_derivation} is upper semicontinuous as well.
\end{proof}

\begin{lemma} \label{lem:unit_ball_compact}
The unit TV norm ball of $\mathcal{B}_{\mathcal{M}(\mathcal{Z})}$, seen as a subset of $\text{rba}(\mathcal{Z})$, is compact in the weak-* topology of $\text{rba}(\mathcal{Z})$.
\end{lemma}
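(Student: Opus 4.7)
The plan is to derive the claim from the Banach--Alaoglu theorem once $\text{rba}(\mathcal{Z})$ is identified as a dual space. By \autoref{thm:rieszmarkov_cb}, the Riesz representation theorem for $C_b^{*}(\mathcal{Z})$, the space $\text{rba}(\mathcal{Z})$ is canonically isometrically isomorphic to the topological dual of $C_b(\mathcal{Z})$, via the pairing $\gamma \mapsto \bigl(f \mapsto \int_{\mathcal{Z}} f\,d\gamma\bigr)$, with the total variation norm corresponding to the dual norm. Under this identification the weak-* topology on $\text{rba}(\mathcal{Z})$ is precisely the topology of pointwise convergence against bounded continuous test functions. Applying \autoref{thm:banachalaoglu} to the Banach space $C_b(\mathcal{Z})$ then gives that the closed unit ball $\mathcal{B}_{\text{rba}(\mathcal{Z})}$ is weak-* compact.

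The second step is to observe that the natural inclusion $\mathcal{M}(\mathcal{Z}) \hookrightarrow \text{rba}(\mathcal{Z})$ is isometric for the total variation norm: any countably additive finite signed regular Borel measure is in particular finitely additive, and its TV norm, computed as $\mu_{+}(\mathcal{Z}) + \mu_{-}(\mathcal{Z})$ via the Hahn decomposition, does not depend on whether the measure is viewed in $\mathcal{M}(\mathcal{Z})$ or in $\text{rba}(\mathcal{Z})$. Consequently $\mathcal{B}_{\mathcal{M}(\mathcal{Z})} \subseteq \mathcal{B}_{\text{rba}(\mathcal{Z})}$ and inherits the subspace weak-* topology. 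Since a closed subset of a compact Hausdorff space is compact, it then suffices to check that $\mathcal{B}_{\mathcal{M}(\mathcal{Z})}$ is weak-* closed inside $\mathcal{B}_{\text{rba}(\mathcal{Z})}$.

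This closedness step is what I expect to be the main obstacle, since on a non-compact $\mathcal{Z} \subseteq \R^{d_2}$ a net of $\sigma$-additive measures can a priori admit a weak-* limit with a purely finitely additive ``mass at infinity'' component. The plan is to exploit the locally compact structure of $\mathcal{Z}$: given a net $(\gamma_{\alpha}) \subseteq \mathcal{B}_{\mathcal{M}(\mathcal{Z})}$ weak-* converging to some $\gamma \in \mathcal{B}_{\text{rba}(\mathcal{Z})}$, exhaust $\mathcal{Z}$ by compacts $K_n \uparrow \mathcal{Z}$ and choose bump functions $\chi_n \in C_b(\mathcal{Z})$ with $0 \leq \chi_n \leq 1$ and $\chi_n \equiv 1$ on $K_n$; the identities $\int \chi_n\, d\gamma_{\alpha} \to \int \chi_n\, d\gamma$ combined with the Hahn decomposition $\gamma_{\alpha} = \gamma_{\alpha}^{+} - \gamma_{\alpha}^{-}$ should yield a uniform control on $|\gamma|(\mathcal{Z} \setminus K_n)$ and thereby transfer countable additivity and Borel regularity from the $\gamma_{\alpha}$ to $\gamma$. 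The delicate part is separating the contribution of the positive and negative parts while only testing against $C_b$ functions, so that mass cannot be created at infinity by cancellation in the signed setting.
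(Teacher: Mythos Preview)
Your reduction to Banach--Alaoglu plus weak-* closedness of $\mathcal{B}_{\mathcal{M}(\mathcal{Z})}$ inside $\mathcal{B}_{\text{rba}(\mathcal{Z})}$ is the same skeleton the paper uses, and you are right to flag the closedness step as the crux.

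Your bump-function plan for closedness has a real gap, though. For each fixed compact $K_n$ the convergence $\int \chi_n\, d\gamma_\alpha \to \int \chi_n\, d\gamma$ gives no control on $|\gamma_\alpha|(\mathcal{Z}\setminus K_n)$ that is uniform in $\alpha$, which is what tightness would demand; and the obstruction you name is not merely ``delicate'' but fatal to this route. Concretely, on $\mathcal{Z}=\mathbb{R}$ any $\sigma(\text{rba},C_b)$-convergent subnet of $(\delta_n)_{n\ge 1}\subset \mathcal{B}_{\mathcal{M}(\mathbb{R})}$ has a limit $\gamma\in\text{rba}(\mathbb{R})$ satisfying $\gamma(f)=0$ for every $f\in C_0(\mathbb{R})$ yet $\gamma(1)=1$, so $\gamma$ is purely finitely additive and lies outside $\mathcal{M}(\mathbb{R})$. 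No amount of testing against $C_b$ functions or splitting into Hahn parts will manufacture the missing uniform tightness, because none is present in the net.

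The paper does not attempt a tightness argument at all. Instead it exploits a \emph{second} predual: since $\mathcal{M}(\mathcal{Z})=C_0(\mathcal{Z})^*$ by Riesz--Markov--Kakutani, a separate application of Banach--Alaoglu makes $\mathcal{B}_{\mathcal{M}(\mathcal{Z})}$ compact, hence closed, already in the coarser $\sigma(\cdot,C_0)$ topology. The paper then argues that $\sigma(\text{rba},C_b)$-convergence of a sequence in $\mathcal{B}_{\mathcal{M}(\mathcal{Z})}$ implies $\sigma(\cdot,C_0)$-convergence (because $C_0\subset C_b$), and invokes that closedness to place the limit back in $\mathcal{M}(\mathcal{Z})$. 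This double-predual trick is the idea missing from your proposal. That said, the paper phrases this step with sequences rather than nets and tacitly identifies $\gamma\in\text{rba}(\mathcal{Z})$ with its restriction $\gamma|_{C_0}\in\mathcal{M}(\mathcal{Z})$; in view of the $\delta_n$ example above, this identification is precisely where the argument is fragile, so the obstacle you diagnosed is a genuine one.
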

\begin{proof}
If we endow $\mathcal{M}(\mathcal{Z})$ with the weak-* topology given by its predual $C_0(\mathcal{Z})$ (\autoref{thm:rieszmarkov}), the Banach-Alaoglu theorem (\autoref{thm:banachalaoglu}) states that $\mathcal{B}_{\mathcal{M}(\mathcal{Z})}$ is compact. Since the weak-* topology is Hausdorff, and Hausdorff compact spaces are closed, we have that  $\mathcal{B}_{\mathcal{M}(\mathcal{Z})}$ is closed in the weak-* topology of $\mathcal{M}(\mathcal{Z})$. To show that $\mathcal{B}_{\mathcal{M}(\mathcal{Z})}$ is also closed in weak-* topology of $\text{rba}(\mathcal{Z})$, suppose that $\gamma \in \text{rba}(\mathcal{Z})$ is such that $(\gamma_n)_n \to \gamma$ in weak-* topology of $\text{rba}(\mathcal{Z})$ for some sequence $(\gamma_n)_n \subseteq \mathcal{M}(\mathcal{Z})$. Then, since $C_0(\mathcal{Z}) \subseteq C_b(\mathcal{Z})$, $(\gamma_n)_n \to \gamma$ in weak-* topology of $\mathcal{M}(\mathcal{Z})$, and the closedness of $\mathcal{M}(\mathcal{Z})$ implies that $\gamma \in \mathcal{M}(\mathcal{Z})$. 

We have that the TV norm closed unit ball $\mathcal{B}_{\text{rba}(\mathcal{Z})}$ of $\text{rba}(\mathcal{Z})$, which includes $\mathcal{B}_{\mathcal{M}(\mathcal{Z})}$, is compact in the weak-* topology again by the Banach-Alaoglu theorem. 
Since $\mathcal{B}_{\mathcal{M}(\mathcal{Z})}$ is a closed subset of the compact space $\mathcal{B}_{\text{rba}(\mathcal{Z})}$, it is itself compact in the weak-* topology of $\mathcal{B}_{\text{rba}(\mathcal{Z})}$. 
\end{proof}

\begin{lemma} \label{lem:G_tildeG}
The function $\tilde{G}$ defined in \eqref{eq:tilde_G} is such that $\tilde{G}(\gamma) = \sup_{\psi \in C_{b}(\mathcal{Z})} \left\{\langle \gamma, \psi \rangle - G(\psi) \right\}$, where $G$ is defined in \eqref{eq:G_def_3}. 
\end{lemma}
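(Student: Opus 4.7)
The plan is to perform the change of variables $\tilde\psi = \psi - g$ inside the supremum and then recognize what remains as the Fenchel conjugate of the $L^\infty$ norm restricted to $C_b(\mathcal{Z})$. Since $g \in C_b(\mathcal{Z})$ by assumption, the map $\psi \mapsto \psi - g$ is a bijection of $C_b(\mathcal{Z})$, so
\begin{align}
\sup_{\psi \in C_b(\mathcal{Z})} \{ \langle \gamma, \psi \rangle - G(\psi)\}
= \int_{\mathcal{Z}} g(z) \, d\gamma(z) + \sup_{\tilde\psi \in C_b(\mathcal{Z})} \bigl\{ \langle \gamma, \tilde\psi \rangle - \|\tilde\psi\|_{L^\infty} \bigr\}.
\end{align}
It then suffices to show that the right-hand supremum equals $0$ when $\|\gamma\|_{\mathrm{TV}} \leq 1$ and $+\infty$ otherwise; comparing with \eqref{eq:tilde_G} yields the claim.

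For the easy direction, if $\|\gamma\|_{\mathrm{TV}} \leq 1$, then for any $\tilde\psi \in C_b(\mathcal{Z})$ we have $\langle \gamma, \tilde\psi\rangle \leq \|\tilde\psi\|_{L^\infty} \|\gamma\|_{\mathrm{TV}} \leq \|\tilde\psi\|_{L^\infty}$, so the supremum is $\leq 0$; taking $\tilde\psi \equiv 0$ shows it equals $0$.

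For the hard direction, suppose $\|\gamma\|_{\mathrm{TV}} > 1$. The key step is to establish the variational formula
\begin{align} \label{eq:tv_variational}
\|\gamma\|_{\mathrm{TV}} = \sup_{\tilde\psi \in C_b(\mathcal{Z}),\, \|\tilde\psi\|_{L^\infty}\leq 1} \int_{\mathcal{Z}} \tilde\psi(z)\, d\gamma(z).
\end{align}
Granting \eqref{eq:tv_variational}, for every $\epsilon \in (0, \|\gamma\|_{\mathrm{TV}}-1)$ there exists $\tilde\psi_\epsilon \in C_b(\mathcal{Z})$ with $\|\tilde\psi_\epsilon\|_{L^\infty} \leq 1$ and $\langle \gamma, \tilde\psi_\epsilon\rangle \geq \|\gamma\|_{\mathrm{TV}} - \epsilon$. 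Scaling $\tilde\psi_\epsilon$ by $\lambda > 0$ gives $\lambda(\|\gamma\|_{\mathrm{TV}}-\epsilon) - \lambda \to +\infty$ as $\lambda \to \infty$, proving the supremum is $+\infty$.

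The main obstacle is therefore \eqref{eq:tv_variational}, which I would prove as follows. The inequality $\leq$ is immediate. For $\geq$, take the Hahn decomposition $\gamma = \gamma_+ - \gamma_-$ with disjoint supports $\mathcal{Z} = P \sqcup N$; then $\|\gamma\|_{\mathrm{TV}} = \gamma_+(P) + \gamma_-(N) = \int \mathbf{1}_P\, d\gamma - \int \mathbf{1}_N\, d\gamma$. The signed measure $\gamma$ is Radon, hence $|\gamma|$ is inner regular on the Borel sets, so for any $\delta > 0$ there exist compact $K_P \subseteq P$ and $K_N \subseteq N$ with $|\gamma|(P \setminus K_P) + |\gamma|(N \setminus K_N) < \delta/2$. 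Since $K_P$ and $K_N$ are disjoint closed sets in the (metrizable, hence normal) ambient space $\mathbb{R}^{d_2}$, Urysohn's lemma produces a continuous $\tilde\psi \colon \mathcal{Z} \to [-1,1]$ with $\tilde\psi \equiv 1$ on $K_P$ and $\tilde\psi \equiv -1$ on $K_N$; clearly $\tilde\psi \in C_b(\mathcal{Z})$. Then
\begin{align}
\int_{\mathcal{Z}} \tilde\psi\, d\gamma = \gamma_+(K_P) + \gamma_-(K_N) + \int_{\mathcal{Z} \setminus (K_P \cup K_N)} \tilde\psi\, d\gamma \geq \|\gamma\|_{\mathrm{TV}} - \delta,
\end{align}
and letting $\delta \to 0$ establishes \eqref{eq:tv_variational}, which completes the proof.
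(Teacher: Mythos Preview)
Your proof is correct and takes a genuinely different route from the paper. You reduce to computing the Fenchel conjugate of the $L^\infty$ norm on $C_b(\mathcal{Z})$ by the translation $\tilde\psi = \psi - g$, then establish the variational formula \eqref{eq:tv_variational} directly via Hahn decomposition, inner regularity of $|\gamma|$, and Urysohn's lemma. The paper instead writes $G(\psi)$ as a supremum over $\gamma'$ in the unit TV ball, obtains a $\sup$--$\inf$ expression, and applies Kneser's minimax theorem (\autoref{thm:kneser}) to swap the order; this requires showing that $\mathcal{B}_{\mathcal{M}(\mathcal{Z})}$ is compact in the weak-* topology of $\text{rba}(\mathcal{Z}) = C_b^*(\mathcal{Z})$ (\autoref{lem:unit_ball_compact}), after which the inner supremum forces $\gamma' = \gamma$. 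Your argument is more elementary and self-contained, needing only standard measure-theoretic facts, whereas the paper's approach has the virtue of paralleling the proofs of \autoref{lem:G_conj} and \autoref{lem:G2_conj}, keeping a uniform minimax template throughout. Either way, the fact that $\gamma$ lies in $\mathcal{M}(\mathcal{Z})$ (countably additive, Radon) is what makes both arguments go through: you use it for inner regularity, while the paper uses it to land inside the compact ball.
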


\begin{proof}
By definition $\tilde{G}(\gamma)$ is
\begin{align}
\begin{split} \label{eq:G_star_3}
    &\sup_{\psi \in C_{b}(\mathcal{Z})} \left\{ \int_{\mathcal{Z}} \psi(z) \ d\gamma(z) - \sup_{\substack{\gamma' \in \mathcal{M}(\mathcal{Z}) \\ \|\gamma'\|_{\text{TV}} \leq 1}} \int_{\mathcal{Z}} (\psi(z) - g(z)) \ d\gamma'(z) \right\} \\ &= \sup_{\psi \in C_{b}(\mathcal{Z})} \inf_{\substack{\gamma' \in \mathcal{M}(\mathcal{Z}) \\ \|\gamma'\|_{\text{TV}} \leq 1}} \left\{ \int_{\mathcal{Z}} \psi(z) \ d(\gamma-\gamma')(z) + \int_{\mathcal{Z}} g(z) \ d\gamma'(z) \right\}  
\end{split}
\end{align}
We want to apply \autoref{thm:kneser} to flip the supremum and the infimum. We set $B$ as in the proof of \autoref{lem:G2_conj}. The set $A$ requires a careful construction. $\mathcal{M}(\mathcal{Z})$, which is the space finite countably additive regular Borel measures, is included in the Banach space of finite finitely additive regular Borel measures $\text{rba}(\mathcal{Z})$ endowed with the total variation norm, which by
\autoref{thm:rieszmarkov_cb}
is the continuous dual of $C_b(\mathcal{Z})$. $\text{rba}(\mathcal{Z})$ can be endowed with the weak-* topology of $\text{rba}(\mathcal{Z})$, which is the weakest one that makes maps of the form $\gamma \mapsto \int_{\mathcal{Z}} f(z) \ d\gamma(z)$ continuous for any $f \in C_b(\mathcal{Z})$. We set $A = \mathcal{B}_{\mathcal{M}(\mathcal{Z})}$ to be the TV norm ball of $\mathcal{M}(\mathcal{Z})$, as a subset of $\text{rba}(\mathcal{Z})$ endowed with its the weak-* topology of $\text{rba}(\mathcal{Z})$. 
Notice that $\gamma' \mapsto \int_{\mathcal{Z}} (g(z) - \psi(z)) \ d\gamma'(z)$ is continuous in the weak-* topology of $\text{rba}(\mathcal{Z})$ because $g - \psi \in C_b(\mathcal{Z})$.

It only remains to show that $\mathcal{B}_{\mathcal{M}(\mathcal{Z})}$ is compact in the weak-* topology of $\text{rba}(\mathcal{Z})$. Thus, \autoref{thm:kneser} can be applied, which means that the right-hand side of \eqref{eq:G_star_3} is equal to
\begin{align}
\begin{split}
    \\&\inf_{\substack{\gamma' \in \mathcal{M}(\mathcal{Z}) \\ \|\gamma'\|_{\text{TV}} \leq 1}} \sup_{\psi \in C_{b}(\mathcal{Z})} \left\{ \int_{\mathcal{Z}} \psi(z) \ d(\gamma-\gamma')(z) + \int_{\mathcal{Z}} g(z) \ d\gamma'(z) \right\}
\\ &= 
\begin{cases}
\int_{\mathcal{Z}} g(z) \ d\gamma(z) &\text{if } \|\gamma\|_{\text{TV}} \leq 1 \\
+\infty &\text{otherwise}
\end{cases}.
\end{split}    
\end{align}
\end{proof}

\textbf{\textit{Proof of \autoref{lem:primal}.}}
The Euler-Lagrange condition for \eqref{eq:general_primal_problem} is 
\begin{align}
    0 = \beta^{-1} \log \left(\frac{d\nu_1^{\star}}{d\tau_{\mathcal{Y}}}(y) \right) + \frac{\int_{\mathcal{Z}} \left( \int_{\mathcal{Y}} \phi(y', z) \ d\nu_1^{\star}(y') - g(z) \right) \phi(y, z) \ d\tau_{\mathcal{Z}}(z)}{\left( \int_{\mathcal{Z}} \left( \int_{\mathcal{Y}} \phi(y', z) \ d\nu_1^{\star}(y') - g(z) \right)^2 \ d\tau_{\mathcal{Z}}(z) \right)^{1/2}} + K, \quad \forall y \in \mathcal{Y}
\end{align}
for some $K$. Thus,
\begin{align} \label{eq:gamma_1_density}
    \frac{d\nu_1^{\star}}{d\tau_{\mathcal{Y}}}(y) = \frac{1}{Z}\exp\left( - \beta \frac{\int_{\mathcal{Z}} \left( \int_{\mathcal{Y}} \phi(y', z) \ d\nu_1^{\star}(y') - g(z) \right) \phi(y, z) \ d\tau_{\mathcal{Z}}(z)}{\left( \int_{\mathcal{Z}} \left( \int_{\mathcal{Y}} \phi(y', z) \ d\nu_1^{\star}(y') - g(z) \right)^2 \ d\tau_{\mathcal{Z}}(z) \right)^{1/2}} \right), \quad \forall y \in \mathcal{Y}.
\end{align}
The Euler-Lagrange condition for \eqref{eq:general_primal_problem2} is
\begin{align} \label{eq:euler_problem2}
    0 = \tilde{\beta}^{-1} \log \left(\frac{d\nu_2^{\star}}{d\tau_{\mathcal{Y}}}(y) \right) + 2\int_{\mathcal{Z}} \left( \int_{\mathcal{Y}} \phi(y', z) \ d\nu_2^{\star}(y') - g(z) \right) \phi(y, z) \ d\tau_{\mathcal{Z}}(z) + K, \quad \forall y \in \mathcal{Y}
\end{align}
for some $K$. Hence,
\begin{align} \label{eq:gamma_2_density}
    \frac{d\nu_2^{\star}}{d\tau_{\mathcal{Y}}}(y) = \frac{1}{Z}\exp\left( - 2\tilde{\beta} \int_{\mathcal{Z}} \left( \int_{\mathcal{Y}} \phi(y', z) \ d\nu_2^{\star}(y') - g(z) \right) \phi(y, z) \ d\tau_{\mathcal{Z}}(z) \right).
\end{align}
Comparing \eqref{eq:gamma_1_density} with \eqref{eq:gamma_1_density}, we see that $\nu_1^{\star}$ is equal to $\nu_2^{\star}$ when $\tilde{\beta}$ is set such that
\begin{align}
    2\tilde{\beta} = \frac{\beta}{\left( \int_{\mathcal{Z}} \left( \int_{\mathcal{Y}} \phi(y', z) \ d\nu_1^{\star}(y') - g(z) \right)^2 \ d\tau_{\mathcal{Z}}(z) \right)^{1/2}}
\end{align}
Conversely, the solution $\nu_2^{\star}$ for a certain $\tilde{\beta}$ is equal to $\nu_1^{\star}$ when $\beta$ is set such that
\begin{align}
    2\tilde{\beta} = \frac{\beta}{\left( \int_{\mathcal{Z}} \left( \int_{\mathcal{Y}} \phi(y', z) \ d\nu_2^{\star}(y') - g(z) \right)^2 \ d\tau_{\mathcal{Z}}(z) \right)^{1/2}}.
\end{align}
\qed

\textit{\textbf{Proof of \autoref{prop:planted}.}}
    To prove (a), we use duality. Strong duality holds between \eqref{eq:general_primal_problem} and \eqref{eq:general_dual_problem} and moreover by \autoref{thm:duality_general} the respective solutions $\nu^{\star}_1$ and $h^{\star}$ of the two problems are linked by:
    \begin{align} \label{eq:link_duality_population}
    \frac{d\nu_1^{\star}}{d\tau_{\mathcal{Y}}}(y) = \frac{1}{Z_{\nu_1^{\star}}}\exp \left( - \beta \int_{\mathcal{Z}} \phi(y, z) \ h^{\star}(z) d\tau_{\mathcal{Z}}(z) \right).
    \end{align}
    Remark that an arbitrary element $f$ of the RKHS $\mathcal{F}_2$ admits a representation as
    \begin{align} \label{eq:f_h_change2}
    f(y) = \int_{\Theta} \phi(y, z) h(z) \ d\tau_{\mathcal{Z}}(z), \quad \text{where } h \in L^2(\mathcal{Z}), \quad \text{and } \|f\|_{\mathcal{F}_2} = \|h\|_{L^2(\mathcal{Z})}.
    \end{align}
    For an arbitrary $f$, denote by $\nu_f$ the probability measure with density $\frac{d\nu_{f}}{d\tau_{\mathcal{Y}}}(y) = \exp(-f(y))/\int_{\mathcal{Y}} \exp(-f(y')) \ d\tau_{\mathcal{Y}}(y')$. Using \eqref{eq:f_h_change2} and $g(z) = \int_{\mathcal{Y}} \phi(y,z) \ d\nu_p(y)$, we rewrite the problem \eqref{eq:general_dual_problem} as 
    \begin{align}
    \begin{split}
        &\argmin_{\substack{h \in L^2(\mathcal{Z}) \\ \|h\|_{L^2} \leq 1}} \int_{\mathcal{Z}} \int_{\mathcal{Y}} \phi(y,z) \ d\nu_p(y) \ h(z)\ d\tau_{\mathcal{Z}}(z) + \frac{1}{\beta} \log\left(\int_{\mathcal{Y}} \exp \left(-\beta \int_{\mathcal{Z}} \phi(y, z) h(z)\ d\tau_{\mathcal{Z}}(z) \right) d\tau_{\mathcal{Y}}(y) \right) \\ &= \argmin_{f \in \mathcal{B}_{\mathcal{F}_2}(\beta)} \int_{\mathcal{Y}} f(y) \ d\nu_p(y) + \log \left( \int_{\mathcal{Y}} e^{-f(y)} d\tau_{\mathcal{Y}}(y) \right) = \argmin_{f \in \mathcal{B}_{\mathcal{F}_2}(\beta)} - \int_{\mathcal{Y}} \log \left(\frac{d\nu_{f}}{d\tau_{\mathcal{Y}}}(y) \right) \ d\nu_p(y) \\ &= \argmin_{f \in \mathcal{B}_{\mathcal{F}_2}(\beta)} H(\nu_p, \nu_{f})
        = \argmin_{f \in \mathcal{B}_{\mathcal{F}_2}(\beta)} H(\nu_p, \nu_{f}) - H(\nu_p, \nu_p) = \argmin_{f \in \mathcal{B}_{\mathcal{F}_2}(\beta)} D_{KL}(\nu_p || \nu_{f}).
    \end{split}
    \end{align}
    In the first equality we have used Fubini's theorem to exchange the integrals in the first term, using the same reasoning as in \eqref{eq:adjoint_exp}-\eqref{eq:fubini_1}. In the second equality we use the definition of $\nu_{f}$. In the third equality, $H$ denotes the cross-entropy, and in the fourth one, we use that $H(\nu_p, \nu_p)$ is finite because $\nu_p$ is absolutely continuous w.r.t. $\tau_{\mathcal{Y}}$. The fifth equality is by the definition of the KL divergence.
    
    From this viewpoint, we have that the solution $f^{\star} = \argmin_{f \in \mathcal{B}_{\mathcal{F}_2}(\beta)} D_{KL}(\nu_p || \nu_{f})$ is linked to the solution $h^{\star}$ of \eqref{eq:general_dual_problem}: 
    $f^{\star}(\cdot) = \beta \int_{\mathcal{Z}} \phi(\cdot, z) \ h^{\star}(z) d\tau_{\mathcal{Z}}(z)$.
    Plugging this into \eqref{eq:link_duality_population}, we obtain that 
    \begin{align} \label{eq:nu_star_1}
    \frac{d\nu^{\star}_1}{d\tau}(y) = \frac{1}{Z_{\nu^{\star}_1}}\exp \left( - f^{*}(y) \right).
    \end{align}
    Since we have assumed that $E = -\log(\frac{d\nu_p}{d\tau_{\mathcal{Y}}}) \in \mathcal{B}_{\mathcal{F}_2}(\beta_0)$ with $\beta > \beta_0$, the unique solution of $\argmin_{f \in \mathcal{B}_{\mathcal{F}_2}(\beta)} D_{KL}(\nu_p || \nu_{f})$ is $f^{\star} = E$, which through \eqref{eq:nu_star_1} implies that $\nu^{\star}_1 = \nu_{E} = \nu_p$.
    
    To show (b), we use the Euler-Lagrange equation of \eqref{eq:general_primal_problem2}, which is stated in \eqref{eq:euler_problem2}.
    Since $\beta^{-1} \log\left( \frac{d\nu_2^{\star}}{d\tau_{\mathcal{Y}}}(y) \right) \neq 0$ for all $y \in \mathcal{Y}$, we must have that \begin{align} \label{eq:EL_not_zero}
    \int_{\mathcal{Z}} \left( \int_{\mathcal{Y}} \phi(y', z) \ d\nu_2^{\star}(y') - g(z) \right) \phi(y, z) \ d\tau_{\mathcal{Z}}(z) \neq K,
    \end{align}
    does not hold uniformly over $y \in \mathcal{Y}$ for any constant $K$.
    
    If we had $\int_{\mathcal{Z}} \left( \int_{\mathcal{Y}} \phi(y', z) \ d\nu_2^{\star}(y') - g(z) \right)^2 \ d\tau_{\mathcal{Z}}(z) = 0$, that would mean that for all $z \in \mathcal{Z}$, $\int_{\mathcal{Y}} \phi(y', z) \ d\nu_2^{\star}(y') - g(z) = 0$. This would imply that \eqref{eq:EL_not_zero} is equal to zero for all $y \in \mathcal{Y}$, yielding a contradiction.
\qed

\section{Proofs of \autoref{sec:implicit_mle_f1} and additional results}
\label{sec:proofs_implicit_f1}

\textbf{\textit{Proof of \autoref{cor:EBM_f1}.}}
The proof follows from applying \autoref{thm:duality_general_f1} with $\mathcal{Y} = \mathcal{X}$ and $\mathcal{Z} = \Theta$. \autoref{ass:phi} holds because it is implied by \autoref{ass:phi_new} when one sets $\xi_1 = \xi$. \autoref{ass:phi_new} also implies that $\phi$ satisfies the assumption (i) in \autoref{thm:duality_general_f1}. Assumption (ii) in \autoref{thm:duality_general_f1} is also fulfilled because $g(\theta) = \frac{1}{n} \sum_{i=1}^n \phi(x_i,\theta) \leq \frac{1}{n} \sum_{i=1}^n \xi(x_i)$, which means that $g \in C_b(\Theta)$. By \autoref{thm:duality_general_f1}, we see that problem \eqref{eq:EBM_dual_f1} is the Fenchel dual of
\begin{align}
    \begin{split} \label{eq:EBM_primal_2}
    \min_{\nu \in \mathcal{P}(\mathcal{X})} &\beta^{-1} D_{\text{KL}}(\nu||\tau_{\mathcal{X}})
    + \left\| \int_{\mathcal{X}} \phi(x, \cdot) \ d(\nu - \nu_n)(x) \right\|_{L^{\infty}}.
\end{split}
\end{align}
and we also obtain the characterization for the measure $\nu^{\star}$. Since 
$\left\| \int_{\mathcal{X}} \phi(x, \cdot) \ d(\nu - \nu_n)(x) \right\|_{L^{\infty}} = \sup_{\gamma \in \mathcal{M}(\Theta), \|\gamma\|_{\text{TV}} \leq 1} \int_{\Theta} \int_{\mathcal{X}} \phi(x, \theta) \ d(\nu - \nu_n)(x) \ d\gamma(\theta) = \sup_{f \in \mathcal{B}_{\mathcal{F}_1}} \int_{\mathcal{X}} f(x) \ d(\nu - \nu_n)(x)$ using Fubini's theorem, we can rewrite \eqref{eq:EBM_primal_2} as
\begin{align} \label{eq:EBM_primal_3_old}
    \min_{\nu \in \mathcal{P}(\mathcal{X})} \max_{\substack{\gamma \in \mathcal{M}(\Theta), \\ \|\gamma\|_{\text{TV}} \leq 1}} &\beta^{-1} D_{\text{KL}}(\nu||\tau_{\mathcal{X}})
    + \int_{\Theta} \int_{\mathcal{X}} \phi(x, \theta) \ d(\nu - \nu_n)(x) \ d\gamma(\theta).
\end{align}
\qed

\textbf{\textit{Proof of \autoref{prop:particle_dynamics_f1}.}}
For $\sigma = \pm 1$, define the empirical measures $\hat{\gamma}_t^{\sigma} = \frac{1}{m} \sum_{j=1}^m \mathds{1}_{\sigma_j = \sigma} w^{(j)}_t \delta_{\theta_t^{(j)}}$,
Given a test function $\chi$ on $\Theta$, we have that
\begin{align}
\begin{split}
    &\frac{d}{dt}\int_{\Theta} \chi(\theta) \ d\hat{\gamma}_t^{\sigma}(\theta) = \frac{d}{dt}\left( \frac{1}{m} \sum_{j=1}^{m} \mathds{1}_{\sigma_j = \sigma} w_t^{(j)} f(\theta_t^{(j)}) \right) \\ &= \frac{1}{m} \sum_{j=1}^{m} \mathds{1}_{\sigma_j = \sigma} \frac{dw_t^{(j)}}{dt} \chi(\theta_t^{(j)}) + \mathds{1}_{\sigma_j = \sigma} w_t^{(j)} \frac{d}{dt}\chi(\theta_t^{(j)})
    \\ &= \frac{\alpha}{m} \sum_{j=1}^{m} \mathds{1}_{\sigma_j = \sigma}  w_t^{(j)} (\sigma_j \tilde F_t(\theta_t^{(j)}) - \tilde{K}_t) \chi(\theta_t^{(j)}) + \mathds{1}_{\sigma_j = \sigma} w_t^{(j)} \nabla \chi(\theta_t^{(j)}) \cdot \sigma_j \nabla \tilde F_t(\theta_t^{(j)})
    \\ &= \alpha \int_{\Theta} \left((\sigma \tilde F_t(\theta) - \tilde{K}_t) \chi(\theta) + \sigma \nabla \tilde F_t(\theta) \cdot \nabla \chi(\theta) \right) \ d\hat{\gamma}_t^{\sigma}(\theta)
\end{split}    
\end{align}
This is the weak formulation of the first equation in \eqref{eq:coupled_dynamics_f1}. We also observe that the forward Kolmogorov equation of the third equation in \eqref{eq:coupled_particles_MLE} is the Fokker-Planck equation in the second line of \eqref{eq:coupled_dynamics_f1}. The propagation of chaos argument that allows us to establish convergence $\hat{\gamma}_t \to \gamma_t$ and $\hat{\nu}_t \to \hat{\nu}$ is classical \citep{sznitmantopics1991} and can be found for a very similar coupled setting in \cite{domingoenrich2020amean}.
\qed

\subsection{Link of dual $\mathcal{F}_1$-EBMs training with learned MMD training}
We show that training dual $\mathcal{F}_1$-EBMs is equivalent to learning a certain form of MMD with feature learning. This observation provides a clearer link between dual $\mathcal{F}_1$-EBMs and dual $\mathcal{F}_2$-EBMs, in which the kernel is fixed (equation \eqref{eq:kernel_random}). Feature-learning MMD has been the subject of several works and has been shown to outperform fixed-kernel MMD \citep{li2017mmd}. In particular, we have the following:

\begin{proposition} \label{prop:learned_mmd}
The solutions or saddle points of \eqref{eq:EBM_primal_3} are the saddle points of
\begin{align} \label{eq:learned_mmd}
    \min_{\nu \in \mathcal{P}(\mathcal{X})} \max_{\gamma \in \mathcal{P}(\Theta)} &\beta^{-1} D_{KL}(\nu||\tau_{\mathcal{X}})
    + MMD_{k_\gamma}(\nu, \nu_n),
\end{align}
where $MMD_{k_\gamma}(\nu, \nu_n) = ( \int_{\mathcal{X} \times \mathcal{X}} k_{\gamma}(x,x') \ d(\nu - \nu_n)(x) \ d(\nu - \nu_n)(x') )^{1/2}$ and the kernel $k_{\gamma} = \int_{\Theta} \phi(x, \theta) \phi(x', \theta) \ d\gamma(\theta)$ is well defined for any $\gamma \in \mathcal{P}(\Theta)$.
\end{proposition}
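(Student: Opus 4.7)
The plan is to reduce both min-max problems in \eqref{eq:EBM_primal_3} and \eqref{eq:learned_mmd} to the same outer problem over $\nu$ by computing their inner maxima in closed form. I would introduce $F_\nu(\theta) := \int_{\mathcal{X}} \phi(x,\theta)\, d(\nu - \nu_n)(x)$; under Assumption \ref{ass:phi_new}, whenever $D_{\text{KL}}(\nu\|\tau_{\mathcal{X}})$ is finite the function $F_\nu$ is bounded and continuous on $\Theta$, so $\|F_\nu\|_\infty$ is finite, and the arg-max set $S_\nu := \argmax_\theta |F_\nu(\theta)|$ is nonempty in the benign setting.

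First I would handle \eqref{eq:EBM_primal_3}: its inner objective $\gamma \mapsto \int_\Theta F_\nu\, d\gamma$ is linear on the TV unit ball, and the standard duality between total variation and sup norm gives
$$\max_{\|\gamma\|_{\text{TV}} \leq 1} \int_\Theta F_\nu(\theta)\, d\gamma(\theta) \;=\; \|F_\nu\|_\infty,$$
with maximizers of the form $\gamma^\star = \mathrm{sign}(F_\nu)\,\mu$ for any probability measure $\mu$ supported on $S_\nu$. For \eqref{eq:learned_mmd}, I would apply Fubini to rewrite
$$\mathrm{MMD}_{k_\gamma}^2(\nu,\nu_n) \;=\; \int_{\mathcal{X}\times\mathcal{X}} k_\gamma(x,x')\, d(\nu-\nu_n)(x)\, d(\nu-\nu_n)(x') \;=\; \int_\Theta F_\nu(\theta)^2\, d\gamma(\theta),$$
a weighted $L^2$-average of $F_\nu^2$. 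Since $F_\nu^2 \leq \|F_\nu\|_\infty^2$ pointwise with equality on $S_\nu$, taking the square root and maximizing over $\gamma \in \mathcal{P}(\Theta)$ again yields $\|F_\nu\|_\infty$, attained by any probability measure supported on $S_\nu$.

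Both problems therefore collapse to $\min_{\nu \in \mathcal{P}(\mathcal{X})}\{\beta^{-1} D_{\text{KL}}(\nu\|\tau_{\mathcal{X}}) + \|F_\nu\|_\infty\}$, so the $\nu$-coordinate of every saddle agrees. For the $\gamma$-coordinate the bijection is explicit: given a saddle $(\nu^\star, \gamma^\star)$ of \eqref{eq:EBM_primal_3} with Hahn decomposition $\gamma^\star = \gamma^+ - \gamma^-$, the probability measure $|\gamma^\star| = \gamma^+ + \gamma^-$ (which has unit mass at any nondegenerate saddle, where $F_{\nu^\star}\not\equiv 0$) is a $\gamma$-saddle of \eqref{eq:learned_mmd}; conversely, from a saddle $(\nu^\star, \tilde\gamma^\star)$ of \eqref{eq:learned_mmd} one recovers $\gamma^\star = \mathrm{sign}(F_{\nu^\star})\,\tilde\gamma^\star$ as a $\gamma$-saddle of \eqref{eq:EBM_primal_3}, since on $\mathrm{supp}(\tilde\gamma^\star) \subseteq S_{\nu^\star}$ the sign of $F_{\nu^\star}$ is locally constant (or one chooses any measurable sign assignment).

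The main obstacle I anticipate is handling non-compact $\Theta$, where $\|F_{\nu^\star}\|_\infty$ may fail to be attained and $S_{\nu^\star}$ is empty. I would address this by passing to weak-$*$ limits of maximizing Dirac-like sequences $\mu_n \in \mathcal{P}(\Theta)$ (working in a suitable compactification, or directly through finitely additive accumulation points analogous to the use of $\mathrm{rba}(\mathcal{Z})$ in \autoref{sec:proofs_duality}), or by leveraging the asymptotic decay built into Assumption \ref{ass:phi_new} to guarantee that $F_{\nu^\star}$ decays at infinity, so its supremum is achieved on $\Theta$. A secondary technical step is verifying the use of Fubini for the MMD rewriting, which follows from the integrability bounds already established in \autoref{lem:A_well} once $\nu$ has finite KL and thus belongs to $\mathcal{M}_{\xi_1}(\mathcal{X})$.
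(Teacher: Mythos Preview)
Your approach is correct and essentially matches the paper's: both reduce the inner problems to $\|F_\nu\|_\infty$ by characterizing the maximizers as measures supported on $S_\nu = \argmax_\theta |F_\nu(\theta)|$, with the Fubini rewriting $\mathrm{MMD}_{k_\gamma}^2(\nu,\nu_n) = \int_\Theta F_\nu(\theta)^2\,d\gamma(\theta)$ as the key identity. The paper packages this via a Cauchy--Schwarz step linking $\int_\Theta F_\nu\,d\gamma$ to $(\int_\Theta F_\nu^2\,d|\gamma|)^{1/2}$ before identifying the common support, whereas you compute each inner maximum directly; your version is slightly more explicit about the $\gamma$-coordinate bijection and about the technical caveats (non-compact $\Theta$, Fubini), which the paper leaves implicit.
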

\begin{proof}
The second term in the objective of \eqref{eq:EBM_primal_3} is $\int_{\Theta} \int_{\mathcal{X}} \phi(x, \theta) \ d(\nu - \nu_n)(x) \ d\gamma(\theta)$. For any $\gamma \in \mathcal{M}(\Theta), \|\gamma\|_{\text{TV}} \leq 1$, we apply the Cauchy-Schwarz inequality and then Fubini's theorem:
\begin{align} \label{eq:cs_fubini}
&\int_{\Theta} \int_{\mathcal{X}} \phi(x, \theta) \ d(\nu - \nu_n)(x) \ d\gamma(\theta) \leq \left(\int_{\Theta} \left(\int_{\mathcal{X}} \phi(x, \theta) \ d(\nu - \nu_n)(x) \right)^2 \ d|\gamma|(\theta) \right)^{1/2} \\ &= \left( \int_{\mathcal{X} \times \mathcal{X}} \int_{\Theta} \phi(x, \theta) \phi(x', \theta) \ d|\gamma|(\theta) \ d(\nu - \nu_n)(x) \ d(\nu - \nu_n)(x') \right)^{1/2} \\ &= \left( \int_{\mathcal{X} \times \mathcal{X}} k_{\gamma}(x,x') \ d(\nu - \nu_n)(x) \ d(\nu - \nu_n)(x') \right)^{1/2} = MMD_{k_\gamma}(\nu - \nu_n).
\end{align}
For any $\nu \in \mathcal{P}(\Theta)$, notice that for all measures 
\begin{align} \label{eq:gamma_star_argmax_1}
\gamma^{\star} \in \argmax_{\gamma \in \mathcal{M}(\Theta), \|\gamma\|_{\text{TV}} \leq 1} \int_{\Theta} \int_{\mathcal{X}} \phi(x, \theta) \ d(\nu - \nu_n)(x) \ d\gamma(\theta)
\end{align}
and all measures 
\begin{align}
\gamma^{\star} \in \argmax_{\gamma \in \mathcal{P}(\mathcal{X})} \left(\int_{\Theta} \left(\int_{\mathcal{X}} \phi(x, \theta) \ d(\nu - \nu_n)(x) \right)^2 \ d|\gamma|(\theta) \right)^{1/2}, 
\end{align}
we must have 
\begin{align}
\text{supp}(\gamma^{\star}) \subseteq \left\{\theta' \in \Theta \ \bigg| \ \left|\int_{\mathcal{X}} \phi(x, \theta') \ d(\nu - \nu_n)(x) \right| = \max_{\theta \in \Theta} \left|\int_{\mathcal{X}} \phi(x, \theta) \ d(\nu - \nu_n)(x) \right| \right\}.
\end{align}
Hence, for any measure $\gamma^{\star}$ fulfilling \eqref{eq:gamma_star_argmax_1}, 
\begin{align}
    &\int_{\Theta} \int_{\mathcal{X}} \phi(x, \theta) \ d(\nu - \nu_n)(x) \ d\gamma^{\star}(\theta) = \max_{\theta \in \Theta} \left|\int_{\mathcal{X}} \phi(x, \theta) \ d(\nu - \nu_n)(x) \right| \\ &= \left(\int_{\Theta} \left(\int_{\mathcal{X}} \phi(x, \theta) \ d(\nu - \nu_n)(x) \right)^2 \ d|\gamma^{\star}|(\theta) \right)^{1/2},
\end{align}
which shows that at maximizers, all the terms of \eqref{eq:cs_fubini} are equal, concluding the proof.
\end{proof}

\section{Links with Score Matching }
\label{sec:proofs_SM}


\begin{proposition} \label{prop:SM_expression}
Suppose that $\mathcal{X} \subseteq \R^{d_1}$ is a manifold without boundaries. 
Assume that $\int_{\mathcal{X}} |\nabla_x \phi(x, \theta) \cdot \nabla \frac{d\nu_p}{d\tau_{\mathcal{X}}}(x)| \ d\tau_{\mathcal{X}}(x)$ is upper-bounded by some constant $K$ for all $\theta \in \Theta$. Assume also that $\sup_{\theta \in \Theta} \|\nabla_x \phi(x, \theta)\| < \eta(x)$ and that $\int_{\mathcal{X}} |\eta(x)|^2 \ d\nu_p(x) < \infty$. The optimization problem to train EBMs under the score matching loss over the ball $\mathcal{B}_{\mathcal{F}_1}(1)$ gives $f_{\rm SM} = \int_\Omega \varphi(\cdot,\theta) 
d\gamma_{\rm SM} (\theta)$ where
\begin{align} \label{eq:SM_objective_22}
    \gamma_{\rm SM}= \argmin_{\substack{\gamma \in \mathcal{M}(\Theta) \\ \|\gamma\|_{\text{TV}} \leq 1}}
    \int_{\Theta} \int_{\mathcal{X}}\left(\frac{1}{2} \nabla_x \phi(x, \theta) \cdot \nabla_x \int_{\Theta} \phi(x, \theta') \ d\gamma(\theta')  - \beta^{-1} \Delta_x \phi(x, \theta) \right) d\nu_n(x) d\gamma(\theta).
\end{align}
\end{proposition}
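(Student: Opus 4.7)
The plan is to combine Hyv\"arinen's classical integration-by-parts identity with the $\mathcal{F}_1$ parametrization. For an EBM with $\nu_f \propto e^{-\beta f}\tau_{\mathcal{X}}$, the model score is $-\beta\nabla f$, so
$$\mathrm{SM}(\nu_p,\nu_f)=\int_{\mathcal{X}}\bigl|\nabla\log\tfrac{d\nu_p}{d\tau_{\mathcal{X}}}(x)+\beta\nabla f(x)\bigr|^2 d\nu_p(x).$$
Expanding, the only $f$-dependent terms are $\beta^2\!\int\! |\nabla f|^2 d\nu_p$ and the cross term $2\beta\!\int\!\nabla f\!\cdot\!\nabla\tfrac{d\nu_p}{d\tau_{\mathcal{X}}}\,d\tau_{\mathcal{X}}$; since $\mathcal{X}$ is boundary-less, integration by parts converts the latter to $-2\beta\!\int\!\Delta f\,d\nu_p$. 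Dividing by $2\beta^2$ and dropping the $f$-independent constant, the population SM loss reduces to $\int(\tfrac12|\nabla f|^2-\beta^{-1}\Delta f)\,d\nu_p$, and replacing $\nu_p$ by $\nu_n$ yields the empirical loss that one actually optimises.

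Next I would substitute $f_\gamma(x)=\int_\Theta\phi(x,\theta)\,d\gamma(\theta)$. Because $\sup_\theta\|\nabla_x\phi(x,\theta)\|\le\eta(x)$ and $\|\gamma\|_{\mathrm{TV}}\le1$, dominated convergence lets me exchange $\nabla_x$ with $\int d\gamma$, and similarly for $\Delta_x$ under the implicit regularity of $\phi$ making $\Delta_x\phi$ locally integrable. Then
$$|\nabla f_\gamma(x)|^2=\int_{\Theta\times\Theta}\nabla_x\phi(x,\theta)\cdot\nabla_x\phi(x,\theta')\,d\gamma(\theta)d\gamma(\theta'),$$
and inserting this into $L(f_\gamma,\nu_n)$ and applying Fubini --- valid since $\eta$ is finite at each data point and $|\gamma|$ is finite --- reorganises the loss into
$$\int_\Theta\!\int_{\mathcal{X}}\Bigl(\tfrac12\nabla_x\phi(x,\theta)\cdot\nabla_x\!\!\int_\Theta\!\phi(x,\theta')d\gamma(\theta')-\beta^{-1}\Delta_x\phi(x,\theta)\Bigr)d\nu_n(x)d\gamma(\theta),$$
which is precisely the integrand of \eqref{eq:SM_objective_22}. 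Since the $\mathcal{F}_1$-norm of $f_\gamma$ is the infimum of $\|\gamma\|_{\mathrm{TV}}$ over representing measures, minimisation over $f\in\mathcal{B}_{\mathcal{F}_1}(1)$ coincides with minimisation over $\gamma\in\mathcal{M}(\Theta)$ with $\|\gamma\|_{\mathrm{TV}}\le1$, closing the argument.

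The main obstacle is making the integration by parts rigorous once $\int d\gamma$ is pulled inside the $x$-integration, because $\gamma$ is a signed measure without smoothness of its own and one is differentiating against an $L^1$-density $d\nu_p/d\tau_{\mathcal{X}}$. The uniform bound $\int_{\mathcal{X}}|\nabla_x\phi(x,\theta)\cdot\nabla\tfrac{d\nu_p}{d\tau_{\mathcal{X}}}(x)|d\tau_{\mathcal{X}}(x)\le K$ supplies the $|\gamma|$-dominating function needed to commute Fubini with the IBP, and the assumption $\int\eta^2\,d\nu_p<\infty$ controls the $|\nabla f|^2$ term so that the population identity passes cleanly to the empirical one under standard sampling arguments. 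On a manifold with boundary the same computation would generate an uncontrolled flux $\oint\tfrac{d\nu_p}{d\tau_{\mathcal{X}}}\partial_n\phi\,d\gamma$, which is why the boundary-less hypothesis is essential.
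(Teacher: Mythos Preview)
Your proposal is correct and follows essentially the same route as the paper: expand the score-matching integral, use integration by parts on the cross term (the boundary-less hypothesis killing the flux), invoke the two stated integrability assumptions to justify Fubini on the cross term and on the squared-gradient term respectively, and then pass from $\nu_p$ to $\nu_n$. The only cosmetic difference is ordering---you perform the Hyv\"arinen IBP on a generic $f$ and then substitute the $\mathcal{F}_1$ parametrisation, whereas the paper substitutes $\gamma$ first and carries out the IBP inside the $\theta$-integral---but the analytic content and the use of the hypotheses are identical.
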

\begin{proof}
The score matching metric between two absolutely continuous measures $\nu$ and $\nu_p$ is
\begin{align} \label{eq:SM_equality}
    \text{SM}(\nu_p, \nu) = \int_{\mathcal{X}} \left\|\nabla \log \frac{d\nu}{d\tau_{\mathcal{X}}}(x) - \nabla \log \frac{d\nu_p}{d\tau_{\mathcal{X}}}(x) \right\|^2 \ d\nu_p(x)
\end{align}
If constrain the density of $\nu$ to belong to the $\mathcal{F}_1$ ball of radius $\beta$, we can write $\log \frac{d\nu}{d\tau_{\mathcal{X}}}(x) = - \int_{\Theta} \phi(x, \theta) \ d\gamma(\theta)$ for some $\gamma \in \mathcal{M}(\Theta)$ such that $\|\gamma \|_{\text{TV}} \leq \beta$.
Thus, the minimization problem of $\text{SM}(\nu, \nu_p)$ over this class of energies can be written as
\begin{align}
    \min_{\substack{\gamma \in \mathcal{M}(\Theta) \\ \|\gamma\|_{\text{TV}} \leq \beta}} \int_{\mathcal{X}} \left\|- \int_{\Theta} \nabla_x \phi(x, \theta) \ d\gamma(\theta) - \nabla \log \frac{d\nu_p}{d\tau_{\mathcal{X}}}(x) \right\|^2 \ d\nu_p(x) 
\end{align}
Following \cite{hyvarinen05estimation}, the objective functional can be expressed as
\begin{align}
    \int_{\mathcal{X}} \left(\left\|\int_{\Theta} \nabla_x \phi(x, \theta) \ d\gamma(\theta)\right\|^2 + \left\|\nabla \log \frac{d\nu_p}{d\tau_{\mathcal{X}}}(x) \right\|^2 + 2 \int_{\Theta} \nabla_x \phi(x, \theta) \cdot \nabla \log \frac{d\nu_p}{d\tau_{\mathcal{X}}}(x) \ d\gamma(\theta) \right) \ d\nu_p(x).
\end{align}
The middle term is constant w.r.t. $\gamma$, hence it is irrelevant. We use Fubini's theorem in the third term
\begin{align}
\begin{split} \label{eq:fubini_application}
    &\int_{\mathcal{X}} \int_{\Theta} \nabla_x \phi(x, \theta) \cdot \nabla \log \frac{d\nu_p}{d\tau_{\mathcal{X}}}(x) \ d\gamma(\theta) \ d\nu_p(x) = \int_{\Theta} \int_{\mathcal{X}} \nabla_x \phi(x, \theta) \cdot \nabla \log \frac{d\nu_p}{d\tau_{\mathcal{X}}}(x) \ d\nu_p(x) \ d\gamma(\theta) \\ &= \int_{\Theta} \int_{\mathcal{X}} \nabla_x \phi(x, \theta) \cdot \nabla \frac{d\nu_p}{d\tau_{\mathcal{X}}}(x) \ d\tau_{\mathcal{X}}(x) \ d\gamma(\theta) = - \int_{\Theta} \int_{\mathcal{X}} \Delta \phi(x, \theta) \frac{d\nu_p}{d\tau_{\mathcal{X}}}(x) \ d\tau_{\mathcal{X}}(x) \ d\gamma(\theta) \\ &= - \int_{\Theta} \int_{\mathcal{X}} \Delta \phi(x, \theta) \ d\nu_p(x) \ d\gamma(\theta).
\end{split}
\end{align}
In the fourth equality of \eqref{eq:fubini_application} we applied integration by parts. Fubini's theorem can be applied in the first equality because 
\begin{align}
\begin{split}
    \int_{\Theta} \int_{\mathcal{X}} |\nabla_x \phi(x, \theta) \cdot \nabla \log \frac{d\nu_p}{d\tau_{\mathcal{X}}}(x)| \ d\nu_p(x) \ d|\gamma|(\theta) &= \int_{\Theta} \int_{\mathcal{X}} |\nabla_x \phi(x, \theta) \cdot \nabla \frac{d\nu_p}{d\tau_{\mathcal{X}}}(x)| \ d\tau_{\mathcal{X}}(x) d|\gamma|(\theta) \\ &\leq K \|\gamma \|_{\text{TV}} < +\infty
\end{split}
\end{align}
We also use similar arguments for the first term:
\begin{align}
\begin{split}
    &\int_{\mathcal{X}} \left\|\int_{\Theta} \nabla_x \phi(x, \theta) \ d\gamma(\theta)\right\|^2 \ d\nu_p(x) = \int_{\mathcal{X}} \int_{\Theta} \int_{\Theta} \nabla_x \phi(x, \theta) \cdot \nabla_x \phi(x, \theta') \ d\gamma(\theta) \ d\gamma(\theta') \ d\nu_p(x) \\ &= \int_{\Theta} \int_{\mathcal{X}} \nabla_x \phi(x, \theta) \cdot \int_{\Theta} \nabla_x \phi(x, \theta') \ d\gamma(\theta') \ d\nu_p(x) \ d\gamma(\theta) 
\end{split}
\end{align}
In this equation we can apply Fubini's theorem because 
\begin{align}
    &\int_{\Theta} \int_{\mathcal{X}} \left|\int_{\Theta} \nabla_x \phi(x, \theta) \cdot \nabla_x \phi(x, \theta') \ d\gamma(\theta) \right| \ d\nu_p(x) \ d|\gamma|(\theta') \\ &\leq \int_{\Theta} \int_{\mathcal{X}} \int_{\Theta} \|\nabla_x \phi(x, \theta)\| \|\nabla_x \phi(x, \theta')\| \ d|\gamma|(\theta) \ d\nu_p(x) \ d|\gamma|(\theta') \\ &\leq \|\gamma\|_{\text{TV}} \int_{\mathcal{X}} \int_{\Theta} \eta(x)^2 \ d\nu_p(x) \ d|\gamma|(\theta') < +\infty,
\end{align}
by the assumption that $\int_{\mathcal{X}} \eta(x)^2 \ d\nu_p(x) < +\infty$.
The proof is concluded by exchanging $\nu_p$ by its empirical version $\nu_n$.
\end{proof}

\textbf{\textit{Proof of \autoref{prop:limitSM}.}}
Let us start from the dynamics \eqref{eq:coupled_dynamics_main}. For a domain $\mathcal{X}$ without boundary, Duhamel's principle states that the solution $u(x,t)$ of
\begin{align}
    \begin{cases}
    \partial_t u(x,t) - Lu(x,t) = f(x,t) \\
    u(x,0) = 0
    \end{cases}
\end{align}
is equal to $u(x,t) = \int_{0}^{t} P_{s} f(x,t) \ ds$, where $P_s f$ is the solution of
\begin{align}
    \begin{cases}
    \partial_t u(x,t) - Lu(x,t) = 0 \\
    u(x,s) = f(x,s)
    \end{cases}
\end{align}
While it is typically stated for classical PDEs, in our case we consider Duhamel's principle in the weak sense, i.e. the equalities hold when integrated with respect to test functions.

We can apply Duhamel's principle for the second equation of \eqref{eq:coupled_dynamics_main}, with $u(\cdot,t) = \nu_t - \nu_0$, $L u = - \alpha u$ and $f(x,t) = \nabla_x \cdot \left( \nu_t \nabla_x \int_{\Omega} \tilde{\phi}(x, \omega) \ d\mu_t(\omega) \right) + \beta^{-1} \Delta_x \nu_t + \alpha (\nu_n - \nu_0)$. Notice that the solution $P_s f$ of 
\begin{align}
    \begin{cases}
    \partial_t u(x,t) + \alpha u(x,t) = 0 \\
    u(x,s) = f(x,s)
    \end{cases}
\end{align}
is 
$P_s f(x,t) = f(x,s) e^{-\alpha (t-s)}$. 
By Duhamel's principle we obtain that
\begin{align}
\begin{split}
    \nu_t - \nu_0 &= \int_{0}^{t} P_s f(x,t) \ ds \\ &= \int_{0}^{t} \left( \nabla_x \cdot \left( \nu_s \nabla_x \int_{\Theta} \phi(x, \theta) \ d\gamma_s(\theta) \right) + \beta^{-1} \Delta_x \nu_s + \alpha (\nu_n - \nu_0) \right) e^{-\alpha (t-s)} \ ds.
\end{split}
\end{align}
Since $\alpha \int_{0}^t  e^{-\alpha (t-s)} \ ds = 1-e^{-\alpha t}$, this is equivalent to
\begin{align} \label{eq:duhamel}
    \nu_t = \nu_0 e^{-\alpha t} + \nu_{n} (1-e^{-\alpha t}) + \int_0^t e^{-\alpha (t-s)} \left( \nabla_x \cdot \left( \nu_s \nabla_x \int_{\Omega} \phi(x, \omega) \ d\gamma_s(\omega) \right) + \beta^{-1} \Delta \nu_s \right) \ ds
\end{align}
From \eqref{eq:duhamel}, we see that as $\alpha \rightarrow +\infty$,
\begin{align} \label{eq:measure_limit}
    \alpha(\nu_t - \nu_{n}) \rightarrow \nabla_x \cdot \left( \nu_t \nabla_x \int_{\Omega} \phi(x, \omega) \ d\gamma_t(\omega) \right) + \beta^{-1} \Delta \nu_t,
\end{align}
or alternatively, for any test function $f$,
\begin{align} \label{eq:weak_limit}
    \alpha \int_{\mathcal{X}} f(x) d(\nu_t - \nu_{n})(x) \rightarrow - \int_{\mathcal{X}} \nabla f(x) \cdot \nabla_x \int_{\Theta} \phi(x, \theta) \ d\gamma_t(\theta) \ d\nu_t(x) + \beta^{-1} \int_{\mathcal{X}} \Delta f(x) \ d\nu_t(x).
\end{align}
Moreover, \eqref{eq:measure_limit} implies that $\alpha \rightarrow +\infty$, $\nu_t \rightarrow \nu_{n}$. Applying this into \eqref{eq:weak_limit}, we obtain that
\begin{align} \label{eq:weak_limit_2}
    \alpha \int_{\mathcal{X}} f(x) d(\nu_t - \nu_{n})(x) \rightarrow - \int_{\mathcal{X}} \nabla f(x) \cdot \nabla_x \int_{\Theta} \phi(x, \theta) \ d\mu_t(\theta) \ d\nu_n(x) + \beta^{-1} \int_{\mathcal{X}} \Delta f(x) \ d\nu_n(x).
\end{align}
Plugging this into the definition of $F_t$ in \eqref{eq:def_F_f_mu}, we get that 
\begin{align}
\alpha F_t(\theta) \rightarrow  - \int_{\mathcal{X}} \nabla_x \phi(x, \theta) \cdot \nabla_x \int_{\Theta} \phi(x, \theta) \ d\mu_t(\theta) \ d\nu_n(x) + \beta^{-1} \int_{\mathcal{X}} \Delta f(x) \ d\nu_n(x).
\end{align}
Using this in the first equation of \eqref{eq:coupled_dynamics_main}, we get that in the limit $\alpha \rightarrow +\infty$,
\begin{align}
\begin{split}
    &\partial_t \gamma_t^{\sigma} 
    \\ &= \sigma \nabla_\theta \cdot \left( \gamma_t^{\sigma} \left( \nabla_{\theta} \int_{\mathcal{X}} \nabla_x \phi(x, \theta) \cdot \int_{\Theta} \nabla_x \phi(x, \theta') \ d\mu_t(\theta') \ d\nu_n(x) + \beta^{-1} \nabla_{\theta} \int_{\mathcal{X}} \Delta_x \phi(x, \theta) \ d\nu_t(x) \right) \right)
    \\ &+ \mu_t \left(- \sigma \int_{\mathcal{X}} \nabla_x \phi(x, \theta) \cdot \int_{\Theta} \nabla_x \phi(x, \theta) \ d\gamma_t(\theta) \ d\nu_n(x) + \sigma \beta^{-1} \int_{\mathcal{X}} \Delta_x \phi(x, \theta) \ d\nu_n(x) - \tilde{K}_t \right) 
    \\ &= 
    \frac{1}{2\beta^2} 
    \left( \sigma \nabla_\theta \cdot \left( \gamma^\sigma_t \nabla_{\theta} V(\gamma_t)(\theta) \right) - \gamma^\sigma_t \left(\sigma V(\gamma_t)(\theta) - \bar V(\gamma_t) \right) \right)
\end{split}
\end{align}
which is \eqref{eq:WFR_SM} up to a time reparametrization.
\qed



\subsection{Direct optimization of the score matching loss} \label{subsec:direct_optim}

Let $L$ be defined in \autoref{prop:limitSM}. The first variation $\frac{\delta L}{\delta \mu}(\mu)(\omega)$ of $L$ at $\mu$ is
\begin{align} \label{eq:first_variation_L}
    \frac{\delta L}{\delta \gamma}(\gamma) (\theta) &=  \int_{\mathcal{X}}\left(2 \beta^2 \nabla_x \phi(x, \theta) \cdot \nabla_x \int_{\Theta}   \phi(x, \theta') \ d\gamma(\theta')  - 2 \beta \Delta_x \phi(x, \theta) \right) d\nu_n(x).
\end{align}
We optimize \eqref{eq:SM_objective_2} via the Wasserstein-Fisher-Rao (WFR) gradient flow \eqref{eq:WFR_SM}.
This measure PDE can be approximated via a particle system ODE (equation \eqref{eq:discretization_SM}), and the corresponding
particle system may be discretized into Algorithm \autoref{alg:SM_ebm_f1}.

\begin{lemma} \label{eq:discretization_SM}
Let $\{x_i\}_{i=1}^n$ be samples from a target distribution $\nu_p$. Let $\{\theta_{0}^{(j)}\}_{j=1}^{m}$ be features sampled uniformly over $\Theta$, let $\{\sigma_j\}_{j=1}^{m}$ be uniform samples over $\{\pm 1\}$ and let $\{w_{0}^{(j)} = 1\}_{j=1}^{m}$ be the initial weight values, which are set to 1.
Equation \eqref{eq:WFR_SM} can be simulated by evolving the features $\{\theta^{(j)}\}_{j=1}^{m}$ and the weights $\{w^{(j)}\}_{j=1}^{m}$ via the following ODE:
\begin{align}
    \frac{d\theta_t^{(j)}}{dt} &= - \sigma_j \nabla_{\theta} \left( \frac{1}{n} \sum_{i=1}^n \nabla_x \phi(x_i, \theta_t^{(j)})  \frac{1}{m} \sum_{j'=1}^{m} \sigma_{(j')} w_t^{(j')} \nabla_x \phi(x_i, \theta_t^{(j')}) - \frac{\beta^{-1}}{n} \sum_{i=1}^n \Delta_x \phi(x_i, \theta_t^{(j)}) \right), \\
    \frac{d\log w_t^{(j)}}{dt} &= -\left( \frac{ \sigma_j}{n} \sum_{i=1}^n \nabla_x \phi(x_i, \theta_t^{(j)})  \frac{1}{m} \sum_{j'=1}^{m} \sigma_{j'} w_t^{(j')} \nabla_x \phi(x_i, \theta_{j'}) - \frac{ \sigma_j \beta^{-1}}{n} \sum_{i=1}^n \Delta_x \phi(x_i, \theta_t^{(j)}) - K(t)\right),
\end{align}
where 
\begin{align}
    K(t) &= \mathds{1}_{ \|\gamma^+_t\|_{\text{TV}}+\|\gamma^-_t\|_{\text{TV}} \geq 1}  \\ &\times\frac{1}{m} \sum_{j=1}^m \sigma_j w_{j} \left( \frac{1}{n} \sum_{i=1}^n \nabla_x \phi(x_i, \theta_j) \cdot \frac{1}{m} \sum_{j'=1}^{m} \sigma_{j'} w_{j'} \nabla_x \phi(x_i, \theta_{j'}) - \frac{\beta^{-1}}{n} \sum_{i=1}^n \Delta_x \phi(x_i, \theta_j) \right).
\end{align}
Namely, up to a time reparametrization with factor $2 \beta^2$, the time-dependent measure $\hat{\gamma}_t = \frac{1}{m} \sum_{j=1}^m \sigma_j w_t^{(j)} \delta_{\theta_t^{(j)}}$ converges weakly to the solution $\gamma_t = \gamma_t^+ - \gamma_t^-$ of \eqref{eq:WFR_SM} with uniform initialization, for any finite time interval $[0,T]$, as $m \rightarrow \infty$.
\end{lemma}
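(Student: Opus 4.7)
The goal is to verify that the empirical measure $\hat{\gamma}_t = \frac{1}{m}\sum_{j=1}^m \sigma_j w_t^{(j)} \delta_{\theta_t^{(j)}}$ satisfies \eqref{eq:WFR_SM} in the weak sense up to a time reparametrization, and then invoke a standard mean-field argument. My first step is to rewrite the particle ODEs in terms of the first variation $V(\gamma) = \delta L / \delta \gamma$ given in \eqref{eq:first_variation_L}. For $\gamma = \hat{\gamma}_t$, substituting $\int_{\Theta} \phi(x,\theta')\,d\hat{\gamma}_t(\theta') = \frac{1}{m}\sum_{j'} \sigma_{j'} w_t^{(j')} \phi(x,\theta_t^{(j')})$ into \eqref{eq:first_variation_L} and comparing with the particle ODEs, one identifies
\begin{equation}
\frac{d\theta_t^{(j)}}{dt} = -\tfrac{1}{2\beta^2}\,\sigma_j \nabla_\theta V(\hat{\gamma}_t)(\theta_t^{(j)}), \qquad \frac{d\log w_t^{(j)}}{dt} = -\tfrac{1}{2\beta^2}\bigl(\sigma_j V(\hat{\gamma}_t)(\theta_t^{(j)}) - \bar V(\hat{\gamma}_t)\,\mathds{1}_{\|\hat\gamma^+_t\|_{\text{TV}}+\|\hat\gamma^-_t\|_{\text{TV}} \geq 1}\bigr),
\end{equation}
which shows that, after rescaling time by the factor $2\beta^2$, the dynamics is driven exactly by the first variation of $L$.

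Next, I split $\hat{\gamma}_t = \hat{\gamma}^+_t - \hat{\gamma}^-_t$ with $\hat{\gamma}^\sigma_t := \frac{1}{m}\sum_j \mathds{1}_{\sigma_j = \sigma} w_t^{(j)} \delta_{\theta_t^{(j)}}$ and derive the weak form of the resulting PDE. For a smooth test function $\chi : \Theta \to \R$, applying the product rule and substituting the ODEs gives
\begin{equation}
\frac{d}{dt}\int_\Theta \chi\, d\hat{\gamma}^\sigma_t = -\tfrac{1}{2\beta^2}\int_\Theta \bigl(\sigma\,\nabla\chi(\theta)\cdot\nabla_\theta V(\hat{\gamma}_t)(\theta) + \chi(\theta)\,(\sigma V(\hat{\gamma}_t)(\theta) - \bar V(\hat{\gamma}_t)\mathds{1}_{\cdots})\bigr)\,d\hat{\gamma}^\sigma_t(\theta),
\end{equation}
which is precisely the weak formulation of \eqref{eq:WFR_SM} under the time change $t \mapsto t/(2\beta^2)$. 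This identifies the particle system as a consistent discretization of the WFR flow. One also checks that the indicator term $\bar V(\hat\gamma_t)\,\mathds{1}_{\cdots}$ is exactly the Lagrange multiplier $K(t)$ appearing in \eqref{eq:WFR_SM} whenever the TV-norm constraint is active, and that it preserves $\|\hat\gamma^+_t\|_{\text{TV}} + \|\hat\gamma^-_t\|_{\text{TV}} \leq 1$ after the first time the constraint is saturated.

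Finally, for the passage to the limit $m \to \infty$, I would invoke the standard mean-field argument used in Proposition~\ref{prop:particle_dynamics_f1}: the initial empirical measure $\hat{\gamma}_0$ converges weakly to $0$ (or to the specified uniform initialization of $\gamma^\pm_0$) by the law of large numbers, the interaction kernel determined by $\nabla_\theta V$ and $V$ itself is Lipschitz in $\gamma$ with respect to a suitable weighted weak topology (inheriting smoothness from $\phi$ and $\nu_n$), and a Dobrushin/Gronwall-type stability estimate for the WFR flow transfers convergence of initial conditions to convergence on $[0,T]$ uniformly. The proof is concluded exactly as in \cite{domingoenrich2020amean}.

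\textbf{Main obstacle.} The one delicate point is the discontinuous Lagrange multiplier $K(t) = \bar V(\hat{\gamma}_t)\,\mathds{1}_{\|\hat\gamma^+_t\|_{\text{TV}}+\|\hat\gamma^-_t\|_{\text{TV}} \geq 1}$: strictly speaking, the vector field driving the weights is not globally Lipschitz because of the indicator, so standard mean-field stability does not apply off-the-shelf. I would circumvent this either by observing that the total variation is non-decreasing until saturation and remains pinned at $1$ afterwards (so the indicator toggles at most once and the dynamics is piecewise smooth), or by regularizing the indicator via a smooth penalization, proving convergence for the regularized system, and passing to the limit in the regularization.
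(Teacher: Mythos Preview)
Your proposal is correct and follows essentially the same approach as the paper: the paper's proof is a one-line pointer to Proposition~\ref{prop:particle_dynamics_f1} (check that $\hat\gamma_t^\sigma$ satisfies the weak formulation of \eqref{eq:WFR_SM}, then invoke propagation of chaos), and you have simply spelled out those two steps in detail, including the identification of the $1/(2\beta^2)$ time factor via \eqref{eq:first_variation_L}. Your remark on the discontinuity of the indicator in $K(t)$ is a valid caveat that the paper does not address explicitly.
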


\begin{proof}
We check that $\hat{\gamma}_t$ is a weak solution of \eqref{eq:WFR_SM} as in \autoref{prop:particle_dynamics_f1}, and use propagation of chaos.
\end{proof}

\begin{algorithm}
\caption{$\mathcal{F}_1$-EBM training via score matching}
\begin{algorithmic}
\STATE \textbf{Input:} $n$ samples $\{x_i\}_{i=1}^n$ of the target distribution, stepsize $s$.
\STATE Initialize features $(\theta_0^{(j)})_{j=1}^m$ unif. over $\Theta$, weights $(w_0^{(j)} = 1)_{j=1}^m$, signs $(\sigma_j)_{j=1}^m$ unif. over $\{\pm 1\}$.
\STATE Initialize generated samples $\{X_0^{(i)}\}_{i=1}^N$ uniformly i.i.d. from $\{x_i\}_{i=1}^n$.
\FOR {$t=0,\dots,T-1$}
    \FOR{$j=1,\dots,m$}
        \STATE
        Make the update $\theta_{t+1}^{(j)} = \theta_{t}^{(j)} - s \sigma_j \nabla_{\theta} \left( \frac{1}{n} \sum_{i=1}^n \nabla_x \phi(x_i, \theta_{t}^{(j)}) \cdot \frac{1}{m} \sum_{j'=1}^{m} \sigma_{j'} w_t^{(j')} \nabla_x \phi(x_i, \theta_{j'}) \right) + s \beta^{-1} \sigma_j \nabla_{\theta} \left( \frac{1}{n} \sum_{i=1}^n \Delta_x \phi(x_i, \theta_j) \right)$.
        \STATE Set $\tilde{w}_{t+1}^{(j)} = w_{t+1}^{(j)} \exp( -\frac{s \sigma_j}{n} \sum_{i=1}^n \nabla_x \phi(x_i, \theta_t^{(j)}) \cdot \frac{1}{m} \sum_{j'=1}^{m} \sigma_{j'} w_t^{(j')} \nabla_x \phi(x_i, \theta_t^{(j')}) + \frac{s \beta^{-1} \sigma_j}{n} \sum_{i=1}^n \Delta_x \phi(x_i, \theta_t^{(j)}))$
        \STATE Normalize if needed $w_{t+1}^{(j)} = \tilde{w}_{t+1}^{(j)}/\max (\frac{1}{m}\sum_{j'=1}^{m} \tilde{w}_{t+1}^{(j')}, 1 )$.
    \ENDFOR
\ENDFOR
\STATE Energy $E_T(x) := \frac{\beta}{m} \sum_{j=1}^{m} w_j \sigma_j \phi(x, \theta_j)$. 
\end{algorithmic}
\label{alg:SM_ebm_f1}
\end{algorithm}

\begin{proposition} \label{prop:algequivalence}
The Algorithm \autoref{alg:SM_ebm_f1} is equivalent to Algorithm \autoref{alg:implicit_ebm_f1} with (i) base probability measure proportional to Lebesgue, i.e. $\nabla \log \frac{d\tau_{\mathcal{X}}}{d\lambda} = 0$, (ii) replacement probability $p_r = 1$ and (iii) noisy updates.
\end{proposition}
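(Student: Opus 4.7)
The strategy is to apply modifications (i)--(iii) directly to Algorithm~\ref{alg:implicit_ebm_f1}, Taylor expand the empirical feature force $\tilde F_t$, and match the resulting $\theta$ and $w$ updates term by term with those of Algorithm~\ref{alg:SM_ebm_f1}. Under (i) the base-measure drift in the Euler--Maruyama step vanishes, and under (ii) every particle is restarted to a fresh uniform sample from $\{x_i\}_{i=1}^n$ right before that step. Consequently, the particle $X_t^{(i)}$ that enters $\tilde F_t$ at iteration $t$ is exactly the output of one noisy Euler--Maruyama step starting from a uniform data sample $Y_{t-1}^{(i)}\in\{x_i\}_{i=1}^n$:
\begin{equation*}
X_t^{(i)} = Y_{t-1}^{(i)} - s\, \nabla \tilde f_{t-1}(Y_{t-1}^{(i)}) + \sqrt{2\beta^{-1} s}\, \zeta_{t-1}^{(i)},
\end{equation*}
where (iii) guarantees that the Brownian increment $\zeta_{t-1}^{(i)}$ is retained.

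The key computation is an expansion of $\tilde F_t(\theta) = \frac{1}{N}\sum_{i=1}^N \phi(X_t^{(i)},\theta) - \frac{1}{n}\sum_{i=1}^n \phi(x_i,\theta)$. Taylor-expanding $\phi(X_t^{(i)},\theta)$ around $\phi(Y_{t-1}^{(i)},\theta)$ to second order in $X_t^{(i)}-Y_{t-1}^{(i)}$ and taking expectation over $\zeta_{t-1}^{(i)}$ makes the linear noise term vanish and the quadratic noise term contribute the It\^o correction $\beta^{-1} s\, \Delta_x \phi(Y_{t-1}^{(i)},\theta)$, while the drift term contributes $-s\, \nabla_x \phi(Y_{t-1}^{(i)},\theta) \cdot \nabla \tilde f_{t-1}(Y_{t-1}^{(i)})$. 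Averaging over the uniform draw of $Y_{t-1}^{(i)}$ (so that $\frac{1}{N}\sum_i \phi(Y_{t-1}^{(i)},\theta)$ concentrates on $\frac{1}{n}\sum_i \phi(x_i,\theta)$ as $N\to\infty$) and using $\tilde f_{t-1} = \tilde f_t + O(s)$ yields
\begin{equation*}
\tilde F_t(\theta) = -s \left[ \frac{1}{n}\sum_{i=1}^n \nabla_x \phi(x_i,\theta)\cdot \nabla \tilde f_t(x_i) - \frac{\beta^{-1}}{n}\sum_{i=1}^n \Delta_x \phi(x_i,\theta) \right] + o(s).
\end{equation*}

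Substituting this expansion into the updates $\theta_{t+1}^{(j)} = \theta_t^{(j)} + s\alpha\sigma_j \nabla_\theta \tilde F_t(\theta_t^{(j)})$ and $\tilde w_{t+1}^{(j)} = w_t^{(j)} \exp\!\bigl(s\alpha\sigma_j \tilde F_t(\theta_t^{(j)})\bigr)$ of Algorithm~\ref{alg:implicit_ebm_f1}, and identifying the effective stepsize $s' := s^2\alpha$, one recovers, term by term, the $\theta$-update and the $w$-update of Algorithm~\ref{alg:SM_ebm_f1} with stepsize $s'$; the normalization step is unchanged. Equivalently, under (i)--(iii) both algorithms become particle discretizations of the WFR gradient flow \eqref{eq:WFR_SM} of the score matching loss, as established by \autoref{prop:limitSM}.

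The main technical subtlety is that the equivalence is asymptotic in two senses. First, it holds to leading order in $s$: the Taylor remainders in the expansion of $\phi$ are $o(s)$ pointwise in $\theta$ under standard smoothness assumptions on $\phi$ and boundedness of the iterates. Second, the $\Delta_x\phi$ contribution is produced in expectation over the Brownian noise $\zeta_{t-1}^{(i)}$ via the It\^o correction, which is precisely why modification (iii) is required; without noise, the Laplacian term would be absent. The stochastic fluctuations of $\tilde F_t$ around this expectation are $O(\sqrt{s/N})$ and can either be suppressed in the large-$N$ limit by the law of large numbers, or absorbed by viewing the modified Algorithm~\ref{alg:implicit_ebm_f1} as a stochastic-gradient implementation of Algorithm~\ref{alg:SM_ebm_f1} with stepsize $s^2\alpha$.
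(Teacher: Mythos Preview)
Your proposal is correct and follows essentially the same approach as the paper: write the particle after restart-plus-one-step as a data point plus an $O(s)$ drift plus Brownian noise, Taylor expand $\phi$ (or equivalently $\tilde F_t$) to second order, take the expectation over the noise to produce the $\beta^{-1}\Delta_x\phi$ It\^o correction, and identify the effective stepsize $s^2\alpha$. The only cosmetic difference is that the paper expands $\nabla_\theta\phi(X_{t+1}^{(i)},\theta_t^{(j)})$ directly and conditions on $\theta_t$, whereas you expand $\tilde F_t$ first and then substitute into the $\theta$- and $w$-updates; the computations are the same.
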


\begin{proof}
For any iteration $t$ and particle $i$, let $k_{t+1,i}$ be a uniform independent integer random variable over $\{1, \dots, n\}$, i.e. $x_{k_{t+1,i}}$ is a uniform random sample from $\{x_{i'}\}_{i'=1}^n$.
We may rewrite the updates on $\{\theta_{t+1}^{(j)}\}$, $\{w_{t+1}^{(j)}\}$ for Algorithm \autoref{alg:implicit_ebm_f1} with $p_r = 1$, $\beta^{-1} = 0$ as 
\begin{align}
\begin{split} \label{eq:updates_SM_rewritten}
    X_{t+1}^{(i)} &= x_{k_{t+1,i}} -\frac{s}{m} \sum_{j=1}^{m} w_t^{(j)} \sigma_j \nabla_x \phi(x_{k_{t+1,i}}, \theta^{(j)}) + \sqrt{2 \beta^{-1} s}\, \zeta_{t}^{(i)}, \\
    \theta_{t+1}^{(j)} &= \theta_{t}^{(j)} + s \alpha \sigma_j w_t^{(j)} \left(\frac{1}{N} \sum_{i=1}^{N} \nabla_{\theta} \phi(X_{t+1}^{(i)}, \theta_{t}^{(j)}) 
    - \frac{1}{n} \sum_{i=1}^{n} \nabla_{\theta} \phi(x_i, \theta_{t}^{(j)}) \right), \\
    \tilde{w}_{t+1}^{(j)} &= w_{t+1}^{(j)} \exp \left( \frac{s\alpha}{N} \sum_{i=1}^{N} \phi(X_{t+1}^{(i)}, \theta_{t}^{(j)}) 
    - \frac{s\alpha}{n} \sum_{i=1}^{n} \phi(x_i, \theta_{t}^{(j)}) \right), \\ w_{t+1}^{(j)} &= \tilde{w}_{t+1}^{(j)}/\max \left(\frac{1}{m}\sum_{j'=1}^{m} \tilde{w}_{t+1}^{(j')}, 1 \right).
\end{split}
\end{align}
Notice that in the regime of small stepsize $s \ll 1$, we can use a second order Taylor approximation:
\begin{align}
\begin{split}
    \nabla_{\theta} \phi(X_{t+1}^{(i)}, \theta_{t}^{(j)}) &\approx \nabla_{\theta} \phi(x_{k_{t+1,i}}, \theta_{t}^{(j)}) + \langle \nabla_{x, \theta} \phi(x_{k_{t+1,i}}), X_{t+1}^{(i)} - x_{k_{t+1,i}}\rangle \\ &+\beta^{-1} s \langle \zeta_{t}^{(i)}, \nabla_{x,x,\theta} \phi(x_{k_{t+1,i}}, \theta_{t}^{(j)}) \zeta_{t}^{(i)} \rangle + o(s) \\ &= \nabla_{\theta} \phi(x_{k_{t+1,i}}, \theta_{t}^{(j)}) - 2s \nabla_{x, \theta} \phi(x_{k_{t+1,i}}, \theta_{t}^{(j)}) \cdot \frac{1}{m} \sum_{j'=1}^{m} w^{(j')} \sigma_{j'} \nabla_x \phi(x_{k_{t+1,i}}, \theta_{t}^{(j')}) \\ &+\sqrt{2\beta^{-1} s} \langle \nabla_{x, \theta} \phi(x_{k_{t+1,i}}), \zeta_{t}^{(i)} \rangle +\beta^{-1} s \langle \zeta_{t}^{(i)}, \nabla_{x,x,\theta} \phi(x_{k_{t+1,i}}, \theta_{t}^{(j)}) \zeta_{t}^{(i)} \rangle + o(s).
\end{split}
\end{align}
Notice that that $\mathbb{E}[\langle \zeta_{t}^{(i)}, \nabla_{x,x,\theta} \phi(x_{k_{t+1,i}}, \theta_{t}^{(j)}) \zeta_{t}^{(i)} \rangle | \theta_{t}^{(j)}] = \frac{1}{n} \sum_{i=1}^{n} \nabla_\theta \nabla_{x,x} \phi(x_{i}, \theta_{t}^{(j)})$. Moreover, 
\begin{align}
\begin{split}
    &\mathbb{E}\left[\frac{1}{N} \sum_{i=1}^{N} \nabla_{\theta} \phi(x_{k_{t+1,i}}, \theta_{t}^{(j)}) - 2s \nabla_{x, \theta} \phi(x_{k_{t+1,i}}, \theta_{t}^{(j)}) \cdot \frac{1}{m} \sum_{j'=1}^{m} w^{(j')} \sigma_{j'} \nabla_x \phi(x_{k_{t+1,i}}, \theta_{t}^{(j')}) \bigg| \theta_t \right] \\ &= \frac{1}{n} \sum_{i=1}^{n} \nabla_{\theta} \phi(x_{i}, \theta_{t}^{(j)}) - 2s \nabla_{x, \theta} \phi(x_{i}, \theta_{t}^{(j)}) \cdot \frac{1}{m} \sum_{j'=1}^{m} w^{(j')} \sigma_{j'} \nabla_x \phi(x_{i}, \theta_{t}^{(j')})
\end{split}
\end{align}
Making use of these observations and the expression of the update on $\theta_{t+1}^{(j)}$ in \eqref{eq:updates_SM_rewritten}, we get that
\begin{align}
\begin{split}
\mathbb{E}\left[\theta_{t+1}^{(j)} - \theta_{t}^{(j)} | \theta_t \right] &= - s^2 \alpha \sigma_j w_t^{(j)} \left(\frac{1}{n} \sum_{i=1}^{n} \nabla_{x, \theta} \phi(x_{k_{t+1,i}}, \theta_{t}^{(j)}) \cdot \frac{1}{m} \sum_{j=1}^{m} w_t^{(j')} \sigma_j \nabla_x \phi(x_{k_{t+1,i}}, \theta_{t}^{(j)}) \right) \\ &+ s^2 \alpha \sigma_j w_t^{(j)} \frac{\beta^{-1}}{n} \sum_{i=1}^{n} \nabla_\theta \nabla_{x,x} \phi(x_{i}, \theta_{t}^{(j)})
\end{split}
\end{align}
And this is equal to the update in Algorithm \autoref{alg:SM_ebm_f1} after renaming the stepsize $s^2 \alpha \to s$. The analogous argument holds for the update on $\log \tilde{w}_{t+1}^{(j)}$.
\end{proof}

\subsection{Comparison with Score-based Generative Models (SGMs)}

A recent series of works \cite{song2019generative,song2020improved,song2021scorebased,song2021train, kadkhodaie2020solving,jolicoeur2020adversarial,dhariwal2021diffusion} have leveraged the link between score matching and reversing a diffusion process (ie, \emph{denoising}) to propose flexible and powerful generative models (SGMs). While our work shows connections with score matching, our approach is somewhat far from SGMs.
Indeed, SGMs proceed by estimating various score functions of noisy versions of the data distribution, rather than the original data distribution, and later use these estimates for obtaining new samples using a Langevin diffusion.
In contrast, the score matching loss that we consider is directly given by the score matching metric through the classical trick introduced by \cite{hyvarinen05estimation}, and our Langevin sampling process is built into the training dynamics.
Also, while our work makes use of SDEs to evolve the generated samples, we do not use a forward-backward framework in the style of certain SGMs~\cite{song2021scorebased}.

\section{Proofs of \autoref{sec:implicit_EBM}}
\label{sec:proofs_implicit_f2}

\begin{lemma} \label{lem:link}
If \autoref{ass:phi} holds, the Fenchel dual of the problem $\min_{\nu \in \mathcal{P}(\mathcal{X})} \beta^{-1} D_{KL}(\nu||\tau_{\mathcal{X}}) + \text{MMD}_{k}(\nu, \nu_n)$ is the problem 
$\max_{f \in \mathcal{B}_{\mathcal{F}_2}(\beta)} -\frac{1}{n} \sum_{i=1}^n f(x_i) - \log \left( \int_{\mathcal{X}} e^{-f(x)} d\tau_{\mathcal{X}}(x) \right)$.
\end{lemma}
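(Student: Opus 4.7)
The plan is to apply Theorem~\ref{thm:duality_general} with the specialization $\mathcal{Y}=\mathcal{X}$, $\mathcal{Z}=\Theta$, and $g(\theta):=\int_{\mathcal{X}}\phi(x,\theta)\,d\nu_n(x)=\frac{1}{n}\sum_{i=1}^{n}\phi(x_i,\theta)$; Assumption~\ref{ass:phi} is exactly the hypothesis required for that theorem. Under this identification, I will check in two steps that the primal \eqref{eq:general_primal_problem} reduces to the MMD-regularized KL problem stated in the lemma, and that the dual \eqref{eq:general_dual_problem} reduces (up to a positive constant in the objective) to the MLE problem over the $\mathcal{F}_2$ ball.

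For the primal, the reproducing-kernel identity $k(x,x')=\int_{\Theta}\phi(x,\theta)\phi(x',\theta)\,d\tau_{\Theta}(\theta)$ combined with Fubini's theorem (justified by Assumption~\ref{ass:phi}(iii) together with Cauchy--Schwarz, which yields $\int_{\Theta}|\phi(x,\theta)\phi(x',\theta)|d\tau_{\Theta}\leq\xi_2(x)\xi_2(x')$) gives
\[
\int_{\Theta}\!\Big(\!\int_{\mathcal{X}}\phi(x,\theta)\,d(\nu-\nu_n)(x)\Big)^{\!2}d\tau_{\Theta}(\theta)=\int_{\mathcal{X}\times\mathcal{X}}k(x,x')\,d(\nu-\nu_n)(x)\,d(\nu-\nu_n)(x')=\text{MMD}_{k}^{2}(\nu,\nu_n).
\]
Hence the $L^{2}$ penalty in \eqref{eq:general_primal_problem} equals $\text{MMD}_{k}(\nu,\nu_n)$ and the primal becomes exactly $\min_{\nu}\beta^{-1}D_{\text{KL}}(\nu\|\tau_{\mathcal{X}})+\text{MMD}_{k}(\nu,\nu_n)$.

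For the dual, I reparametrize the variable $h\in L^{2}(\Theta,\tau_{\Theta})$ in \eqref{eq:general_dual_problem} via $f(x):=\beta\int_{\Theta}\phi(x,\theta)h(\theta)\,d\tau_{\Theta}(\theta)$. By the definition of the $\mathcal{F}_{2}$ norm as the infimum of $\|h\|_{L^2}$ over representers, the image of $\{\|h\|_{L^{2}}\leq 1\}$ under this map is exactly $\mathcal{B}_{\mathcal{F}_{2}}(\beta)$, and the dual objective depends on $h$ only through $f$, so the reparametrization is well-defined on equivalence classes. Applying Fubini once more gives $\int_{\Theta}g(\theta)h(\theta)\,d\tau_{\Theta}(\theta)=\frac{1}{\beta n}\sum_{i}f(x_i)$ and $\beta\int_{\Theta}\phi(x,\theta)h(\theta)\,d\tau_{\Theta}(\theta)=f(x)$, so \eqref{eq:general_dual_problem} rewrites as
\[
\max_{f\in\mathcal{B}_{\mathcal{F}_{2}}(\beta)}\ \frac{1}{\beta}\left(-\frac{1}{n}\sum_{i=1}^{n}f(x_i)-\log\!\int_{\mathcal{X}}e^{-f(x)}\,d\tau_{\mathcal{X}}(x)\right).
\]
Since $\beta>0$, the factor $1/\beta$ is a positive constant that neither alters the set of maximizers nor the strong-duality statement (it can be absorbed by rescaling both the primal and dual objectives by $\beta$); strong duality from Theorem~\ref{thm:duality_general} therefore transfers directly to the pair stated in the lemma.

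The main obstacle is purely notational: one must keep careful track of the $\beta$ prefactor in passing between $h$ and $f=\beta\int\phi h\,d\tau_{\Theta}$, and justify the two applications of Fubini using Assumption~\ref{ass:phi}(iii) together with the integrability of $\xi_1$ against $\nu_n$ (which is a finite sum of Dirac masses). Once these verifications are in place, the lemma is an immediate corollary of Theorem~\ref{thm:duality_general}.
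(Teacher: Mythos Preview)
Your proof is correct and follows essentially the same route as the paper: both apply \autoref{thm:duality_general} with $\mathcal{Y}=\mathcal{X}$, $\mathcal{Z}=\Theta$, $g(\theta)=\frac{1}{n}\sum_i\phi(x_i,\theta)$, and then identify the $L^2$ penalty with $\text{MMD}_k$ via Fubini and the random-feature kernel identity. The paper's proof is terser on the dual side because the reparametrization $f=\beta\int\phi h\,d\tau_\Theta$ (and the resulting $1/\beta$ scaling between \eqref{eq:EBM_dual} and \eqref{eq:hat_mu2}) was already discussed in the surrounding text; you make this explicit, which is a minor expository difference rather than a mathematical one.
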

\begin{proof}
We apply \autoref{thm:duality_general} to show that the problem
\begin{align} \label{eq:EBM_dual_long}
    \min_{\nu \in \mathcal{P}(\mathcal{X})} \beta^{-1} D_{KL}(\nu||\tau_{\mathcal{X}}) + \left( \int_{\Theta} \left( \int_{\mathcal{X}} \phi(x, \theta) \ d(\nu - \nu_n)(x) \right)^2 d\tau_{\Theta}(\theta) \right)^{1/2}
\end{align}
has dual problem \eqref{eq:EBM_dual}. It remains only to show that the second term of \eqref{eq:EBM_dual_long} is equal to $\text{MMD}_{k}(\nu, \nu_n)$.
To obtain this, observe that 
\begin{align}
\begin{split}
&\int_{\Theta} \left( \int_{\mathcal{X}} \phi(x, \theta) \ d(\nu - \nu_n)(x) \right)^2 d\tau_{\Theta}(\theta) \\ &= \int_{\mathcal{X} \times \mathcal{X}} \int_{\Theta} \phi(x, \theta) \phi(x', \theta) d\tau_{\Theta}(\theta) \ d(\nu - \nu_n)(x) \ d(\nu - \nu_n)(x') \\ &= \int_{\mathcal{X} \times \mathcal{X}} k(x, x') \ d(\nu - \nu_n)(x) \ d(\nu - \nu_n)(x'),
\end{split}
\end{align} 
The first equality holds by Fubini's theorem following an argument similar to equations \eqref{eq:adjoint_exp}-\eqref{eq:fubini_1}. The second equality follows from the characterization \eqref{eq:kernel_random} of the kernel $k$.
\end{proof}

\begin{lemma} \label{lem:wasserstein_EBM1}
The Wasserstein gradient flow for the objective functional of \eqref{eq:EBM_primal} is given by \eqref{eq:wasserstein_EBM}.
\end{lemma}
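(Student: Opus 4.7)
The plan is to derive \eqref{eq:wasserstein_EBM} by computing the first variation $\delta J/\delta\nu$ of the objective
$$J(\nu) = \beta^{-1} D_{\text{KL}}(\nu\|\tau_{\mathcal{X}}) + \text{MMD}_k(\nu,\nu_n)$$
term by term, then applying the standard formula $\partial_t \nu_t = \nabla\cdot(\nu_t \nabla (\delta J/\delta\nu)(\nu_t))$ for the Wasserstein gradient flow, and finally rewriting the KL drift using $\Delta\nu_t$ plus a transport term against $\nabla \log (d\tau_{\mathcal{X}}/d\lambda)$.

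First I would handle the KL piece. A standard computation gives $(\delta/\delta\nu) D_{\text{KL}}(\nu\|\tau_{\mathcal{X}})(x) = \log (d\nu/d\tau_{\mathcal{X}})(x) + 1$, whose gradient is $\nabla\log(d\nu/d\tau_{\mathcal{X}}) = \nabla\log(d\nu/d\lambda) - \nabla\log(d\tau_{\mathcal{X}}/d\lambda)$. Multiplying by $\nu_t$ and taking the divergence, the identity $\nabla\cdot(\nu_t \nabla\log(d\nu_t/d\lambda)) = \Delta\nu_t$ (on a Riemannian domain without boundary, where the Laplacian is the Laplace-Beltrami operator) converts the first piece into a heat-type term, leaving a drift against $\nabla\log(d\tau_{\mathcal{X}}/d\lambda)$. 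This reproduces the $\beta^{-1}\Delta\nu_t - \beta^{-1}\nabla\cdot(\nu_t \nabla\log(d\tau_{\mathcal{X}}/d\lambda))$ block of \eqref{eq:wasserstein_EBM}.

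Next I would handle the MMD piece using the chain rule $\delta M/\delta\nu = (1/(2M))\,\delta(M^2)/\delta\nu$ where $M(\nu) := \text{MMD}_k(\nu,\nu_n)$. Writing $M(\nu)^2 = \int\!\!\int k(x,x')\,d(\nu-\nu_n)(x)\,d(\nu-\nu_n)(x')$ and exploiting symmetry of $k$, one gets $\delta(M^2)/\delta\nu(x) = 2\int k(x,x')\,d(\nu-\nu_n)(x')$. Hence
$$\nabla \frac{\delta M}{\delta \nu}(x) = \frac{\int \nabla_x k(x,x')\,d(\nu-\nu_n)(x')}{\text{MMD}_k(\nu,\nu_n)},$$
which is exactly the second drift term in \eqref{eq:wasserstein_EBM}. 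Assembling the two contributions and using the linearity of the Wasserstein gradient yields the stated equation.

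The main obstacle is twofold: (i) the singularity of the square root at $\nu=\nu_n$, where the MMD first variation is formally undefined; this is handled by restricting to the non-equilibrium regime $\nu_t\neq \nu_n$, or by noting that the numerator in the drift vanishes faster than the denominator under the convexity assumptions on $k$; and (ii) justifying the differential identities on a general Riemannian manifold $\mathcal{X}$, which requires interpreting $\nabla$, $\nabla\cdot$ and $\Delta$ in the Riemannian sense and invoking an integration-by-parts argument valid for the Hausdorff measure $\lambda$. Once these technicalities are addressed, the identification with \eqref{eq:wasserstein_EBM} is immediate.
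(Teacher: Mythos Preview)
Your proposal is correct and follows essentially the same approach as the paper: compute the first variation of each term (KL and MMD), apply the standard formula $\partial_t\nu_t=\nabla\cdot(\nu_t\nabla(\delta J/\delta\nu)(\nu_t))$, and rewrite the KL contribution via $\nabla\cdot(\nu_t\nabla\log(d\nu_t/d\lambda))=\Delta\nu_t$. The paper's proof is terser and simply writes down the first variation and its gradient without dwelling on the square-root singularity or the Riemannian interpretation, but your added remarks on those points are appropriate caveats rather than a different argument.
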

\begin{proof}
The proof is standard. If we denote $L(\nu) = \beta^{-1} D_{KL}(\nu||\tau_{\mathcal{X}}) + \text{MMD}_{k}(\nu, \nu_n)$, the first variation of $L$ at any $\nu \in \mathcal{P}(\mathcal{X})$ is $\frac{\delta L}{\delta \nu}(\nu) : \mathcal{X} \to \R$ such that for all $\nu' \in \mathcal{P}(\mathcal{X})$, $\lim_{\epsilon \to 0} \ (L(\nu + \epsilon(\nu'-\nu)) - L(\nu))/\epsilon = \int_{\mathcal{X}} \ d(\nu'-\nu)(x)$. In this case, for any absolutely continuous $\nu \in \mathcal{P}(\mathcal{X})$,
\begin{align} \label{eq:first_variation_L2}
    \frac{\delta L}{\delta \nu}(\nu)(x) = \beta^{-1} \log \frac{d\nu}{d\lambda}(x) + \beta^{-1} - \beta^{-1} \log \frac{d\tau_{\mathcal{X}}}{d\lambda} (x) + \frac{\int_{\mathcal{X}} k(x,x') \ d(\nu_t - \nu_n)(x')}{\text{MMD}_{k}(\nu_t, \nu_n)}
\end{align}
and its gradient is
\begin{align} 
    \nabla \frac{\delta L}{\delta \nu}(\nu)(x) = \beta^{-1} \frac{\nabla \frac{d\nu}{d\lambda}(x)}{\frac{d\nu}{d\lambda}(x)} - \beta^{-1} \log \frac{d\tau_{\mathcal{X}}}{d\lambda} (x) + \frac{\int_{\mathcal{X}} k(x,x') \ d(\nu_t - \nu_n)(x')}{\text{MMD}_{k}(\nu_t, \nu_n)}
\end{align}
It is well known \citep{santambrogio2017euclidean} that the Wasserstein gradient flow of a functional $L$ is a solution of the measure PDE
\begin{align}
    \partial_t \nu_t = \nabla \cdot \left(\nu_t \nabla \frac{\delta L}{\delta \nu}(\nu_t)(x) \right).
\end{align}
\end{proof}

\begin{lemma}  \label{lem:stationary_minimizer}
If $\mathcal{X}$ is arc-connected, the unique stationary solution $\nu^{\star}$ of \eqref{eq:wasserstein_EBM} is the unique minimizer of \eqref{eq:EBM_primal}. The stationary solution must satisfy \eqref{eq:stationary_sol_mmd}.
\end{lemma}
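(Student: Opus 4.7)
The plan is to show that stationary points of \eqref{eq:wasserstein_EBM} are precisely the critical points of $L(\nu) := \beta^{-1} D_{\text{KL}}(\nu\|\tau_{\mathcal{X}}) + \text{MMD}_{k}(\nu,\nu_n)$ over $\mathcal{P}(\mathcal{X})$, and then invoke strict convexity of $L$ to identify this unique critical point with the unique minimizer already provided by \autoref{thm:duality_general}.

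First I would set $\partial_t \nu_t = 0$ in \eqref{eq:wasserstein_EBM}, obtaining $\nabla \cdot (\nu^\star \nabla \tfrac{\delta L}{\delta \nu}(\nu^\star)) = 0$. Multiplying this identity by $\tfrac{\delta L}{\delta \nu}(\nu^\star)$ and integrating by parts (with the usual argument that boundary/decay terms vanish) yields $\int_{\mathcal{X}} \nu^\star |\nabla \tfrac{\delta L}{\delta \nu}(\nu^\star)|^2 \, d\lambda = 0$, so $\nabla \tfrac{\delta L}{\delta \nu}(\nu^\star) = 0$ holds $\nu^\star$-almost everywhere. Any candidate $\nu^\star$ is absolutely continuous with respect to $\tau_{\mathcal{X}}$ (finite KL divergence), and its density, read off from the explicit first-variation formula \eqref{eq:first_variation_L2}, is a strictly positive exponential; so $\nu^\star$ has full support. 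Combined with the arc-connectedness of $\mathcal{X}$, pointwise vanishing of the gradient upgrades to constancy of the first variation: $\tfrac{\delta L}{\delta \nu}(\nu^\star) \equiv C$ on $\mathcal{X}$.

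Substituting the expression \eqref{eq:first_variation_L2} for the first variation and exponentiating, the normalization constraint $\nu^\star \in \mathcal{P}(\mathcal{X})$ fixes $C$ through the partition function $Z_\beta$, producing exactly \eqref{eq:stationary_sol_mmd}. This is also the Euler--Lagrange equation for \eqref{eq:EBM_primal}. The functional $L$ is strictly convex on $\mathcal{P}(\mathcal{X})$, because $\nu \mapsto D_{\text{KL}}(\nu\|\tau_{\mathcal{X}})$ is strictly convex and $\nu \mapsto \text{MMD}_{k}(\nu,\nu_n)$ is convex as a seminorm on measures. Hence $L$ admits at most one critical point on the convex set $\mathcal{P}(\mathcal{X})$, and by \autoref{thm:duality_general} a minimizer exists; this minimizer must satisfy the same Euler--Lagrange equation and therefore coincides with $\nu^\star$, proving simultaneously uniqueness of the stationary solution and its identification with the unique minimizer.

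The main obstacle is the nondifferentiability of $\nu \mapsto \text{MMD}_{k}(\nu,\nu_n)$ at $\nu = \nu_n$, which introduces a $0/0$ ambiguity in the second term of \eqref{eq:first_variation_L2} and hence in \eqref{eq:stationary_sol_mmd}. I would dispose of this by noting that any candidate stationary $\nu^\star$ has finite relative entropy with respect to $\tau_{\mathcal{X}}$ and is therefore absolutely continuous, while $\nu_n$ is purely atomic; under mild non-degeneracy of $\phi$ (so that $k$ at least separates $\nu^\star$ from $\nu_n$) this forces $\text{MMD}_{k}(\nu^\star,\nu_n) > 0$, so the ratio in \eqref{eq:stationary_sol_mmd} is well-defined and the derivation of the Euler--Lagrange equation is fully justified.
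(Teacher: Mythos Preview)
Your proposal is correct and follows essentially the same route as the paper: stationarity of the Wasserstein gradient flow forces $\nabla\tfrac{\delta L}{\delta\nu}(\nu^\star)=0$ on $\operatorname{supp}(\nu^\star)$, full support plus arc-connectedness upgrades this to the Euler--Lagrange condition $\tfrac{\delta L}{\delta\nu}(\nu^\star)\equiv\text{const}$, and strict convexity of $L$ (strict for $D_{\mathrm{KL}}$, convex for $\mathrm{MMD}_k$) identifies this with the unique minimizer. You supply more detail than the paper---the integration-by-parts derivation of the stationarity condition and the discussion of the $\mathrm{MMD}_k(\nu^\star,\nu_n)=0$ corner case---where the paper simply cites \cite{rotskoff2018neural,mei2018mean} and leaves the nondifferentiability at $\nu_n$ implicit; one small caveat is that your full-support argument is slightly circular (you invoke the exponential form of the density before having established constancy of the first variation on all of $\mathcal X$), but the paper's justification (``Because of the KL term, $\operatorname{supp}(\nu^\star)=\mathcal X$'') is no more rigorous, and the conclusion follows in either case from standard regularity of stationary solutions to Fokker--Planck equations with nondegenerate diffusion.
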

\begin{proof}
We follow the same reasoning as \cite{rotskoff2018neural, mei2018mean}, skipping some techical details. Denoting $L(\nu) = \beta^{-1} D_{KL}(\nu||\tau_{\mathcal{X}}) + \text{MMD}_{k}(\nu, \nu_n)$, all stationary solutions $\nu^{\star}$ of the Wasserstein gradient flow of $L$ must satisfy
\begin{align} \label{eq:stationary_cond_mmd}
    \nabla \frac{\delta L}{\delta \nu}(\nu^{\star})(x) = 0, \quad \forall x \in \text{supp}(\nu^{\star}) 
\end{align}
Because of the KL term, $\text{supp}(\nu^{\star}) = \mathcal{X}$. Since $L$ is strictly convex because MMD is convex and $D_{KL}$ is strictly convex, $L$ has at most one minimizer, which is uniquely specified by the Euler-Lagrange condition
\begin{align} \label{eq:euler_cond_mmd}
    \frac{\delta L}{\delta \nu}(\nu^{\star})(x) = K, \quad \forall x \in \mathcal{X}, \ \text{for some } K.
\end{align}
When $\mathcal{X}$ is arc-connected, \eqref{eq:stationary_cond_mmd} implies \eqref{eq:euler_cond_mmd}.

To show that the solution must satisfy \eqref{eq:stationary_sol_mmd}, we just develop \eqref{eq:euler_cond_mmd} as in \eqref{eq:first_variation_L2} and isolate.
\end{proof}

\section{Additional experiments}
\label{sec:additional_experiments}

\textbf{Experiments on teacher-student models in $d=2$.}
We analyze the case of a teacher with two neurons, both with the same negative weight~$w^*_j = -10$, in $d=2$ (i.e. on the sphere) and we train the student setting $\beta = 20$ such that the approximation errors. This low-dimensional example allows for a visual representation of the training dynamics (see videos \url{KLdual_1e3_points.mp4} and \url{KLdual_1e4_points.mp4} in the supplementary material). \autoref{fig:stud:teach} shows that the densities of the Gibbs distribution associated to the teacher and the student at the end of training are very concentrated in two separated regions on the sphere. This means that sampling this distribution by Langevin dynamics, which is required in the late stages of training in the primal formulation, would be challenging due to strong metastability. Our aim is to illustrate that our dual formulation avoids this metastability issue in the sampling.

For different values of $p_R$, and for $n=10^3, 10^4$ training data points, \autoref{fig:klsm3dadditional} shows the evolution of the KL-divergence and the score matching between the teacher and student models, and the TV-norm of the student measure, i.e. the $\mathcal{F}_1$ norm of the student energy. We use $N= 2 \cdot 10^3, 2 \cdot 10^4$ particles (resp.), $m=64$ student neurons, and a testing set of $n^{*} = 10^4$ to compute the KL-divergence and score matching metric.
In this setting we observe that $p_R = 0$ and $p_R = 1/60$ perform similarly, while score matching ($p_R = 1$) has a slower convergence and has larger terminal values for both the KL divergence and the score matching metric.
As expected, the test metrics improve with more training data $n$, and we observe that the relative gap between the methods becomes smaller; score matching becomes more competitive.

\begin{figure*}[h]
    \begin{minipage}[c]{8cm}
    \includegraphics[width=0.75\textwidth]{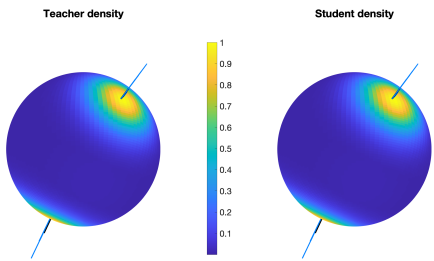}
    \end{minipage}
    \begin{minipage}[c]{6.5cm}
    \caption{Comparison between the teacher and student density in $d=2$, after training ($n = 10^4, \ N = 2 \cdot 10^4, m = 62, \alpha = 10, p_R = 0$). The location of the 2 teachers neurons are shown in black, and that of the 64 students neurons in blue.}
    \label{fig:stud:teach}
    \end{minipage}
\end{figure*}

\begin{figure*}[h]
    \centering
    \includegraphics[width=.40\textwidth]{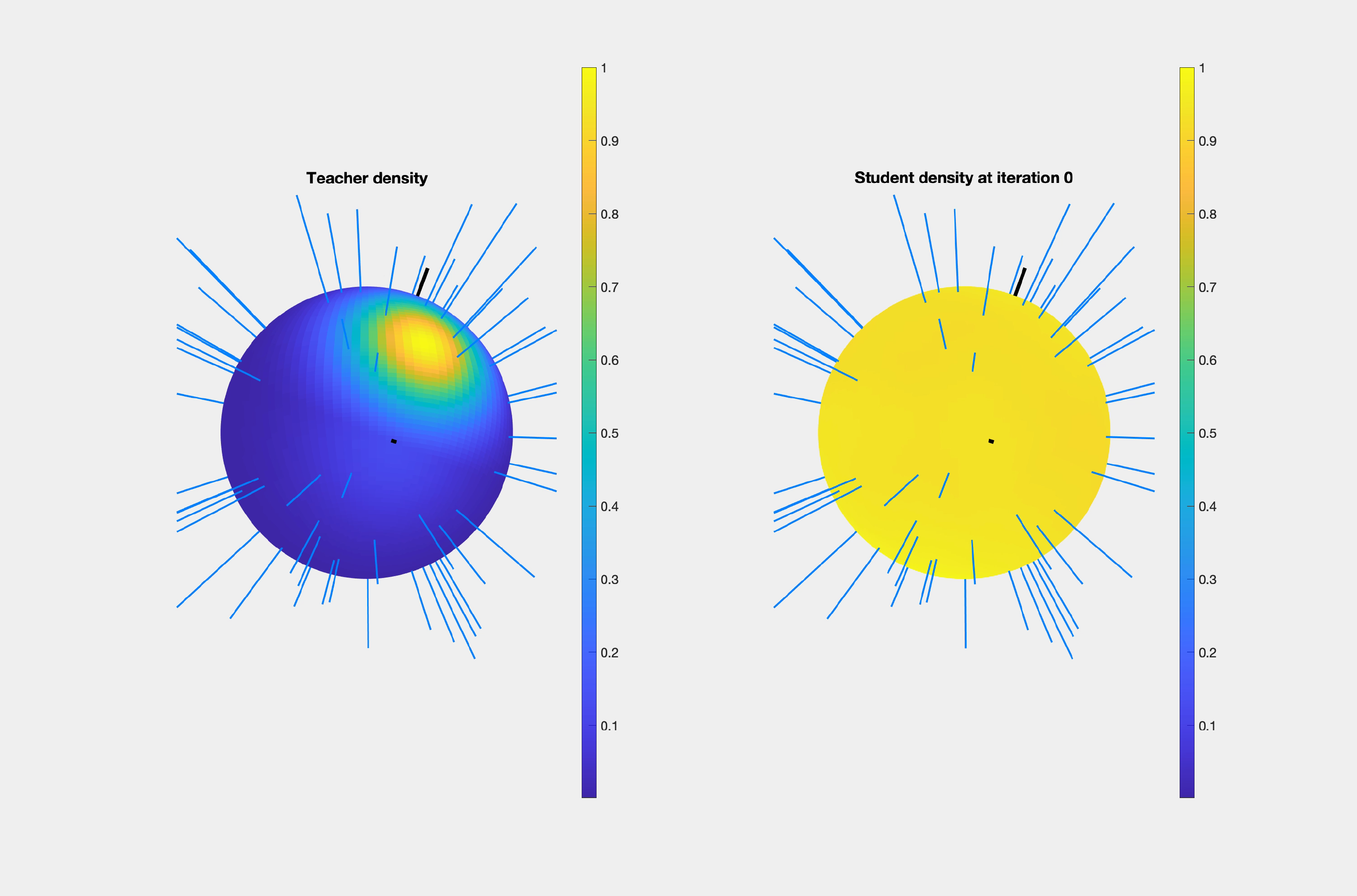}
    \includegraphics[width=.40\textwidth]{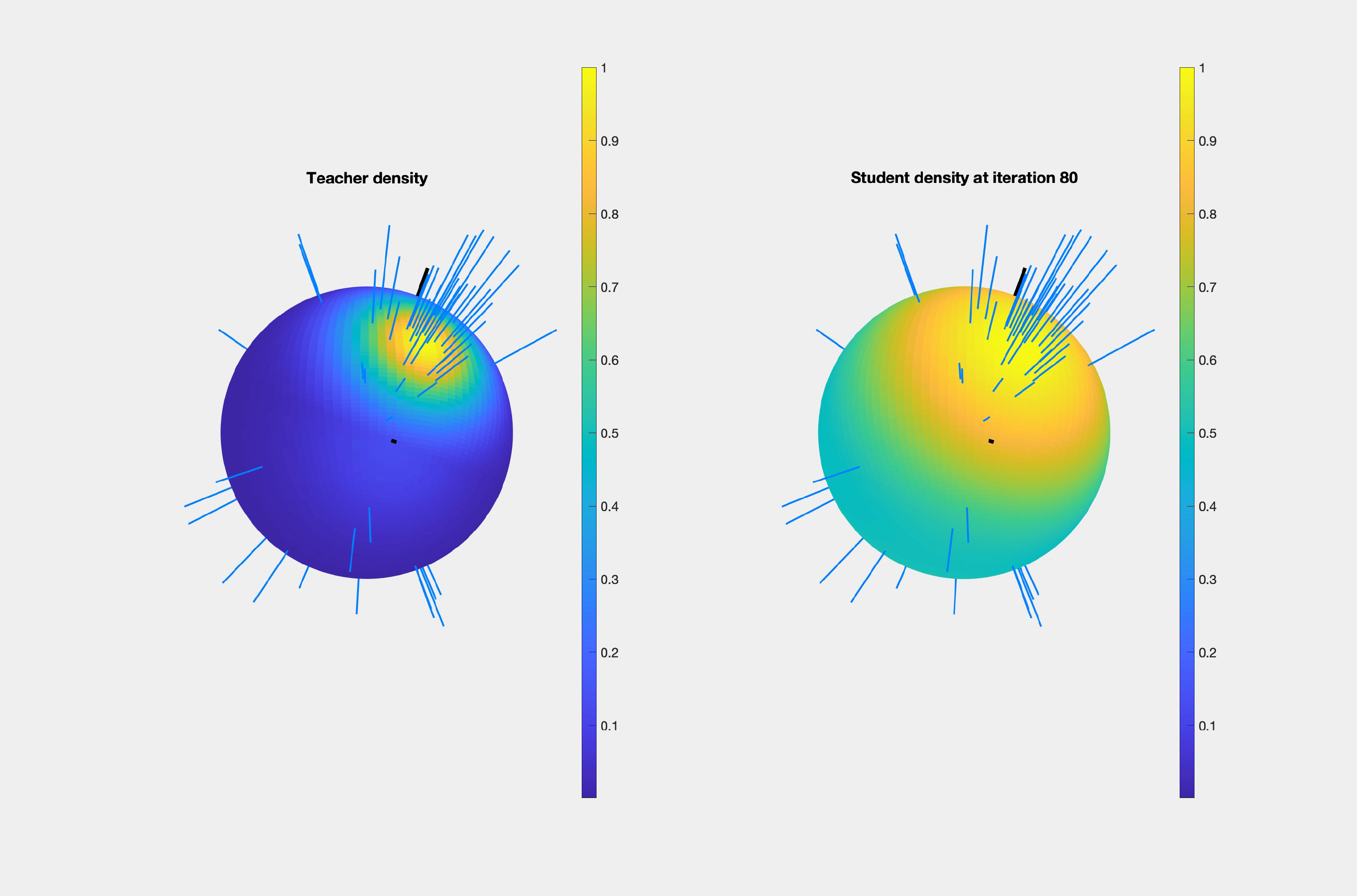} \\
    \includegraphics[width=.40\textwidth]{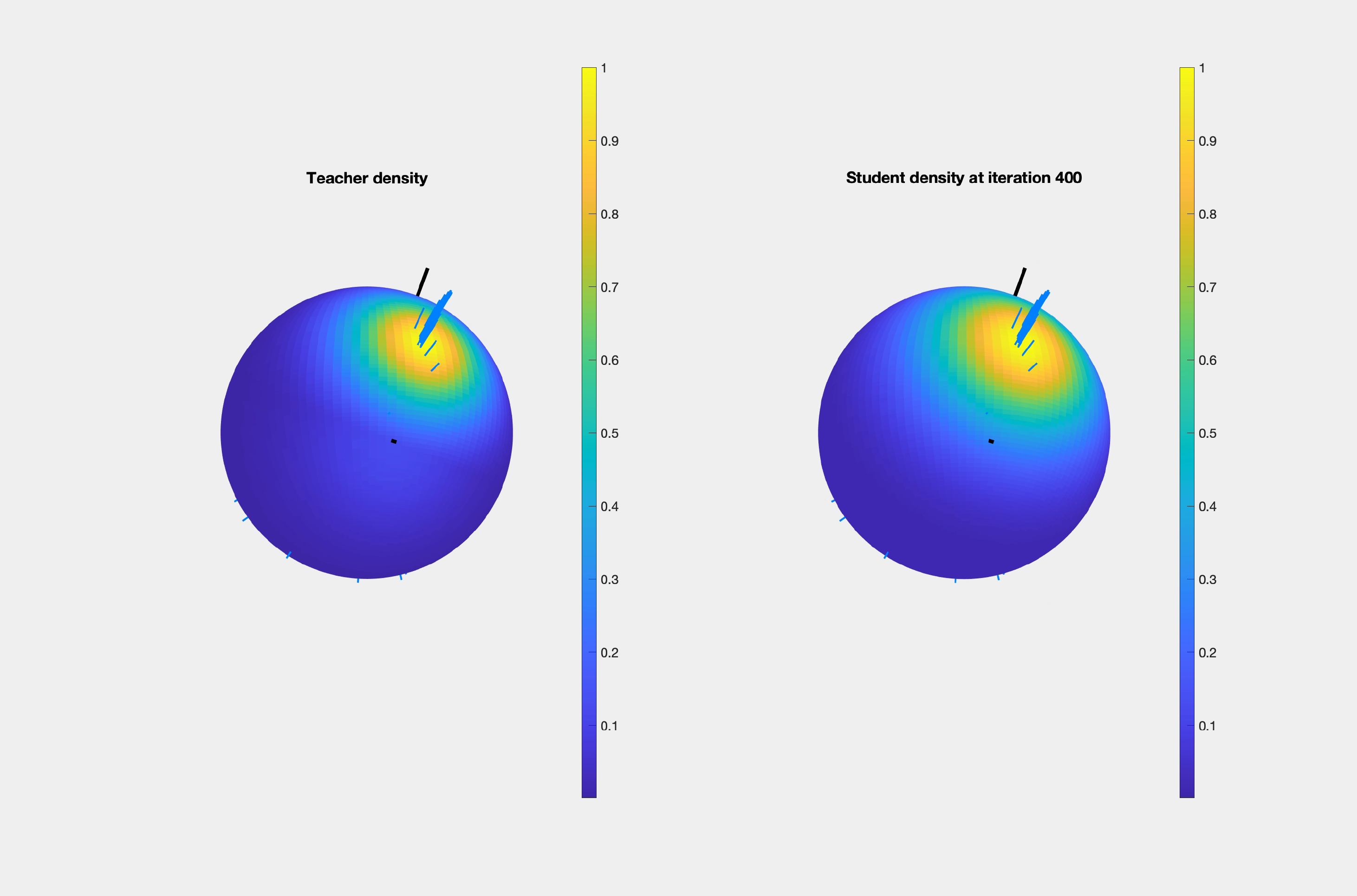}
    \includegraphics[width=.40\textwidth]{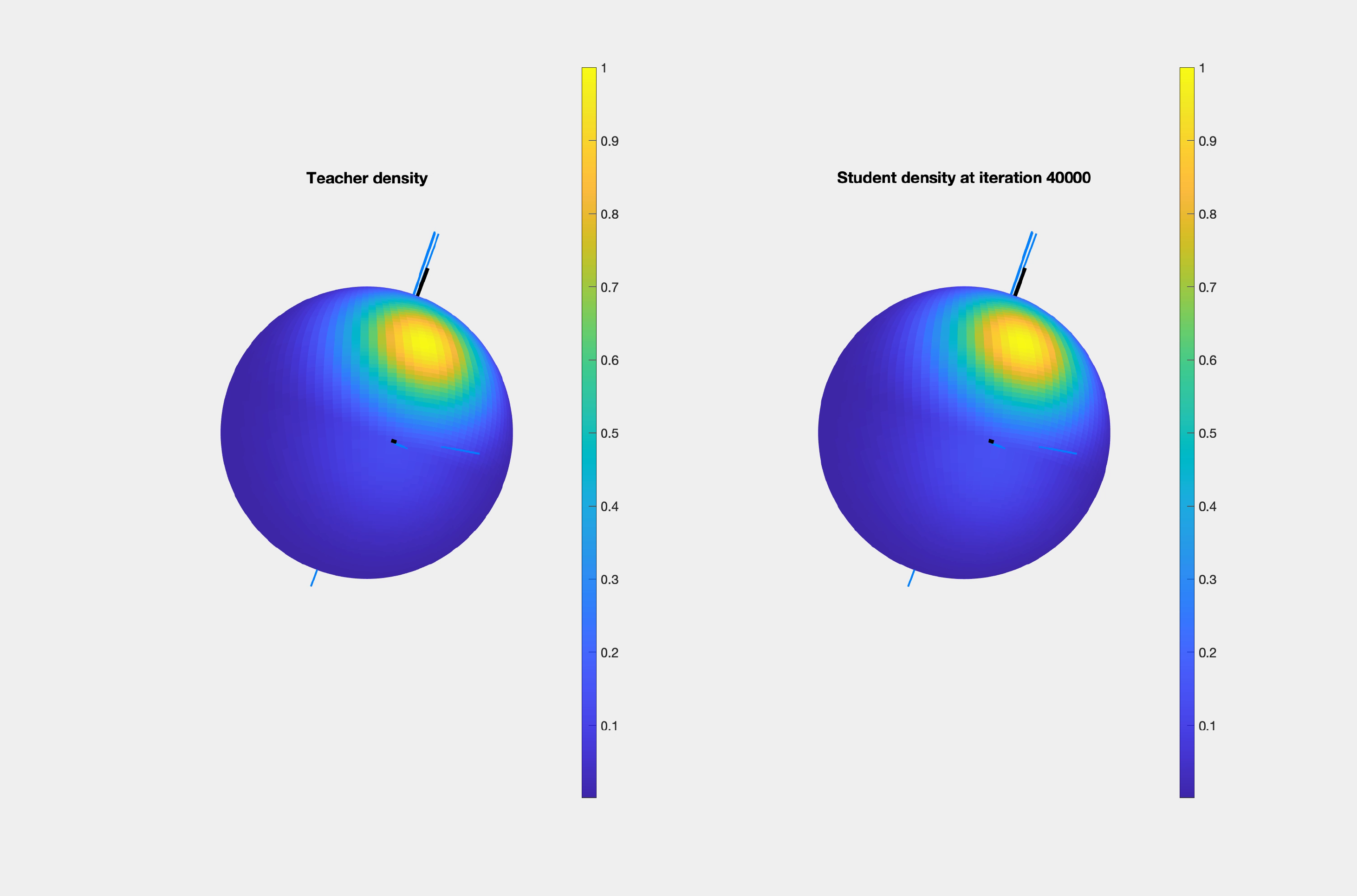}
    \caption{\textit{Experiments in $d=2$:} Selected frames of the video \url{KLdual_1e4_datapoints_monomodal.mp4} at iterations 0 (top left), 80 (top right), 400 (bottom left) and 40000 (bottom right). The parameters are $d=2, \ m=64, \ p_R = 0, \ n = 10^4, \ N = 2 \cdot 10^4, \ w_1^* = w_2^* = -10$. The teacher neurons, shown as black sticks, are almost perpendicular, and hence the teacher distribution is monomodal. The 64 student neurons are shown in blue. The two stages of training mentioned in text are clearly visible.
    }
    \label{fig:iterations_monomodal}
\end{figure*}
\begin{figure*}[h]
    \centering
    \includegraphics[width=.85\textwidth]{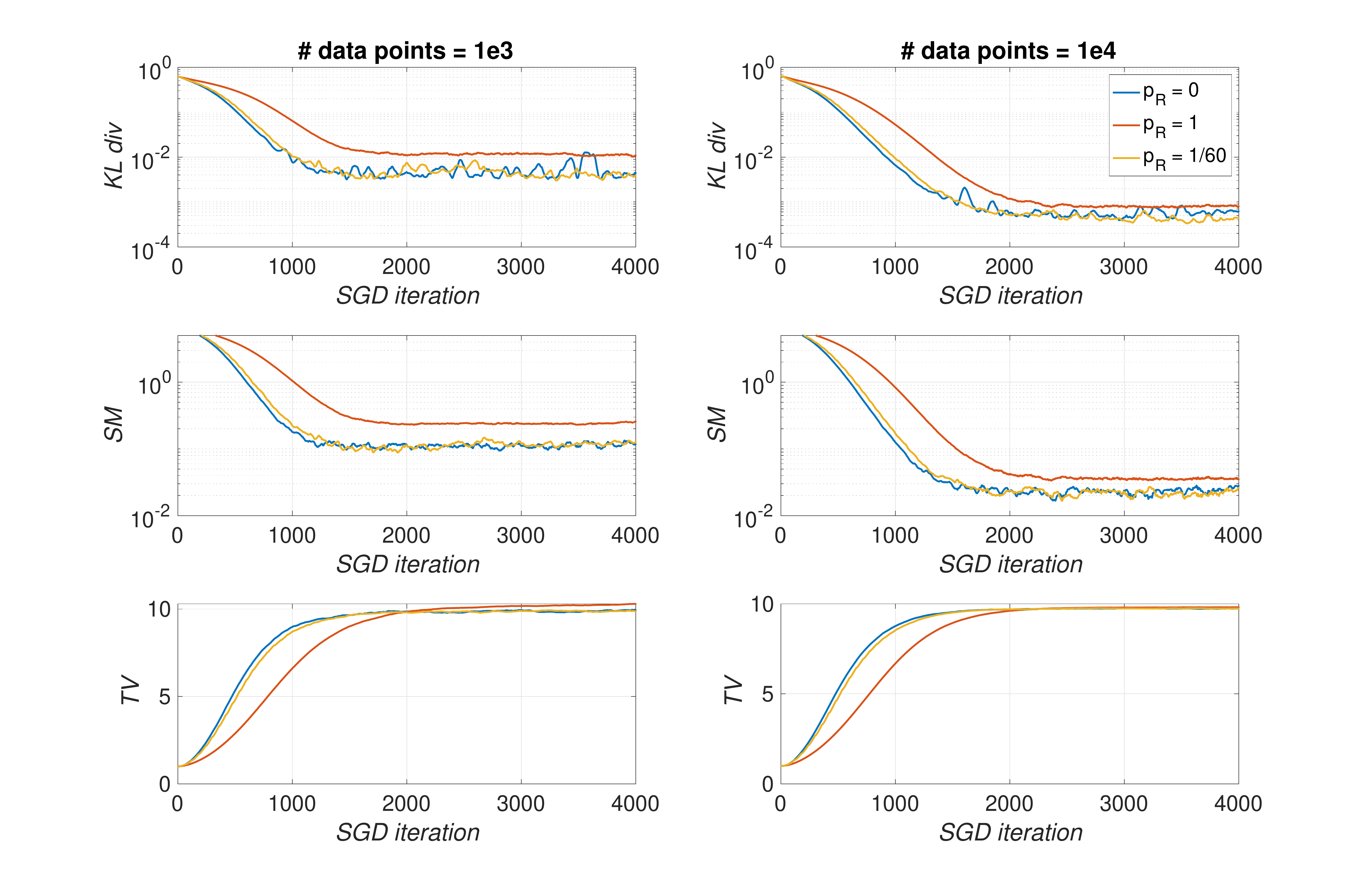}
    \caption{\textit{Experiments in d=2:} The evolution of the KL divergence, the score matching metric and the TV norm of the trained measure (i.e., the $\mathcal{F}_1$ norm) during training for Algorithm~\ref{alg:implicit_ebm_f1} with $\mathcal{X}=\mathbb{S}^2$, $m = 64$, $p_R \in \{0, 1, 1/60\}$, $s = 0.02$, $\alpha = 2 + 10 p_R$, and (left) $n = 10^3$, $N = 2 \cdot 10^3$, (right) $n = 10^4$, $N = 2 \cdot 10^4$. In comparison, the non-parametric kernel density estimator reaches a KL error of $2 \cdot 10^{-2}$ for $n=10^3$ and $6 \cdot 10^{-3}$ for $n=10^4$. 
    } 
    \label{fig:klsm3dadditional}
\end{figure*}
\textbf{Bimodality vs. monomodality in $d=14$.} \autoref{fig:projections_14d} shows the histograms for the cosines of the angles between the samples and each teacher neuron. We see that when the two teacher neurons are at an angle of 2.87 rad (almost opposite), the distribution is bimodal. When they are at an angle of 1.37 rad, the distribution is monomodal. 
\begin{figure*}
    \begin{minipage}[c]{9cm}
    \centering
    \includegraphics[width=.9\textwidth]{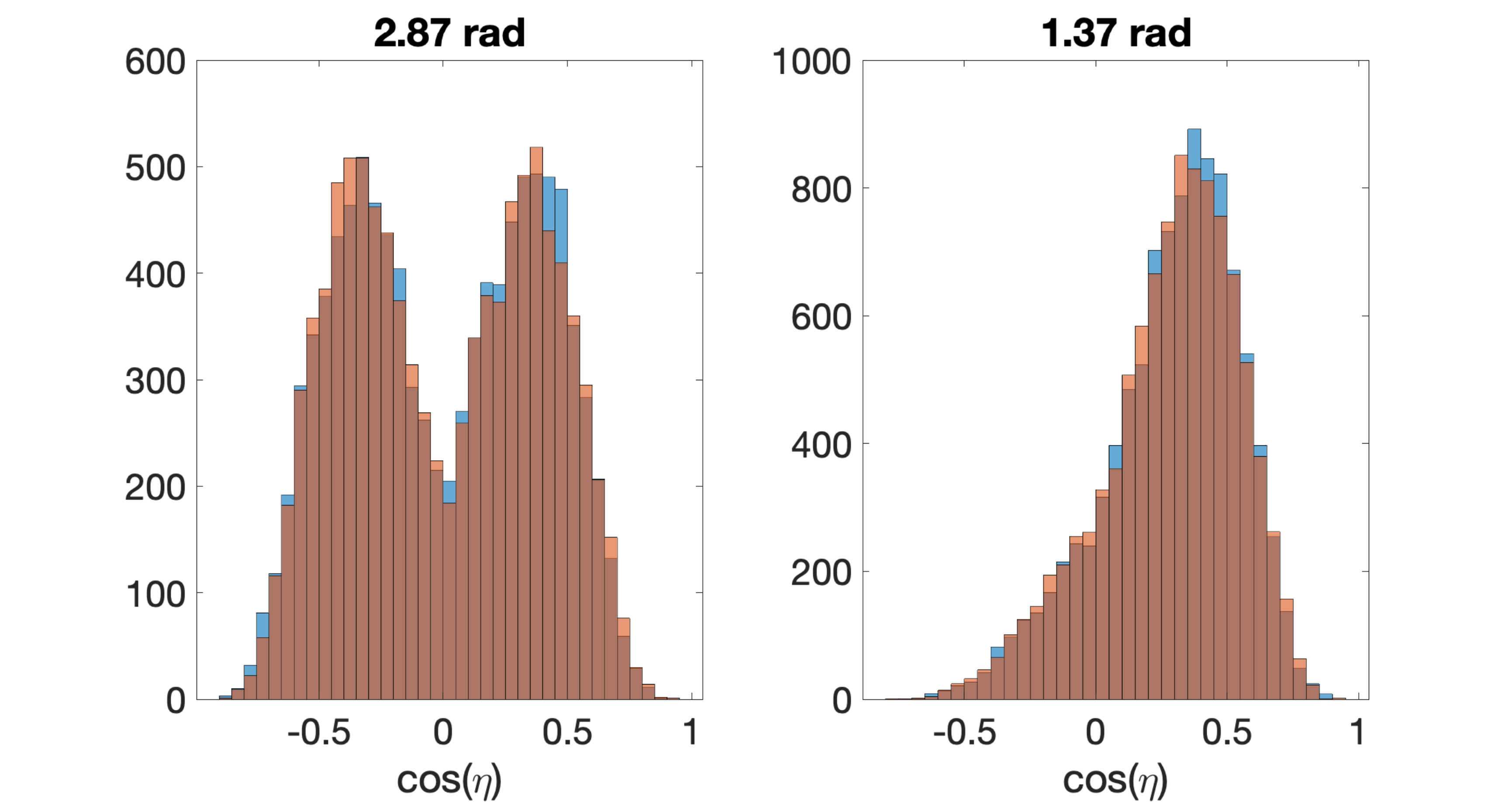}
    \end{minipage}
    \begin{minipage}[c]{6cm}
    \caption{\textit{Experiments in d=14:} Histograms for the cosines of the angles between each teacher neuron and samples from the target distribution, when the angle between teacher neurons is 2.87 and 1.37 rad.}
    \label{fig:projections_14d}
    \end{minipage}
\end{figure*}

\textbf{Comparing different values of $p_R$ for $d=14$}. In \autoref{fig:klsmd15newteach1000} (top, middle), which correspond to the bimodal case with angle 2.87 rad, we observe that the three variants have similar performance but $p_R = 1/40$ achieves the best metrics, followed very closely by $p_R = 0$ and $p_R = 1$ (score matching) a bit behind. Remark that early stopping might be beneficial in terms of the test error; the best test metrics are achieved roughly at the iteration at which the $\mathcal{F}_1$ norm of the trained energy reaches the $\mathcal{F}_1$ norm of the teacher energy. Interestingly, in the monomodal case with angle 1.37 rad (bottom of \autoref{fig:klsmd15newteach1000}), the best value for the KL divergence is achieved by $p_R = 1$ with early stopping, which beats the other two alternatives by a narrow margin. Unlike in the bimodal case, in the monomodal setting the training curves for the three methods display a change of behavior (a bump) slightly after initialization, and before the metrics reach values close to the final ones. This observation seems at odds with the common intuition that monomodal distributions are ``easier'' to deal with. To assess that the planted model defines a challenging high-dimensional density estimation problem, we consider a kernel density estimator baseline using an RBF kernel projected in the unit sphere. We report the KL divergences obtained in \ref{fig:klsmd15newteach1000}, and they are much higher than the EBM ones.

\begin{figure*}
    \centering
    \includegraphics[width=.65\textwidth]{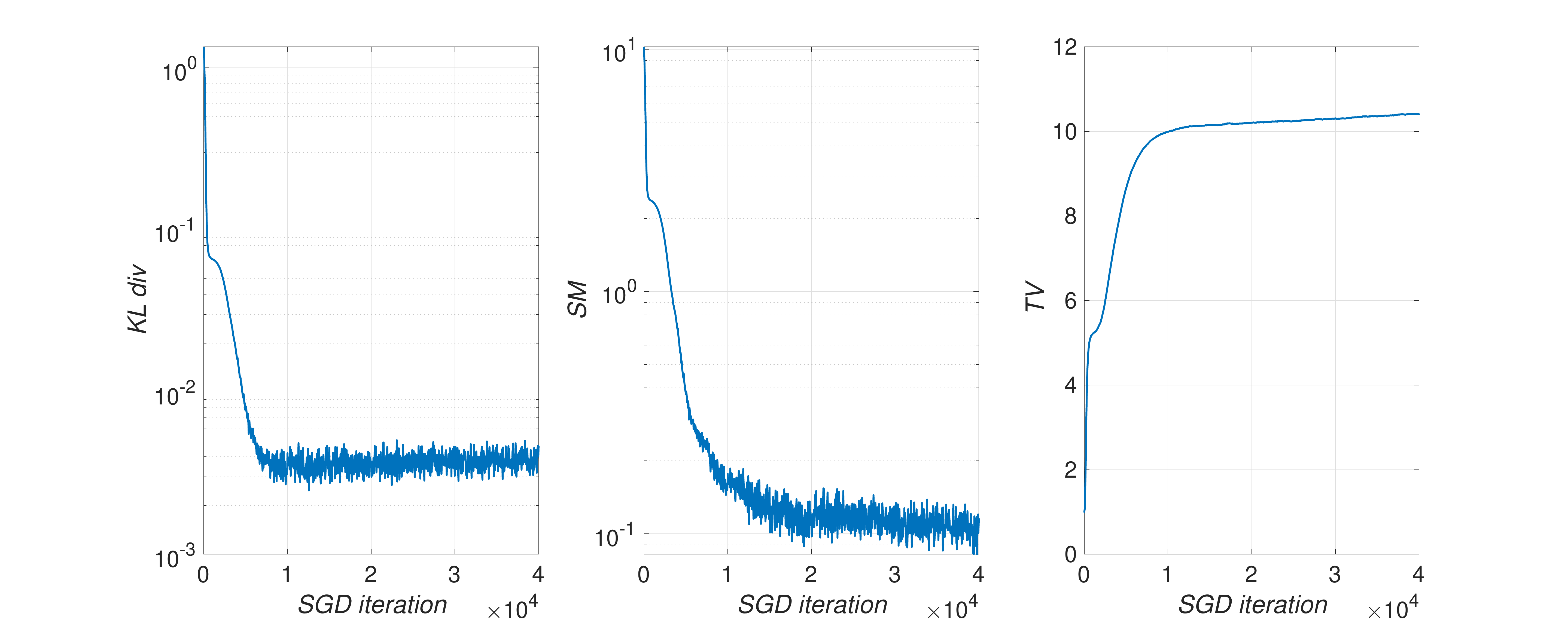}
    \caption{\textit{Experiments in $d=2$:} Evolution of the KL divergence, score matching and TV norm for the training dynamics of \url{KLdual_1e4_points_monomodal.mp4} and \autoref{fig:iterations_monomodal}.
    }
    \label{fig:klsmd3_monomodal}
\end{figure*}

\begin{figure*}
    \centering
    \includegraphics[width=.65\textwidth]{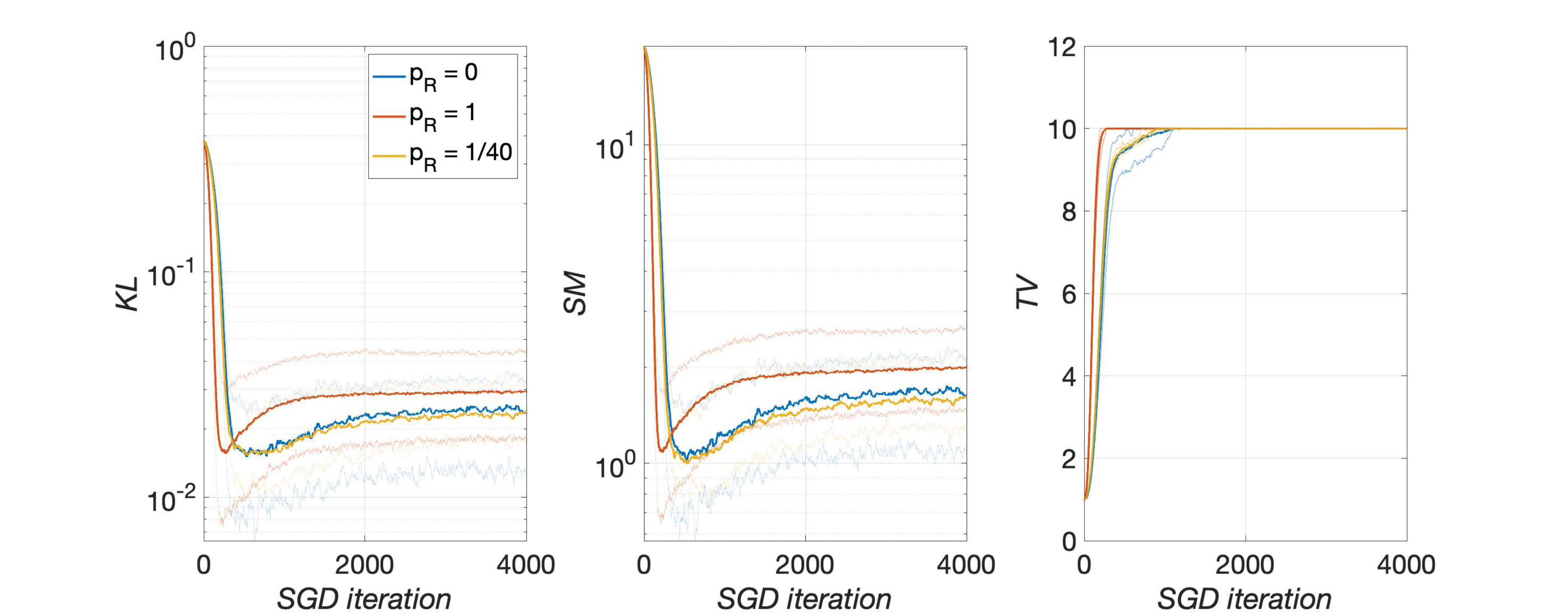}
    \includegraphics[width=.65\textwidth]{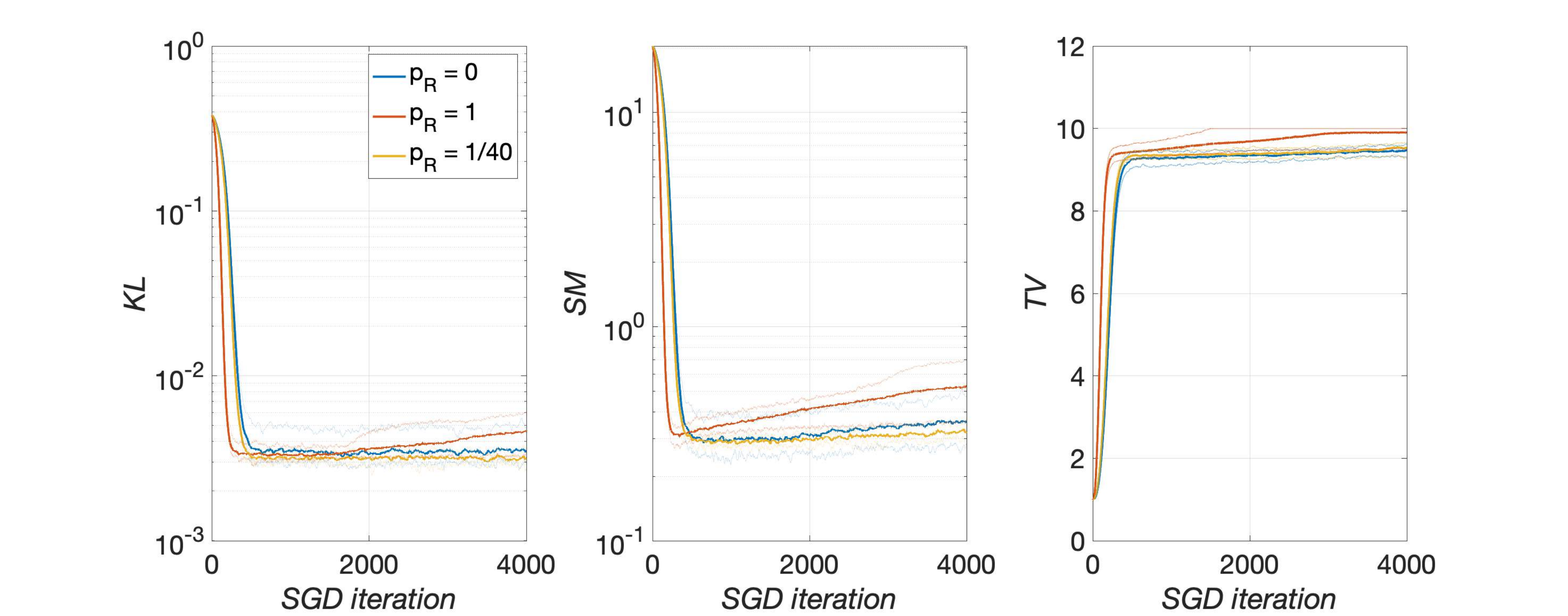} \\
    \includegraphics[width=.65\textwidth]{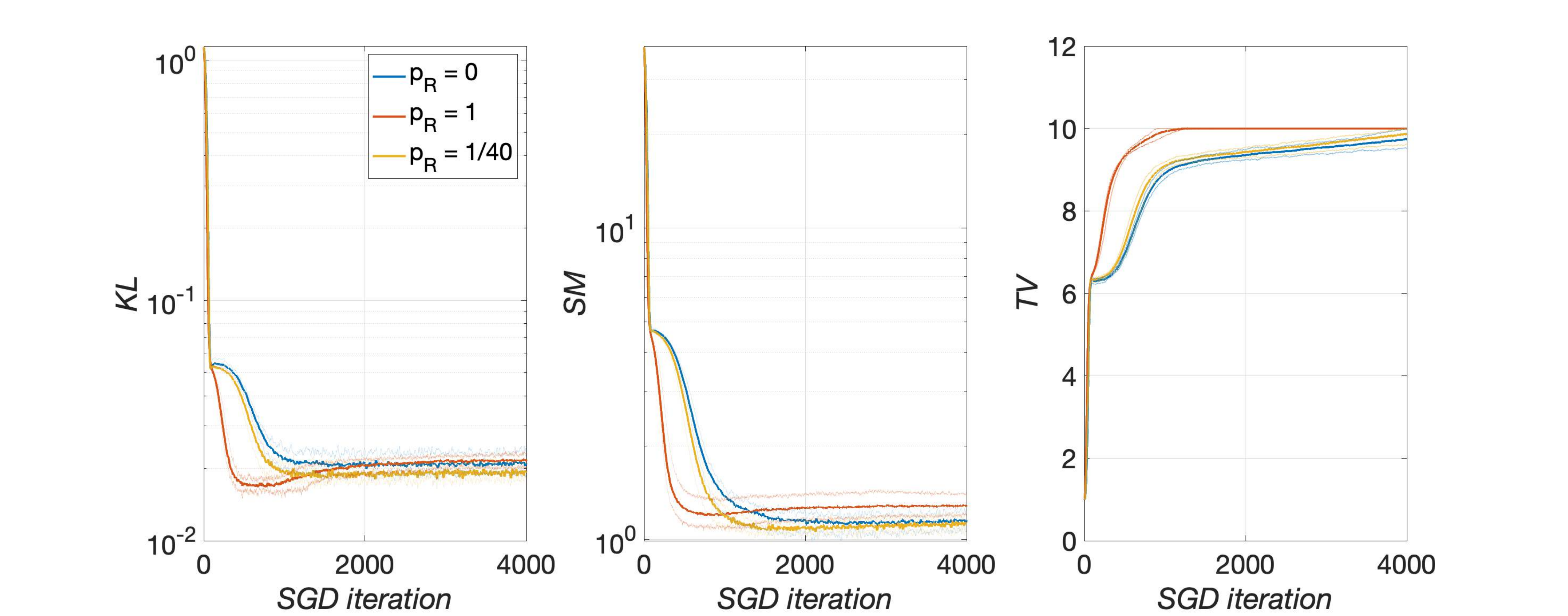}
    \caption{\textit{Experiments in d=14:} (Top) The evolution of the KL divergence, the score matching metric and the TV norm of the trained measure (i.e., the $\mathcal{F}_1$ norm) during training for Algorithm~\ref{alg:implicit_ebm_f1} with $\mathcal{X}=\mathbb{S}^{14}$, $m = 64$, $p_R=0, 1, 1/40$, $s = 0.02$, $\alpha = 10 + 50 p_R$, $n = 10^3$, $N = 2 \cdot 10^3$. The plots show the average, maxima and minima over six runs with different training and test samples, initializations and noise realizations, but with the same teacher network with an angle of 2.87 rad between neurons.
    In comparison, the non-parametric kernel density estimator reaches a KL divergence of $0.18$. (Middle) Same experiments with $n = 10^4$ and $N = 2 \cdot 10^4$. The non-parametric kernel density estimator reaches a KL divergence of $0.11$. (Bottom) Same experiments with $n = 10^4$ and $N = 2 \cdot 10^4$, and angle of 1.37 rad between teacher neurons. In comparison, the non-parametric kernel density estimator reaches a KL divergence of $0.15$.
    }
    \label{fig:klsmd15newteach1000}
\end{figure*}
\begin{figure*}
    \centering
    \includegraphics[width=.7\textwidth]{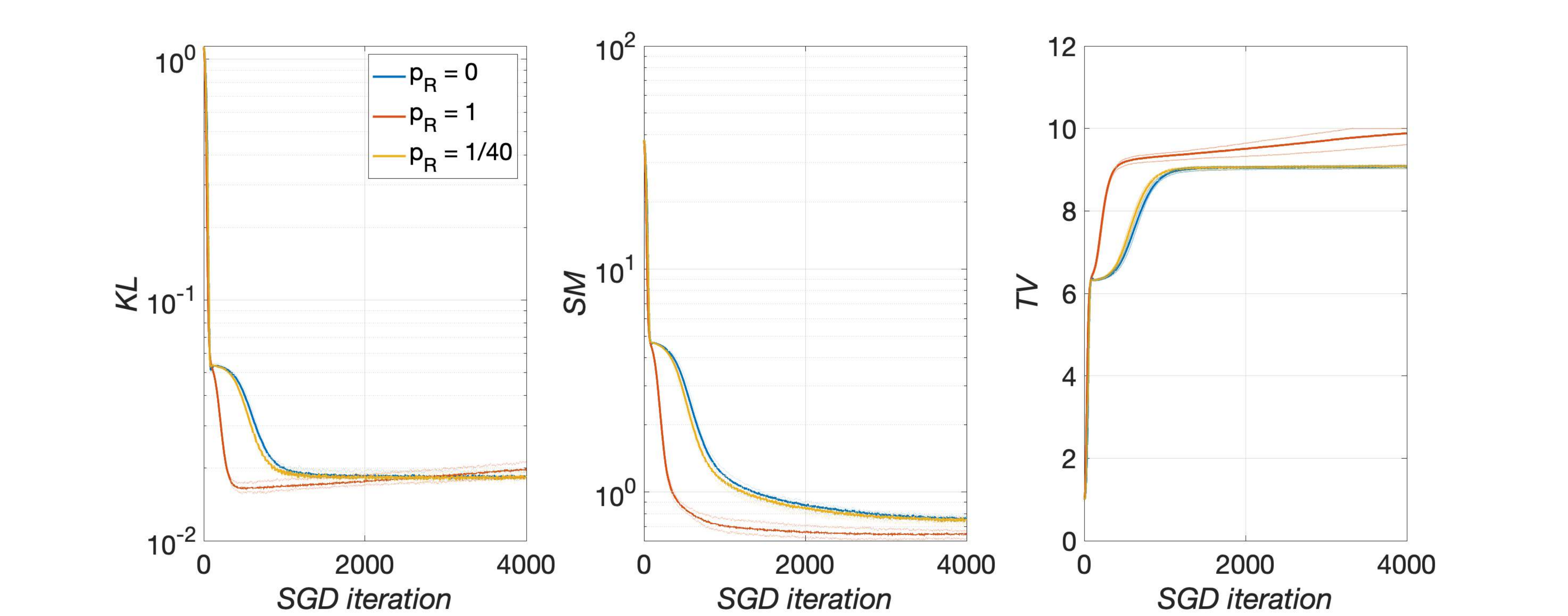}
    \caption{\textit{Experiments in $d=14$:} Same setting as bottom row of \autoref{fig:klsmd15newteach1000} (i.e., angle 1.37 rad), but with $n = 10^5, N = 2 \cdot 10^5$.
    }
    \label{fig:klsmd15oldteach100000}
\end{figure*}
In the bottom row of \autoref{fig:klsmd15newteach1000}, we observe that when the teacher distribution is monomodal, which happens when the teacher neurons are close to perpendicular, the training curves present a ``bumpy'' shape unlike in the bimodal case. \autoref{fig:klsmd15oldteach100000} shows plots in the same setting, but with 10 times more training data points. As already observed in \autoref{fig:klsm3dadditional} and \autoref{fig:klsmd15newteach1000}, taking larger $n$ improves the relative performance of score matching ($p_R = 1$) against the other two choices. It is also remarkable that the value of the KL divergence at the end of training in \autoref{fig:klsmd15oldteach100000} is about $2 \cdot 10^{-2}$, which is very similar to the value obtained in the bottom row of \autoref{fig:klsmd15newteach1000} despite the increase in $n$. This is at odds with the statistical analysis of \cite{domingoenrich2021energybased}, which predicts a decrease of the KL test error as $O(1/\sqrt{n})$ in the case where the approximation error is null. Hence, even though the KL values achieved are low, there is some effect at play which hinders optimization in the monomodal case.

To further understand the ``bumpy'' curves observed in the monomodal case, we return to experiments in $d=2$, this time with almost perpendicular teacher neurons. The results are shown in \autoref{fig:iterations_monomodal}. 
We observe similar trends in the curves of \autoref{fig:klsmd3_monomodal}. In \autoref{fig:iterations_monomodal}, we see that  the training occurs in  two stages:  first the student neurons first concentrate rather quickly near the mode of the teacher distribution: second, they slowly converge toward the teacher neurons. The bump in the KL and SM curves occurs when the first training stage ends and the second one sets in.

These findings seem to suggest an interesting dichotomy: when the two teacher neurons are far away and the distribution is bimodal, sampling is hard but training is easier; when the teacher neurons are closer and the distribution is monomodal, the opposite is true. In a generic situation, both issues may be present. More experiments are required to formulate concrete statements.

\end{document}